\renewcommand*{\backrefalt}[4]{%
    \ifcase #1 \footnotesize{(Not cited.)}%
    \or        \footnotesize{(Cited on page~#2.)}%
    \else      \footnotesize{(Cited on pages~#2.)}%
    \fi}
\long\def\comment#1{}
\newtheorem{theorem}{Theorem}[section]
\newtheorem{lemma}[theorem]{Lemma}
\newtheorem{proposition}[theorem]{Proposition}
\newtheorem{remark}[theorem]{Remark}
\newtheorem{assumption}[theorem]{Assumption}
\newcommand{\E}{\mathbb{E}}
\newcommand{\R}{\mathbb{R}}
\newcommand{\Pro}{\mathbb{P}}
\newcommand{\I}{\mathbb{I}}
\newcommand{\N}{\mathbb{N}}
\newcommand{\calA}{\mathcal{A}}
\newcommand{\calD}{\mathcal{D}}
\newcommand{\calF}{\mathcal{F}}
\newcommand{\calO}{\mathcal{O}}
\newcommand{\calS}{\mathcal{S}}
\newcommand{\calN}{\mathcal{N}}
\newcommand{\calL}{\mathcal{L}}
\newcommand{\calU}{\mathcal{U}}
\newcommand{\calX}{\mathcal{X}}
\newcommand{\bI}{\mathbf{I}}
\newcommand{\ba}{\mathbf{a}}
\newcommand{\bb}{\mathbf{b}}
\newcommand{\bbf}{\mathbf{f}}
\newcommand{\bs}{\mathbf{s}}
\newcommand{\bg}{\mathbf{g}}
\newcommand{\bp}{\mathbf{p}}
\newcommand{\bw}{\mathbf{w}}
\newcommand{\bx}{\mathbf{x}}
\newcommand{\bz}{\mathbf{z}}
\newcommand{\dd}{\mathrm{d}}
\newcommand{\DIV}{\mathbf{div}}
\newcommand{\hatpi}{{\color{blue}\hat{\pi}}}
\newcommand{\tildepi}{{\color{orange}\tilde{\pi}}}
\newcommand{\barpi}{{\color{red}{\bar{\pi}}}}
\newcommand{\bara}{{\color{red}{\bar{\ba}}}}
\newcommand{\hata}{{\color{blue}\hat{\ba}}}
\newcommand{\tildea}{{\color{orange}\tilde{\ba}}}
\newcommand{\btheta}{\bm{\theta}}
\newcommand{\bepsilon}{\bm{\epsilon}}
\newcommand{\bphi}{\bm{\phi}}
\newcommand{\bpsi}{\bm{\psi}}
\newcommand{\bnabla}{\bm{\nabla}}
\renewcommand{\part}[2]{\dfrac{\partial #1}{\partial #2}}
\newcommand{\KL}{\mathrm{KL}}
\newcommand{\FI}{\mathrm{FI}}
\begin{document}

\begin{center}

{\bf{\LARGE{Policy Representation via Diffusion Probability Model for Reinforcement Learning}}}
\footnote{* L.Yang and Z.Huang share equal contributions.}
\\
\vspace*{.2in}
{\large{
\begin{tabular}{ccc}
\textbf{Long Yang}$^{1,*}$, \textbf{Zhixiong Huang} $^{2,*}$, \textbf{Fenghao Lei}$^2$, \textbf{Yucun Zhong}$^3$, \textbf{Yiming Yang}$^4$, 
\\\textbf{Cong Fang}$^1$, \textbf{Shiting Wen}$^5$, \textbf{Binbin Zhou}$^6$, \textbf{Zhouchen Lin}$^1$
 \end{tabular}
}}


\begin{tabular}{c}
$^1$School of Artificial Intelligence, Peking University, Beijing, China\\
  $^2$College of Computer Science and Technology, Zhejiang University, China \\
  $^3$MOE Frontiers Science Center for Brain and Brain-Machine Integration \& College of\\ Computer Science, Zhejiang University, China. \\
  $^4$Institute of Automation Chinese Academy of Sciences Beijing, China\\
   $^5$School of Computer and Data Engineering, NingboTech University, China\\
  $^6$College of Computer Science and Technology, Hangzhou City University, China\\
  \texttt{\{yanglong001,fangcong,zlin\}@pku.edu.cn},\\
    \texttt{\{zx.huang,lfh,yucunzhong\}@zju.edu.cn},   \texttt{\{wensht\}@nit.zju.edu.cn}\\
    \texttt{yangyiming2019@ia.ac.cn},\texttt{bbzhou@hzcu.edu.cn}
\end{tabular}

\begin{center}
Code:\url{https://github.com/BellmanTimeHut/DIPO}
\end{center}

\vspace*{.2in}

\today

\vspace*{.2in}

\begin{abstract} 
Popular reinforcement learning (RL) algorithms tend to produce a unimodal policy distribution, which weakens the expressiveness of complicated policy and decays the ability of exploration. The diffusion probability model is powerful to learn complicated multimodal distributions, which has shown promising and potential applications to RL.

In this paper, we formally build a theoretical foundation of policy representation via the diffusion probability model and provide practical implementations of diffusion policy for online model-free RL. Concretely, we character diffusion policy as a stochastic process induced by stochastic differential equations, which is a new approach to representing a policy.
Then we present a convergence guarantee for diffusion policy, which provides a theory to understand the multimodality of diffusion policy.
Furthermore, we propose the DIPO, which implements model-free online RL with \textbf{DI}ffusion \textbf{PO}licy.
To the best of our knowledge, DIPO is the first algorithm to solve model-free online RL problems with the diffusion model.
Finally, extensive empirical results show the effectiveness and superiority of DIPO on the standard continuous control Mujoco benchmark.
\end{abstract}
\end{center}

\newpage

\setcounter{tocdepth}{2}
\tableofcontents

\addtocounter{page}{-1}
\thispagestyle{empty}
\newpage

\clearpage

\section{Introduction}

Existing policy representations (e.g., Gaussian distribution) for reinforcement learning (RL) tend to output a unimodal distribution over the action space, 
which may be trapped in a locally optimal solution due to its limited expressiveness of complex distribution and may result in poor performance.
Diffusion probability model \citep{sohl2015deep,ho2020denoising,song2020score} is powerful to learn complicated multimodal distributions, which has been applied to RL tasks (e.g., \citep{ajay2023conditional,reuss2023goal,chi2023diffusion}).

Although the diffusion model (or diffusion policy) shows its promising and potential applications to RL tasks, previous works are all empirical or only consider offline RL settings. This raises some fundamental questions: 
How to character diffusion policy? How to show the expressiveness of diffusion policy? How to design a diffusion policy for online model-free RL?
Those are the focuses of this paper.

\begin{figure}[h]
\begin{center}
\tikzset{every picture/.style={line width=0.75pt}} 

\begin{tikzpicture}[x=0.75pt,y=0.75pt,yscale=-1,xscale=1]

\draw [color={rgb, 255:red, 0; green, 0; blue, 0 }  ,draw opacity=1 ][line width=1.5]  (209.16,159.55) -- (288.76,159.55)(248.52,102.85) -- (248.52,172.53) (281.76,154.55) -- (288.76,159.55) -- (281.76,164.55) (243.52,109.85) -- (248.52,102.85) -- (253.52,109.85)  ;
\draw [color={rgb, 255:red, 0; green, 0; blue, 0 }  ,draw opacity=1 ][line width=1.5]  (370.28,159.41) -- (449.88,159.41)(409.63,102.71) -- (409.63,172.4) (442.88,154.41) -- (449.88,159.41) -- (442.88,164.41) (404.63,109.71) -- (409.63,102.71) -- (414.63,109.71)  ;
\draw [color={rgb, 255:red, 0; green, 0; blue, 0 }  ,draw opacity=1 ][line width=1.5]  (48.86,159.55) -- (128.46,159.55)(88.21,102.85) -- (88.21,172.53) (121.46,154.55) -- (128.46,159.55) -- (121.46,164.55) (83.21,109.85) -- (88.21,102.85) -- (93.21,109.85)  ;
\draw [color={rgb, 255:red, 255; green, 0; blue, 0 }  ,draw opacity=1 ][line width=1.5]    (51.67,153.61) .. controls (78.83,153.61) and (75.85,115.88) .. (89.13,115.77) .. controls (93.53,115.73) and (95.94,119.77) .. (98.15,125.23) .. controls (102.61,136.23) and (106.25,152.99) .. (123.78,153.61) ;
\draw [color={rgb, 255:red, 0; green, 0; blue, 0 }  ,draw opacity=1 ][line width=1.5]  (491.4,159.41) -- (571,159.41)(530.75,102.71) -- (530.75,172.4) (564,154.41) -- (571,159.41) -- (564,164.41) (525.75,109.71) -- (530.75,102.71) -- (535.75,109.71)  ;
\draw [color={rgb, 255:red, 255; green, 0; blue, 0 }  ,draw opacity=1 ][line width=1.5]    (493.27,153.94) .. controls (505.22,154.45) and (509.1,139.33) .. (511.47,130.51) .. controls (513.36,123.42) and (514.29,120.38) .. (517.62,132.71) .. controls (525.11,160.4) and (531.65,114.09) .. (538.22,115.17) .. controls (544.8,116.25) and (542.91,164.09) .. (550.4,141.02) .. controls (554.48,128.43) and (556.4,133.55) .. (558.64,140.89) .. controls (560.5,147) and (562.58,154.65) .. (566.32,154.86) ;
\draw [color={rgb, 255:red, 255; green, 0; blue, 0 }  ,draw opacity=1 ][line width=1.5]    (376.72,154.32) .. controls (389.27,154.32) and (393.05,146.33) .. (396.43,140.33) .. controls (399.81,134.34) and (402.15,113.43) .. (409.29,113.37) .. controls (416.43,113.31) and (409.29,156.2) .. (420.01,136.11) .. controls (430.73,116.01) and (426.81,163.45) .. (432.87,148.79) .. controls (438.93,134.14) and (446.68,147.51) .. (448.82,154.32) ;
\draw [color={rgb, 255:red, 255; green, 0; blue, 0 }  ,draw opacity=1 ][line width=1.5]    (213.45,153.39) .. controls (232.18,153.39) and (228.7,112.98) .. (235.93,113.71) .. controls (243.15,114.43) and (242.07,127.19) .. (245.29,126.63) .. controls (248.51,126.06) and (263.18,106.58) .. (264.96,124.78) .. controls (266.73,142.98) and (267.8,153.55) .. (289.31,154.32) ;
\draw [line width=0.75]    (321.93,141.33) -- (297.6,141.41) ;
\draw [shift={(294.6,141.42)}, rotate = 360] [fill={rgb, 255:red, 0; green, 0; blue, 0 }  ][line width=0.08]  [draw opacity=0] (10.72,-5.15) -- (0,0) -- (10.72,5.15) -- (7.12,0) -- cycle    ;
\draw [line width=0.75]    (372.31,141.33) -- (347.98,141.41) ;
\draw [shift={(344.98,141.42)}, rotate = 359.82] [fill={rgb, 255:red, 0; green, 0; blue, 0 }  ][line width=0.08]  [draw opacity=0] (10.72,-5.15) -- (0,0) -- (10.72,5.15) -- (7.12,0) -- cycle    ;
\draw [line width=0.75]    (491.41,141.61) -- (456.23,141.92) ;
\draw [shift={(453.23,141.95)}, rotate = 359.5] [fill={rgb, 255:red, 0; green, 0; blue, 0 }  ][line width=0.08]  [draw opacity=0] (10.72,-5.15) -- (0,0) -- (10.72,5.15) -- (7.12,0) -- cycle    ;
\draw [line width=0.75]    (152.58,141.33) -- (128.25,141.41) ;
\draw [shift={(125.25,141.42)}, rotate = 360] [fill={rgb, 255:red, 0; green, 0; blue, 0 }  ][line width=0.08]  [draw opacity=0] (10.72,-5.15) -- (0,0) -- (10.72,5.15) -- (7.12,0) -- cycle    ;
\draw [line width=0.75]    (201.89,141.33) -- (177.55,141.41) ;
\draw [shift={(174.55,141.42)}, rotate = 359.82] [fill={rgb, 255:red, 0; green, 0; blue, 0 }  ][line width=0.08]  [draw opacity=0] (10.72,-5.15) -- (0,0) -- (10.72,5.15) -- (7.12,0) -- cycle    ;
\draw [color={rgb, 255:red, 0; green, 0; blue, 0 }  ,draw opacity=1 ][line width=1.5]  (209.16,276.92) -- (288.76,276.92)(248.52,220.22) -- (248.52,289.91) (281.76,271.92) -- (288.76,276.92) -- (281.76,281.92) (243.52,227.22) -- (248.52,220.22) -- (253.52,227.22)  ;
\draw [color={rgb, 255:red, 0; green, 0; blue, 0 }  ,draw opacity=1 ][line width=1.5]  (370.28,276.79) -- (449.88,276.79)(409.63,220.09) -- (409.63,289.77) (442.88,271.79) -- (449.88,276.79) -- (442.88,281.79) (404.63,227.09) -- (409.63,220.09) -- (414.63,227.09)  ;
\draw [color={rgb, 255:red, 0; green, 0; blue, 0 }  ,draw opacity=1 ][line width=1.5]  (48.86,276.92) -- (128.46,276.92)(88.21,220.22) -- (88.21,289.91) (121.46,271.92) -- (128.46,276.92) -- (121.46,281.92) (83.21,227.22) -- (88.21,220.22) -- (93.21,227.22)  ;
\draw [color={rgb, 255:red, 245; green, 166; blue, 35 }  ,draw opacity=1 ][line width=1.5]    (51.67,270.99) .. controls (78.83,270.99) and (75.85,233.26) .. (89.13,233.15) .. controls (93.53,233.11) and (95.94,237.15) .. (98.15,242.6) .. controls (102.61,253.61) and (106.25,270.37) .. (123.78,270.99) ;
\draw [color={rgb, 255:red, 0; green, 0; blue, 0 }  ,draw opacity=1 ][line width=1.5]  (491.4,276.79) -- (571,276.79)(530.75,220.09) -- (530.75,289.77) (564,271.79) -- (571,276.79) -- (564,281.79) (525.75,227.09) -- (530.75,220.09) -- (535.75,227.09)  ;
\draw [color={rgb, 255:red, 245; green, 166; blue, 35 }  ,draw opacity=1 ][line width=1.5]    (493.27,271.31) .. controls (505.22,271.83) and (509.1,256.71) .. (511.47,247.88) .. controls (513.36,240.8) and (514.29,237.76) .. (517.62,250.09) .. controls (525.11,277.77) and (531.65,231.47) .. (538.22,232.55) .. controls (544.8,233.63) and (542.91,281.47) .. (550.4,258.39) .. controls (554.48,245.81) and (556.4,250.93) .. (558.64,258.26) .. controls (560.5,264.38) and (562.58,272.03) .. (566.32,272.24) ;
\draw [color={rgb, 255:red, 245; green, 166; blue, 35 }  ,draw opacity=1 ][line width=1.5]    (376.72,271.69) .. controls (389.27,271.69) and (393.05,263.71) .. (396.43,257.71) .. controls (399.81,251.72) and (402.15,230.81) .. (409.29,230.75) .. controls (416.43,230.69) and (409.29,273.57) .. (420.01,253.48) .. controls (430.73,233.39) and (426.81,280.83) .. (432.87,266.17) .. controls (438.93,251.52) and (446.68,264.88) .. (448.82,271.69) ;
\draw [color={rgb, 255:red, 245; green, 166; blue, 35 }  ,draw opacity=1 ][line width=1.5]    (213.45,270.77) .. controls (232.18,270.77) and (228.7,230.36) .. (235.93,231.08) .. controls (243.15,231.8) and (242.07,244.57) .. (245.29,244) .. controls (248.51,243.44) and (263.18,223.96) .. (264.96,242.16) .. controls (266.73,260.36) and (267.8,270.93) .. (289.31,271.69) ;
\draw [line width=0.75]    (455.38,259.32) -- (490.97,259.32) ;
\draw [shift={(493.97,259.32)}, rotate = 180] [fill={rgb, 255:red, 0; green, 0; blue, 0 }  ][line width=0.08]  [draw opacity=0] (10.72,-5.15) -- (0,0) -- (10.72,5.15) -- (7.12,0) -- cycle    ;

\draw  [color={rgb, 255:red, 245; green, 166; blue, 35 }  ,draw opacity=1 ][dash pattern={on 2pt off 3pt}][line width=0.75]  (48,202) -- (452,202) -- (452,291.06) -- (48,291.06) -- cycle ;

\draw  [color={rgb, 255:red, 255; green, 0; blue, 0 }  ,draw opacity=1 ][dash pattern={on 2pt off 3pt}][line width=0.75]  (48,97) -- (452,97) -- (452,189.52) -- (48,189.52) -- cycle ;
\draw [line width=0.75]    (124.17,258.27) -- (150.11,258.74) ;
\draw [shift={(153.11,258.79)}, rotate = 181.05] [fill={rgb, 255:red, 0; green, 0; blue, 0 }  ][line width=0.08]  [draw opacity=0] (10.72,-5.15) -- (0,0) -- (10.72,5.15) -- (7.12,0) -- cycle    ;
\draw [line width=0.75]    (175.62,258.27) -- (200.49,258.74) ;
\draw [shift={(203.49,258.79)}, rotate = 181.09] [fill={rgb, 255:red, 0; green, 0; blue, 0 }  ][line width=0.08]  [draw opacity=0] (10.72,-5.15) -- (0,0) -- (10.72,5.15) -- (7.12,0) -- cycle    ;
\draw [line width=0.75]    (293.53,258.27) -- (319.47,258.74) ;
\draw [shift={(322.47,258.79)}, rotate = 181.05] [fill={rgb, 255:red, 0; green, 0; blue, 0 }  ][line width=0.08]  [draw opacity=0] (10.72,-5.15) -- (0,0) -- (10.72,5.15) -- (7.12,0) -- cycle    ;
\draw [line width=0.75]    (344.98,258.27) -- (369.85,258.74) ;
\draw [shift={(372.85,258.79)}, rotate = 181.09] [fill={rgb, 255:red, 0; green, 0; blue, 0 }  ][line width=0.08]  [draw opacity=0] (10.72,-5.15) -- (0,0) -- (10.72,5.15) -- (7.12,0) -- cycle    ;
\draw   (75.94,89.43) .. controls (75.94,82.71) and (81.46,77.27) .. (88.27,77.27) .. controls (95.07,77.27) and (100.59,82.71) .. (100.59,89.43) .. controls (100.59,96.14) and (95.07,101.59) .. (88.27,101.59) .. controls (81.46,101.59) and (75.94,96.14) .. (75.94,89.43) -- cycle ;
\draw   (518.62,297.75) .. controls (518.62,291.03) and (524.14,285.59) .. (530.94,285.59) .. controls (537.75,285.59) and (543.27,291.03) .. (543.27,297.75) .. controls (543.27,304.46) and (537.75,309.91) .. (530.94,309.91) .. controls (524.14,309.91) and (518.62,304.46) .. (518.62,297.75) -- cycle ;
\draw   (75.94,297.75) .. controls (75.94,291.03) and (81.46,285.59) .. (88.27,285.59) .. controls (95.07,285.59) and (100.59,291.03) .. (100.59,297.75) .. controls (100.59,304.46) and (95.07,309.91) .. (88.27,309.91) .. controls (81.46,309.91) and (75.94,304.46) .. (75.94,297.75) -- cycle ;
\draw   (518.62,89.43) .. controls (518.62,82.71) and (524.14,77.27) .. (530.94,77.27) .. controls (537.75,77.27) and (543.27,82.71) .. (543.27,89.43) .. controls (543.27,96.14) and (537.75,101.59) .. (530.94,101.59) .. controls (524.14,101.59) and (518.62,96.14) .. (518.62,89.43) -- cycle ;

\draw [line width=1.5]    (110,89)--  (230,89) ;
\draw [shift={(102,89)}, rotate = 360] [fill={rgb, 255:red, 0; green, 0; blue, 0 }  ][line width=0.08]  [draw opacity=0] (13.4,-6.43) -- (0,0) -- (13.4,6.44) -- (8.9,0) -- cycle    ;
\draw [line width=1.5]    (410,89) -- (518,89) ;
\draw [line width=1.5]    (102,300) -- (120,300) ;
\draw [line width=1.5]    (485,300) -- (512,300) ;
\draw [shift={(516.47,300)}, rotate = 180] [fill={rgb, 255:red, 0; green, 0; blue, 0 }  ][line width=0.08]  [draw opacity=0] (13.4,-6.43) -- (0,0) -- (13.4,6.44) -- (8.9,0) -- cycle    ;

\draw (522,85) node [anchor=north west][inner sep=0.75pt]    {$\bara_{0}$};
\draw (80,85) node [anchor=north west][inner sep=0.75pt]    {$\bara_{T}$};
\draw (83,186) node [anchor=north west][inner sep=0.75pt]    {$\big\|$};
\draw (80,293) node [anchor=north west][inner sep=0.75pt]    {$\tildea_{0}$};
\draw (522,293) node [anchor=north west][inner sep=0.75pt]    {$\tildea_{T}$};
\draw (383,170) node [anchor=north west][inner sep=0.75pt]    {$\bara_{1}\sim\barpi_{1}(\cdot|\bs)$};
\draw (220.56,170) node [anchor=north west][inner sep=0.75pt]    {$\bara_{t}\sim\barpi_{t}(\cdot|\bs)$};
\draw (48,170) node [anchor=north west][inner sep=0.75pt]    {$\bara_{T}\sim\barpi_{T}(\cdot|\bs)\approx\calN(\bm{0},\bI)$};
\draw (504,170) node [anchor=north west][inner sep=0.75pt]    {$\bara_{0}\sim\pi(\cdot|\bs)$};
\draw (460,125) node [anchor=north west][inner sep=0.75pt]    {input};

\draw (460,245) node [anchor=north west][inner sep=0.75pt]    {ouput};
\draw (323,137.5) node [anchor=north west][inner sep=0.75pt]    {$\cdots $};
\draw (153,137.5) node [anchor=north west][inner sep=0.75pt]    {$\cdots$};
\draw (153,254) node [anchor=north west][inner sep=0.75pt]    {$\cdots $};
\draw (48,203) node [anchor=north west][inner sep=0.75pt]    {$\tildea_{0} \sim \tildepi_{0}(\cdot|\bs)=\barpi_{T}(\cdot|\bs)$};
\draw (208,203) node [anchor=north west][inner sep=0.75pt]    {$\tildea_{T-t} \sim \tildepi_{T-t}(\cdot|\bs)$};
\draw (350,203) node [anchor=north west][inner sep=0.75pt]    {$\tildea_{T-1} \sim \tildepi_{T-1}(\cdot|\bs)$};
\draw (500,203) node [anchor=north west][inner sep=0.75pt]    {$\tildea_{T} \sim \tildepi_{T}(\cdot|\bs)$};
\draw (323,254) node [anchor=north west][inner sep=0.75pt]    {$\cdots $};
\draw (230,78) node [anchor=north west][inner sep=0.75pt]    {$\dd \bara_t=-\frac{1}{2}g(t)\bara_t\dd t+g(t)\dd \bw_{t}$};
\draw (210,52) node [anchor=north west][inner sep=0.75pt]    {Forward SDE:~$\ba\sim\pi(\cdot|\bs)\rightarrow \calN(\bm{0},\bI)$};
\draw (123,291) node [anchor=north west][inner sep=0.75pt]    {$\dd \tildea_{t}=\frac{1}{2}g^2(T-t)\left[\tildea_{t}+2\bnabla \log p_{T-t}( \tildea_{t})\right]\dd t+g(T-t)\dd \bar{\bw}_{t}$};

\draw (210,320) node [anchor=north west][inner sep=0.75pt]    {Reverse SDE: $\calN(\bm{0},\bI)\rightarrow\pi(\cdot|\bs)$};
\end{tikzpicture}
\end{center}
\caption{Diffusion Policy: Policy Representation via Stochastic Process. For a given state $\bs$, the forward stochastic process $\{\bara_{t}|\bs\}$ maps the input $\bara_{0}=:\ba\sim\pi(\cdot|\bs)$ to be a noise; then we recover the input by the stochastic process $\{\tildea_{t}|\bs\}$ that reverses the reversed SDE if we know the score function $\bnabla \log p_{t}(\cdot)$, where $p_{t}(\cdot)$ is the probability distribution of the forward process, i.e., $p_{t}(\cdot)=\barpi_{t}(\cdot|\bs)$.
}
\label{fig:diffusion-policy}
\end{figure}

\subsection{Our Main Work}

In this paper, we mainly consider diffusion policy from the next three aspects.

\textbf{Charactering Diffusion Policy as Stochastic Process.}  We formulate diffusion policy as a stochastic process that involves two processes induced by stochastic differential equations (SDE), see Figure \ref{fig:diffusion-policy}, where the forward process disturbs the input policy $\pi$ to noise, then the reverse process infers the policy $\pi$ according to a corresponding reverse SDE. Although this view is inspired by the score-based generative model \citep{song2020score}, we provide a brand new approach to represent a policy: via a stochastic process induced by SDE, neither via value function nor parametric function. 
Under this framework, the diffusion policy is flexible to generate actions according to numerical SDE solvers.

\textbf{Convergence Analysis of Diffusion Policy.}
Under mild conditions, Theorem \ref{finite-time-diffusion-policy} presents a theoretical convergence guarantee for diffusion policy. 
The result shows that if the score estimator is sufficiently accurate, then diffusion policy efficiently infers the actions from any realistic policy that generates the training data.
It is noteworthy that Theorem \ref{finite-time-diffusion-policy} also shows that diffusion policy is powerful to represent a multimodal distribution, which leads to sufficient exploration and better reward performance, Section \ref{sec:policy-representation4rl} and Appendix \ref{app-sec-multi-goal} provide more discussions with numerical verifications for this view.

\textbf{Diffusion Policy for Model-free Online RL.} Recall the standard model-free online RL framework, see Figure \ref{fig:framework-standard-rl}, where the policy improvement produces a new policy $\pi^{'}\succeq\pi$ according to the data $\calD$.
However, Theorem \ref{finite-time-diffusion-policy} illustrates that the diffusion policy only fits the distribution of the policy $\pi$ but does not improve the policy $\pi$.
We can not embed the diffusion policy into the standard RL training framework, i.e., the policy improvement in Figure \ref{fig:framework-standard-rl} can not be naively replaced by diffusion policy. To apply diffusion policy to model-free online RL task, we propose the DIPO algorithm, see Figure \ref{fig:framework}.
The proposed DIPO considers a novel way for policy improvement, we call it \textbf{action gradient} that updates each $\ba_t\in\calD$ along the gradient field (over the action space) of state-action value:
\[
\ba_{t}\leftarrow\ba_{t}+\eta \nabla_{\ba} Q_{\pi}( \bs_t,\ba_t),
\]
where for a given state $\bs$, $Q_{\pi}(\bs,\ba)$ measures the reward performance over the action space $\calA$. 
Thus, DIPO improves the policy according to the actions toward to better reward performance. To the best of our knowledge, this paper first presents the idea of action gradient, which provides an efficient way to make it possible to design a diffusion policy for online RL.

\subsection{Paper Organization}

Section \ref{background-rl} presents the background of reinforcement learning.
Section \ref{sec:policy-representation4rl} presents our motivation from the view of policy representation.
Section \ref{sec-diffusion-policy} presents the theory of diffusion policy.
Section \ref{sec:score-matching-implementation} presents the practical implementation of diffusion policy for model-free online reinforcement learning.
Section \ref{section-ex} presents the experiment results.

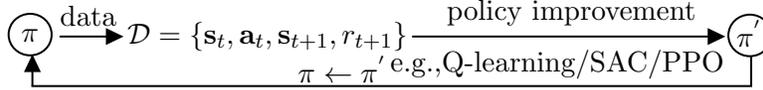
\begin{figure}[t!]

\begin{center}

\tikzset{every picture/.style={line width=0.75pt}} 

\begin{tikzpicture}[x=0.75pt,y=0.75pt,yscale=-1,xscale=1]
\draw   (70.5,504.01) .. controls (70.5,498.22) and (75.36,493.53) .. (81.35,493.53) .. controls (87.35,493.53) and (92.21,498.22) .. (92.21,504.01) .. controls (92.21,509.81) and (87.35,514.5) .. (81.35,514.5) .. controls (75.36,514.5) and (70.5,509.81) .. (70.5,504.01) -- cycle ;

\draw    (96.21,505) -- (125.5,505) ;
\draw [shift={(128.5,505)}, rotate = 180] [fill={rgb, 255:red, 0; green, 0; blue, 0 }  ][line width=0.08]  [draw opacity=0] (8.93,-4.29) -- (0,0) -- (8.93,4.29) -- cycle    ;
\draw    (274,505) -- (428,505) ;
\draw [shift={(432,505)}, rotate = 180] [fill={rgb, 255:red, 0; green, 0; blue, 0 }  ][line width=0.08]  [draw opacity=0] (8.93,-4.29) -- (0,0) -- (8.93,4.29) -- cycle    ;
\draw    (82.5,515) -- (82.5,530) ;
\draw [shift={(82.5,514)}, rotate = 90] [fill={rgb, 255:red, 0; green, 0; blue, 0 }  ][line width=0.08]  [draw opacity=0] (8.93,-4.29) -- (0,0) -- (8.93,4.29) -- cycle    ;
\draw    (445,515) -- (445,530) ;
\draw    (82,530) -- (445.5,530) ;

\draw   (433.5,504.44) .. controls (433.5,498.65) and (438.36,493.95) .. (444.35,493.95) .. controls (450.35,493.95) and (455.21,498.65) .. (455.21,504.44) .. controls (455.21,510.23) and (450.35,514.92) .. (444.35,514.92) .. controls (438.36,514.92) and (433.5,510.23) .. (433.5,504.44) -- cycle ;



\draw (75.36,500) node [anchor=north west][inner sep=0.75pt]    {$\pi $};
\draw (129,496) node [anchor=north west][inner sep=0.75pt]    {$\calD=\{\bs_{t},\ba_{t} ,\bs_{t+1} ,r_{t+1}\}$};
\draw (95,488) node [anchor=north west][inner sep=0.75pt]   [align=left] {data};
\draw (290,485) node [anchor=north west][inner sep=0.75pt]   [align=left] {policy improvement};
\draw (261.66,510) node [anchor=north west][inner sep=0.75pt]   [align=left] {e.g.,Q-learning/SAC/PPO};
\draw (215,513) node [anchor=north west][inner sep=0.75pt]    {$\pi \gets \pi ^{'}$};
\draw (436,495) node [anchor=north west][inner sep=0.75pt]    {$\pi^{'}$};

\end{tikzpicture}

\end{center}

\caption{Standard Training Framework for Model-free Online RL.}
\label{fig:framework-standard-rl}

\end{figure}

\begin{figure}[t!]

\begin{center}

\tikzset{every picture/.style={line width=0.75pt}} 

\begin{tikzpicture}[x=0.75pt,y=0.75pt,yscale=-1,xscale=1]
\draw   (70.5,504.01) .. controls (70.5,498.22) and (75.36,493.53) .. (81.35,493.53) .. controls (87.35,493.53) and (92.21,498.22) .. (92.21,504.01) .. controls (92.21,509.81) and (87.35,514.5) .. (81.35,514.5) .. controls (75.36,514.5) and (70.5,509.81) .. (70.5,504.01) -- cycle ;

\draw    (96.21,505) -- (125.5,505) ;
\draw [shift={(128.5,505)}, rotate = 180] [fill={rgb, 255:red, 0; green, 0; blue, 0 }  ][line width=0.08]  [draw opacity=0] (8.93,-4.29) -- (0,0) -- (8.93,4.29) -- cycle    ;
\draw    (263,505) -- (345,505) ;
\draw [shift={(350,505)}, rotate = 180] [fill={rgb, 255:red, 0; green, 0; blue, 0 }  ][line width=0.08]  [draw opacity=0] (8.93,-4.29) -- (0,0) -- (8.93,4.29) -- cycle    ;
\draw    (82.5,515) -- (82.5,530) ;
\draw [shift={(82.5,514)}, rotate = 90] [fill={rgb, 255:red, 0; green, 0; blue, 0 }  ][line width=0.08]  [draw opacity=0] (8.93,-4.29) -- (0,0) -- (8.93,4.29) -- cycle    ;
\draw   (527.5,505.01) .. controls (527.5,499.22) and (532.36,494.53) .. (538.35,494.53) .. controls (544.35,494.53) and (549.21,499.22) .. (549.21,505.01) .. controls (549.21,510.81) and (544.35,515.5) .. (538.35,515.5) .. controls (532.36,515.5) and (527.5,510.81) .. (527.5,505.01) -- cycle ;


\draw    (538,515) -- (538,530) ;

\draw    (425,505) --(520,505) ;
\draw [shift={(525,505)}, rotate = 180] [fill={rgb, 255:red, 0; green, 0; blue, 0 }  ][line width=0.08]  [draw opacity=0] (8.93,-4.29) -- (0,0) -- (8.93,4.29) -- cycle    ;
\draw  [fill={rgb, 255:red, 184; green, 233; blue, 134 }  ,fill opacity=1 ][dash pattern={on 2pt off 3pt}] (180,496) -- (195,496) -- (195,515) -- (180,515) -- cycle ;

\draw    (82,530) -- (538.5,530) ;
\draw  [fill={rgb, 255:red, 184; green, 233; blue, 134 }  ,fill opacity=1 ][dash pattern={on 2pt off 3pt}] (406,496) -- (421,496) -- (421,515) -- (406,515) -- cycle ;

\draw [color={rgb, 255:red, 65; green, 117; blue, 5 }  ,draw opacity=1 ]   (187.5,478) -- (187.5,495) ;

\draw [color={rgb, 255:red, 65; green, 117; blue, 5 }  ,draw opacity=1 ]   (187,478)-- (414,478)  ;
\draw [color={rgb, 255:red, 65; green, 117; blue, 5 }  ,draw opacity=1 ]   (413.5,478) -- (413.5,495) ;
\draw [shift={(413.5,500)}, rotate = 270] [fill={rgb, 255:red, 65; green, 117; blue, 5 }  ,fill opacity=1 ][line width=0.08]  [draw opacity=0] (8.93,-4.29) -- (0,0) -- (8.93,4.29) -- cycle    ;


\draw (75.36,500) node [anchor=north west][inner sep=0.75pt]    {$\pi $};
\draw (124,496) node [anchor=north west][inner sep=0.75pt]    {$\calD=\{\bs_{t},\ba_{t} ,\bs_{t+1} ,r_{t+1}\}$};
\draw (95,483) node [anchor=north west][inner sep=0.75pt]   [align=left] {data};
\draw (261.66,483) node [anchor=north west][inner sep=0.75pt]   [align=left] {action gradient};
\draw (300,513) node [anchor=north west][inner sep=0.75pt]    {$\pi \gets \pi ^{'}$};
\draw (530,495) node [anchor=north west][inner sep=0.75pt]    {$\pi^{'}$};
\draw (426,483) node [anchor=north west][inner sep=0.75pt]   [align=left] {diffusion policy};
\draw (346,495) node [anchor=north west][inner sep=0.75pt]    {$\calD^{'}=\{\bs_{t},\ba_{t}\}$};
\draw (240,460) node [anchor=north west][inner sep=0.75pt]    {$ \ba_{t}+\eta \nabla_{\ba} Q_{\pi}( \bs_t,\ba_t)\rightarrow\ba_{t}$};
\end{tikzpicture}
\end{center}
\caption{Framework of DIPO: Implementation for Model-free Online RL with \textbf{DI}ffusion \textbf{PO}licy.}
\label{fig:framework}
\end{figure}
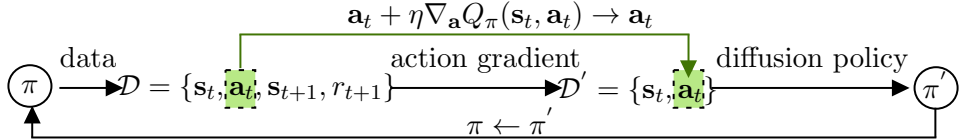

\section{Reinforcement Learning}

\label{background-rl}

Reinforcement learning (RL)\citep{sutton2018reinforcement} is formulated as \emph{Markov decision process} $\mathcal{M}=(\mathcal{S},\mathcal{A},\Pro(\cdot),{r},\gamma,d_{0})$,
where $\mathcal{S}$ is the state space; $\mathcal{A}\subset\R^{p}$ is the continuous action space;
$\Pro(\bs^{'}|\bs,\ba)$ is the probability of state transition from $\bs$ to $\bs^{'}$ after playing $\ba$;
$r(\bs'|\bs,\ba)$ denotes the reward that the agent observes when the state transition from $\bs$ to $\bs^{'}$ after playing $\ba$;
$\gamma\in(0,1)$ is the discounted factor, and $d_{0}(\cdot)$ is the initial state distribution.
A policy $\pi$ is a probability distribution defined on $\mathcal{S}\times\mathcal{A}$, and $\pi(\ba|\bs)$ denotes the probability of playing $\ba$ in state $\bs$.
Let $\{\bs_{t},\ba_{t},\bs_{t+1},r(\bs_{t+1}|\bs_{t},\ba_{t})\}_{t\ge0}\sim\pi$ be the trajectory sampled by the policy $\pi$, where $\bs_{0}\sim d_{0}(\cdot)$, $\ba_{t}\sim\pi(\cdot|\bs_{t})$, $\bs_{t+1}\sim \Pro(\cdot|\bs_t,\ba_t)$.
The goal of RL is to find a policy $\pi$ such that
$
\pi_{\star}=:\arg\max_{\pi} \E_{\pi}\left[\sum_{t=0}^{\infty}\gamma^t r(\bs_{t+1}|\bs_t,\ba_t)\right].
$

\section{Motivation: A View from Policy Representation}

\label{sec:policy-representation4rl}

In this section, we clarify our motivation from the view of policy representation: diffusion model is powerful to policy representation, which leads to sufficient exploration and better reward performance.

\subsection{Policy Representation for Reinforcement Learning}

Value function and parametric function based are the main two approaches to represent policies, while diffusion policy expresses a policy via a stochastic process (shown in Figure \ref{fig:diffusion-policy}) that is essentially difficult to the previous representation.
In this section, we will clarify this view.
Additionally, we will provide an empirical verification with a numerical experiment.

\subsubsection{Policy Representation via Value Function} A typical way to represent policy is \emph{$\epsilon$-greedy policy} \citep{sutton1998reinforcement} or \emph{energy-based policy} \citep{sallans2004reinforcement,peters2010relative},
\begin{flalign}
\label{eps-greedy-policy}
\pi(\ba|\bs)=
\begin{cases}
\arg\max_{\ba^{'}\in\calA} Q_{\pi}(\bs,\ba^{'}) & \text{w.p.~} 1-\epsilon; \\
\text{randomly~play~}\ba\in\calA
    & \text{w.p.~} \epsilon;\\
\end{cases}
~\text{or}~
\pi(\ba|\bs)=\dfrac{\exp\left\{Q_{\pi}(\bs,\ba)\right\}}{Z_{\pi}(\bs)},
\end{flalign}
where \[Q_{\pi}(\bs,\ba)=:\E_{\pi}\left[\sum_{t=0}^{\infty}\gamma^t r(\bs_{t+1}|\bs_t,\ba_t)|\bs_{0}=\bs,\ba_{0}=\ba\right],\]
the normalization term $Z_{\pi}(\bs)=\int_{\R^{p}}\exp\left\{Q_{\pi}(\bs,\ba)\right\}\dd \ba$, and ``w.p.'' is short for ``with probability''.
The representation (\ref{eps-greedy-policy}) illustrates a connection between policy and value function, which is widely used in \emph{value-based methods}
(e.g., SASRA \citep{rummery1994line}, 
Q-Learning \citep{watkins1989learning},
DQN \citep{mnih2015human}) and \emph{energy-based methods} (e.g., 
SQL \citep{schulman2017equivalence,haarnoja2017reinforcement,haarnoja2018composable}, 
SAC \citep{haarnoja2018soft}).

\subsubsection{Policy Representation via Parametric Function} Instead of consulting a value function, the parametric policy is to represent a policy by a parametric function (e.g., neural networks), denoted as $\pi_{\btheta}$, where $\btheta$ is the parameter. Policy gradient theorem \citep{sutton1999policy,silver2014deterministic} plays a center role to learn $\btheta$, which is fundamental in modern RL (e.g., TRPO \citep{schulman2015trust}, DDPG \citep{lillicrap2015continuous}, PPO \citep{schulman2017proximal}, IMPALA \citep{espeholt2018impala}, et al).

\subsubsection{Policy Representation via Stochastic Process} It is different from both value-based and parametric policy representation; the diffusion policy (see Figure \ref{fig:diffusion-policy}) generates an action via a stochastic process, which is a fresh view for the RL community. 
The diffusion model with RL first appears in \citep{janner2022diffuser}, where it proposes the \emph{diffuser} that plans by iteratively refining trajectories, which is an essential offline RL method. \cite{ajay2023conditional,reuss2023goal} model a policy as a return conditional diffusion model, \cite{chen2023offline,wang2023diffusion,chi2023diffusion} consider to generate actions via diffusion model. The above methods are all to solve offline RL problems. To the best of our knowledge, our proposed method is the first diffusion approach to online model-free reinforcement learning.

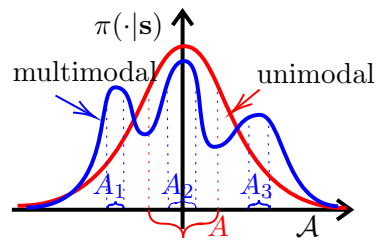
\begin{wrapfigure}{r}{0.35\linewidth}

\tikzset{every picture/.style={line width=0.75pt}} 

\begin{tikzpicture}[x=0.75pt,y=0.75pt,yscale=-1,xscale=1]

\draw  [color={rgb, 255:red, 0; green, 0; blue, 255}  ,draw opacity=1 ][line width=0.75]  (381.25,105.49) .. controls (381.25,104.41) and (380.71,103.87) .. (379.63,103.87) -- (379.63,103.87) .. controls (378.08,103.87) and (377.31,103.33) .. (377.31,102.25) .. controls (377.31,103.33) and (376.54,103.87) .. (375,103.87)(375.69,103.87) -- (375,103.87) .. controls (373.92,103.87) and (373.38,104.41) .. (373.38,105.49) ;
\draw [line width=1.5]  (325,106.38) -- (496,106.38)(411.09,6.22) -- (411.09,116.11) (489,101.38) -- (496,106.38) -- (489,111.38) (406.09,13.22) -- (411.09,6.22) -- (416.09,13.22)  ;
\draw [color={rgb, 255:red, 255; green, 0; blue, 0 }  ,draw opacity=1 ][line width=1.5]    (327.9,104.36) .. controls (389.49,104.94) and (390.21,24.12) .. (411.95,23.83) .. controls (418.9,23.73) and (423.56,32.06) .. (428.67,43.43) .. controls (439.53,67.6) and (452.43,105.52) .. (493.83,105.52) ;
\draw [color={rgb, 255:red, 255; green, 0; blue, 0 }  ,draw opacity=1 ][line width=0.4]  [dash pattern={on 0.84pt off 2.51pt}]  (429.03,106.04) -- (428.67,43.43) ;
\draw [color={rgb, 255:red, 255; green, 0; blue, 0 }  ,draw opacity=1 ] [line width=0.4]  [dash pattern={on 0.84pt off 2.51pt}]  (394.09,105.31) -- (394.09,42.92) ;
\draw  [color={rgb, 255:red, 255; green, 0; blue, 0 }  ,draw opacity=1 ] (394.09,106.41) .. controls (394.09,111.08) and (396.42,113.41) .. (401.09,113.41) -- (401.41,113.41) .. controls (408.08,113.41) and (411.41,115.74) .. (411.41,120.41) .. controls (411.41,115.74) and (414.74,113.41) .. (421.41,113.41)(418.41,113.41) -- (422.03,113.41) .. controls (426.7,113.41) and (429.03,111.08) .. (429.03,106.41) ;
\draw [color={rgb, 255:red, 0; green, 0; blue, 255}  ,draw opacity=1 ][line width=1.5]    (332.25,105.23) .. controls (378.62,105.23) and (366.3,44.61) .. (377.17,44.61) .. controls (388.04,44.61) and (383.69,68.86) .. (392.39,68.28) .. controls (401.08,67.7) and (401.08,31.33) .. (411.95,31.33) .. controls (422.82,31.33) and (416.77,74.96) .. (427.17,74.63) .. controls (437.56,74.3) and (437.31,59.04) .. (449.63,58.47) .. controls (461.94,57.89) and (461.94,103.5) .. (489.48,104.65) ;
\draw [color={rgb, 255:red, 0; green, 0; blue, 255}  ,draw opacity=1 ][line width=0.4]  [dash pattern={on 0.84pt off 2.51pt}]  (444.56,105.23) -- (444.56,59.62) ;
\draw [color={rgb, 255:red, 0; green, 0; blue, 255}  ,draw opacity=1 ][line width=0.4]  [dash pattern={on 0.84pt off 2.51pt}]  (455.42,105.23) -- (454.13,67.12) -- (455,61.22) ;
\draw [color={rgb, 255:red, 0; green, 0; blue, 255}  ,draw opacity=1 ][line width=0.4]  [dash pattern={on 0.84pt off 2.51pt}]  (403.98,105.8) -- (403.25,43.46) ;
\draw [color={rgb, 255:red, 0; green, 0; blue, 255}  ,draw opacity=1 ][line width=0.4]  [dash pattern={on 0.84pt off 2.51pt}]  (417.75,105.8) -- (417.75,44.61) ;
\draw [color={rgb, 255:red, 0; green, 0; blue, 255}  ,draw opacity=1 ][line width=0.4]  [dash pattern={on 0.84pt off 2.51pt}]  (372.82,105.8) -- (372.82,51.25) ;
\draw [color={rgb, 255:red, 0; green, 0; blue, 255}  ,draw opacity=1 ][line width=0.4]  [dash pattern={on 0.84pt off 2.51pt}]  (381.52,105.8) -- (381.25,46.97) ;
\draw  [color={rgb, 255:red, 0; green, 0; blue, 255}  ,draw opacity=1 ][line width=0.4]  (417.81,105.74) .. controls (417.81,103.89) and (416.88,102.96) .. (415.03,102.96) -- (415.03,102.96) .. controls (412.38,102.96) and (411.06,102.03) .. (411.06,100.18) .. controls (411.06,102.03) and (409.74,102.96) .. (407.09,102.96)(408.28,102.96) -- (407.09,102.96) .. controls (405.24,102.96) and (404.31,103.89) .. (404.31,105.74) ;
\draw  [color={rgb, 255:red, 0; green, 0; blue, 255}  ,draw opacity=1 ][line width=0.75]  (454.94,105.99) .. controls (454.94,104.52) and (454.21,103.79) .. (452.74,103.79) -- (452.74,103.79) .. controls (450.64,103.79) and (449.59,103.06) .. (449.59,101.59) .. controls (449.59,103.06) and (448.54,103.79) .. (446.45,103.79)(447.39,103.79) -- (446.45,103.79) .. controls (444.98,103.79) and (444.25,104.52) .. (444.25,105.99) ;

\draw (438.34,87) node [anchor=north west][inner sep=0.75pt]      {{\color{blue}{$A_3$}}};
\draw (364.09,87) node [anchor=north west][inner sep=0.75pt]      {{\color{blue}{$A_1$}}};
\draw (398.41,87) node [anchor=north west][inner sep=0.75pt]     {{\color{blue}{$A_2$}}};
\draw (422,108) node [anchor=north west][inner sep=0.75pt]    {{\color{red}{$A$}}};
\draw (365,5) node [anchor=north west][inner sep=0.75pt]    {$\pi ( \cdot |\bs)$};
\draw (466.63,108) node [anchor=north west][inner sep=0.75pt]    {$\calA$};
\draw (447,31) node [anchor=north west][inner sep=0.75pt]   [align=left] {unimodal};

\draw [color={rgb, 255:red, 0; green,0; blue, 255}  ,draw opacity=1 ]   (347,48.72) -- (365.25,58.75) ;
\draw [shift={(367,59.72)}, rotate = 208.81] [color={rgb, 255:red, 0; green, 0; blue, 255 }  ,draw opacity=1 ][line width=0.75]    (10.93,-3.29) .. controls (6.95,-1.4) and (3.31,-0.3) .. (0,0) .. controls (3.31,0.3) and (6.95,1.4) .. (10.93,3.29)   ;

\draw [color={rgb, 255:red, 255; green, 0; blue, 0 }  ,draw opacity=1 ]   (450,45.72) -- (437.22,56.17) ;
\draw [shift={(435.67,57.43)}, rotate = 320.75] [color={rgb, 255:red, 255; green, 0; blue, 0}  ,draw opacity=1 ][line width=0.75]    (10.93,-3.29) .. controls (6.95,-1.4) and (3.31,-0.3) .. (0,0) .. controls (3.31,0.3) and (6.95,1.4) .. (10.93,3.29)   ;

\draw (324,30) node [anchor=north west][inner sep=0.75pt]   [align=left] {multimodal};

\end{tikzpicture}

\caption{Unimodal Distribution vs Multimodal Distribution.}
    \vspace*{-12pt}
    \label{unimodal-vs-multimodal-policy}
\end{wrapfigure}

\begin{figure*}[t!]
    \centering
    {\includegraphics[width=3.8cm,height=3.8cm]{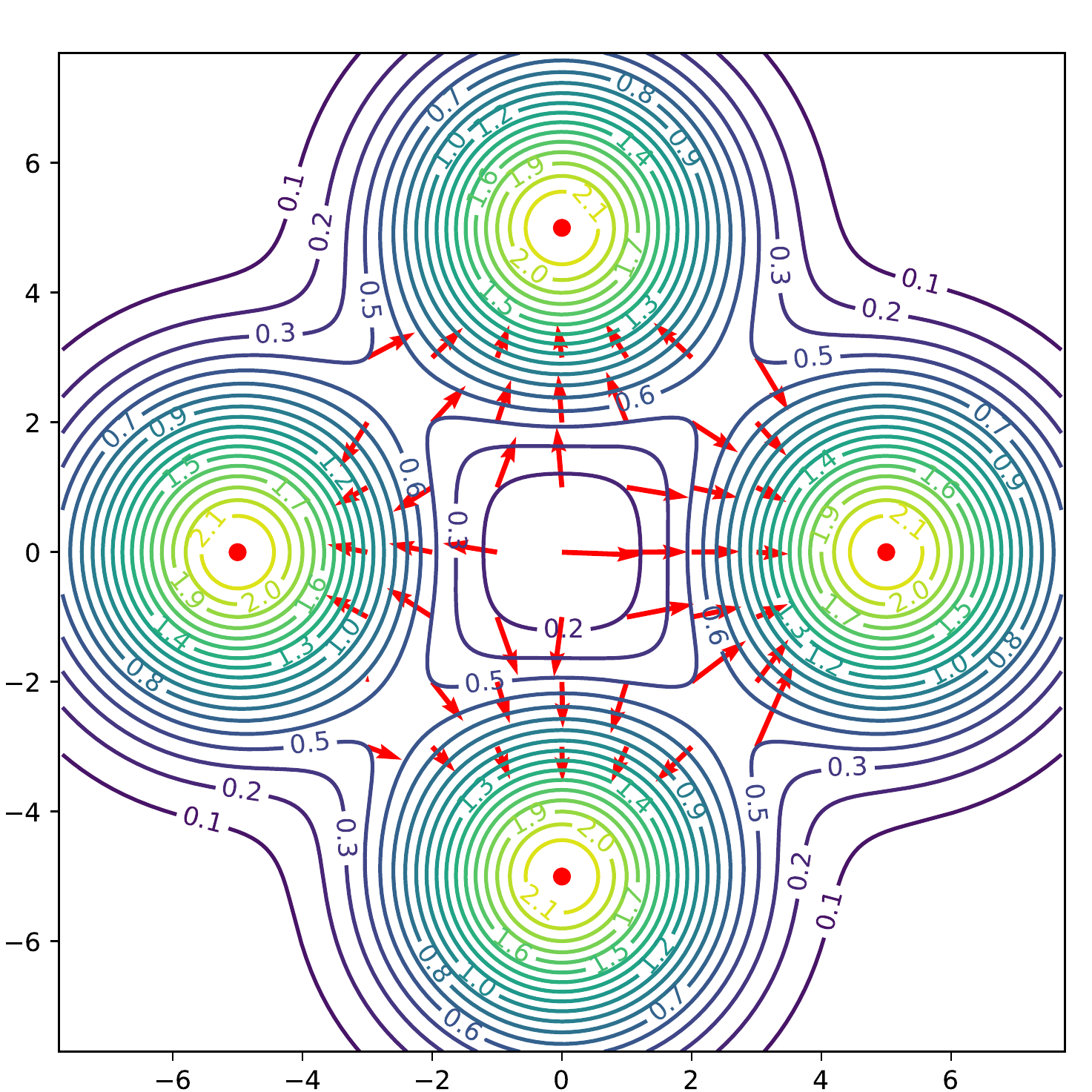}}
    {\includegraphics[width=3.8cm,height=3.8cm]{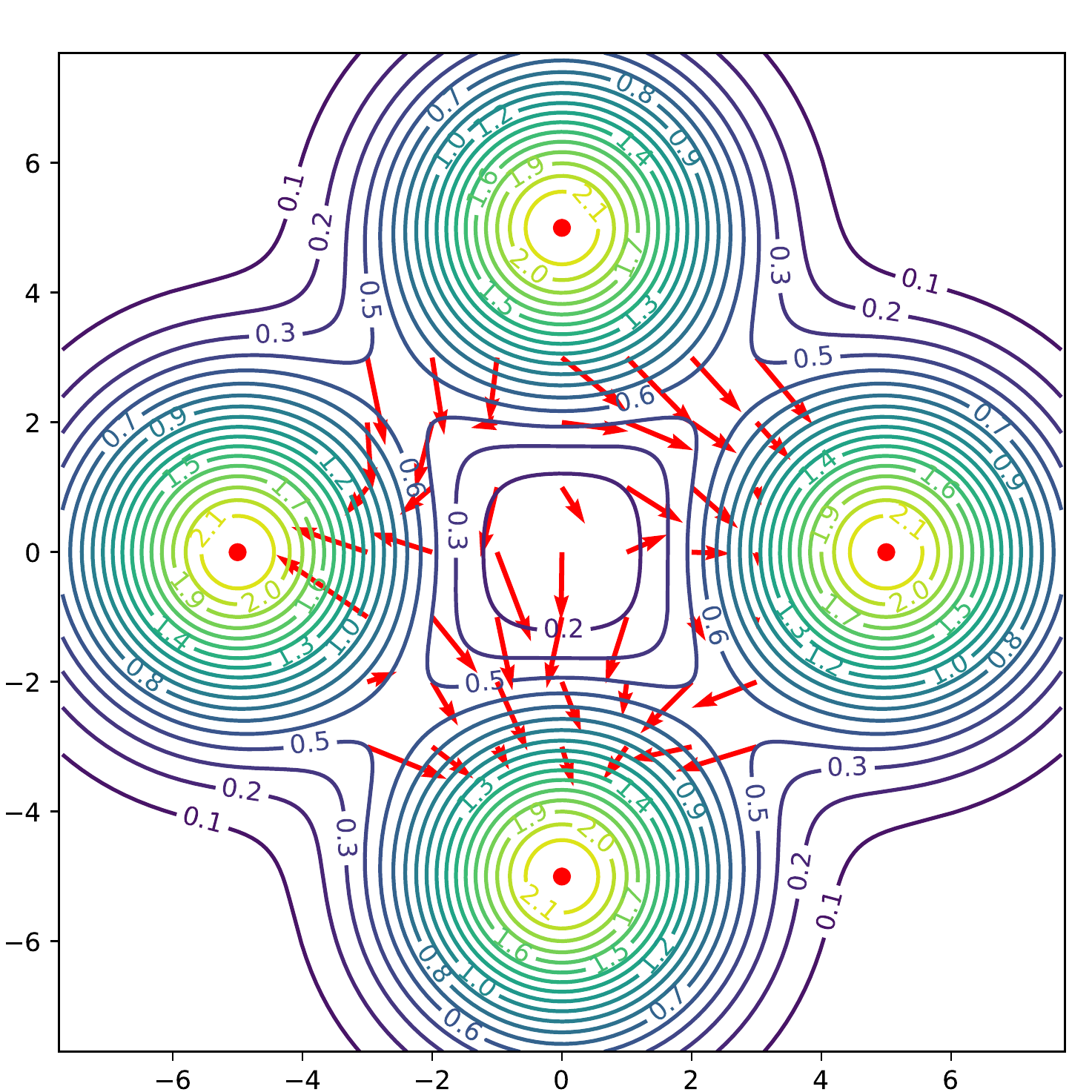}}
            {\includegraphics[width=3.8cm,height=3.8cm]{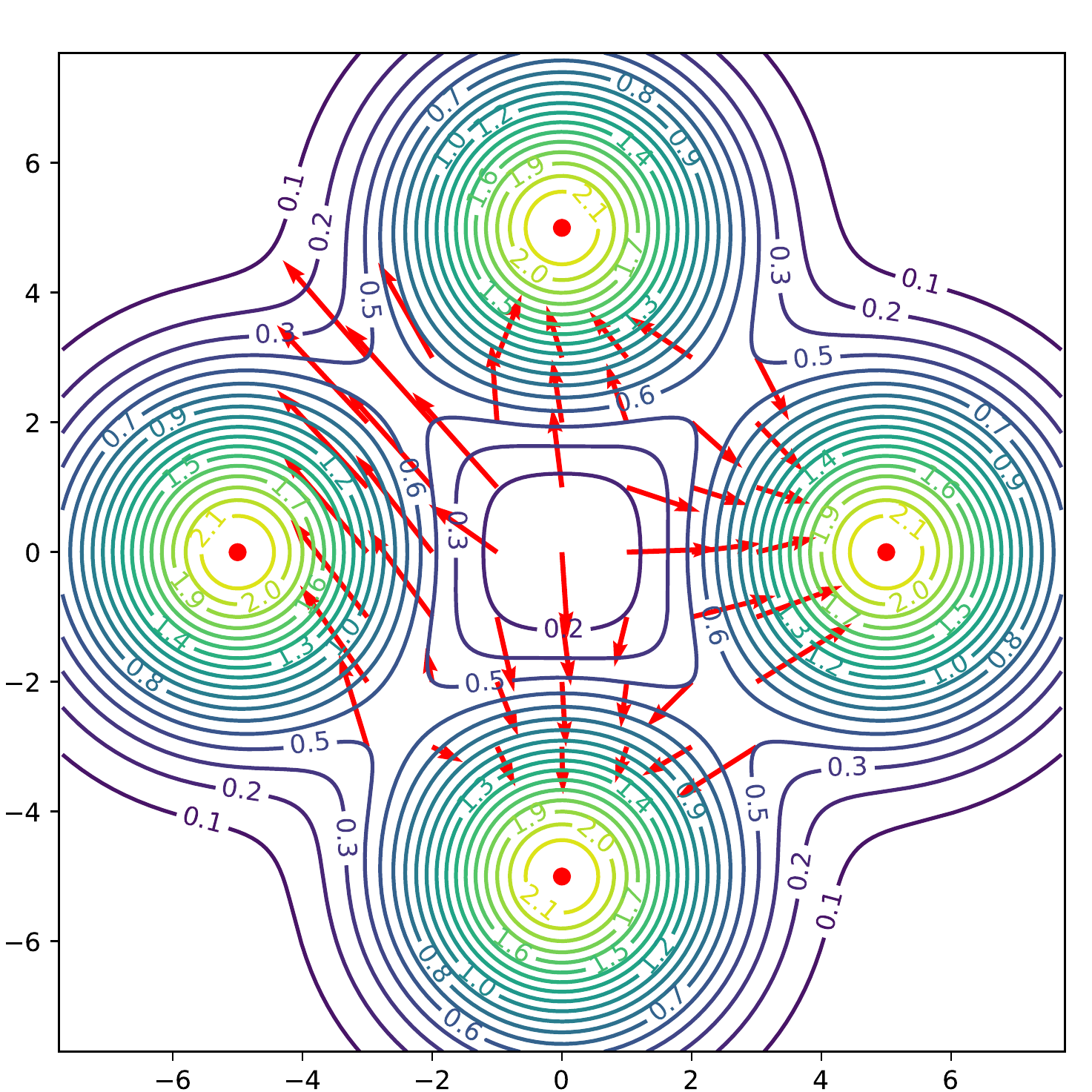}}
    {\includegraphics[width=3.8cm,height=3.8cm]{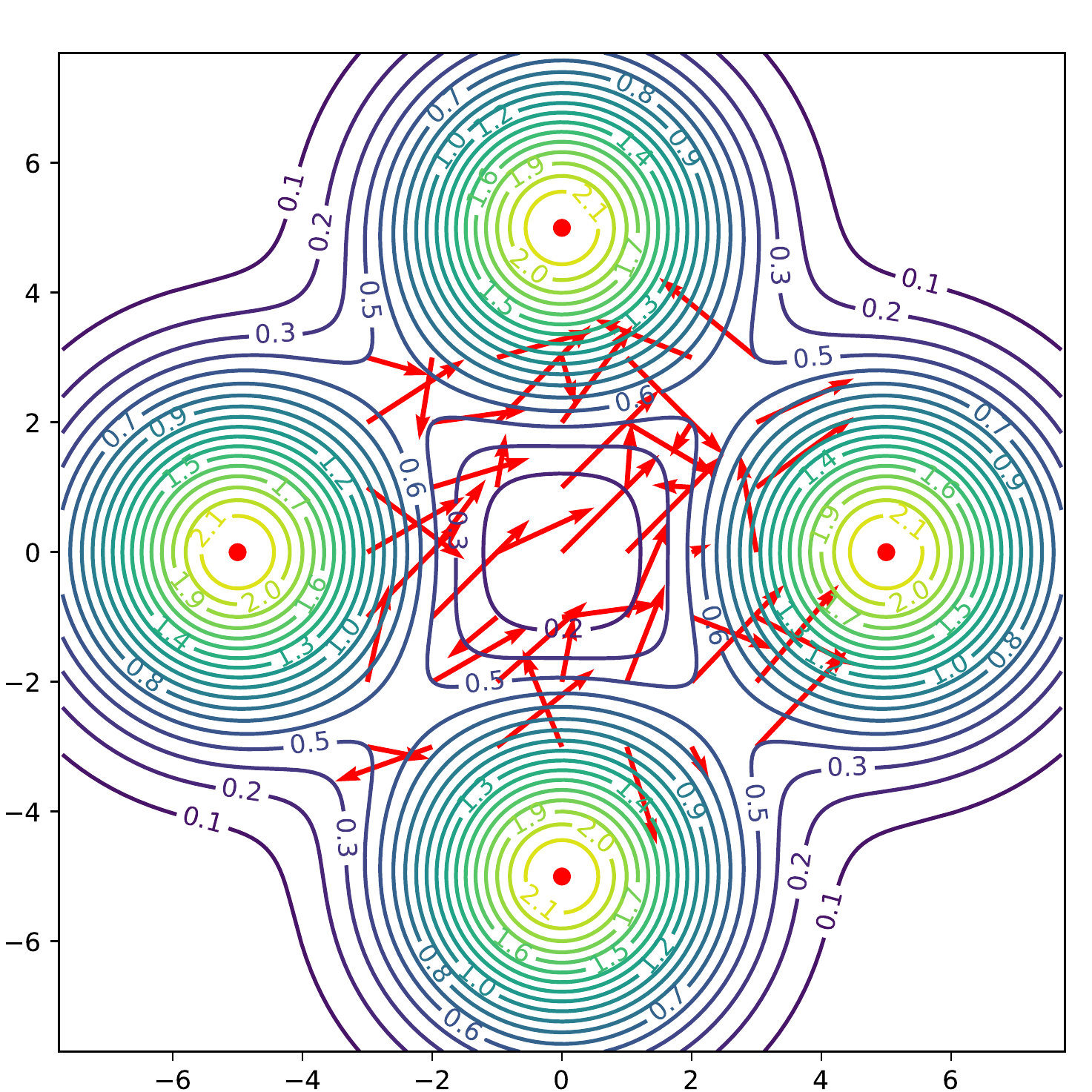}}
      \subfigure[Diffusion Policy]
        {\includegraphics[width=3.8cm,height=3.8cm]{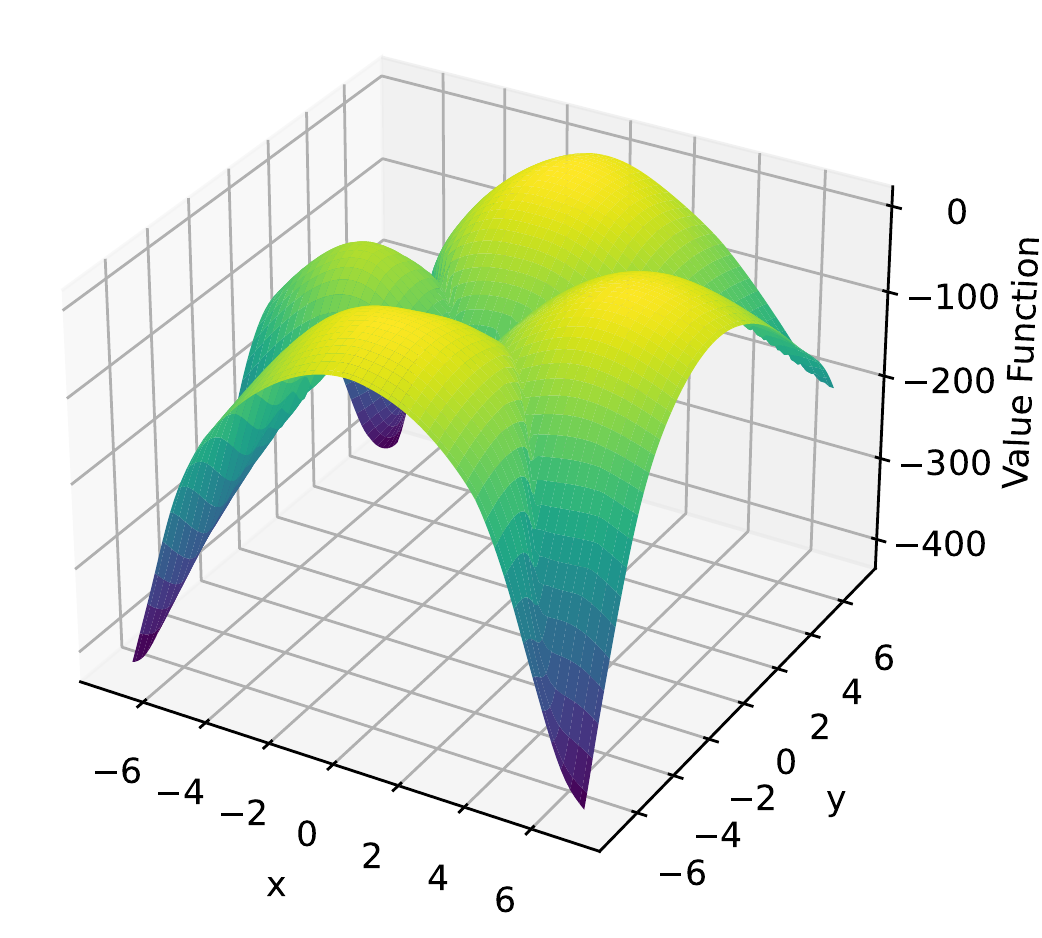}}
    \subfigure[SAC]
    {\includegraphics[width=3.8cm,height=3.8cm]{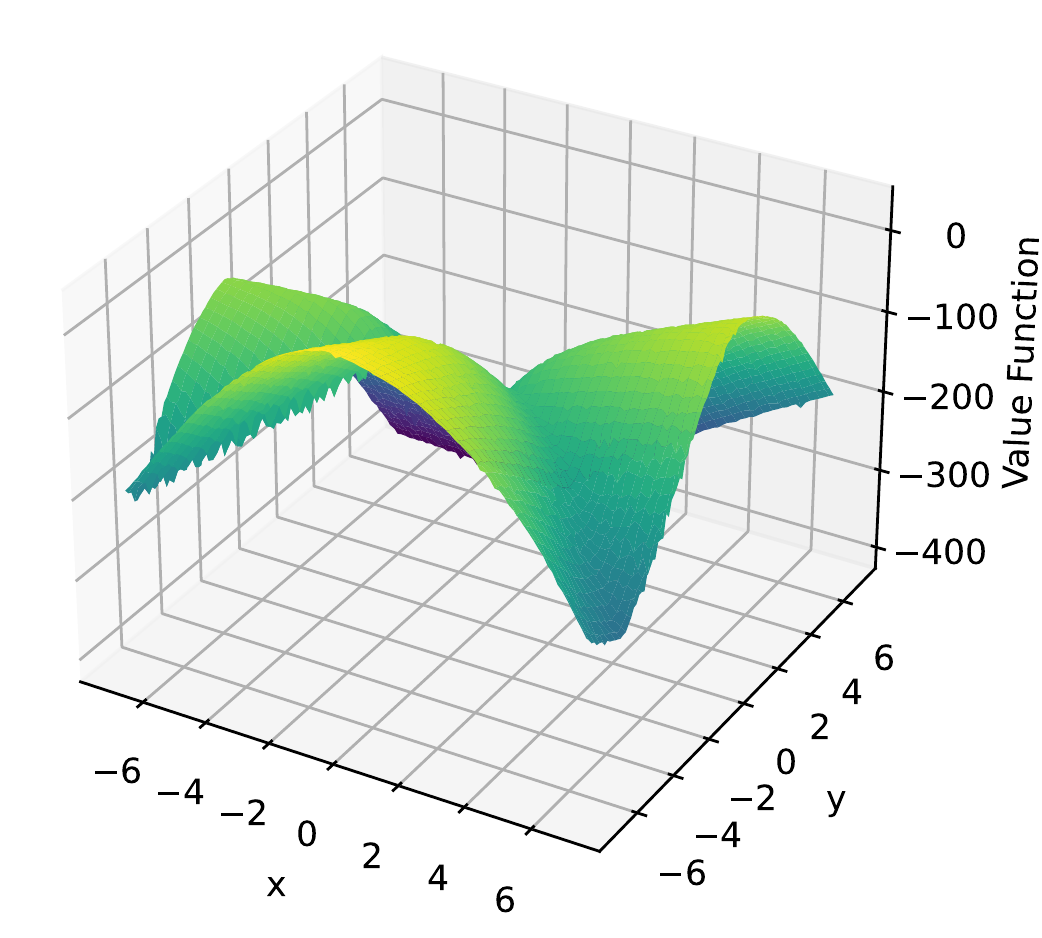}}
     \subfigure[TD3]
    {\includegraphics[width=3.8cm,height=3.8cm]{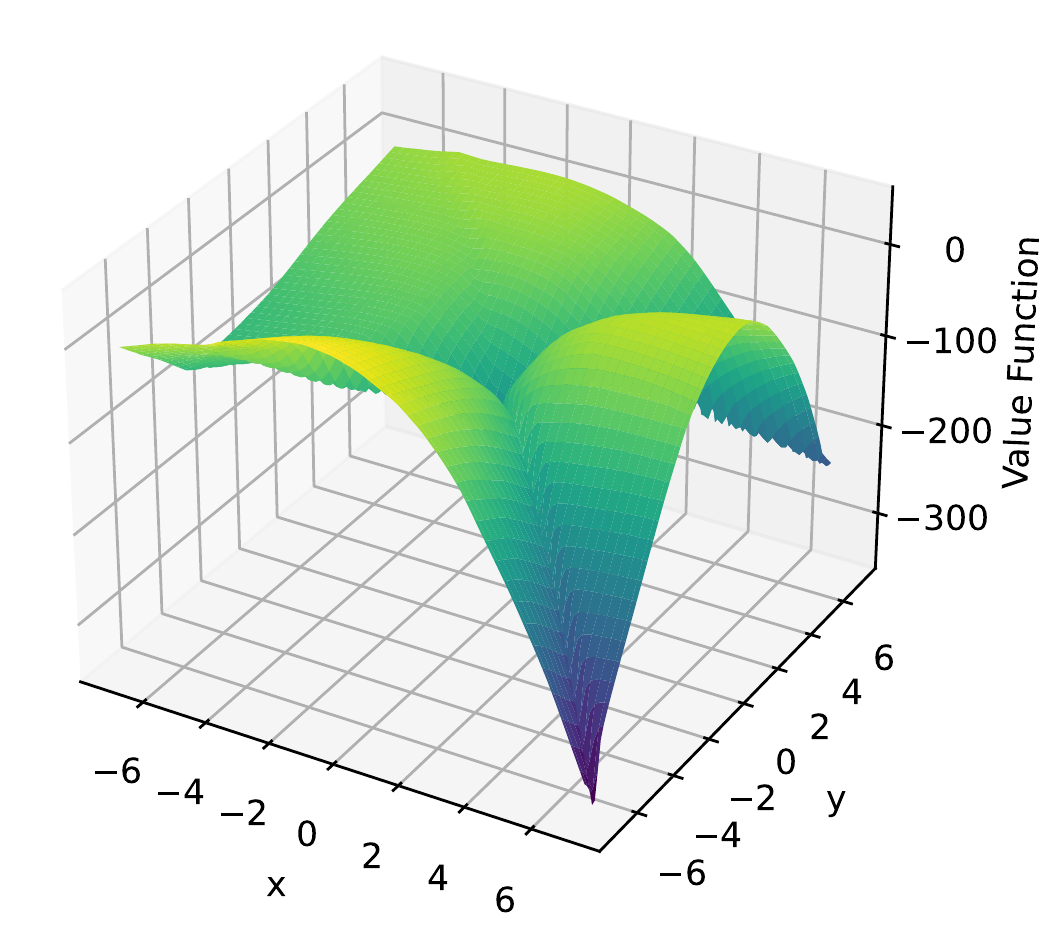}}
    \subfigure[PPO]
    {\includegraphics[width=3.8cm,height=3.8cm]{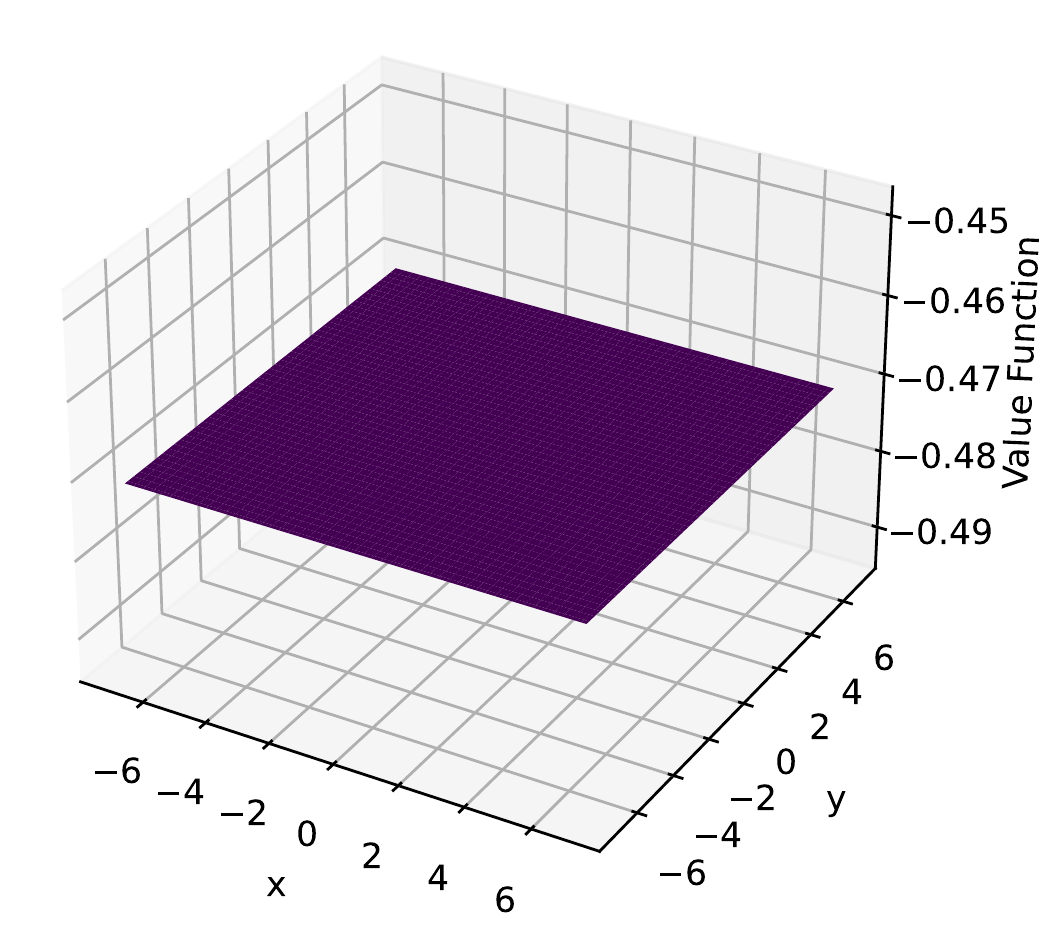}}
    \caption
    {Policy representation comparison of different policies on multimodal environment.
    }
    \label{fig:comparsion-multi-goal}
\end{figure*}

\subsection{Diffusion Model is Powerful to Policy Representation}
\label{diff-me-po-re}

This section presents the diffusion model is powerful to represent a complex policy distribution.
by two following aspects: 
1) fitting a multimodal policy distribution is efficient for exploration;
2) empirical verification with a numerical experiment.
%
%

The Gaussian policy is widely used in RL, which is a unimodal distribution, and it plays actions around the region of its mean center with a higher probability, i.e., the red region {\color{red}{$A$}} in Figure \ref{unimodal-vs-multimodal-policy}.
The unimodal policy weakens the expressiveness of complicated policy and decays the agent's ability to explore the environment.
While for a multimodal policy, it plays actions among the different regions: {\color{blue}{$A_1$}}$\cup${\color{blue}{$A_2$}}$\cup${\color{blue}{$A_3$}}.
Compared to the unimodal policy, the multimodal policy is powerful to explore the unknown world, making the agent understand the environment efficiently and make a more reasonable decision.

We compare the ability of policy representation among SAC, TD3 \citep{fujimoto2018addressing}, PPO and diffusion policy on the “multi-goal” environment \citep{haarnoja2017reinforcement} (see Figure \ref{fig:comparsion-multi-goal}),
where the $x$-axis and $y$-axis are 2D states, the four red dots denote the states of the goal at $(0,5)$, $(0,-5)$, $(5,0)$ and $(-5,0)$ symmetrically. 
A reasonable policy should be able to take actions uniformly to those four goal positions with the same probability, 
which characters the capacity of exploration of a policy to understand the environment. 
In Figure \ref{fig:comparsion-multi-goal}, the red arrowheads represent the directions of actions, and the length of the red arrowheads represents the size of the actions.
Results show that diffusion policy accurately captures a multimodal distribution landscape, while both SAC, TD3, and PPO are not well suited to capture such a multimodality.
From the distribution of action direction and length, we also know the diffusion policy keeps a more gradual and steady action size than the SAC, TD3, and PPO to fit the multimodal distribution. For more details about 2D/3D plots, environment, comparisons, and discussions, please refer to Appendix \ref{app-sec-multi-goal}.

%

\section{Diffusion Policy}
\label{sec-diffusion-policy}

In this section, we present the details of diffusion policy from the following three aspects: its stochastic dynamic equation (shown in Figure \ref{fig:diffusion-policy}), discretization implementation, and finite-time analysis of its performance for the policy representation.

\subsection{Stochastic Dynamics of Diffusion Policy}

Recall Figure \ref{fig:diffusion-policy}, we know diffusion policy contains two processes: forward process and reverse process. We present its dynamic in this section.

\textbf{Forward Process.} To simplify the expression, we only consider $g(t)=\sqrt{2}$, which is parallel to the general setting in Figure \ref{fig:diffusion-policy}.
For any given state $\bs$, the forward process produces a sequence $\{{(\bara_t|\bs)}\}_{t=0:T}$ that starting with $\bara_{0}\sim\pi(\cdot|\bs)$, 
and it follows the Ornstein-Uhlenbeck process (also known as Ornstein-Uhlenbeck SDE),
\begin{flalign}
\label{def:diffusion-policy-sde-forward-process-01}
\dd {\bara_t}=-{\bara_t}\dd t+\sqrt{2}\dd \bw_{t}.
\end{flalign}
Let $\bara_t\sim \barpi_t(\cdot|\bs)$ be the evolution distribution along the Ornstein-Uhlenbeck flow (\ref{def:diffusion-policy-sde-forward-process-01}).
According to Proposition \ref{solution4linearsdes} (see Appendix \ref{app-sec-transition-probability4ou}), we know the conditional distribution of $\bara_t|\bara_0$ is Gaussian,
\begin{flalign}
\label{forward-process-kernel}
\bara_t|\bara_0\sim\calN\left(\mathrm{e}^{-t}\bara_0,\left(1-\mathrm{e}^{-2t}\right)\bI\right).
\end{flalign}
That implies the forward process (\ref{def:diffusion-policy-sde-forward-process-01}) transforms policy $\pi(\cdot|\bs)$ to the Gaussian noise $\calN(\bm{0},\bI)$. 

\textbf{Reverse Process.} For any given state $\bs$, if we reverse the stochastic process $\{{(\bara_{t}|\bs)}\}_{t=0:T}$,
then we obtain a process that transforms noise into the policy $\pi(\cdot|\bs)$. Concretely, we model the policy as the process $\{(\tildea_t|\bs)\}_{t=0:T}$ according to the next Ornstein-Uhlenbeck process (running forward in time),
\begin{flalign}
\label{def:diffusion-policy-sde-reverse-process}
\dd \tildea_{t}=\left(\tildea_{t}+2\bnabla \log p_{T-t}(\tildea_{t})\right)\dd t+\sqrt{2}\dd \bw_{t},
\end{flalign}
where $p_{t}(\cdot)$ is the probability density function of $\barpi_t(\cdot|\bs)$.
Furthermore, according to
\citep{anderson1982reverse}, with an initial action $\tildea_{0}\sim\barpi_{T}(\cdot|\bs)$,
the reverse process $\{\tildea_{t}\}_{t=0:T}$ shares the same distribution as the time-reversed version of the forward process $\{\bara_{T-t}\}_{t=0:T}$. That also implies for all $t=0,1,\cdots,T$,
\begin{flalign}
\label{relation-policy-revers-forward}
\tildepi_{t}(\cdot|\bs)=\barpi_{T-t}(\cdot|\bs),~~~\text{if}~\tildea_{0}\sim\barpi_{T}(\cdot|\bs).
\end{flalign}

\textbf{Score Matching.} 
The score function $\bnabla \log p_{T-t}(\cdot)$ defined in (\ref{def:diffusion-policy-sde-reverse-process}) is not explicit, we consider an estimator ${\bm{\hat \mathbf{S}}}(\cdot,\bs,T-t)$ to approximate the score function at a given state $\bs$. We consider the next problem,
\begin{flalign}
\label{score-matching-01}
{\bm{\hat \mathbf{S}}}(\cdot,\bs,T-t)=:&\arg\min_{\hat{\bm{s}}(\cdot)\in\calF}\E_{\ba\sim\tildepi_{t}(\cdot|\bs)}\left[
\big\|
\hat{\bm{s}}(\ba,\bs,t)-\bnabla \log p_{T-t}(\ba)
\big\|_2^{2}
\right]\\
\label{score-matching}
\overset{(\ref{relation-policy-revers-forward})}=&\arg\min_{\hat{\bm{s}}(\cdot)\in\calF}\E_{\ba\sim\tildepi_{t}(\cdot|\bs)}\left[
\big\|
\hat{\bm{s}}(\ba,\bs,t)-\bnabla \log \tildepi_{t}(\ba|\bs)
\big\|_2^{2}
\right],
\end{flalign}
where $\calF$ is the collection of function approximators (e.g., neural networks). 
We will provide the detailed implementations with a parametric function approximation later; please refer to Section \ref{sec:score-matching-implementation} or Appendix \ref{app-sec:loss}.

\subsection{Exponential Integrator Discretization for Diffusion Policy}

In this section, we consider the implementation of the reverse process (\ref{def:diffusion-policy-sde-reverse-process}) with exponential integrator discretization \citep{zhang2022fastsampling,lee2023convergence}.
Let $h>0$ be the step-size, assume reverse length $K=\frac{T}{h}\in\N$, and $t_{k}=:hk$, $k=0,1,\cdots,K$.
Then we give a partition on the interval $[0,T]$ as follows, $0=t_{0}<t_{1}<\cdots<t_{k}<t_{k+1}<\cdots<t_{K}=T$.
Furthermore, we take the discretization to the reverse process (\ref{def:diffusion-policy-sde-reverse-process}) according to the following equation,
\begin{flalign}
\label{def:diffusion-policy-sde-reverse-process-s}
\dd \hata_t=\big(\hata_t+2 \bm{\hat \mathbf{S}}(\hata_{t_k},\bs,T-t_{k})\big)\dd t+\sqrt{2}\dd \bw_{t}, ~t\in[t_{k},t_{k+1}],
\end{flalign}
where it runs initially from $\hata_0\sim\calN(\bm{0},\bI)$.
By It\^{o} integration to the two sizes of (\ref{def:diffusion-policy-sde-reverse-process-s}) on the $k$-th interval $[t_{k},t_{k+1}]$, we obtain the exact solution of the SDE (\ref{def:diffusion-policy-sde-reverse-process-s}), for each $k=0,1,2,\cdots,K-1$,
\begin{flalign}
\label{iteration-exponential-integrator-discretization}
\hata_{t_{k+1}}=&\mathrm{e}^{h}\hata_{t_{k}}+\left(\mathrm{e}^{h}-1\right)\big(\hata_{t_{k}}+2 \bm{\hat \mathbf{S}}(\hata_{t_k},\bs,T-t_{k})\big)+\sqrt{\mathrm{e}^{2h}-1}\bz_{k},~\bz_{k}\sim\calN(\bm{0},\bI).
\end{flalign}
For the derivation from the SDE (\ref{def:diffusion-policy-sde-reverse-process-s}) to the iteraion (\ref{iteration-exponential-integrator-discretization}), please refer to Appendix \ref{app-sec:exponential-iIntegrator}, and
we have shown the implementation in Algorithm \ref{algo:diffusion-policy-general-case}.

\begin{algorithm}[t]
    \caption{Diffusion Policy with Exponential Integrator Discretization to Approximate $\pi(\cdot|\bs)$}
    \label{algo:diffusion-policy-general-case}
    \begin{algorithmic}[1]
         \STATE \textbf{input:} state $\bs$, horizon $T$, reverse length $K$, step-size $h=\frac{T}{K}$, score estimators $\hat{\mathbf{S}}(\cdot,\bs,T-t)$;
          \STATE \textbf{initialization:} a random action $\hata_{0}\sim\calN(\bm{0},\bI)$;
              \STATE\textbf{for }{$k=0,1,\cdots,K-1$} \textbf{do}
               \STATE ~~~~~a random $\bz_{k}\sim\calN(\bm{0},\bI)$, set $t_k=hk$;
             \STATE ~~~~~$\hata_{t_{k+1}}=\mathrm{e}^{h}\hata_{t_{k}}+\left(\mathrm{e}^{h}-1\right)\left(\hata_{t_{k}}+2 \bm{\hat \mathbf{S}}(\hata_{t_k},\bs,T-t_{k})\right)+\sqrt{\mathrm{e}^{2h}-1}\bz_{k}$;
              \STATE \textbf{output:} $\hata_{t_{K}}$;
     \end{algorithmic}
\end{algorithm}


\subsection{Convergence Analysis of Diffusion Policy}

In this section, we present the convergence analysis of diffusion policy, we need the following notations and assumptions before we further analyze.
Let $\rho(\bx)$ and $\mu(\bx)$ be two smooth probability density functions on the space $\R^{p}$, 
the Kullback–Leibler (KL) divergence and relative Fisher information (FI) from $\mu(\bx)$ to $\rho(\bx)$ are defined as follows,
\[
\KL(\rho\|\mu)=\bigintsss_{\R^{p}}\rho(\bx)\log\dfrac{\rho(\bx)}{\mu(\bx)}\dd \bx, ~\FI(\rho\|\mu)=\bigintsss_{\R^{p}}\rho(\bx)\left\|\bnabla\log\left(\dfrac{\rho(\bx)}{\mu(\bx)}\right)\right\|_{2}^{2}\dd \bx.
\]
\begin{assumption}
[Lipschitz Score Estimator and Policy]
\label{assumption-score}
The score estimator is $L_s$-Lipschitz over action space $\calA$, and the policy $\pi(\cdot|\bs)$ is $L_{p}$-Lipschitz over action space $\calA$, i.e., for any $\ba$, $\ba^{'}\in\calA$, the following holds,
\[
\|\bm{\hat \mathbf{S}}(\ba,\bs,t)-\bm{\hat \mathbf{S}}(\ba^{'},\bs,t)\|\leq L_{s}\|\ba-\ba^{'}\|, ~\|\bnabla\log\pi(\ba|\bs)-\bnabla\log\pi(\ba^{'}|\bs)\|\leq L_{p}\|\ba-\ba^{'}\|.
\]
\end{assumption}
\begin{assumption}
[Policy with $\nu$-LSI Setting]
\label{assumption-policy-calss}
The policy $\pi(\cdot|\bs)$ satisfies $\nu$-Log-Sobolev inequality (LSI) that defined as follows, 
there exists constant $\nu>0$, for any probability distribution $\mu(\bx)$ such that
\[
\KL(\mu\|\pi)\leq\frac{1}{2\nu}\FI(\mu\|\pi).
\]
\end{assumption}

Assumption \ref{assumption-score} is a standard setting for Langevin-based algorithms (e.g., \citep{wibisono2022convergence,vempala2019rapid}), and we extend it with RL notations.
Assumption \ref{assumption-policy-calss} presents the policy distribution class that we are concerned, 
which contains many complex distributions that are not restricted to be log-concave, e.g. any slightly smoothed bound distribution admits the condition (see \citep[Proposition 1]{ma2019sampling}).

\begin{theorem}[Finite-time Analysis of Diffusion Policy]
\label{finite-time-diffusion-policy}
For a given state $\bs$, let $\{\barpi_{t}(\cdot|\bs)\}_{t=0:T}$ and $\{\tildepi_{t}(\cdot|\bs)\}_{t=0:T}$ be the distributions along the flow (\ref{def:diffusion-policy-sde-forward-process-01}) and (\ref{def:diffusion-policy-sde-reverse-process}) correspondingly, where $\{\barpi_{t}(\cdot|\bs)\}_{t=0:T}$ starts at $\barpi_{0}(\cdot|\bs)=\pi(\cdot|\bs)$ and 
$\{\tildepi_{t}(\cdot|\bs)\}_{t=0:T}$ starts at $\tildepi_{0}(\cdot|\bs)=\barpi_{T}(\cdot|\bs)$.
Let $\hatpi_{k}(\cdot|\bs)$ be the distribution of the iteration (\ref{iteration-exponential-integrator-discretization}) at the $k$-th time $t_{k}=hk$, i.e.,
$\hata_{t_k}\sim \hatpi_{k}(\cdot|\bs)$ denotes the diffusion policy (see Algorithm \ref{algo:diffusion-policy-general-case}) at the time $t_{k}=hk$.
Let $\{\hatpi_{k}(\cdot|\bs)\}_{k=0:K}$ be starting at $\hatpi_{0}(\cdot|\bs)=\calN(\bm{0},\bI)$, under Assumption \ref{assumption-score} and \ref{assumption-policy-calss}, let the reverse length
$
K\ge T\cdot\max\left\{\tau^{-1}_{0},\mathtt{T}^{-1}_{0},12L_{s},\nu\right\}$, where constants $\tau_{0}$ and $\mathtt{T}_{0}$ will be special later.
Then the KL-divergence between diffusion policy $\hatpi_{K}(\cdot|\bs)$ and input policy $\pi(\cdot|\bs)$ is upper-bounded as follows,
\begin{flalign}
\nonumber
&\KL\big(\hatpi_{K}(\cdot|\bs)\|\pi(\cdot|\bs)\big)\leq\underbrace{\mathrm{e}^{-\frac{9}{4}\nu hK}\KL \big(\calN(\bm{0},\bI)\|\pi(\cdot|\bs)\big)}_{\mathrm{convergence~of~forward~process}~(\ref{def:diffusion-policy-sde-forward-process-01})}+\underbrace{\left(64pL_{s}\sqrt{{5}/{\nu}}\right)h}_{\mathrm{errors~from~discretization}~(\ref{iteration-exponential-integrator-discretization})}+\dfrac{20}{3}\epsilon_{\mathrm{score}},\\
\nonumber
&\emph{where}~\epsilon_{\mathrm{score}}=\underbrace{\sup_{(k,t)\in[K]\times[t_{k},t_{k+1}]}\left\{\log\E_{\ba\sim\tildepi_t(\cdot|\bs)}\left[\exp\left\|\bm{\hat \mathbf{S}}(\ba,\bs,T-hk)-\bnabla\log\tildepi_t(\ba|\bs)\right\|_{2}^{2}\right]\right\}}_{\mathrm{errors~from~score~matching}~(\ref{score-matching})}.
\end{flalign}
\end{theorem}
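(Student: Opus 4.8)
I would treat the discretized reverse dynamics (\ref{iteration-exponential-integrator-discretization}) as a sampler whose law $\hatpi_k(\cdot|\bs)$ is meant to track the idealized reverse marginals. By (\ref{relation-policy-revers-forward}), the \emph{exact} reverse process (\ref{def:diffusion-policy-sde-reverse-process}) initialized at $\tildepi_0(\cdot|\bs)=\barpi_T(\cdot|\bs)$ satisfies $\tildepi_t(\cdot|\bs)=\barpi_{T-t}(\cdot|\bs)$, hence $\tildepi_T(\cdot|\bs)=\pi(\cdot|\bs)$. The plan is therefore to bound $\KL\big(\hatpi_K(\cdot|\bs)\|\pi(\cdot|\bs)\big)$ by comparing the discrete chain against this exact reverse process, isolating three sources of error that will become the three terms on the right-hand side: (i) the mismatched initialization $\hatpi_0=\calN(\bm{0},\bI)$ versus $\barpi_T(\cdot|\bs)$, controlled through the exponential convergence of the forward flow (\ref{def:diffusion-policy-sde-forward-process-01}); (ii) the exponential-integrator discretization (\ref{iteration-exponential-integrator-discretization}); and (iii) the replacement of $\bnabla\log p_{T-t}$ by the estimator, measured by $\epsilon_{\mathrm{score}}$.

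\textbf{A one-step differential inequality under LSI.} The heart of the argument is a continuous-time interpolation of one step of (\ref{iteration-exponential-integrator-discretization}): on $[t_k,t_{k+1}]$ I freeze the score argument at $\hata_{t_k}$ and let $q_t$ be the law of the interpolating diffusion. Differentiating the KL divergence to the reference along the associated Fokker--Planck equation and completing the square, I would obtain a dissipation inequality of the shape $\frac{\dd}{\dd t}\KL(q_t\|\cdot)\le -c\,\FI(q_t\|\cdot)+(\text{error})$. Applying the $\nu$-LSI of Assumption \ref{assumption-policy-calss} in the form $\FI\ge 2\nu\,\KL$ converts this into $\frac{\dd}{\dd t}\KL\le -c'\nu\,\KL+(\text{error})$, a genuine exponential contraction at rate proportional to $\nu$. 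This is the mechanism producing the factor $\mathrm{e}^{-\frac{9}{4}\nu hK}$, and the contracted quantity is $\KL\big(\calN(\bm{0},\bI)\|\pi(\cdot|\bs)\big)$ precisely because the chain is started at $\hatpi_0=\calN(\bm{0},\bI)$.

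\textbf{Per-step errors and summation.} The error term splits further. For the discretization part, on each interval I would bound $\E\|\hata_t-\hata_{t_k}\|_2^2$ using the explicit exponential-integrator solution (\ref{iteration-exponential-integrator-discretization}) and then convert the displacement into a score-drift gap $\E\|\bm{\hat \mathbf{S}}(\hata_{t_k},\bs,\cdot)-\bm{\hat \mathbf{S}}(\hata_t,\bs,\cdot)\|_2^2$ via the $L_s$-Lipschitz property of Assumption \ref{assumption-score}; a uniform second-moment bound (obtained from LSI, which yields sub-Gaussian tails and hence the $\sqrt{5/\nu}$ factor, together with the dimension factor $p$) produces the stated $O(h)$ contribution $64pL_s\sqrt{5/\nu}\,h$. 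The score-matching part I would handle by a change of measure: writing the contribution as an expectation of $\|\bm{\hat \mathbf{S}}-\bnabla\log\tildepi_t\|_2^2$ under the true reverse law and invoking a Donsker--Varadhan / Gibbs variational inequality turns it into the logarithmic moment-generating functional that defines $\epsilon_{\mathrm{score}}$, which is exactly why $\epsilon_{\mathrm{score}}$ is stated in that exponential form. Unrolling the resulting discrete recursion $\KL_{k+1}\le\mathrm{e}^{-c'\nu h}\KL_k+\delta_{\mathrm{disc}}+\delta_{\mathrm{score}}$ over $k=0,\dots,K-1$ (a discrete Gr\"{o}nwall step) yields the contracted initialization term plus geometric sums of the per-step errors, which collapse to the stated $O(h)$ and $\frac{20}{3}\epsilon_{\mathrm{score}}$ coefficients; the lower bound on $K$ guarantees $h=T/K$ is small enough for the interpolation estimates and for $\mathrm{e}^{-c'\nu h}<1$ to hold with the claimed constants.

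\textbf{Main obstacle.} I expect the principal difficulty to be the time-inhomogeneity of the reverse process: it does not target a fixed stationary measure but the moving family $\barpi_{T-t}(\cdot|\bs)$, so the differential-inequality/LSI contraction must be set up against this drifting reference while still yielding a clean single-rate decay and a finite accumulated error. Making the LSI transfer rigorous along the moving reference, and simultaneously keeping the change-of-measure control of $\epsilon_{\mathrm{score}}$ compatible with the Lipschitz-based discretization estimates so that all constants combine into the stated $\tfrac{9}{4}$, $64p\sqrt{5/\nu}$ and $\tfrac{20}{3}$, is the delicate part of the argument.
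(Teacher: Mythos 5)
Your proposal follows essentially the same route as the paper: a Fokker--Planck interpolation of each exponential-integrator step, a dissipation inequality $\frac{\dd}{\dd t}\KL\le-c\,\FI+(\text{error})$ converted to contraction via the $\nu$-LSI, a Donsker--Varadhan change of measure that produces exactly the log-moment-generating form of $\epsilon_{\mathrm{score}}$, Lipschitz-plus-second-moment bounds for the $O(h)$ discretization error, and a discrete Gr\"{o}nwall unrolling combined with the forward Ornstein--Uhlenbeck flow's exponential convergence to handle the $\calN(\bm{0},\bI)$ initialization. The one slight imprecision is attributing the full rate $\tfrac{9}{4}\nu$ to the per-step contraction: in the paper the discrete recursion contributes only $\tfrac{1}{4}\nu$ per unit time, and the remaining $2\nu$ comes from bounding the initialization mismatch $\KL\big(\calN(\bm{0},\bI)\|\barpi_{T}(\cdot|\bs)\big)\le\mathrm{e}^{-2\nu T}\KL\big(\calN(\bm{0},\bI)\|\pi(\cdot|\bs)\big)$ along the forward flow, which you correctly flag as error source (i) but fold into the wrong term later.
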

Theorem \ref{finite-time-diffusion-policy} illustrates that the errors involve the following three terms.
The first error involves $\KL \big(\calN(\bm{0},\bI)\|\pi(\cdot|\bs)\big)$ that represents how close the distribution of the input policy $\pi$ is to the standard Gaussian noise, which is bounded by the exponential convergence rate of 
Ornstein-Uhlenbeck flow (\ref{def:diffusion-policy-sde-forward-process-01}) \citep{bakry2014analysis,wibisono2018convexity,chen2023sampling}.
The second error is sourced from exponential integrator discretization implementation (\ref{iteration-exponential-integrator-discretization}), which scales with the discretization step-size $h$. The discretization error term implies a first-order convergence rate with respect to the discretization step-size $h$ and scales polynomially on other parameters.
The third error is sourced from score matching (\ref{score-matching}), which represents how close the score estimator $\hat{\mathbf{S}}$ is to the score function $\bnabla\log p_{T-t}(\cdot)$ defined in (\ref{def:diffusion-policy-sde-reverse-process}).
That implies for the practical implementation, the error from score matching could be sufficiently small if we find a good score estimator $\hat{\mathbf{S}}$.

Furthermore, for any $\epsilon>0$, if we find a good score estimator that makes the score matching error satisfy $\epsilon_{\mathrm{score}}<\frac{1}{20}\epsilon$, the step-size $h=\calO\left(\frac{\epsilon\sqrt{\nu}}{pL_{s}}\right)$, and reverse length $K=\frac{9}{4\nu h}\log\frac{3\KL (\calN(\bm{0},\bI)\|\pi(\cdot|\bs))}{\epsilon}$, then Theorem \ref{finite-time-diffusion-policy} implies the output of diffusion policy $(\hatpi_{K}(\cdot|\bs)$ makes a sufficient close to the input policy $\pi(\cdot|\bs)$ with the measurement by $\KL(\hatpi_{K}(\cdot|\bs)\|\pi(\cdot|\bs))\leq\epsilon$.

\section{DIPO: Implementation of Diffusion Policy for Model-Free Online RL }

\label{sec:score-matching-implementation}

In this section, we present the details of DIPO, which is an implementation of  \textbf{DI}ffusion \textbf{PO}licy for model-free reinforcement learning. According to Theorem \ref{finite-time-diffusion-policy}, diffusion policy only fits the current policy $\pi$ that generates the training data (denoted as $\calD$), but it does not improve the policy $\pi$. It is different from traditional policy-based RL algorithms, we can not improve a policy according to the policy gradient theorem since diffusion policy is not a parametric function but learns a policy via a stochastic process. Thus, we need a new way to implement policy improvement, which is nontrivial.
We have presented the framework of DIPO in Figure \ref{fig:framework}, and shown the key steps of DIPO in Algorithm \ref{algo:diffusion-free-based-rl}. For the detailed implementation, please refer to Algorithm \ref{app-algo-diffusion-free-based-rl} (see Appendix \ref{sec:app-details-of-implementation}).


\subsection{Training Loss of DIPO}

It is intractable to directly apply the formulation (\ref{score-matching}) to estimate the score function since $\bnabla \log p_{t}(\cdot)=\bnabla \log \barpi_{t}(\cdot|\bs)$ is unknown, which is sourced from the initial distribution $\bara_{0}\sim\pi(\cdot|\bs)$ is unknown. According to denoising score matching \citep{vincent2011connection,hyvarinen2005estimation}, a practical way is to solve the next optimization problem (\ref{denosing-score-matching}). For any given $\bs\in\calS$,
\begin{flalign}
\label{denosing-score-matching}
\min_{\bphi}\calL(\bphi)=
\min_{\hat{\bm{s}}_{\bphi}\in\calF}
\int_{0}^{T}
\omega (t)
\E_{\bara_{0}\sim\pi(\cdot|\bs)}
\E_{\bara_t|\bara_0}
\left[
\big\|
\hat{\bm{s}}_{\bphi}(\bara_t,\bs,t)-\bnabla\log \varphi_{t}(\bara_t|\bara_0)
\big\|_2^{2}
\right]\dd t,
\end{flalign}
where $\omega(t):[0,T]\rightarrow \R_{+}$ is a positive weighting function; $\varphi_{t}(\bara_t|\bara_0)=\calN\left(\mathrm{e}^{-t}\bara_0,\left(1-\mathrm{e}^{-2t}\right)\bI\right)$ denotes the transition kernel of the forward process (\ref{forward-process-kernel}); $\E_{\bara_t|\bara_0}[\cdot]$ denotes the expectation with respect to $\varphi_{t}(\bara_t|\bara_0)$; and $\bphi$ is the parameter needed to be learned. 
Then, according to Theorem \ref{loss-diff-policy} (see Appendix \ref{app-sec:loss}), we rewrite the objective (\ref{denosing-score-matching}) as follows,
\begin{flalign}
\label{loss-diffusion-action}
\calL(\bphi)&=\E_{k\sim\calU(\{1,2,\cdots,K\}),\bz\sim\calN(\bm{0},\bI),(\bs,\ba)\sim\calD}\left[\|\bz-\bepsilon_{\bphi}\left(\sqrt{\bar\alpha_k}\ba+\sqrt{1-\bar{\alpha}_k}\bz,\bs,k\right)\|_{2}^{2}\right],
\end{flalign}
where $\calU(\cdot)$ denotes uniform distribution,
\[\bm{\epsilon}_{\bphi}\left(\cdot,\cdot,k\right)=-\sqrt{1-\bar{\alpha}_{k}}\hat{\bm{s}}_{\bphi}\left(\cdot,\cdot,T-t_k\right),\] and $\bar{\alpha}_k$ will be special.
The objective (\ref{loss-diffusion-action}) provides a way to learn $\bphi$ from samples; see line 14-16 in Algorithm \ref{algo:diffusion-free-based-rl}.

\subsection{Playing Action of DIPO}
Replacing the score estimator $\hat{\mathbf{S}}$ (defined in Algorithm \ref{algo:diffusion-policy-general-case}) according to $\hat{\bm{\epsilon}}_{\bphi}$, after some algebras (see Appendix \ref{app-derivation-of-actions}), we rewrite diffusion policy (i.e., Algorithm \ref{algo:diffusion-policy-general-case}) as follows,
\begin{flalign}
\label{dipo-actions}
\hata_{k+1}=\dfrac{1}{\sqrt{\alpha_{k}}}\left(\hata_{k}-\dfrac{1-\alpha_{k}}{\sqrt{1-\bar{\alpha}_{k}}}\bm{\bepsilon_{\bphi}}(\hata_{k},\bs,k)\right)
+\sqrt{\dfrac{1-\alpha_{k}}{\alpha_{k}}}\bz_{k},
\end{flalign}
where $k=0,1,\cdots,K-1$ runs forward in time, the noise $\bz_{k}\sim\calN(\bm{0},\bI)$. The agent plays the last (output) action $\hata_{K}$.

\begin{algorithm}[t]
    \caption{ (DIPO) Model-Free Reinforcement Learning with \textbf{DI}ffusion \textbf{PO}licy}
    \label{algo:diffusion-free-based-rl}
    \begin{algorithmic}[1]
         \STATE \textbf{initialize} ${\bphi}$, critic network $Q_{\bpsi}$; $\{\alpha_i\}_{i=0}^{K}$; $\bar{\alpha}_{k}=\prod_{i=1}^{k}\alpha_{i}$; step-size $\eta$;
        \REPEAT
               \STATE dataset $\calD\gets\emptyset$; initialize $\bs_{0}\sim d_{0}(\cdot)$; 
               \STATE {\color{lightgray}{{{\texttt{\#update~experience}}}}}
                 \STATE \textbf{for} {$t=0,1,\cdots,T$}~\textbf{do} 
               \STATE ~~~~play $\ba_{t}$ follows (\ref{dipo-actions});
                $\bs_{t+1}\sim \Pro(\cdot|\bs_{t},\ba_{t})$;
                $\calD\gets\calD\cup\{\bs_{t},\ba_{t},\bs_{t+1},r(\bs_{t+1}|\bs_{t},\ba_{t})\}$;     
   \STATE   {\color{lightgray}{{{\texttt{\#update~value~function}}}}}
       \STATE \textbf{repeat} $N$ times
     \STATE ~~~~sample $(\bs_{t},\ba_{t},\bs_{t+1},r(\bs_{t+1}|\bs_{t},\ba_{t}))\sim\calD$ i.i.d;
    take gradient descent on $\ell_{\mathrm{Q}}(\bpsi)$ (\ref{loss-q});
     \STATE  {\color{lightgray}{{{\texttt{\#action gradient}}}}}
          \STATE \textbf{for} {$t=0,1,\cdots,T$}~\textbf{do} 
         \STATE~~~~replace each action $\ba_{t}\in\calD$ follows 
         $\ba_{t}\gets\ba_{t}+\eta\nabla_{\ba}Q_{\bpsi}(\bs_t,\ba)|_{\ba=\ba_t};$
     \STATE  {\color{lightgray}{{{\texttt{\#update~policy}}}}}
          \STATE\textbf{repeat} $N$ times 
            \STATE~~~~sample $(\bs,\ba)$ from $\calD$ i.i.d, sample index $k\sim \calU(\{1,\cdots,K\})$, $\bz\sim \calN(\bm{0},\bI)$;
       \STATE~~~~take gradient decent on the loss $\ell_{\mathrm{d}}(\bphi)=\|\bz-\bepsilon_{\bphi}(\sqrt{\bar\alpha_k}\ba+\sqrt{1-\bar{\alpha}_k}\bz,\bs,k)\|_{2}^{2}$; 
           \UNTIL{the policy performs well in the environment.}
     \end{algorithmic}
\end{algorithm}

\subsection{Policy Improvement of DIPO}

According to (\ref{loss-diffusion-action}), we know that only the state-action pairs $(\bs,\ba)\in\calD$ are used to learn a policy.
That inspires us that if we design a method that transforms a given pair $(\bs,\ba)\in\calD$ to be a  ``better'' pair, then we use the ``better'' pair to learn a new diffusion policy $\pi^{'}$,  then $\pi^{'}\succeq\pi$. 
About  ``better'' state-action pair should maintain a higher reward performance than the originally given pair $(\bs,\ba)\in\calD$. 
We break our key idea into two steps: 
\textbf{1)} first, we regard the reward performance as a function with respect to actions, $J_{\pi}(\ba)=\E_{\bs\sim d_{0}(\cdot)}[Q_{\pi}(\bs,\ba)]$, which quantifies how the action $\ba$ affects the performance; 
\textbf{2)} then, we update all the actions $\ba\in\calD$ through the direction $\bnabla_{\ba}J_{\pi}(\ba)$ by gradient ascent method:
\begin{flalign}
\label{improve-action}
\ba\gets\ba+\eta\bnabla_{\ba}J_{\pi}(\ba)=\ba+\eta\E_{\bs\sim d_{0}(\cdot)}[\bnabla_{\ba}Q_{\pi}(\bs,\ba)]
,
\end{flalign}
where $\eta>0$ is step-size, and we call $\nabla_{\ba}J_{\pi}(\ba)$ as \textbf{action gradient}.
To implement (\ref{improve-action}) from samples, we need a neural network $Q_{\bpsi}$ to estimate $Q_{\pi}$.
Recall $\{\bs_{t},\ba_{t},\bs_{t+1},r(\bs_{t+1}|\bs_{t},\ba_{t})\}_{t\ge0}\sim\pi$, we train the parameter $\bpsi$ by minimizing the following Bellman residual error,
\begin{flalign}
\label{loss-q}
\ell_{\mathrm{Q}}(\bpsi)=\big(r(\bs_{t+1}|\bs_t,\ba_t)+\gamma Q_{\bpsi}(\bs_{t+1},\ba_{t+1})-Q_{\bpsi}(\bs_t,\ba_t)\big)^2.
\end{flalign}
Finally, we consider each pair $(\bs_{t},\ba_{t})\in\calD$, and replace the action $\ba_{t}\in\calD$ as follows,
\begin{flalign}
\label{def:improve-action}
\ba_{t}\gets\ba_{t}+\eta\bnabla_{\ba}Q_{\bpsi}(\bs_t,\ba)|_{\ba=\ba_t}.
\end{flalign}


%


\section{Related Work}
\label{app-related-work}

Due to the diffusion model being a fast-growing field, this section only presents the work that relates to reinforcement learning, a recent work \citep{yang2022diffusion} provides a comprehensive survey on the diffusion model.
In this section, first, we review recent advances in diffusion models with reinforcement learning. 
Then, we review the generative models for reinforcement learning.

\subsection{Diffusion Models for Reinforcement Learning}

The diffusion model with RL first appears in \citep{janner2022diffuser}, where it proposes the diffuser that plans by iteratively refining trajectories, which is an essential offline RL method. Later \cite{ajay2023conditional} model a policy as a return conditional diffusion model, \cite{chen2023offline,wang2023diffusion,chi2023diffusion} consider to generate actions via diffusion model. SE(3)-diffusion fields \citep{urain2023se} consider learning data-driven SE(3) cost functions as diffusion models.  
\cite{pearce2023imitating} model the imitating human behavior with diffusion models.
\cite{reuss2023goal} propose score-based diffusion policies for the goal-conditioned imitation learning problems.
ReorientDiff \citep{mishra2023reorientdiff} presents a reorientation planning method that utilizes a diffusion model-based approach. 
StructDiffusion \citep{liu2022structdiffusion} is an object-centric transformer with a diffusion model,  based on high-level language goals, which constructs structures out of a single
RGB-D image.
\cite{brehmer2023edgi} propose an equivariant diffuser for generating interactions (EDGI), which trains a diffusion model on an offline trajectory dataset, where EDGI learns a world model and planning in it as a conditional generative modeling problem follows the diffuser \citep{janner2022diffuser}.
DALL-E-Bot \citep{kapelyukh2022dall} explores the web-scale image diffusion models for robotics.
AdaptDiffuser \citep{liang2023adaptdiffuser} is an evolutionary planning algorithm with diffusion, which is adapted to unseen tasks.

The above methods are all to solve offline RL problems, to the best of our knowledge, the proposed DIPO is the first diffusion approach to solve online model-free RL problems.
The action gradient plays a critical way to implement DIPO, which never appears in existing RL literature.
In fact, the proposed DIPO shown in Figure \ref{fig:framework} is a general training framework for RL, where we can replace the diffusion policy with any function fitter (e.g., MLP or VAE).

\subsection{Generative Models for Policy Learning}

In this section, we mainly review the generative models, including VAE \citep{kingma2013auto}, GAN \citep{goodfellow2020generative}, Flow  \citep{rezende2015variational},
and GFlowNet \citep{bengio2021flow,bengio2021gflownet} for policy learning.
Generative models are mainly used in cloning diverse behaviors \citep{pomerleau1988alvinn}, imitation learning \citep{osa2018algorithmic}, goal-conditioned imitation learning \citep{argall2009survey}, or offline RL \citep{levine2020offline}, a recent work \citep{yang2023foundation} provides a foundation presentation for the generative models for policy learning.

 \textbf{VAE for Policy Learning.} \cite{lynch2020learning,ajay2021opal} have directly applied  auto-encoding variational Bayes (VAE) \citep{kingma2013auto} and VQ-VAE \citep{van2017neural}
 model behavioral priors. 
 \cite{mandlekar2020learning} design the low-level policy that is conditioned on latent from the CVAE.
\cite{pertsch2021accelerating} joint the representation of skill embedding and skill prior via a deep latent variable model.
 \cite{mees2022matters,rosete2023latent} consider seq2seq CVAE \citep{lynch2020learning,wang2022hierarchical} to model of conditioning the action decoder on the latent plan allows the policy to use the entirety of its capacity for learning unimodal behavior.

\textbf{GAN for Imitation Learning.} GAIL \citep{ho2016generative} considers the Generative Adversarial Networks (GANs)  \citep{goodfellow2020generative} to imitation learning.
These methods consist of a generator and a discriminator, where the generator policy learns to imitate the experts'  behaviors, and the discriminator distinguishes between real and fake trajectories, which models the imitation learning as a distribution matching problem between the expert policy's state-action distribution and the agent’s policy \citep{fulearning,wang2021learning}. For several advanced results and applications, please refer to \citep{chen2023gail,deka2023arc,rafailov2023model,taranovic2023adversarial}.

\textbf{Flow and GFlowNet Model for Policy Learning.} \cite{singhparrot2020} consider normalizing flows \citep{rezende2015variational} for the multi-task RL tasks.
\cite{li2023diverse} propose diverse policy optimization, which consider the GFlowNet \citep{bengio2021flow,bengio2021gflownet} for the structured action spaces.
\cite{licflownets} propose CFlowNets that combines GFlowNet with continuous control.
Stochastic GFlowNet \citep{pan2023stochastic} learns a model of the environment to capture the stochasticity of state transitions. \cite{malkintrajectory2022} consider training a GFlowNet with trajectory balance.

\textbf{Other Methods.} Decision Transformer (DT) \citep{chen2021decision} model the offline RL tasks as a conditional sequence problem, which does not learn a policy follows the traditional methods (e.g., \cite{sutton1988learning,sutton1998reinforcement}).
Those methods with DT belong to the task-agnostic behavior learning methods, which is an active direction in policy learning (e,g.,  \citep{cui2023play, rt12022arxiv,zheng2022online,konan2023contrastive,kimpreference2023}).
Energy-based models \citep{lecun2006tutorial} are also modeled as conditional policies  \citep{florence2022implicit} or applied to inverse RL \citep{liu2021energy}.
Autoregressive model \citep{vaswani2017attention, brown2020language} represents the policy as the distribution of action, where it considers the distribution of the whole trajectory \citep{reed2022a, shafiullah2022behavior}.

\section{Experiments}
\label{section-ex}

In this section, we aim to cover the following three issues: How does DIPO compare to the widely used RL algorithms (SAC, PPO, and TD3) on the standard continuous control benchmark?
How to show and illustrate the empirical results? 
How does the diffusion model compare to VAE \citep{kingma2013auto} and multilayer perceptron (MLP) for learning distribution?
How to choose the reverse length $K$ of DIPO for the reverse inference?

\subsection{Comparative Evaluation and Illustration}
\label{ex-comparative-eva-ill}

We provide an evaluation on MuJoCo tasks \citep{todorov2012mujoco}.
Figure \ref{fig:comparsion-mujoco} shows the reward curves for SAC, PPO, TD3, and DIPO on MuJoCo tasks. 
To demonstrate the robustness of the proposed DIPO, we train DIPO with the same hyperparameters for all those 5 tasks, where we provide the hyperparameters in Table \ref{tab:hyperparameters-mujoco}, see Appendix \ref{app-sec:hyper-parameters4mujoco}. For each algorithm, we plot the average return of 5 independent trials as the solid curve and plot the standard deviation across 5 same seeds as the transparent shaded region. We evaluate all the methods with $10^{6}$ iterations. Results show that the proposed DIPO achieves the best score across all those 5 tasks, and DIPO learns much faster than SAC, PPO, and TD3 on the tasks of Ant-3v and Walker2d-3v. 
Although the asymptotic reward performance of DIPO is similar to baseline algorithms on other 3 tasks, the proposed DIPO achieves better performance at the initial iterations, 
we will try to illustrate some insights for such empirical results of HalfCheetah-v3 in Figure \ref{state-HalfCheetah}, for more discussions, see Appendix \ref{app-sec-experiment-dipo}.

\begin{figure*}[t!]
    \centering
    \subfigure[Ant-v3]
    {\includegraphics[width=3.0cm,height=2.5cm]{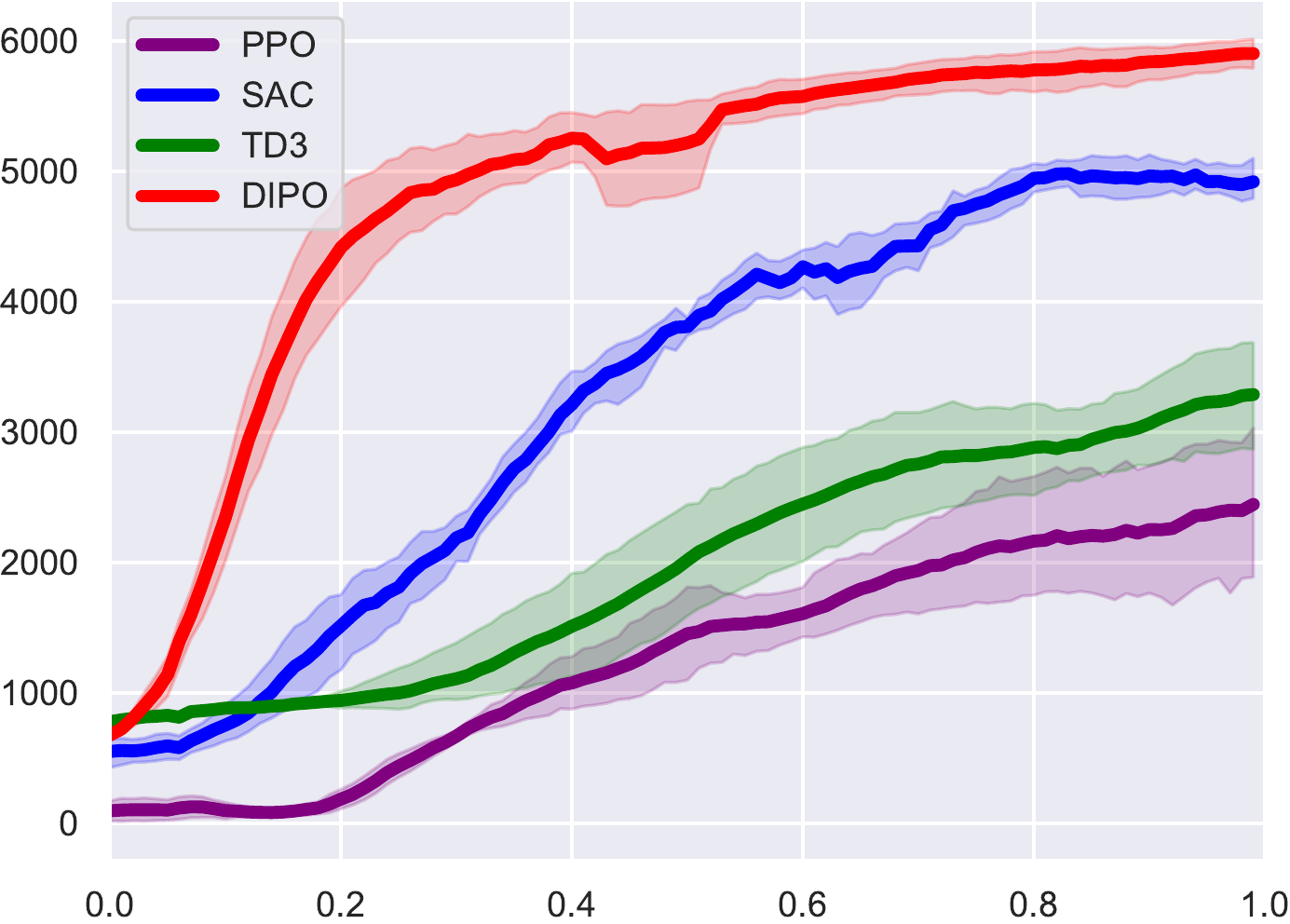}}
      \subfigure[HalfCheetah-v3]
     {\includegraphics[width=3.0cm,height=2.5cm]{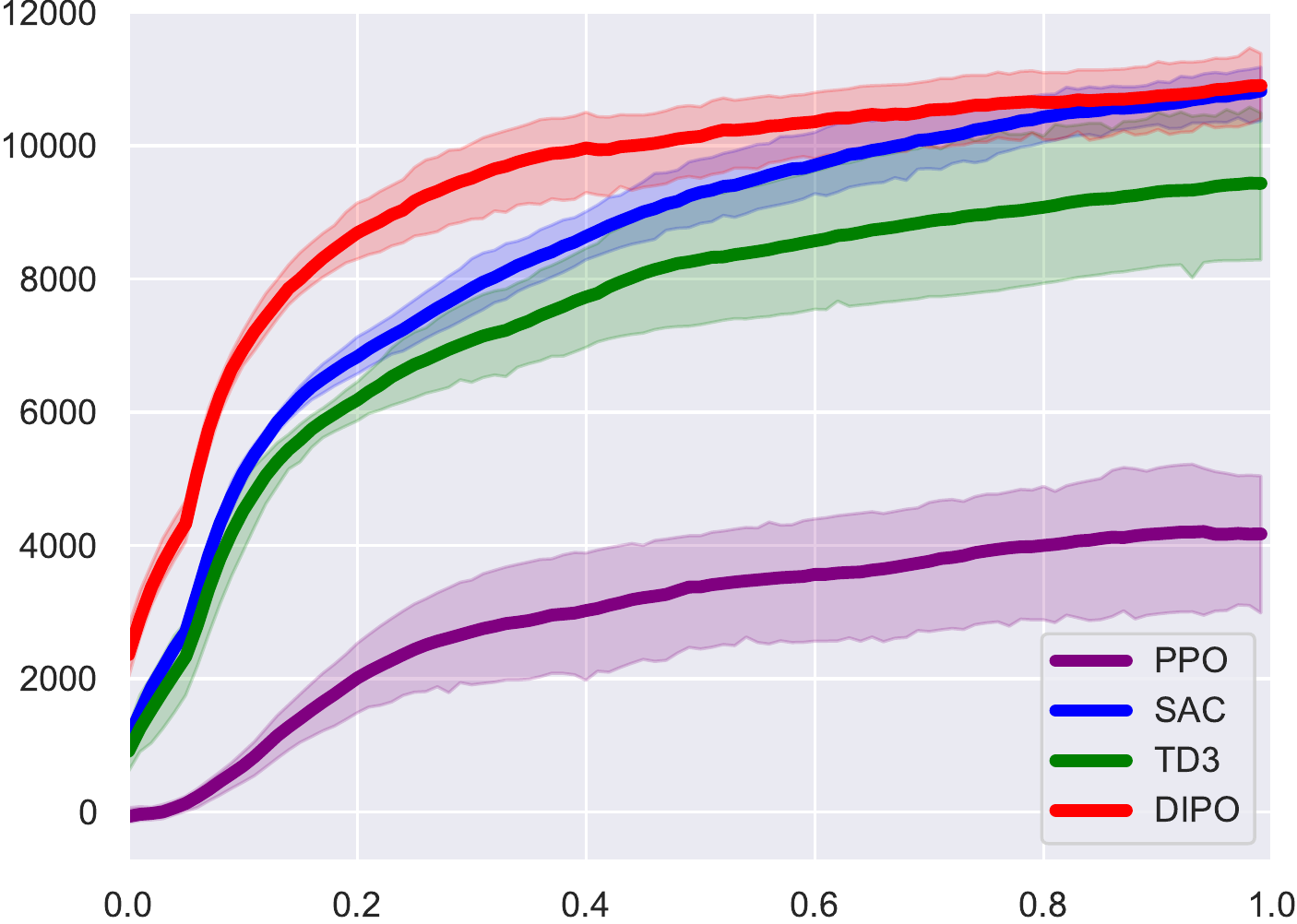}}
       \subfigure[Hopper-v3]
      {\includegraphics[width=3.0cm,height=2.5cm]{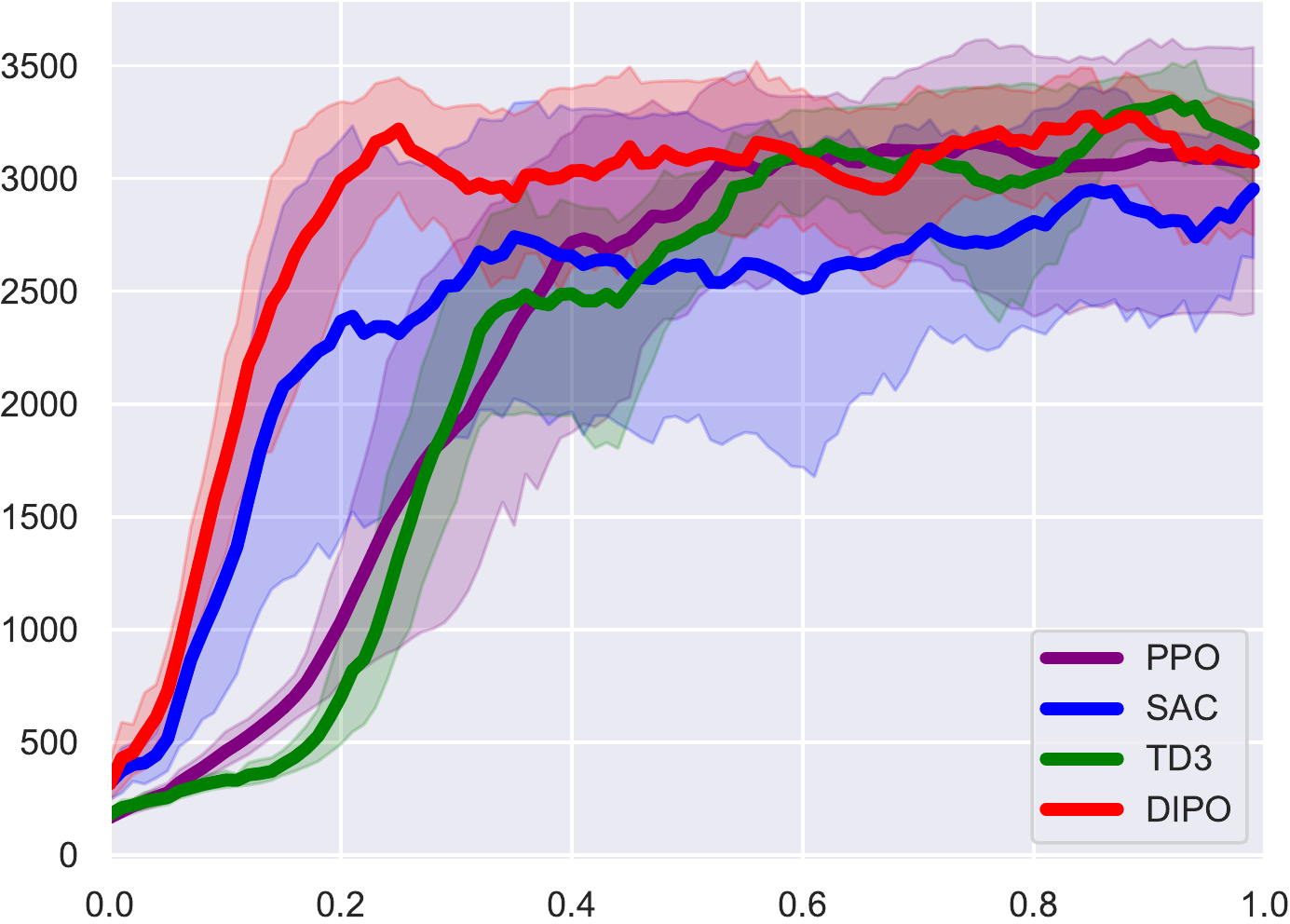}}
        \subfigure[Humanoid-v3]
       {\includegraphics[width=3.0cm,height=2.5cm]{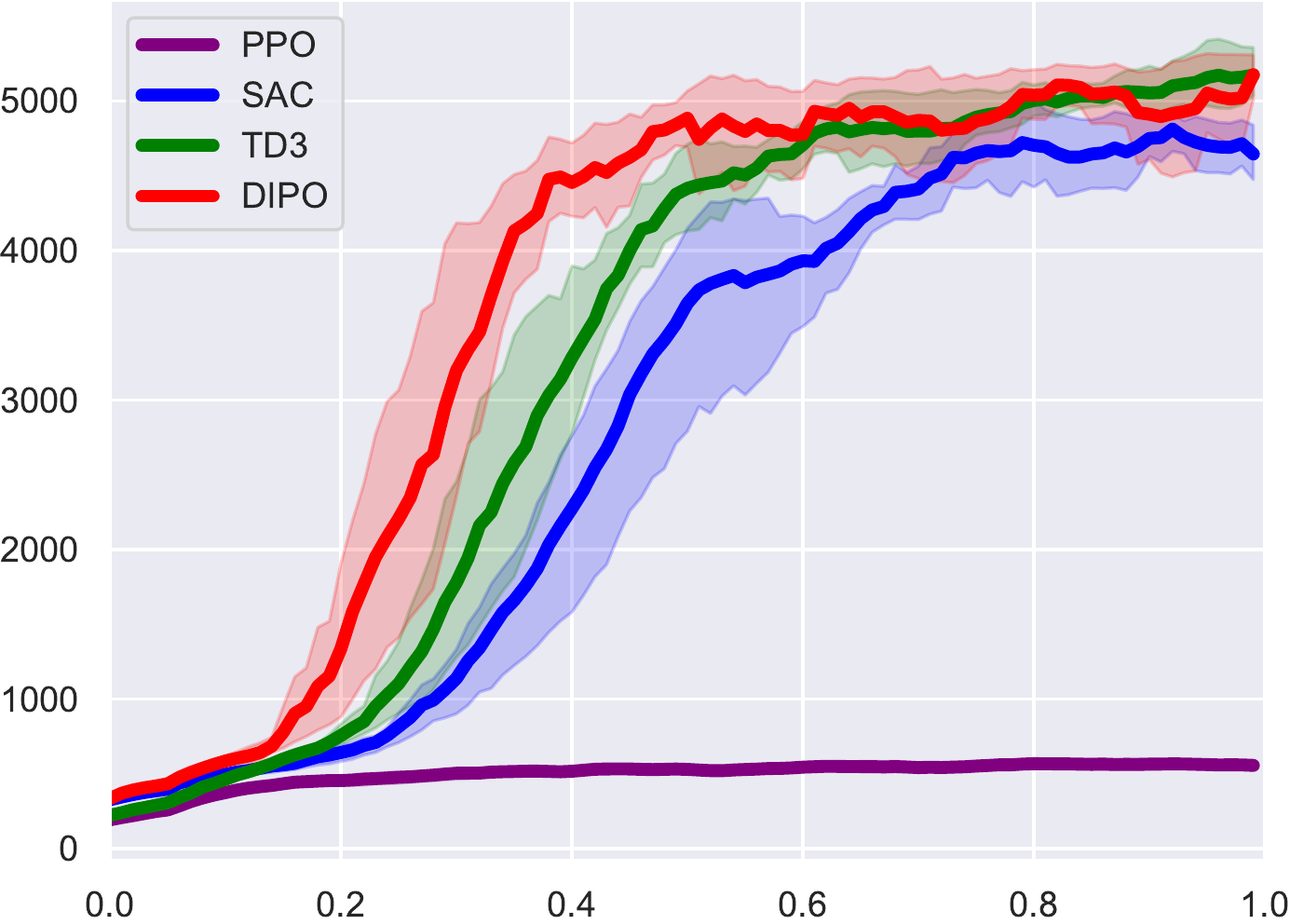}}
         \subfigure[Walker2d-v3]
        {\includegraphics[width=3.0cm,height=2.5cm]{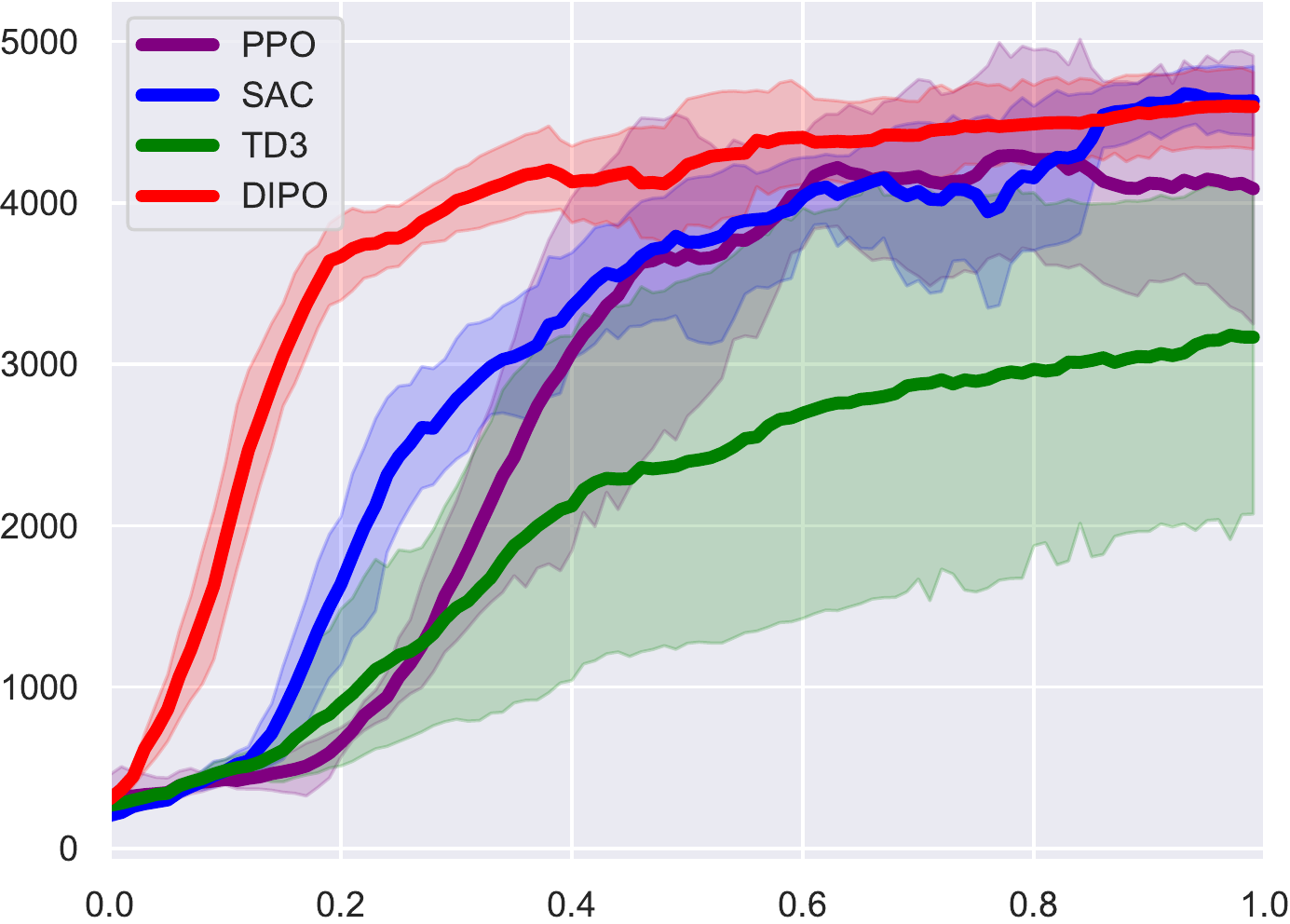}}
              \caption
    { Average performances on MuJoCo Gym environments with $\pm$ std shaded, where the horizontal axis of coordinate denotes the iterations $(\times 10^6)$, the plots smoothed with a window of 10.
    }
    \label{fig:comparsion-mujoco}
\end{figure*}
\begin{figure*}[t!]
    \centering
    {\includegraphics[width=1.4cm,height=1.4cm]{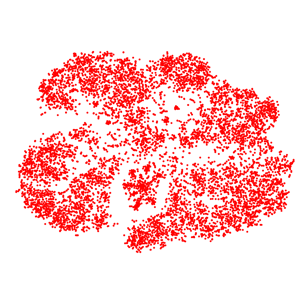}}
   {\includegraphics[width=1.4cm,height=1.4cm]{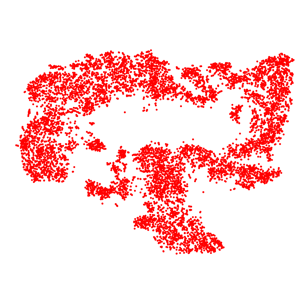}}
{\includegraphics[width=1.4cm,height=1.4cm]{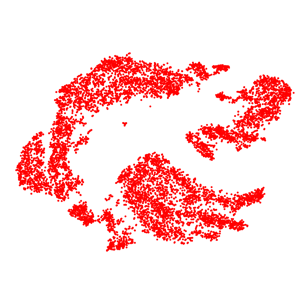}}
    {\includegraphics[width=1.4cm,height=1.4cm]{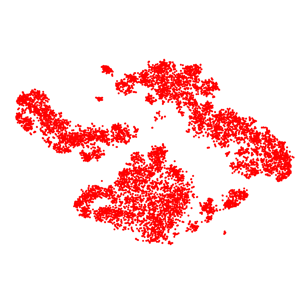}}
{\includegraphics[width=1.4cm,height=1.4cm]{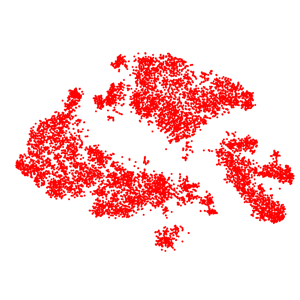}}
    {\includegraphics[width=1.4cm,height=1.4cm]{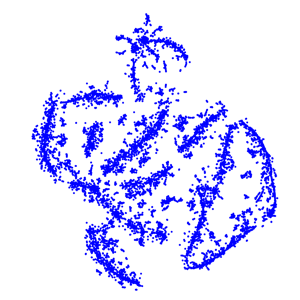}}
   {\includegraphics[width=1.4cm,height=1.4cm]{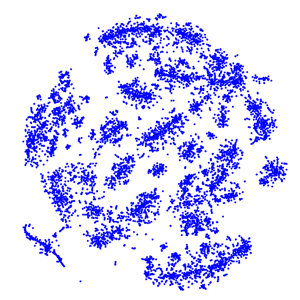}}
{\includegraphics[width=1.4cm,height=1.4cm]{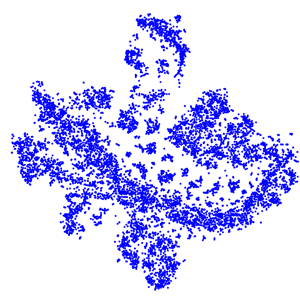}}
    {\includegraphics[width=1.4cm,height=1.4cm]{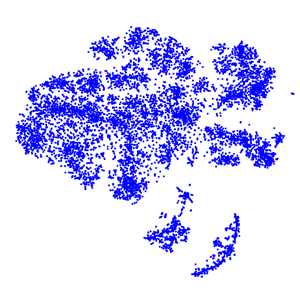}}
{\includegraphics[width=1.4cm,height=1.4cm]{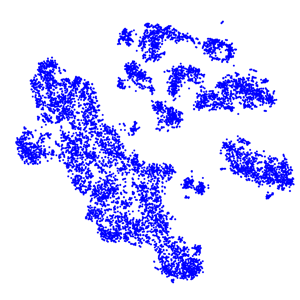}}
    {\includegraphics[width=1.4cm,height=1.4cm]{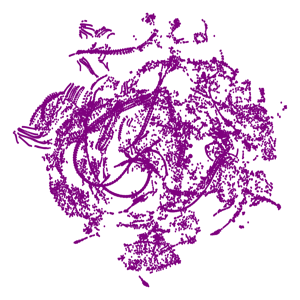}}
   {\includegraphics[width=1.4cm,height=1.4cm]{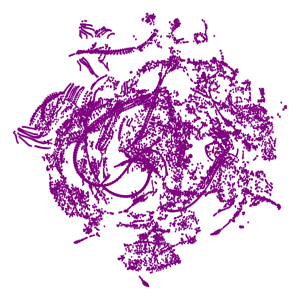}}
{\includegraphics[width=1.4cm,height=1.4cm]{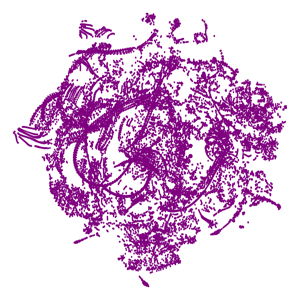}}
{\includegraphics[width=1.4cm,height=1.4cm]{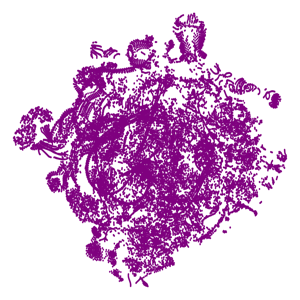}}
{\includegraphics[width=1.4cm,height=1.4cm]{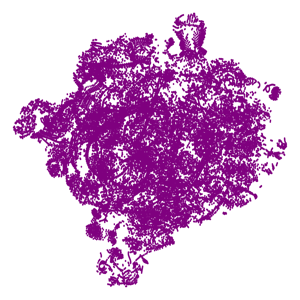}}
    {\includegraphics[width=1.4cm,height=1.4cm]{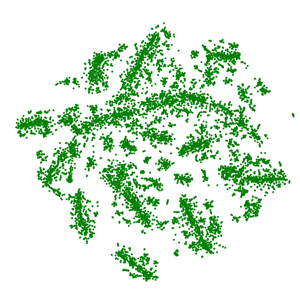}}
   {\includegraphics[width=1.4cm,height=1.4cm]{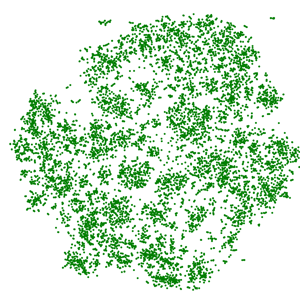}}
{\includegraphics[width=1.4cm,height=1.4cm]{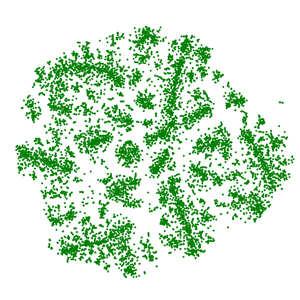}}
    {\includegraphics[width=1.4cm,height=1.4cm]{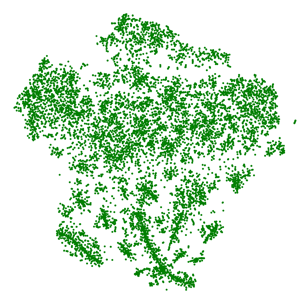}}
{\includegraphics[width=1.4cm,height=1.4cm]{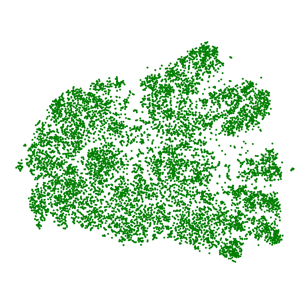}}
{\includegraphics[width=9cm,height=0.38cm]{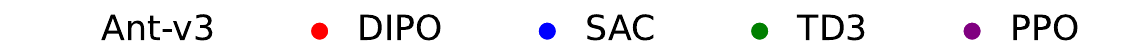}}
   \caption
  {State-visiting visualization by each algorithm on the Ant-v3 task, where states get dimension reduction by t-SNE. The points with different colors represent the states visited by the policy with the style. The distance between points represents the difference between states. }
 \label{state-ant}
\end{figure*}
\begin{figure*}[t!]
    \centering
    {\includegraphics[width=1.4cm,height=1.4cm]{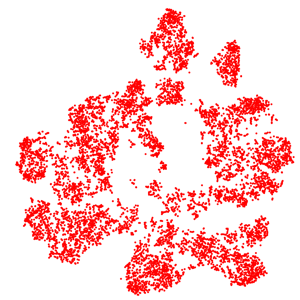}}
   {\includegraphics[width=1.4cm,height=1.4cm]{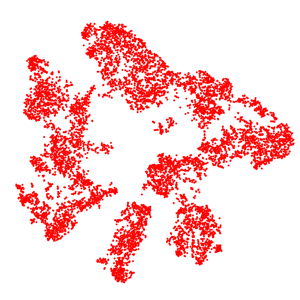}}
{\includegraphics[width=1.4cm,height=1.4cm]{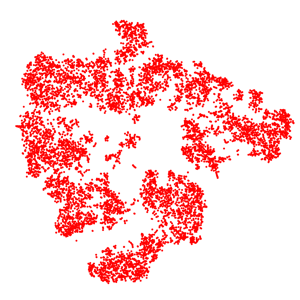}}
    {\includegraphics[width=1.4cm,height=1.4cm]{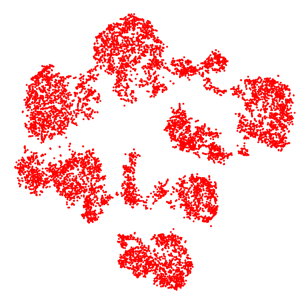}}
{\includegraphics[width=1.4cm,height=1.4cm]{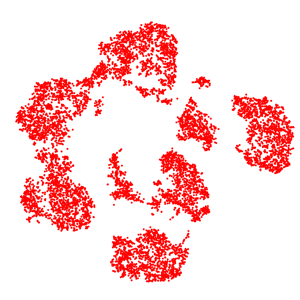}}
    {\includegraphics[width=1.4cm,height=1.4cm]{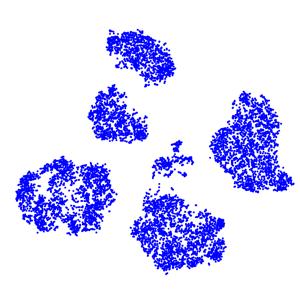}}
   {\includegraphics[width=1.4cm,height=1.4cm]{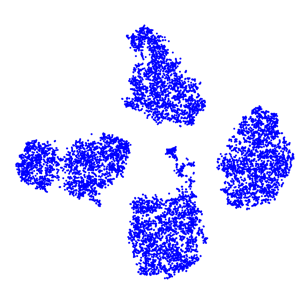}}
{\includegraphics[width=1.4cm,height=1.4cm]{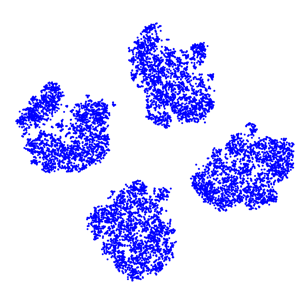}}
    {\includegraphics[width=1.4cm,height=1.4cm]{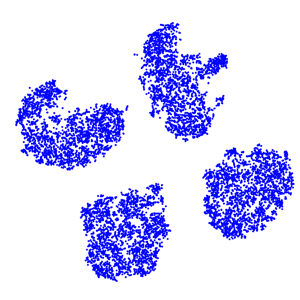}}
{\includegraphics[width=1.4cm,height=1.4cm]{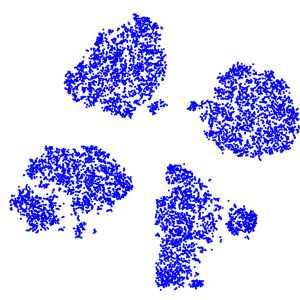}}
    {\includegraphics[width=9cm,height=0.38cm]{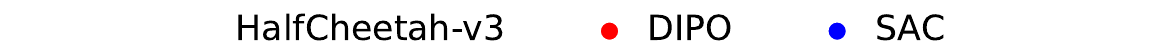}}
    \caption
    {State-visiting visualization for comparison between DIPO and SAC on HalfCheetah-v3.}
     \label{state-HalfCheetah}
         \vspace*{-10pt}
\end{figure*}

\subsection{State-Visiting Visualization}

From Figure \ref{fig:comparsion-mujoco}, we also know that DIPO achieves the best initial reward performance among all the 5 tasks, a more intuitive illustration has been shown in Figure \ref{state-ant} and \ref{state-HalfCheetah}, where we only consider Ant-v3 and HalfCheetah-v3; for more discussions and observations, see Appendix \ref{sec-app-state-visiting}.
We show the state-visiting region to compare both the exploration and final reward performance, where we use the same t-SNE \citep{van2008visualizing} to transfer the high-dimensional states visited by all the methods for 2D visualization.
Results of Figure \ref{state-ant} show that the DIPO explores a wider range of state-visiting, covering TD3, SAC, and PPO.
Furthermore, from  Figure \ref{state-ant}, we also know DIPO achieves a more dense state-visiting at the final period, which is a reasonable result since after sufficient training, the agent identifies and avoids the "bad" states, and plays actions transfer to "good" states.
On the contrary, PPO shows an aimless exploration in the Ant-v3 task, which partially explains why PPO is not so good in the Ant-v3 task.

From Figure \ref{state-HalfCheetah} we know, at the initial time, DIPO covers more regions than SAC in the HalfCheetah-v3, which results in DIPO obtaining a better reward performance than SAC. This result coincides with the results of Figure \ref{fig:comparsion-multi-goal}, which demonstrates that DIPO is efficient for exploration, which leads DIPO to better reward performance. While we also know that SAC starts with a narrow state visit that is similar to the final state visit, and SAC performs with the same reward performance with DIPO at the final, which implies SAC runs around the "good" region at the beginning although SAC performs a relatively worse initial reward performance than DIPO.
Thus, the result of Figure \ref{state-HalfCheetah} partially explains why DIPO performs better than SAC at the initial iterations but performs with same performance with SAC at the final for the HalfCheetah-v3 task.

\begin{figure*}[t!]
    \centering
   \subfigure[Ant-v3]
    {\includegraphics[width=3cm,height=3cm]{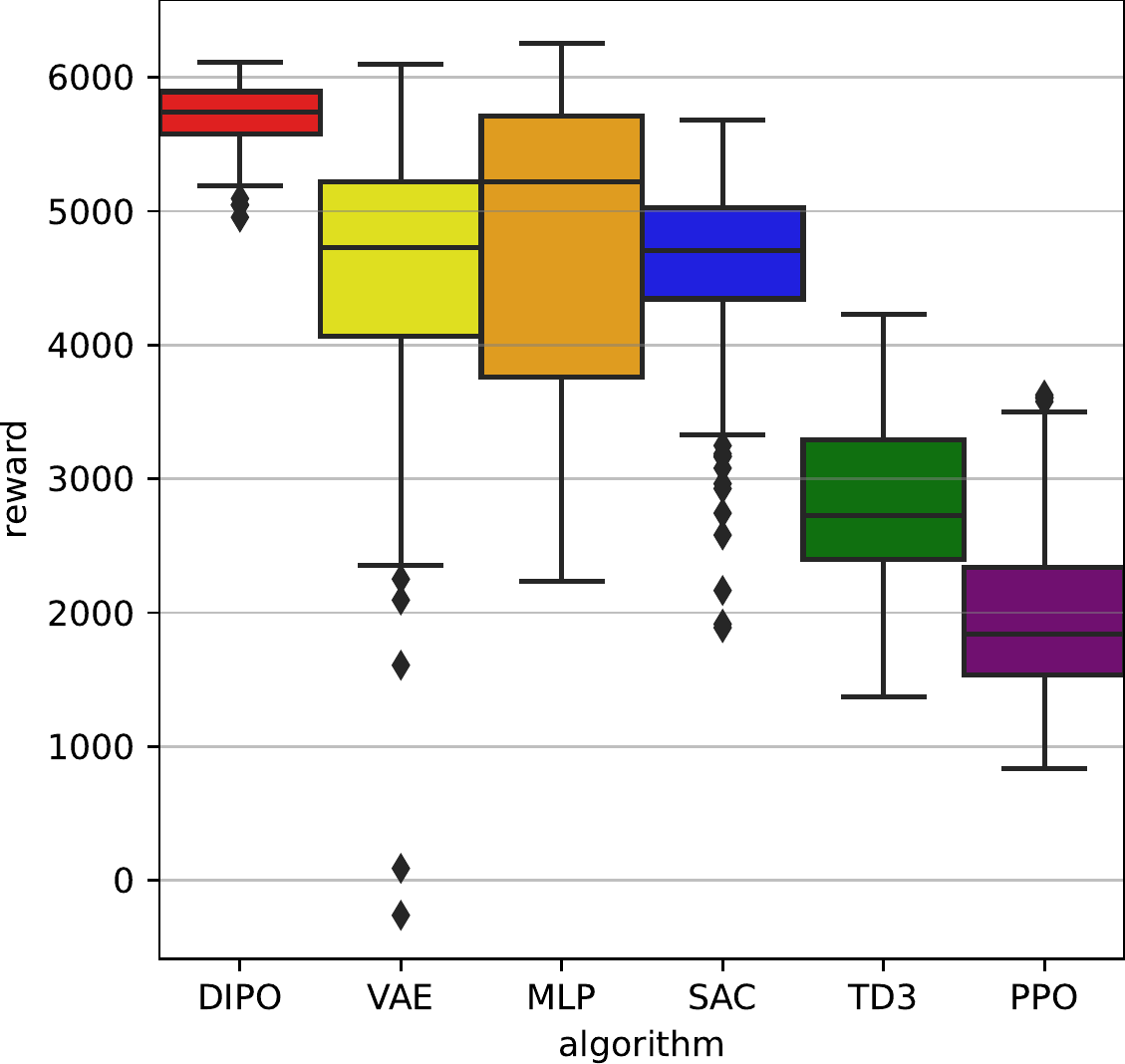}}
    \subfigure[HalfCheetah-v3]
   {\includegraphics[width=3cm,height=3cm]{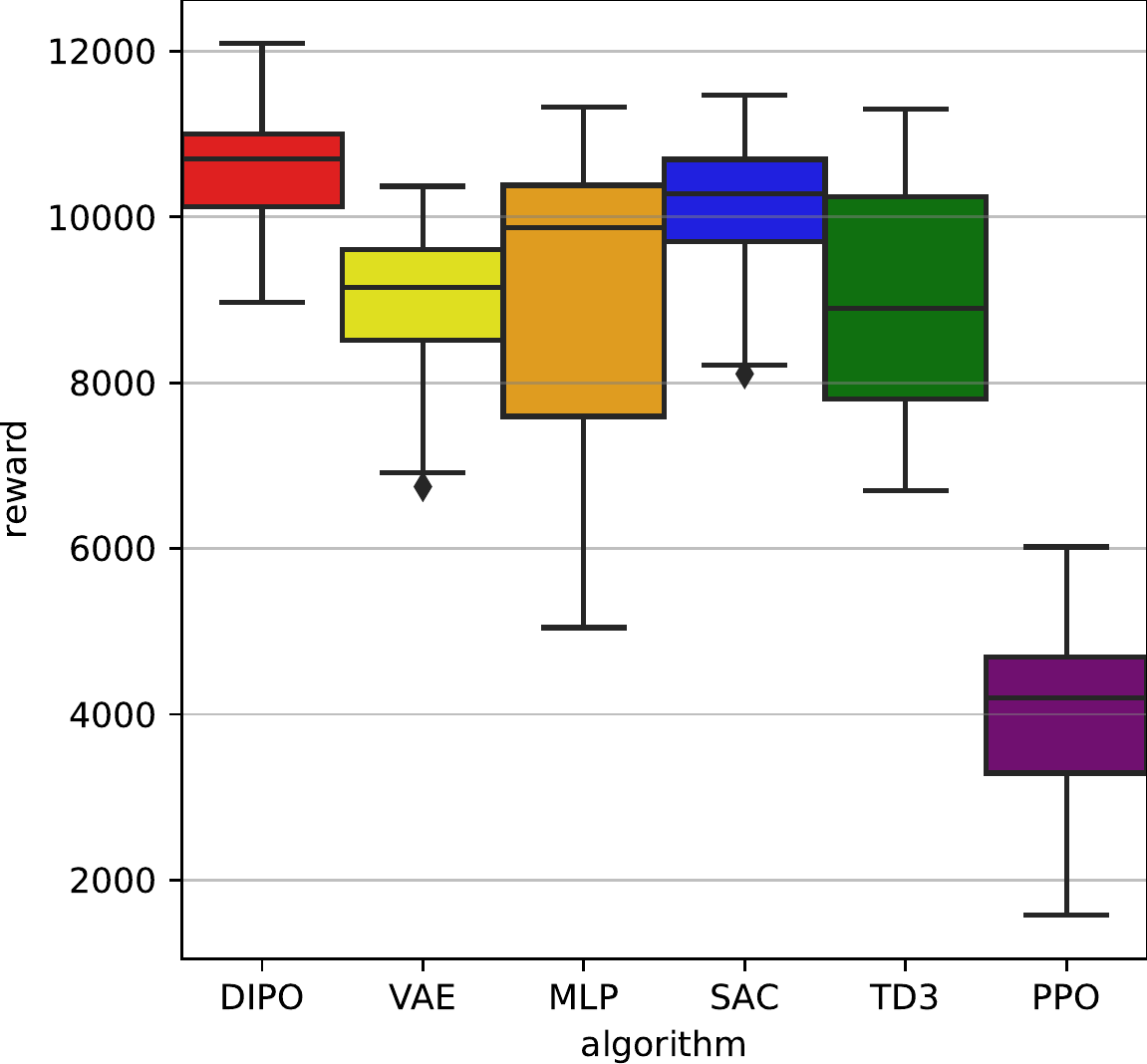}}
     \subfigure[Hopper-v3]
{\includegraphics[width=3cm,height=3cm]{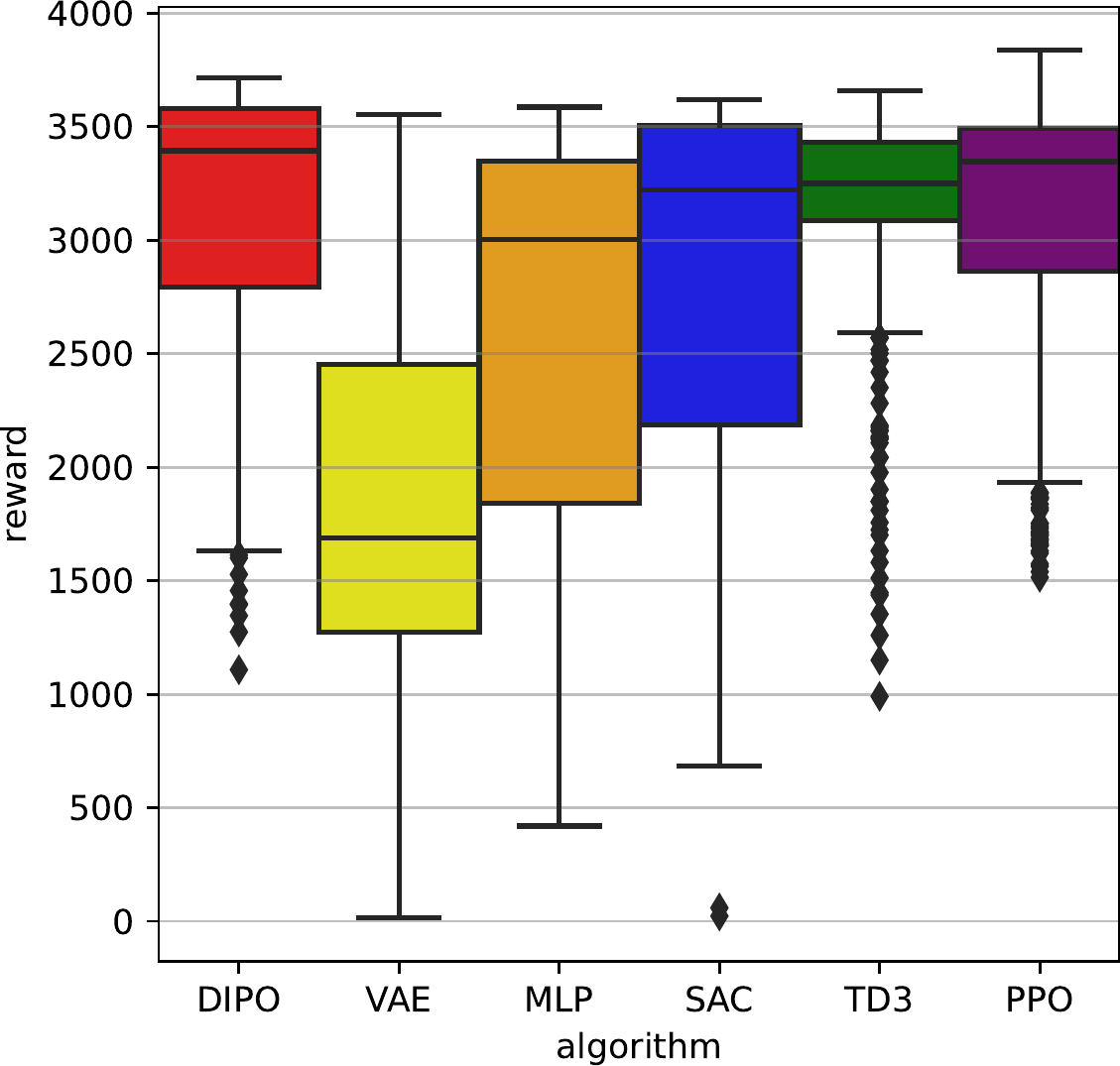}}
      \subfigure[Humanoid-v3]
    {\includegraphics[width=3cm,height=3cm]{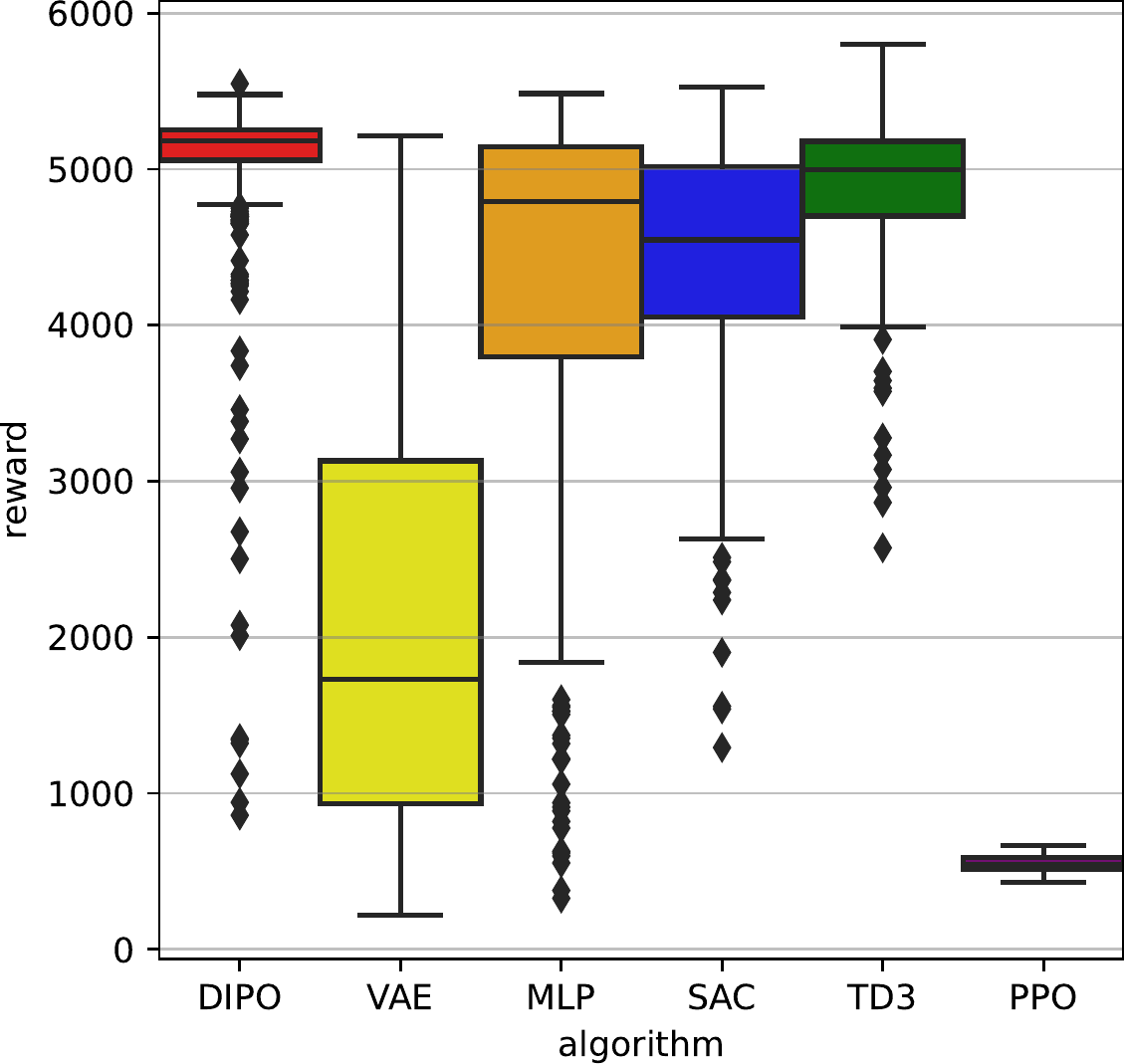}}
     \subfigure[Walker2d-v3]
{\includegraphics[width=3cm,height=3cm]{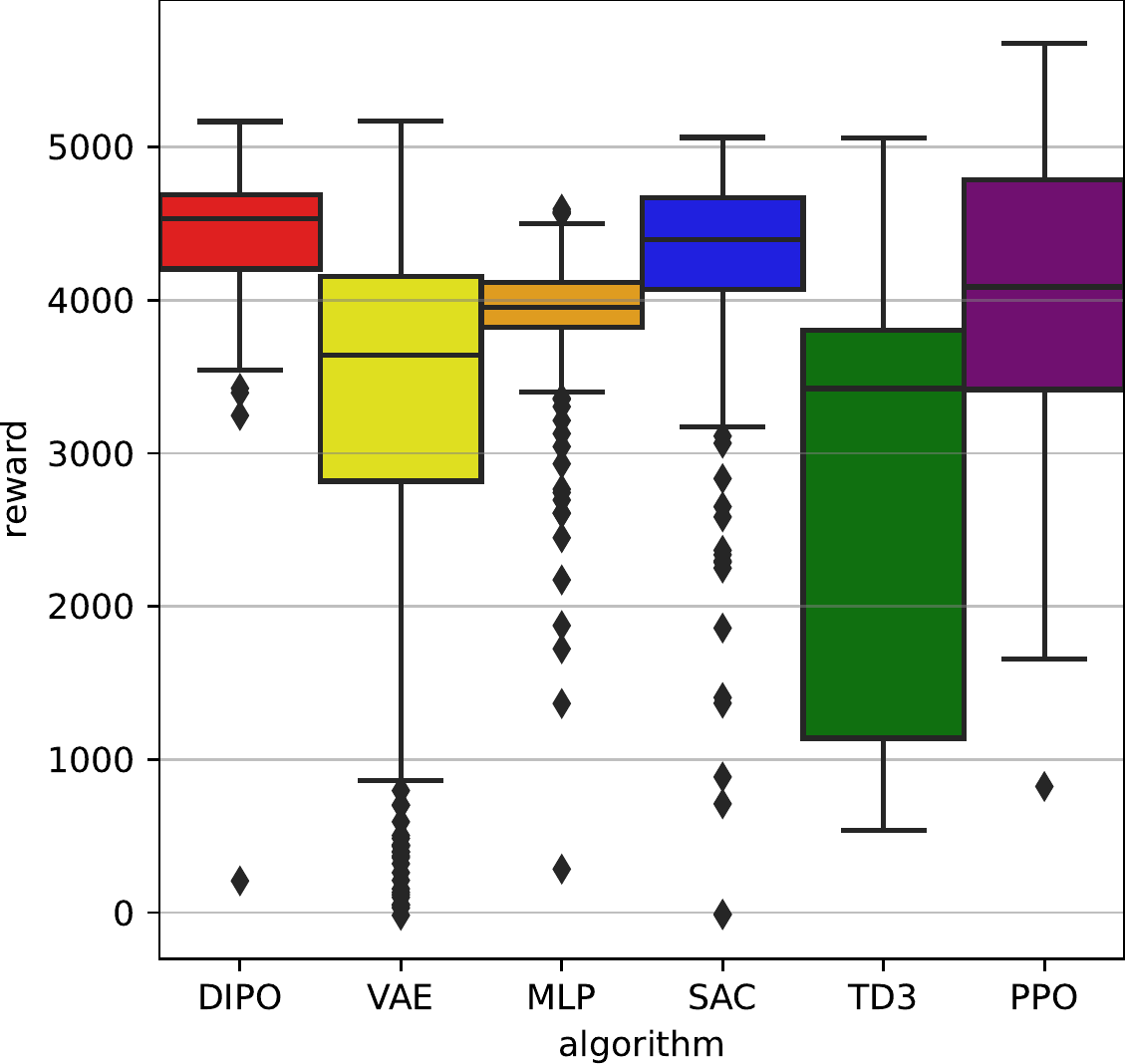}}
    \caption
    {Reward Performance Comparison to VAE and MLP with DIPO, SAC, PPO and TD3.}
    \label{dipo-vae-mlp}
    \vspace*{-10pt}
\end{figure*}
\begin{figure*}[t!]
    \centering
   \subfigure[Ant-v3]
    {\includegraphics[width=3.0cm,height=2.5cm]{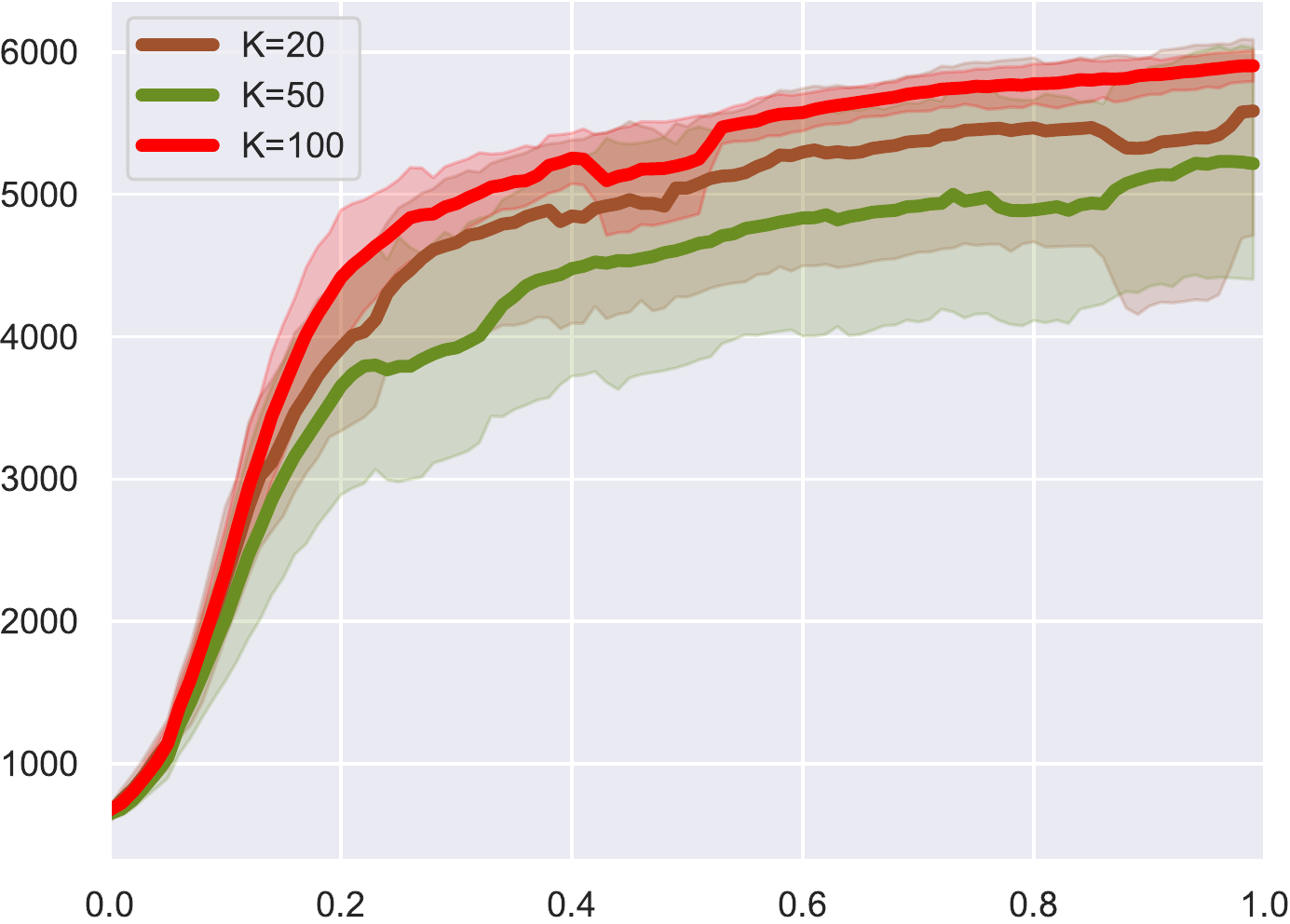}}
    \subfigure[HalfCheetah-v3]
   {\includegraphics[width=3.0cm,height=2.5cm]{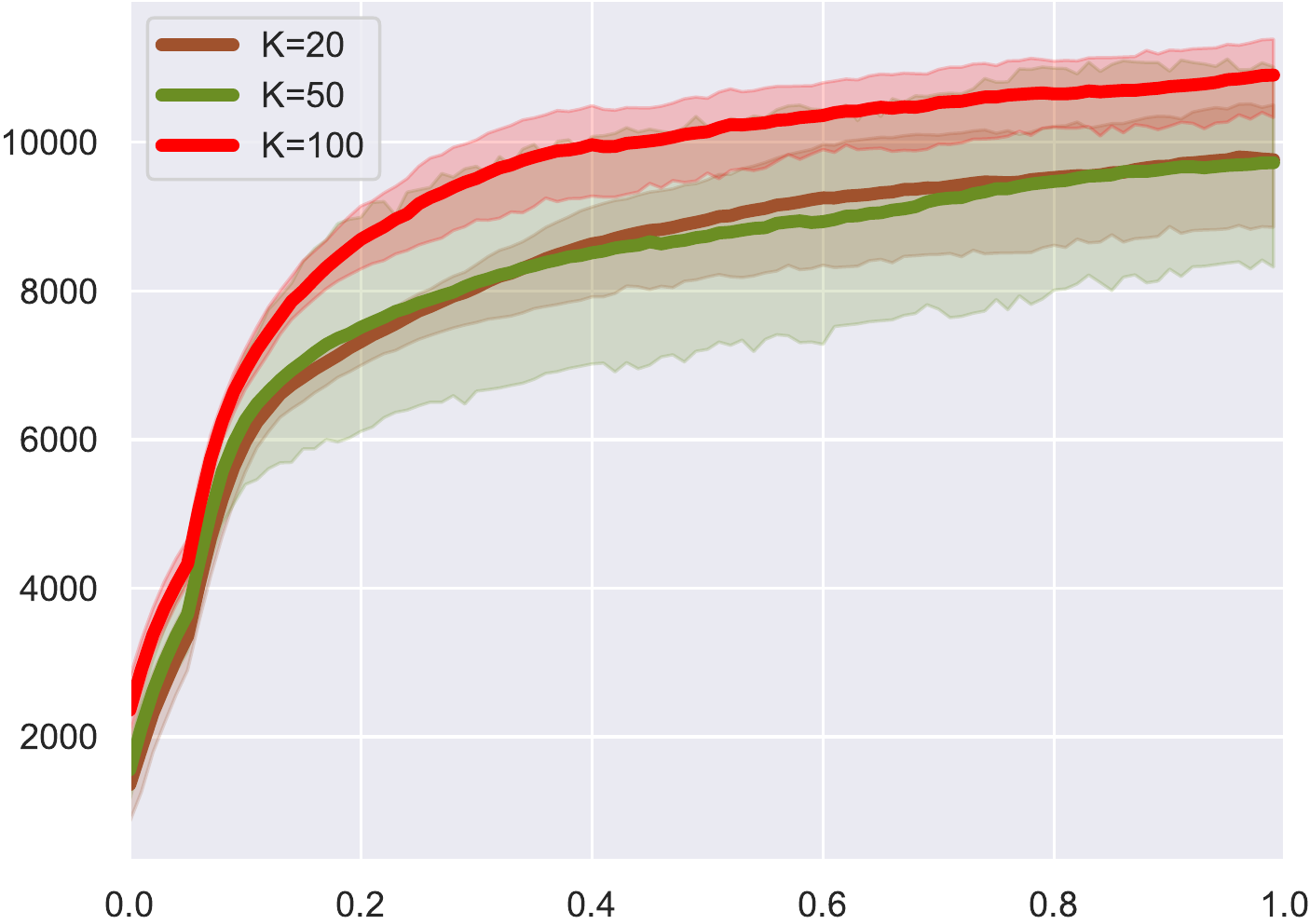}}
     \subfigure[Hopper-v3]
{\includegraphics[width=3.0cm,height=2.5cm]{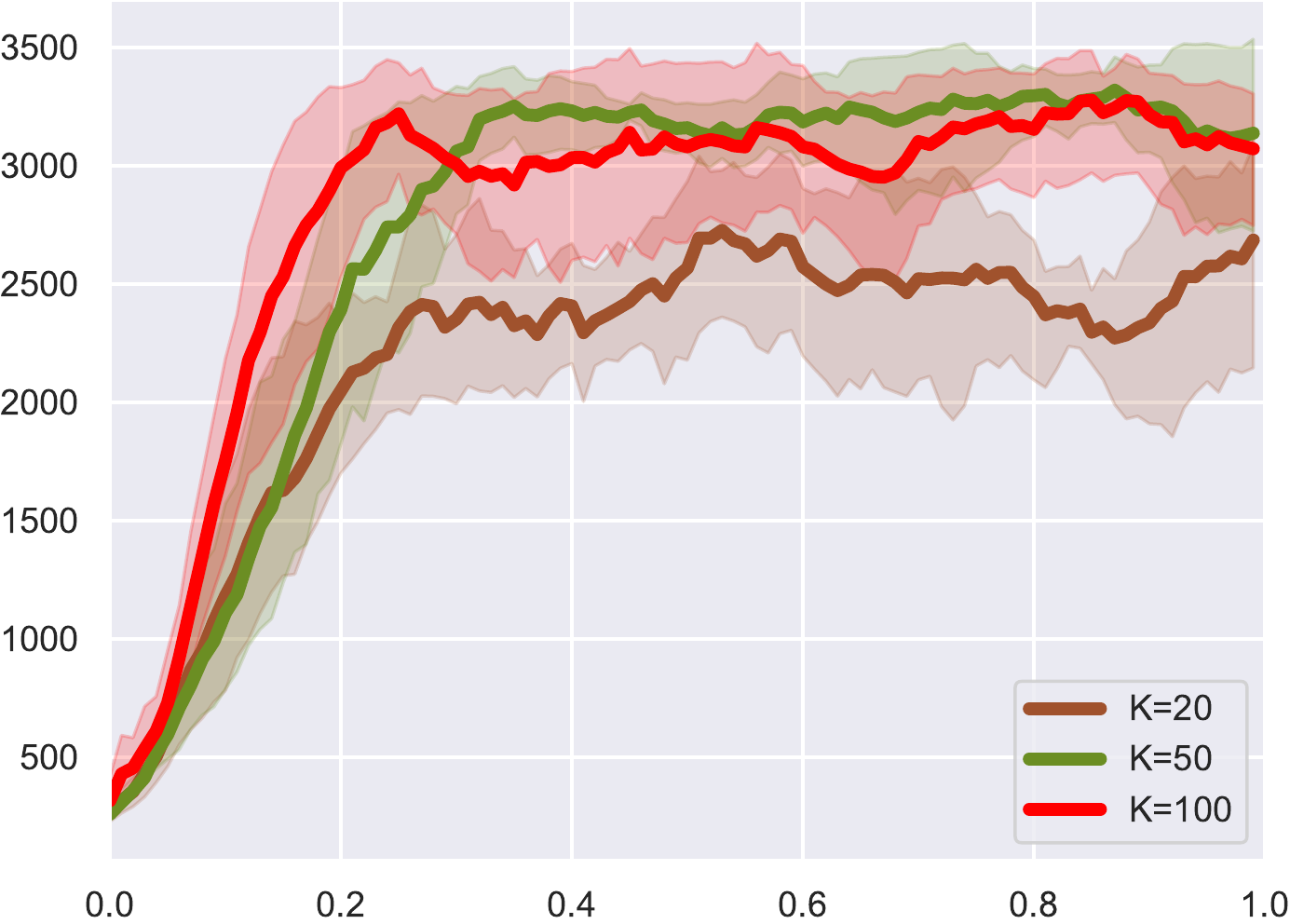}}
      \subfigure[Humanoid-v3]
    {\includegraphics[width=3.0cm,height=2.5cm]{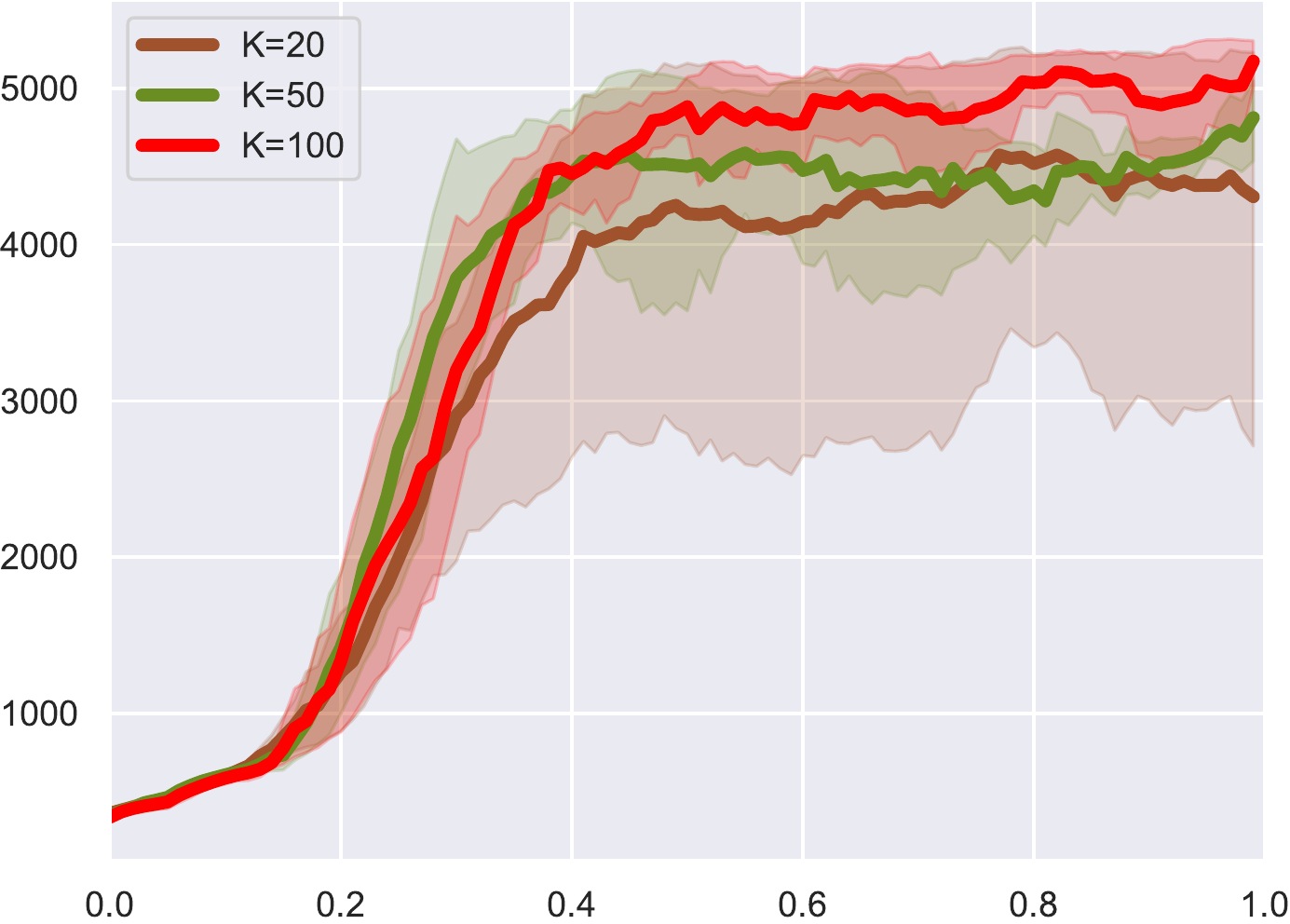}}
     \subfigure[Walker2d-v3]
{\includegraphics[width=3.0cm,height=2.5cm]{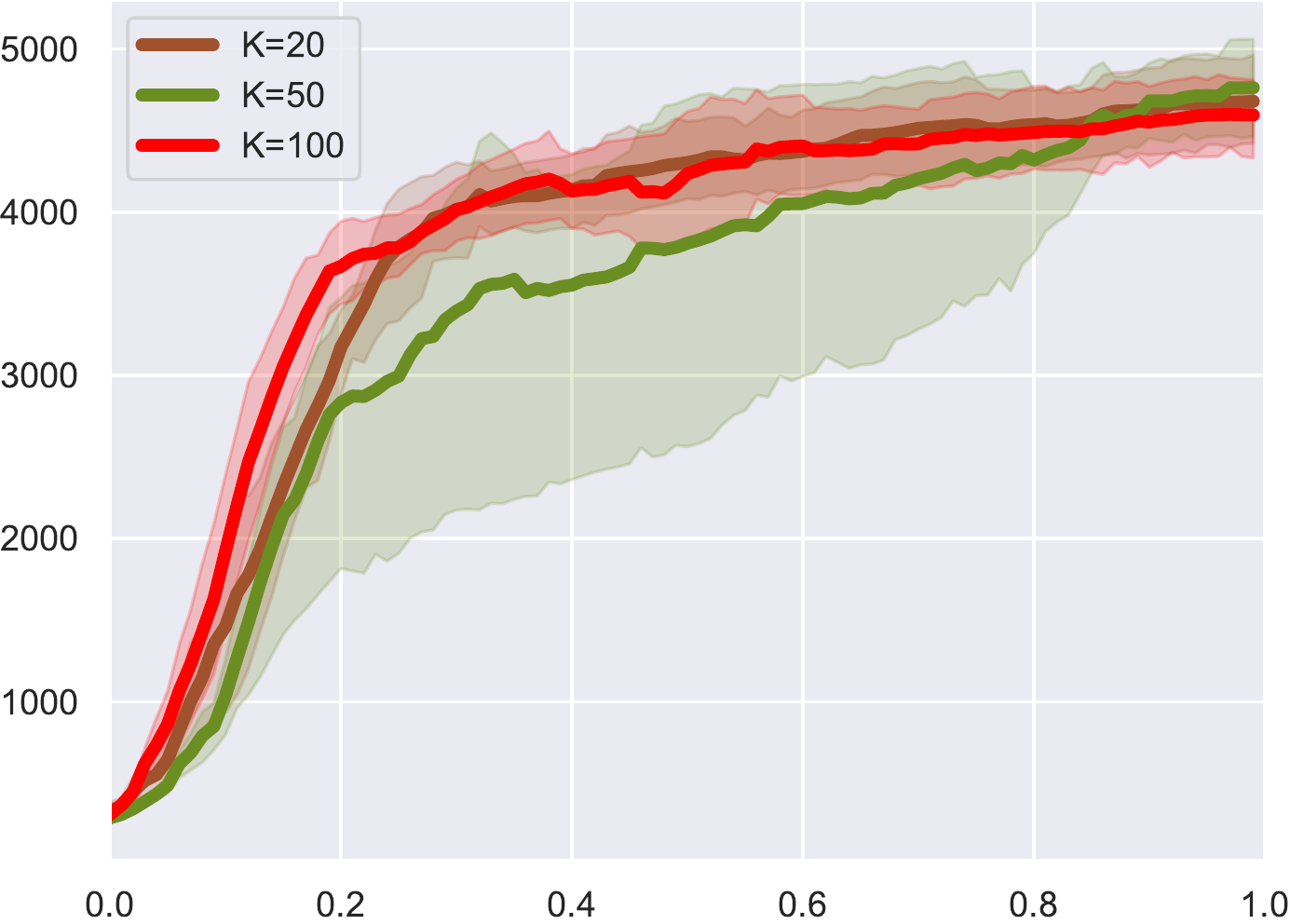}}
    \caption
    {Learning curves with different reverse lengths $K=20,50,100$.
    }
    \label{dipo-k-ablation}
    \vspace*{-10pt}
\end{figure*}
\begin{table*}[t!]
 \centering
 \vskip 0.1in
 \begin{adjustbox}{width=1\textwidth}
 \begin{tabular}{|c|c|c|c|c|c|c|c}
  \hline 
Reverse length         & Ant-v3 & HalfCheetah-v3 & Hopper-v3  &Humanoid-v3 &Walker2d-v3  \\
  \hline 
 $K=100$           & $\textbf{5622.30}\pm\textbf{487.09}$ & $\textbf{10472.31}\pm\textbf{654.96}$ & $3123.14\pm636.23$ & $\textbf{4878.41}\pm\textbf{822.03}$ & $\textbf{4409.18}\pm\textbf{469.06}$  \\
  \hline 
 $K=50$         & $4877.41\pm1010.35$     & $9198.20\pm1738.25$ & $\textbf{3214.83}\pm\textbf{491.15}$ &$4513.39\pm1075.94$ & $4199.34\pm1062.31$ \\
 \hline 
 $K=20$          & $5288.77\pm970.35$ & $9343.69\pm 986.82$ & $2511.63\pm 837.03$& $4294.79\pm1583.48$ & $\textbf{4467.20}\pm\textbf{368.13}$ \\
  \hline 
 \end{tabular}
 \end{adjustbox}
  
  \caption{Average return over final 6E5 iterations with different reverse lengths $K=20,50,100$, and maximum value is bolded for each task.}
 \label{tab:mujoco-k-reverselength}
     \vspace*{-10pt}
\end{table*}

\subsection{Ablation Study}

In this section, we consider the ablation study to compare the diffusion model with VAE and MLP for policy learning, and show a trade-off on the reverse length $K$ for reverse inference.

\subsubsection{Comparison to VAE and MLP}
Both VAE and MLP are widely used to learn distribution in machine learning, a fundamental question is: 
why must we consider the diffusion model to learn a policy distribution? what the reward performance is if we use VAE and MLP to model a policy distribution?
We show the answer in Figure \ref{dipo-vae-mlp}, where the VAE (or MLP) is the result we replace the diffusion policy of DIPO (see Figure \ref{fig:framework}) with VAE (or MLP), i.e.,
we consider VAE (or MLP)+action gradient for the tasks. 
Results show that the diffusion model is more powerful than VAE and MLP for learning a distribution.
This implies the diffusion model is an expressive and flexible family to model a distribution, which is also consistent with the field of the generative model.

\subsubsection{Comparison with Different Reverse Lengths} Reverse length $K$ is an important parameter for the diffusion model, which not only affects the reward performance but also affects the training time, we show the results in Figure \ref{dipo-k-ablation} and Table \ref{tab:mujoco-k-reverselength}.
The results show that the reverse time $K=100$ returns a better reward performance than other cases (except Hopper-v3 task).
Longer reverse length consumes more reverse time for inference, we hope to use less time for reverse time for action inference. 
However, a short reverse length $K=20$ decays the reward performance among (except Walker2d-v3 task), which implies a trade-off between reward performance and reverse length $K$. In practice, we set $K=100$ throughout this paper.

\section{Conlusion}

We have formally built a theoretical foundation of diffusion policy, which shows a policy representation via the diffusion probability model and which is a new way to represent a policy via a stochastic process. Then, we have shown a convergence analysis for diffusion policy, which provides a theory to understand diffusion policy.
Furthermore, we have proposed an implementation for model-free online RL with a diffusion policy, named DIPO. 
Finally, extensive empirical results show the effectiveness of DIPO among the Mujoco tasks.

\bibliographystyle{plainnat}
\bibliography{ref}

\clearpage

\appendix

\section{Review on Notations}

This section reviews some notations and integration by parts formula, using the notations consistent with \citep{vempala2019rapid}.

Given a smooth function $f:\R^{n} \rightarrow\R$, its \textbf{gradient} $\bnabla f:\R^{n} \rightarrow\R^{n}$ is the vector of partial derivatives:
\[
\bnabla f(\bx) = \left(\part{f(\bx)}{x_1}, \dots, \part{f(\bx)}{x_n} \right)^{\top}.
\]
The {\bf Hessian} $\bnabla^2 f \colon \R^n \to \R^{n \times n}$ is the matrix of second partial derivatives:
\[
\bnabla^2 f(\bx) = \left(\part{^2f(\bx)}{x_i x_j} \right)_{1 \le i,j \le n}.
\]
The {\bf Laplacian} $\Delta f \colon \R^n \to \R$ is the trace of its Hessian:
\[
\Delta f(\bx) = \mathrm{Tr}\left(\bnabla^2 f(\bx)\right) = \sum_{i=1}^n \part{^2 f(\bx)}{x_i^2}.
\]
Given a smooth vector field $\bp = (p_1,\dots,p_n) \colon \R^n \to \R^n$, its {\bf divergence} is $\DIV \cdot \bp \colon \R^n \to \R$:
\begin{flalign}
\label{app-eq-49}
(\DIV \cdot \bp)(\bx) = \sum_{i=1}^n \part{p_i(\bx)}{x_i}.
\end{flalign}
When the variable of a function is clear and without causing ambiguity, we also denote the above notation as follows,
\begin{flalign}
\label{app-eq-50}
\DIV \cdot (\bp(\bx)) =:
(\DIV \cdot \bp)(\bx) = \sum_{i=1}^n \part{p_i(\bx)}{x_i}.
\end{flalign}
In particular, the divergence of the gradient is the Laplacian:
\begin{flalign}
\label{laplacian-operator}
(\DIV \cdot \bnabla f)(\bx) = \sum_{i=1}^n \part{^2 f(\bx)}{x_i^2} = \Delta f(\bx).
\end{flalign}
Let $\rho(\bx)$ and $\mu(\bx)$ be two smooth probability density functions on the space $\R^{p}$, 
the Kullback–Leibler (KL) divergence and relative Fisher information (FI) from $\mu(\bx)$ to $\rho(\bx)$ are defined as follows,
\begin{flalign}
\KL(\rho\|\mu)=\bigintsss_{\R^{p}}\rho(\bx)\log\dfrac{\rho(\bx)}{\mu(\bx)}\dd \bx, ~\FI(\rho\|\mu)=\bigintsss_{\R^{p}}\rho(\bx)\left\|\bnabla\log\dfrac{\rho(\bx)}{\mu(\bx)}\right\|_{2}^{2}\dd \bx.
\end{flalign}

Before we further analyze, we need the integration by parts formula.
For any function $f \colon \R^p \to \R$ and vector field $\bm{v} \colon \R^p \to \R^p$ with sufficiently fast decay at infinity,
we have the following {\bf integration by parts} formula:
\begin{flalign}
\label{integration-by-parts-formula}
\bigintsss_{\R^p} \langle \bm{v}(\bx), \nabla f(\bx) \rangle \dd\bx = -\bigintsss_{\R^p} f(\bx) (\DIV \cdot \bm{v})(\bx) \dd\bx.
\end{flalign}

\section{Auxiliary Results}

\subsection{Diffusion Probability Model (DPM).} 

This section reviews some basic background about the diffusion probability model (DPM).
For a given (but unknown) $p$-dimensional data distribution $q(\bx_0)$, 
DPM \citep{sohl2015deep,ho2020denoising,song2020score} is a latent variable generative model that learns a parametric model to approximate the distribution $q(\bx_0)$.
To simplify the presentation in this section, we only focus on the continuous-time diffusion \citep{song2020score}. 
The mechanism of DPM contains two processes, \emph{forward process} and \emph{reverse process}; we present them as follows.

\textbf{Forward Process}. The forward process produces a sequence $\{\bx_{t}\}_{t=0:T}$ that perturbs the initial $\bx_{0}\sim q(\cdot)$ into a Gaussian noise, which follows the next stochastic differential equation (SDE),
\begin{flalign}
\label{def:sde-forward-process}
\dd \bx_{t}=\bbf(\bx_{t},t)\dd t+g(t)\dd \bw_{t},
\end{flalign}
where $\bbf(\cdot,\cdot)$ is the drift term, $g(\cdot)$ is the diffusion term, and $\bw_{t}$ is the standard Wiener process.

\textbf{Reverse Process.} According to \cite{anderson1982reverse,haussmann1986time}, there exists a corresponding reverse SDE that exactly coincides with the solution of the forward SDE (\ref{def:sde-forward-process}):
\begin{flalign}
\label{def:sde-reverse-process}
\dd \bx_{t}=\left[\bbf(\bx_{t},t)-g^{2}(t)\bnabla \log p_{t}(\bx_{t})\right]\dd \bar{t}+g(t)\dd \bar{\bw}_{t},
\end{flalign}
where $\dd\bar{t}$ is the backward time differential, $\dd\bar{\bw}_{t}$ is a standard Wiener process flowing backward in time, and $p_{t}(\bx_{t})$is the marginal probability distribution of the random variable $\bx_{t}$ at time $t$.

Once the \emph{score function} $\bnabla \log p_{t}(\bx_{t})$ is known for each time $t$, we can derive the reverse diffusion process from SDE (\ref{def:sde-reverse-process}) and simulate it to sample from $q(\bx_{0})$ \citep{song2020score}.

\subsection{Transition Probability for Ornstein-Uhlenbeck Process}
\label{app-sec-transition-probability4ou}

\begin{proposition}
\label{solution4linearsdes}
Consider the next SDEs,
\[
\dd \bx_{t}=-\dfrac{1}{2}\beta(t)\bx_{t}\dd t+g(t)\dd \bw_{t},
\]
where $\beta(\cdot)$ and $g(\cdot)$ are real-valued functions. Then, for a given $\bx_{0}$, the conditional distribution of $\bx_{t}|\bx_{0}$ is a Gaussian distribution, i.e.,
\[
\bx_{t}|\bx_{0}\sim\calN\left(\bx_{t}|\bm{\mu}_t,\bm{\Sigma}_{t}\right),
\]
where
\begin{flalign}
\nonumber
\bm{\mu}_{t}=&\exp\left\{-\dfrac{1}{2}\int_{0}^{t}\beta(s)\dd s\right\}\bx_{0},\\
\nonumber
\bm{\Sigma}_{t}=&\exp\left\{-\int_{0}^{t}\beta(s)\dd s\right\}\left(\int_{0}^{t}\exp\left\{\int_{0}^{\tau}\beta(s)\dd s\right\}g^{2}(\tau)\dd \tau\right)\bI.
\end{flalign}
\end{proposition}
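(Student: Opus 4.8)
The plan is to solve the linear SDE explicitly by the method of integrating factors and then read off that the solution is an affine transformation of a Gaussian (Wiener) integral. First I would introduce the deterministic integrating factor $\Phi(t)=\exp\{\frac{1}{2}\int_0^t\beta(s)\dd s\}$, which satisfies $\Phi'(t)=\frac{1}{2}\beta(t)\Phi(t)$ and $\Phi(0)=1$. Applying It\^{o}'s formula (product rule) to the process $\by_t=:\Phi(t)\bx_t$ and substituting the SDE for $\dd\bx_t$, the two drift terms cancel exactly, leaving $\dd\by_t=\Phi(t)g(t)\dd\bw_t$. Since $\Phi$ is deterministic there is no quadratic-variation correction, so this step is just the ordinary product rule.

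Integrating from $0$ to $t$ and using $\by_0=\bx_0$ gives $\by_t=\bx_0+\int_0^t\Phi(\tau)g(\tau)\dd\bw_\tau$, hence $\bx_t=\Phi(t)^{-1}\bx_0+\Phi(t)^{-1}\int_0^t\Phi(\tau)g(\tau)\dd\bw_\tau$. The key observation is that the Wiener integral $\int_0^t\Phi(\tau)g(\tau)\dd\bw_\tau$ has a deterministic scalar integrand against a standard $p$-dimensional Brownian motion, so it is a centered Gaussian vector with independent, identically distributed components; therefore $\bx_t\mid\bx_0$ is Gaussian and it only remains to compute its first two moments.

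For the mean I would use that the It\^{o} integral has zero expectation, giving $\bm{\mu}_t=\E[\bx_t\mid\bx_0]=\Phi(t)^{-1}\bx_0=\exp\{-\frac{1}{2}\int_0^t\beta(s)\dd s\}\bx_0$. For the covariance I would apply the It\^{o} isometry to the scalar-integrand Wiener integral, obtaining $\var(\int_0^t\Phi(\tau)g(\tau)\dd\bw_\tau)=(\int_0^t\Phi(\tau)^2g^2(\tau)\dd\tau)\bI$; multiplying by the deterministic factor $\Phi(t)^{-2}$ and rewriting $\Phi(t)^{-2}=\exp\{-\int_0^t\beta(s)\dd s\}$ together with $\Phi(\tau)^2=\exp\{\int_0^\tau\beta(s)\dd s\}$ yields exactly the claimed $\bm{\Sigma}_t$.

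There is no serious obstacle in this argument; it is a standard linear-SDE computation. The only points requiring a little care are verifying that the drift truly cancels after applying the product rule (so that the stated $\Phi$ is the correct integrating factor) and justifying both the Gaussianity and the scalar-times-$\bI$ structure of the covariance, which follow from the integrand being deterministic and scalar-valued against independent Brownian components; mild integrability of $\Phi g$ on $[0,t]$ guarantees the Wiener integral is well defined.
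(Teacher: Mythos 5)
Your proof is correct. The paper itself does not spell out an argument for this proposition---it simply defers to \citep[A.1]{kim2022soft-truncation} and \citep[Chapter 6.1]{sarkka2019applied}---and your integrating-factor derivation (drift cancellation via $\Phi(t)=\exp\{\frac{1}{2}\int_0^t\beta(s)\dd s\}$, zero-mean It\^{o} integral for $\bm{\mu}_t$, It\^{o} isometry for $\bm{\Sigma}_t$) is exactly the standard computation those references carry out, so your route coincides with the paper's.
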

\begin{proof}
See \citep[A.1]{kim2022soft-truncation} or \citep[Chapter 6.1]{sarkka2019applied}.
\end{proof}

\subsection{Exponential Integrator Discretization}
\label{app-sec:exponential-iIntegrator}

The next Proposition \ref{solution4-exponential-iIntegrator} provides a fundamental way for us to derivate the exponential integrator discretization (\ref{iteration-exponential-integrator-discretization}). 

\begin{proposition}
\label{solution4-exponential-iIntegrator}
For a given state $\bs$, we consider the following continuous time process, for $t\in[t_{k},t_{k+1}]$,
 \begin{flalign}
\dd \bx_t=\left(\bx_t+2  \bm{\hat \mathbf{S}}(\bx_{t_k},\bs,T-t_{k})\right)\dd t+\sqrt{2}\dd \bw_{t}.
\end{flalign}
Then with It\^{o} integration \citep{chung1990introduction},
\[
\bx_{t}-\bx_{t_{k}}=\left(\mathrm{e}^{t-t_{k}}-1\right)\left(\bx_{t_{k}}+2  \bm{\hat \mathbf{S}}(\bx_{t_k},\bs,T-t_{k})\right)+\sqrt{2}\int_{t_k}^{t}\mathrm{e}^{t^{'}-t_{k}}\dd \bw_{t^{'}},
\]
where $t\in[t_{k},t_{k+1}]$, and $t_{k}=hk$.
\end{proposition}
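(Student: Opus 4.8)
\emph{Proof proposal.} The plan is to treat the equation on $[t_k,t_{k+1}]$ as a linear (affine) SDE with constant forcing and solve it by the method of integrating factors. The crucial observation is that the score term $\bm{\hat \mathbf{S}}(\bx_{t_k},\bs,T-t_{k})$ is evaluated at the frozen left endpoint $\bx_{t_k}$ and at the fixed time $t_k$, so it is a constant (a fixed random vector measurable with respect to $\mathcal{F}_{t_k}$) throughout the interval. Abbreviating $\bc := 2\,\bm{\hat \mathbf{S}}(\bx_{t_k},\bs,T-t_{k})$, the dynamics reduce to $\dd \bx_t = (\bx_t+\bc)\,\dd t + \sqrt{2}\,\dd\bw_t$, which is exactly the Ornstein--Uhlenbeck-type form amenable to variation of constants with integrating factor $\mathrm{e}^{-t}$.

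Next I would apply It\^o's formula to the process $\bz_t := \mathrm{e}^{-t}\bx_t$. Since the map $\bx\mapsto \mathrm{e}^{-t}\bx$ is affine in $\bx$, its second-order derivative in $\bx$ vanishes, so the It\^o correction term drops out and one obtains the exact differential $\dd \bz_t = \mathrm{e}^{-t}\bc\,\dd t + \sqrt{2}\,\mathrm{e}^{-t}\dd\bw_t$. Integrating from $t_k$ to $t$ and using $\int_{t_k}^{t}\mathrm{e}^{-s}\dd s = \mathrm{e}^{-t_k}-\mathrm{e}^{-t}$ gives
\[
\mathrm{e}^{-t}\bx_t = \mathrm{e}^{-t_k}\bx_{t_k} + \bc\big(\mathrm{e}^{-t_k}-\mathrm{e}^{-t}\big) + \sqrt{2}\int_{t_k}^{t}\mathrm{e}^{-s}\dd\bw_s .
\]
Multiplying through by $\mathrm{e}^{t}$ and collecting the deterministic terms yields $\bx_t = \mathrm{e}^{t-t_k}\bx_{t_k} + (\mathrm{e}^{t-t_k}-1)\bc + \sqrt{2}\int_{t_k}^{t}\mathrm{e}^{t-s}\dd\bw_s$; subtracting $\bx_{t_k}$ and factoring the deterministic part as $(\mathrm{e}^{t-t_k}-1)(\bx_{t_k}+\bc)$ reproduces the stated expression, since $\bx_{t_k}+\bc = \bx_{t_k}+2\,\bm{\hat \mathbf{S}}(\bx_{t_k},\bs,T-t_{k})$.

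The one subtlety I expect to be the main (and only nontrivial) point is the form of the It\^o integral: variation of constants produces $\sqrt{2}\int_{t_k}^{t}\mathrm{e}^{t-s}\dd\bw_s$, whereas the statement writes $\sqrt{2}\int_{t_k}^{t}\mathrm{e}^{t^{'}-t_{k}}\dd\bw_{t^{'}}$. These two Wiener integrals are not pathwise identical, but they agree \emph{in distribution}: both are centered Gaussian with per-coordinate variance $\int_0^{t-t_k}\mathrm{e}^{2u}\dd u = \tfrac12(\mathrm{e}^{2(t-t_k)}-1)$ by the It\^o isometry (the time-reversal symmetry of the Wiener increment). I would therefore close the argument by noting this distributional identification, which is all that is needed downstream: taking $t=t_{k+1}$ with $t-t_k=h$, the noise term has per-coordinate standard deviation $\sqrt{2\cdot\tfrac12(\mathrm{e}^{2h}-1)}=\sqrt{\mathrm{e}^{2h}-1}$, so the integral may be replaced by $\sqrt{\mathrm{e}^{2h}-1}\,\bz_k$ with $\bz_k\sim\calN(\bm{0},\bI)$, giving the discrete iteration (\ref{iteration-exponential-integrator-discretization}).
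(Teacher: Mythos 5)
Your proposal is correct: the paper itself gives no proof of this proposition (it simply cites S\"arkk\"a and Solin, Chapter 6.1), and your integrating-factor/variation-of-constants argument via $\bz_t=\mathrm{e}^{-t}\bx_t$ is exactly the standard derivation that reference supplies. You are also right to flag the one real subtlety — variation of constants yields the kernel $\mathrm{e}^{t-s}$ rather than the stated $\mathrm{e}^{t'-t_k}$, and the two stochastic integrals agree only in distribution (both centered Gaussian with per-coordinate variance $\tfrac12(\mathrm{e}^{2(t-t_k)}-1)$) — which is precisely all the paper uses, since its own Appendix derivation immediately collapses this term to $\sqrt{\mathrm{e}^{2h}-1}\,\bz_k$ at $t=t_{k+1}$.
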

\begin{proof}
See \citep[Chapter 6.1]{sarkka2019applied}.
\end{proof}
Recall SDE (\ref{def:diffusion-policy-sde-reverse-process-s}), according to Proposition \ref{solution4-exponential-iIntegrator}, we know the next SDE  
\begin{flalign}
\dd \hata_t=\left(\hata_t+2 \bm{\hat \mathbf{S}}(\hata_{t_k},\bs,T-t_{k})\right)\dd t+\sqrt{2}\dd \bw_{t}, ~t\in[t_{k},t_{k+1}]
\end{flalign}
formulates the exponential integrator discretization as follows,
\begin{flalign}
\label{app-eq-62}
\hata_{t_{k+1}}-\hata_{t_{k}}=&\left(\mathrm{e}^{t_{k+1}-t_{k}}-1\right)\left(\hata_{t_{k}}+2 \bm{\hat \mathbf{S}}(\hata_{t_k},\bs,T-t_{k})\right)+\sqrt{2}\int_{t_k}^{t_{k+1}}\mathrm{e}^{t^{'}-t_{k}}\dd \bw_{t^{'}}\\
 \label{app-eq-34}
=&\left(\mathrm{e}^{h}-1\right)\left(\hata_{t_{k}}+2 \bm{\hat \mathbf{S}}(\hata_{t_k},\bs,T-t_{k})\right)+\sqrt{\mathrm{e}^{2h}-1}\bz,
\end{flalign}
where last equation holds due to Wiener process is a stationary process with independent increments, i.e., the following holds,
\begin{flalign}
\nonumber
\sqrt{2}\int_{t_k}^{t_{k+1}}\mathrm{e}^{t^{'}-t_{k}}\dd \bw_{t^{'}}
=&\sqrt{2}\int_{0}^{t_{k+1}}\mathrm{e}^{t^{'}-t_{k}}\dd \bw_{t^{'}}-\sqrt{2}\int_{0}^{t_{k}}\mathrm{e}^{t^{'}-t_{k}}\dd \bw_{t^{'}}\\
\nonumber
=&\sqrt{\mathrm{e}^{2h}-1}\bz,
\end{flalign}
where $\bz\sim\calN(\bm{0},\bI)$.

Then, we rewrite (\ref{app-eq-34}) as follows,
\begin{flalign}
\nonumber
\hata_{t_{k+1}}=&\mathrm{e}^{h}\hata_{t_{k}}+\left(\mathrm{e}^{h}-1\right)\left(\hata_{t_{k}}+2 \bm{\hat \mathbf{S}}(\hata_{t_k},\bs,T-t_{k})\right)+\sqrt{\mathrm{e}^{2h}-1}\bz,
\end{flalign}
which concludes the iteration defined in (\ref{iteration-exponential-integrator-discretization}).

\subsection{Fokker–Planck Equation}

\label{app-fokker–planck-equation}

The Fokker–Planck equation is named after Adriaan Fokker and Max Planck, who described it in 1914 and 1917 \citep{fokker1914mittlere,planck1917satz}.
 It is also known as the Kolmogorov forward equation, after Andrey Kolmogorov, who independently discovered it in 1931 \citep{kolmogoroff1931analytischen}.
 For more history and background about Fokker–Planck equation, please refer to \citep{risken1996fokker} or \url{https://en.wikipedia.org/wiki/Fokker\%E2\%80\%93Planck_equation#cite_note-1}.

For an It\^{o} process driven by the standard Wiener process 
$
\bw_{t}$ and described by the stochastic differential equation
\begin{flalign}
{\displaystyle \dd\mathbf {x} _{t}={\boldsymbol {\mu }}(\mathbf {x} _{t},t)\dd t+{\boldsymbol {\Sigma }}(\mathbf {x} _{t},t)\dd\mathbf {w} _{t},}
\end{flalign}
where $\mathbf {x} _{t}$ and ${\boldsymbol {\mu }}(\mathbf {x} _{t},t)$ are $N$-dimensional random vectors, 
${\boldsymbol {\Sigma }}(\mathbf {x} _{t},t)$ is an ${\displaystyle n\times m}$ matrix and 
$\mathbf {w} _{t}$ is an $m$-dimensional standard Wiener process, the probability density 
$p(\mathbf {x} ,t)$ for $\mathbf {x} _{t}$ satisfies the Fokker–Planck equation
\begin{flalign}
{\displaystyle {\frac {\partial p(\mathbf {x} ,t)}{\partial t}}=-\sum _{i=1}^{N}{\frac {\partial }{\partial x_{i}}}\left[\mu _{i}(\mathbf {x} ,t)p(\mathbf {x} ,t)\right]+\sum _{i=1}^{N}\sum _{j=1}^{N}{\frac {\partial ^{2}}{\partial x_{i}\,\partial x_{j}}}\left[D_{ij}(\mathbf {x} ,t)p(\mathbf {x} ,t)\right],}
\end{flalign}
with drift vector 
$
{\boldsymbol {\mu }}=(\mu _{1},\ldots ,\mu _{N}) 
$
and diffusion tensor 
$
{\textstyle \mathbf {D} (\bx,t)={\dfrac {1}{2}}{\boldsymbol {\Sigma \Sigma }}^{\top}}$, i.e.
\[
{\displaystyle D_{ij}(\mathbf {x} ,t)={\dfrac {1}{2}}\sum _{k=1}^{M}\sigma _{ik}(\mathbf {x} ,t)\sigma _{jk}(\mathbf {x} ,t),}
\]
and $\sigma_{ij}$ denotes the $(i,j)$-th element of the matrix $\bm\Sigma$.

\subsection{Donsker-Varadhan Representation for KL-divergence}
\label{sec-re-kl}

\begin{proposition}[\citep{donsker1983asymptotic}]
 Let $\rho,\mu$ be  two probability distributions on the measure space $(\calX,\calF)$, where $\calX\in\R^{p}$. Then 
 \[
 \KL(\rho\|\mu)=\sup_{f:\calX\rightarrow\R}\left\{\int_{\R^{p}}\rho(\bx)f(\bx)\dd\bx-\log\int_{\R^{p}}\mu(\bx)\exp(f(\bx))\dd\bx\right\}.
 \]
\end{proposition}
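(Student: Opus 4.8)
The plan is to prove both directions of the variational identity using the classical exponential tilting (Gibbs measure) argument. Throughout, write $\Lambda(f) := \int_{\R^{p}}\rho(\bx)f(\bx)\dd\bx - \log\int_{\R^{p}}\mu(\bx)\exp(f(\bx))\dd\bx$ for the functional appearing on the right-hand side, so the goal is to show $\KL(\rho\|\mu) = \sup_{f}\Lambda(f)$, the supremum running over all $f:\calX\rightarrow\R$ with $\int_{\R^{p}}\mu(\bx)e^{f(\bx)}\dd\bx < \infty$.

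First I would establish the upper bound $\Lambda(f) \leq \KL(\rho\|\mu)$ for every such $f$. The key device is the tilted probability density $\mu_f(\bx) := e^{f(\bx)}\mu(\bx)/Z_f$ with normalizer $Z_f := \int_{\R^{p}}\mu(\bx)e^{f(\bx)}\dd\bx$. Expanding $\log\frac{\rho}{\mu_f} = \log\frac{\rho}{\mu} - f + \log Z_f$ and integrating against $\rho$ gives the exact identity
\[
\KL(\rho\|\mu_f) = \KL(\rho\|\mu) - \Lambda(f).
\]
Since $\KL(\rho\|\mu_f)\geq 0$ by Gibbs' inequality (Jensen applied to the convex map $t\mapsto -\log t$), the upper bound follows at once.

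Next I would show the bound is tight by exhibiting a maximizer. Choosing $f^{\star} = \log\frac{\rho}{\mu}$ yields $\int_{\R^{p}} \rho f^{\star}\dd\bx = \KL(\rho\|\mu)$ and $Z_{f^{\star}} = \int_{\R^{p}} \mu\cdot\frac{\rho}{\mu}\dd\bx = \int_{\R^{p}}\rho\,\dd\bx = 1$, so $\log Z_{f^{\star}} = 0$ and hence $\Lambda(f^{\star}) = \KL(\rho\|\mu)$; equivalently, the displayed identity collapses to $\KL(\rho\|\mu_{f^{\star}}) = 0$ because $\mu_{f^{\star}} = \rho$. This certifies that the supremum is attained and equals $\KL(\rho\|\mu)$.

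The difficulties will be technical rather than structural. The natural optimizer $f^{\star} = \log(\rho/\mu)$ is only well-defined and $\rho$-integrable when $\rho \ll \mu$ and $\KL(\rho\|\mu) < \infty$; in that finite regime I would justify its admissibility by approximating with bounded truncations $f^{\star}_M := \max\{-M,\min\{M,f^{\star}\}\}$ and passing to the limit by monotone/dominated convergence, checking $Z_{f^{\star}_M}\to 1$ and $\int_{\R^{p}}\rho f^{\star}_M\dd\bx \to \KL(\rho\|\mu)$. In the degenerate case $\KL(\rho\|\mu)=+\infty$ (including when $\rho$ fails to be absolutely continuous with respect to $\mu$) I would instead construct bounded $f_n$ with $\Lambda(f_n)\to+\infty$, using truncated versions of $\log(\rho/\mu)$ supported where the likelihood ratio is large, so that $\sup_f\Lambda(f)=+\infty$ still matches the left-hand side. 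The heart of the argument is the one-line tilting identity $\KL(\rho\|\mu_f) = \KL(\rho\|\mu) - \Lambda(f)$; everything else is routine measure-theoretic bookkeeping.
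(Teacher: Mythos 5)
The paper offers no proof of this proposition; it is stated with a citation to Donsker--Varadhan, and the only thing actually used downstream (inequality (\ref{donsker-varadhan-representation})) is the easy direction $\KL(\rho\|\mu)\ge\Lambda(f)$. Your exponential-tilting argument is the standard classical proof and is correct: the identity $\KL(\rho\|\mu_f)=\KL(\rho\|\mu)-\Lambda(f)$ together with nonnegativity of KL gives the upper bound, the choice $f^{\star}=\log(\rho/\mu)$ attains it, and your truncation scheme handles both the integrability of $f^{\star}$ and the degenerate cases ($\KL=+\infty$, or $\rho\not\ll\mu$, where $f_n=n\,\mathbbm{1}_{A}$ on a $\mu$-null, $\rho$-positive set $A$ drives $\Lambda(f_n)\to+\infty$). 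No gaps; note only that for the paper's purposes the one-line upper-bound step would already suffice.
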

The Donsker-Varadhan representation for KL-divergence implies for any $f(\cdot)$,
\begin{flalign}
\label{donsker-varadhan-representation}
 \KL(\rho\|\mu)\ge\int_{\R^{p}}\rho(\bx)f(\bx)\dd\bx-\log\int_{\R^{p}}\mu(\bx)\exp(f(\bx))\dd\bx,
 \end{flalign}
which is useful later.

\subsection{Some Basic Results for Diffusion Policy}

\begin{proposition}
\label{app-propo-02}
Let $\pi(\cdot|\bs)$ satisfy $\nu$-Log-Sobolev inequality (LSI) (see Assumption \ref{assumption-policy-calss}),
the initial random action ${\color{red}{\bar{\ba}_{0}}}\sim\pi(\cdot|\bs)$, and ${\color{red}{\bar{\ba}_{t}}}$ evolves according to
the following Ornstein-Uhlenbeck process,
\begin{flalign}
\label{def:diffusion-policy-sde-forward-process-02}
\dd {\color{red}{\bar{\ba}_{t}}}=-{\color{red}{\bar{\ba}_{t}}}\dd t+\sqrt{2}\dd \bw_{t}.
\end{flalign}
Let ${\color{red}{\bar{\ba}_{t}}}\sim {\color{red}{\bar{\pi}_{t}}}(\cdot|\bs)$ be the evolution along the Ornstein-Uhlenbeck flow (\ref{def:diffusion-policy-sde-forward-process-02}), then 
${\color{red}{\bar{\pi}_{t}}}(\cdot|\bs)$ is $\nu_t$-LSI, where \[\nu_t=\dfrac{\nu}{\nu+(1-\nu)\mathrm{e}^{-2t}}.\]
\end{proposition}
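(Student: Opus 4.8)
The plan is to avoid working with the Fokker--Planck equation for $\bar\pi_t$ directly and instead exploit the explicit Gaussian transition kernel of the Ornstein--Uhlenbeck flow. Specializing (\ref{forward-process-kernel}) to the drift in (\ref{def:diffusion-policy-sde-forward-process-02}), the conditional law of $\bar{\ba}_t$ given $\bar{\ba}_0$ is $\calN(\mathrm{e}^{-t}\bar{\ba}_0,(1-\mathrm{e}^{-2t})\bI)$. Hence, writing $X\sim\pi(\cdot|\bs)$ and $W\sim\calN(\bm{0},(1-\mathrm{e}^{-2t})\bI)$ independent, the distribution $\bar\pi_t(\cdot|\bs)$ is exactly the law of $\mathrm{e}^{-t}X+W$, i.e.\ a rescaled copy of $\pi(\cdot|\bs)$ smoothed by an isotropic Gaussian. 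The proposition will then follow by tracking how the LSI constant transforms under (i) scaling and (ii) Gaussian convolution.

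First I would record the scaling rule: if $\rho$ is $\nu$-LSI and $\rho_c$ is the law of $cX$ for $X\sim\rho$, then $\rho_c$ is $(\nu/c^2)$-LSI. This is immediate from the substitution $\by=c\bx$ in the definitions of $\KL$ and $\FI$, since $\KL$ is invariant under this invertible map while $\bnabla_{\by}=c^{-1}\bnabla_{\bx}$ forces $\FI$ to pick up a factor $c^{-2}$. Taking $c=\mathrm{e}^{-t}$ shows that $\mathrm{law}(\mathrm{e}^{-t}X)$ is $(\nu\mathrm{e}^{2t})$-LSI.

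The crux is the improvement of the LSI constant under convolution with a Gaussian: if $\rho$ is $\alpha$-LSI, then $\rho*\calN(\bm{0},\sigma^2\bI)$ is $\tfrac{\alpha}{1+\sigma^2\alpha}$-LSI. Applying this with $\alpha=\nu\mathrm{e}^{2t}$ and $\sigma^2=1-\mathrm{e}^{-2t}$ yields $\nu_t=\tfrac{\nu\mathrm{e}^{2t}}{1+(1-\mathrm{e}^{-2t})\nu\mathrm{e}^{2t}}$, and multiplying numerator and denominator by $\mathrm{e}^{-2t}$ simplifies this to the claimed $\nu_t=\nu/(\nu+(1-\nu)\mathrm{e}^{-2t})$. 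As sanity checks, $\nu_0=\nu$ and $\nu_t\to1$ as $t\to\infty$, consistent with the flow relaxing to the $1$-LSI measure $\calN(\bm{0},\bI)$.

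The main obstacle is the Gaussian-convolution improvement step; the scaling rule and the value $\nu=1$ for the standard Gaussian (Gross's theorem) are elementary. To keep the argument self-contained I would prove the needed instance through the Ornstein--Uhlenbeck semigroup $(P_t)$, whose unit curvature lower bound gives the dimension-free gradient commutation $\bnabla P_t f=\mathrm{e}^{-t}P_t\bnabla f$ and hence $|\bnabla P_t f|^2\le \mathrm{e}^{-2t}P_t(|\bnabla f|^2)$; feeding this into the entropy-dissipation identity along the semigroup reproduces exactly the harmonic interpolation of the two constants above. A minor point to verify is that the Gaussian smoothing guarantees, for every $t>0$, the smoothness and decay of $\bar\pi_t$ needed to justify the integration-by-parts underlying $\FI$ and the LSI.
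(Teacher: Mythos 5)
Your argument is correct, and it reaches the stated formula by a route the paper does not spell out: the paper's entire proof of Proposition \ref{app-propo-02} is a citation to \citep[Lemma 6]{wibisono2022convergence}, whereas you reconstruct that lemma from first principles. Your decomposition is sound: by Proposition \ref{solution4linearsdes} the OU marginal at time $t$ is the law of $\mathrm{e}^{-t}X+W$ with $W\sim\calN(\bm{0},(1-\mathrm{e}^{-2t})\bI)$ independent of $X\sim\pi(\cdot|\bs)$, the scaling rule (KL invariant under the pushforward, FI picking up $c^{-2}$) correctly gives that $\mathrm{law}(\mathrm{e}^{-t}X)$ is $\nu\mathrm{e}^{2t}$-LSI, and the harmonic combination with the Gaussian's constant $1/(1-\mathrm{e}^{-2t})$ gives
\[
\nu_t=\frac{\nu\mathrm{e}^{2t}}{1+(1-\mathrm{e}^{-2t})\,\nu\mathrm{e}^{2t}}=\frac{\nu}{\nu+(1-\nu)\mathrm{e}^{-2t}},
\]
which matches the claim, with the right endpoints $\nu_0=\nu$ and $\nu_t\to1$. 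You correctly identify the one nontrivial ingredient, namely the subadditivity of inverse LSI constants under convolution (equivalently the improvement $\alpha\mapsto\alpha/(1+\sigma^2\alpha)$ under Gaussian smoothing); this is a classical fact and your proposed derivation via the semigroup gradient commutation and the entropy-dissipation identity is the standard way to establish it, though as written it is a plan rather than a completed proof of that step. What your approach buys is a self-contained and transparent derivation that makes visible exactly which structural properties of LSI (scaling, convolution) drive the formula; what the paper's citation buys is brevity and a rigorous treatment of the regularity issues you flag at the end (smoothness and decay of $\barpi_t$ justifying the integration by parts), which the Gaussian convolution does indeed guarantee for $t>0$ but which still deserves the sentence of justification you promise.
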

\begin{proof}
See \citep[Lemma 6]{wibisono2022convergence}.
 \end{proof}

\begin{proposition} 
\label{app-propo-04}
Under Assumption \ref{assumption-score}, then $\bnabla\log {\color{orange}{\tilde{\pi}_{t}}}(\cdot|\bs)$ is $L_{p}\mathrm{e}^{t}$-Lipschitz on the time interval $[0,\mathtt{T}_{0}]$, where the policy ${\color{orange}{\tilde{\pi}_{t}}}(\cdot|\bs)$ is the evolution along the flow (\ref{def:diffusion-policy-sde-reverse-process}),
and the time $\mathtt{T}_{0}$ is defined as follows,
\[
\mathtt{T}_{0}=:\sup_{t\ge0}\left\{t:1-\mathrm{e}^{-2t}\leq \frac{\mathrm{e}^{-t}}{L_p}\right\}.
\]
\end{proposition}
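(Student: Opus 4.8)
The plan is to reduce the statement to a smoothness bound for the forward Ornstein--Uhlenbeck flow and then exploit a posterior--covariance (Tweedie-type) identity. By the time-reversal identity (\ref{relation-policy-revers-forward}), $\tildepi_t(\cdot|\bs)=\barpi_{T-t}(\cdot|\bs)$, so $\bnabla\log\tildepi_t$ is a marginal score of the forward flow (\ref{def:diffusion-policy-sde-forward-process-01}) and the claim is equivalent to a smoothness bound on $\bnabla\log\barpi_t(\cdot|\bs)$ indexed by the flow time. Since this score is exactly $L_p$-Lipschitz at $t=0$ (there $\barpi_0(\cdot|\bs)=\pi(\cdot|\bs)$, so Assumption \ref{assumption-score} applies directly), it suffices to show that running the flow for time $t$ inflates the Lipschitz constant by at most the factor $\mathrm{e}^{t}$ provided $t\le\mathtt{T}_0$. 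Writing $m_t=\mathrm{e}^{-t}$ and $\sigma_t^{2}=1-\mathrm{e}^{-2t}$, the transition kernel (\ref{forward-process-kernel}) says the evolved action is $Y=m_t\ba_0+\sigma_t\bz$ with $\ba_0\sim\pi(\cdot|\bs)$ and $\bz\sim\calN(\bm{0},\bI)$ independent, so $\barpi_t(\cdot|\bs)$ is a rescaled Gaussian smoothing of $\pi(\cdot|\bs)$.

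First I would record the second-order Tweedie identity for this smoothing. Differentiating $\log\barpi_t$ twice expresses the score as a posterior mean and the Hessian as a posterior covariance,
\[
\bnabla\log\barpi_t(\ba|\bs)=\tfrac{1}{\sigma_t^{2}}\big(\E[\,m_t\ba_0\mid Y=\ba\,]-\ba\big),\quad \bnabla^{2}\log\barpi_t(\ba|\bs)=\tfrac{1}{\sigma_t^{4}}\,\mathrm{Cov}(m_t\ba_0\mid Y=\ba)-\tfrac{1}{\sigma_t^{2}}\bI,
\]
so the entire estimate reduces to a two-sided bound on $\mathrm{Cov}(m_t\ba_0\mid Y=\ba)$. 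The posterior of $\ba_0$ given $Y=\ba$ has density proportional to $\pi(\ba_0|\bs)\exp(-\|\ba-m_t\ba_0\|^2/(2\sigma_t^2))$, whose log-Hessian in $\ba_0$ is $\bnabla^{2}\log\pi(\ba_0|\bs)-(m_t^{2}/\sigma_t^{2})\bI$. Under Assumption \ref{assumption-score} one has $-L_p\bI\preceq\bnabla^2\log\pi\preceq L_p\bI$, so once $m_t^{2}/\sigma_t^{2}>L_p$ the posterior is strongly log-concave; Brascamp--Lieb then bounds its covariance from above, while the operator-norm bound on its log-Hessian bounds the covariance from below (a Cram\'er--Rao estimate). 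Substituting these into the identity above, together with $\mathrm{Cov}(m_t\ba_0\mid Y)=m_t^{2}\mathrm{Cov}(\ba_0\mid Y)$ and the prefactors $\sigma_t^{-4}$, $\sigma_t^{-2}$, yields an explicit operator-norm bound on $\bnabla^2\log\barpi_t(\cdot|\bs)$ on the range where the posterior stays strongly log-concave.

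The delicate step, which I expect to be the main obstacle, is the covariance estimate: in the identity the two terms $\sigma_t^{-4}\mathrm{Cov}$ and $\sigma_t^{-2}\bI$ each blow up like $\sigma_t^{-2}\to\infty$ as $t\to0$, and only their difference is finite, so the crude bound $\mathrm{Cov}\succeq\bm{0}$ is hopelessly lossy and one must track the posterior covariance to leading order in $\sigma_t$. A direct application of Brascamp--Lieb gives an operator norm of order $\tfrac{L_p\mathrm{e}^{2t}}{1-L_p(\mathrm{e}^{2t}-1)}$; extracting the sharper factor $\mathrm{e}^{t}$ claimed in the statement requires exploiting the confining drift of (\ref{def:diffusion-policy-sde-forward-process-01}), for instance through a Gr\"onwall/Riccati argument for $\lambda_{\max}(\bnabla^{2}\log\barpi_t)$ along the flow in which the drift contributes a decaying $-\bI$ correction, and the threshold $L_p\sigma_t^2\le m_t$ is precisely what keeps this Riccati bound from exploding on $[0,\mathtt{T}_0]$. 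As a sanity check, at $t=0$ the bound collapses to the hypothesis $L_p$, and the restriction to $[0,\mathtt{T}_0]$ is essential: beyond it no global $\mathrm{e}^{t}$ control can hold, since the true Lipschitz constant saturates at $1$ as $\barpi_t(\cdot|\bs)\to\calN(\bm{0},\bI)$.
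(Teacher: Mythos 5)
The paper does not actually prove this proposition: its entire ``proof'' is the citation \citep[Lemma 13]{chen2022improved}, so there is no in-paper argument for you to match, and your attempt has to be judged on whether it closes the statement on its own. It does not, and you essentially concede this: the ``delicate step'' you flag is the whole content of the proposition, and you leave it open. Concretely, with $m_t=\mathrm{e}^{-t}$ and $\sigma_t^2=1-\mathrm{e}^{-2t}$, your Brascamp--Lieb computation yields the operator-norm bound $L_p/(m_t^2-L_p\sigma_t^2)=L_p\mathrm{e}^{2t}/\big(1-L_p(\mathrm{e}^{2t}-1)\big)$, which is strictly larger than $L_p\mathrm{e}^{t}$ for every $t>0$ (to get $L_p\mathrm{e}^t$ you would need $m_t^2-L_p\sigma_t^2\ge m_t$, i.e.\ $L_p\sigma_t^2\le m_t^2-m_t<0$). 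Moreover, that computation is only legitimate where the posterior is strongly log-concave, i.e.\ where $L_p\sigma_t^2<m_t^2=\mathrm{e}^{-2t}$, a strictly stronger requirement than the condition $L_p\sigma_t^2\le\mathrm{e}^{-t}$ defining $\mathtt{T}_0$; on the rest of $[0,\mathtt{T}_0]$ your method does not apply at all, and you conflate the two thresholds when you say $L_p\sigma_t^2\le m_t$ ``is precisely what keeps the bound from exploding.'' The Gr\"onwall/Riccati argument that is supposed to extract the factor $\mathrm{e}^t$ and produce the threshold $\mathtt{T}_0$ is never written down, and it is exactly where the proposition lives, so this is a genuine gap rather than a routine omission.

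Two further points. First, your reduction quietly reindexes time: by (\ref{relation-policy-revers-forward}), $\tildepi_t(\cdot|\bs)=\barpi_{T-t}(\cdot|\bs)$, so the proposition as literally stated concerns the forward marginals at times $T-t\in[T-\mathtt{T}_0,T]$ (near the Gaussian end, where $\tildepi_0=\barpi_T$, not $\pi$), whereas you prove a statement about $\barpi_t$ for $t\in[0,\mathtt{T}_0]$ started at $\pi$; this substitution needs to be justified or the statement reinterpreted. Second, a sanity check with $\pi(\cdot|\bs)=\calN(\bm{0},L_p^{-1}\bI)$ shows the forward-time claim itself fails for $L_p<1/2$ near $t=0$: the exact Lipschitz constant is $L_p/\big(\mathrm{e}^{-2t}+L_p(1-\mathrm{e}^{-2t})\big)$, whose derivative at $t=0$ equals $2L_p(1-L_p)$ and exceeds that of $L_p\mathrm{e}^t$ when $L_p<1/2$. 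So any correct proof must implicitly use $L_p\ge1$ or a similar normalization, and your closing remark---that the restriction to $[0,\mathtt{T}_0]$ is forced because the true Lipschitz constant saturates at $1$---has the logic backwards, since a constant tending to $1$ is eventually dominated by $L_p\mathrm{e}^t$.
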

\begin{proof}
\citep[Lemma 13]{chen2022improved}.
\end{proof}
The positive scalar $\mathtt{T}_{0}$ is well-defined, i.e., $\mathtt{T}_{0}$ always exists. In fact, let $1-\mathrm{e}^{-2t}\leq \frac{\mathrm{e}^{-t}}{L_p}\ge0$, then the following holds,
\[
\mathrm{e}^{-t}\ge\sqrt{\dfrac{1}{L^{2}_{p}}+4
}-\dfrac{1}{L_{p}},
\]
then 
\[
\mathtt{T}_{0}=\log\left(\dfrac{1}{4}\left(
\sqrt{\dfrac{1}{L^{2}_{p}}+4
}+\dfrac{1}{L_{p}}
\right)\right).
\]

\begin{proposition}
\label{app-propo-03}
 \emph{(\cite[Lemma 10]{vempala2019rapid})}
Let $\rho(\bx)$ be a probability distribution function on $\R^{p}$, and let $f(\bx)=-\log \rho(\bx)$ be a $L$-smooth, i.e., there exists a positive constant $L$ such that $-L\bI\preceq\bnabla^{2} f(\bx)\preceq L\bI$ for all $\bx\in\R^{p}$.
Furthermore, let $\rho(\bx)$ satisfy the LSI condition with constant $\nu>0$, i.e., for any probability distribution $\mu(\bx)$,
$
\KL(\mu\|\rho)\leq\dfrac{1}{2\nu}\FI(\mu\|\rho).
$
Then for any distribution $\mu(\bx)$, the following equation holds,
\[
\E_{\bx\sim\mu(\cdot)}\left[\|\bnabla\log \rho(\bx)\|_2^{2}\right]=\int_{\R^{p}}\mu(\bx)\left\|\bnabla\log \rho(\bx)\right\|_2^{2}\dd\bx\leq\dfrac{4L^2}{\nu}\KL(\mu\|\rho)+2pL.
\]
\end{proposition}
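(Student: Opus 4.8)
The plan is to prove this in three movements: first reduce the statement to one about $f=-\log\rho$, then establish a clean baseline estimate under $\rho$ itself, and finally transfer that estimate to the arbitrary distribution $\mu$ using the Donsker--Varadhan variational formula together with the concentration that the log-Sobolev inequality forces on Lipschitz functions. Throughout I assume the stated smoothness together with enough decay at infinity to discard boundary terms.

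First I would set $f=-\log\rho$, so that $\bnabla\log\rho=-\bnabla f$ and the target quantity is $\E_{\mu}[\|\bnabla f\|_2^2]$. The hypothesis $-L\bI\preceq\bnabla^2 f\preceq L\bI$ says $f$ is $L$-smooth; in particular $\bnabla f$ is $L$-Lipschitz and $\Delta f=\mathrm{Tr}(\bnabla^2 f)\le pL$ pointwise. Applying the integration-by-parts formula (\ref{integration-by-parts-formula}) with the vector field $\bm{v}=\bnabla f$ and test density $g=\rho=\mathrm{e}^{-f}$, and using $\bnabla\rho=-\rho\,\bnabla f$, I obtain the identity $\int_{\R^p}\rho\|\bnabla f\|_2^2\,\dd\bx=\int_{\R^p}\rho\,\Delta f\,\dd\bx$, and hence the baseline bound $\E_{\rho}[\|\bnabla f\|_2^2]\le pL$. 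This is the only place the Hessian upper bound enters, and it is what ultimately produces the additive $2pL$ term.

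Next I would move the estimate from $\rho$ to the arbitrary $\mu$ via the Donsker--Varadhan inequality (\ref{donsker-varadhan-representation}), applied with the test function $\psi(\bx)=t\|\bnabla f(\bx)\|_2^2$ for a free parameter $t>0$:
\[
t\,\E_{\mu}[\|\bnabla f\|_2^2]\le \KL(\mu\|\rho)+\log\E_{\rho}\!\left[\mathrm{e}^{t\|\bnabla f\|_2^2}\right].
\]
Everything then hinges on controlling the log-moment-generating function of $\|\bnabla f\|_2^2$ under $\rho$. Writing $\phi(\bx)=\|\bnabla f(\bx)\|_2$, the Lipschitz bound makes $\phi$ an $L$-Lipschitz function, and since $\rho$ obeys $\nu$-LSI, the Herbst argument gives sub-Gaussian concentration of $\phi$ with variance proxy $L^2/\nu$. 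Splitting $\phi^2\le 2(\phi-\E_\rho\phi)^2+2(\E_\rho\phi)^2$, bounding $(\E_\rho\phi)^2\le\E_\rho[\phi^2]\le pL$ by the baseline, and converting the sub-Gaussian tail into a sub-exponential moment bound yields $\E_{\rho}[\mathrm{e}^{t\phi^2}]\le \mathrm{e}^{2tpL}(1-4tL^2/\nu)^{-1/2}$ for $t<\nu/(4L^2)$. Substituting this, dividing by $t$, and optimizing $t$ proportionally to $\nu/L^2$ collapses the estimate to $\E_{\mu}[\|\bnabla f\|_2^2]\le\frac{4L^2}{\nu}\KL(\mu\|\rho)+2pL$.

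The main obstacle is the concentration step: extracting from $\nu$-LSI a quantitatively sharp sub-Gaussian bound for the $L$-Lipschitz map $\|\bnabla f\|_2$ and then converting it into the correct sub-exponential control of $\|\bnabla f\|_2^2$, because it is precisely the interplay between the variance proxy $L^2/\nu$ and the admissible range of $t$ that pins down the coefficient $\frac{4L^2}{\nu}$ while keeping the additive constant at $2pL$. By contrast, the integration-by-parts identity and the Donsker--Varadhan step are routine; the delicate bookkeeping lives entirely in the choice of $t$ that keeps the moment-generating function finite while simultaneously landing on the stated constants.
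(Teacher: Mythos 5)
Your first movement is sound and matches the standard argument: integration by parts with $\bnabla\rho=-\rho\,\bnabla f$ gives $\E_{\rho}[\|\bnabla f\|_2^2]=\E_{\rho}[\Delta f]\le pL$, and this is indeed where the Hessian bound and the additive $2pL$ come from. The gap is in the transfer step. The proof the paper points to (\citep[Lemma 10]{vempala2019rapid}) does not go through Donsker--Varadhan at all: it uses the Otto--Villani implication that $\nu$-LSI implies Talagrand's $T_2$ inequality $W_2^2(\mu,\rho)\le\frac{2}{\nu}\KL(\mu\|\rho)$, takes an optimal coupling $(X,Y)$ of $\mu$ and $\rho$, and uses the $L$-Lipschitzness of $\bnabla f$ to get
\[
\left(\E_{\mu}\left[\|\bnabla f\|_2^2\right]\right)^{1/2}\le\left(\E_{\rho}\left[\|\bnabla f\|_2^2\right]\right)^{1/2}+L\,W_2(\mu,\rho)\le\sqrt{pL}+L\sqrt{\tfrac{2}{\nu}\KL(\mu\|\rho)},
\]
after which $(a+b)^2\le 2a^2+2b^2$ lands exactly on $\frac{4L^2}{\nu}\KL(\mu\|\rho)+2pL$.

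Your route cannot land on those constants, and the claim that optimizing $t$ ``collapses'' the estimate to the stated bound is where the argument fails. With $\phi=\|\bnabla f\|_2$, the Herbst argument gives variance proxy $L^2/\nu$, and your own MGF bound $\E_{\rho}[\mathrm{e}^{t\phi^2}]\le \mathrm{e}^{2tpL}\left(1-4tL^2/\nu\right)^{-1/2}$ (valid only for $t<\nu/(4L^2)$) turns Donsker--Varadhan into
\[
\E_{\mu}[\phi^2]\le\frac{1}{t}\KL(\mu\|\rho)+2pL+\frac{1}{2t}\log\frac{1}{1-4tL^2/\nu}.
\]
On the entire admissible range the coefficient of $\KL$ is $1/t>4L^2/\nu$, and the third term is strictly positive and diverges as $t\uparrow\nu/(4L^2)$; no choice of $t$, even one adapted to the size of $\KL(\mu\|\rho)$, makes the right-hand side at most $\frac{4L^2}{\nu}\KL(\mu\|\rho)+2pL$. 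What you obtain is an inequality of the same shape but with a strictly larger multiplicative constant (e.g.\ $8L^2/\nu$ at $t=\nu/(8L^2)$) plus an extra additive term of order $L^2/\nu$. Since the exact constant $4L^2/\nu$ is consumed downstream in the paper (in the choice of $\beta_t$ and the step-size threshold $\tau_0$), this is not a cosmetic discrepancy: to prove the proposition as stated you should replace the concentration-plus-Donsker--Varadhan step by the Talagrand coupling argument above.
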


\clearpage

\section{Implementation Details of DIPO}

\label{sec:app-details-of-implementation}

In this section, we provide all the details of our implementation for DIPO.

\begin{algorithm}[H]
    \caption{(DIPO): Model-Free Learning with \textbf{Di}ffusion \textbf{Po}licy}
    \label{app-algo-diffusion-free-based-rl}
    \begin{algorithmic}[1]
         \STATE Initialize parameter ${\bphi}$, critic networks $Q_{\bpsi}$, target networks $Q_{\bpsi^{'}}$, length $K$; 
           \STATE Initialize $\{\beta_{i}\}_{i=1}^{K}$; $\alpha_i=:1-\beta_i,\bar{\alpha}_k=:\prod_{i=1}^{k}\alpha_i,~\sigma_{k}=:\sqrt{\dfrac{1-\bar{\alpha}_{k-1}}{1-\bar{\alpha}_{k}}\beta_k}$;
         \STATE Initialize $\bpsi^{'}\gets\bpsi$, $\bphi^{'}\gets\bphi$;
        \REPEAT
              \STATE  {\color{brown}{\underline{{\texttt{\#update~experience~with~diffusion~policy}}}}}
              \STATE dataset $\calD_{\mathrm{env}}\gets\emptyset$; initial state $\bs_{0}\sim d_{0}(\cdot)$;
               \FOR{$t=0,1,\cdots,T$}
               \STATE  initial $\hata_{K}\sim\calN(\bm{0},\bI)$;
            \FOR{$k=K,\cdots,1$}
              \STATE $\bz_{k}\sim\calN(\bm{0},\bI)$, if $k>1$; else $\bz_{k}=0$;
              \STATE $\hata_{k-1}\gets\dfrac{1}{\sqrt{\alpha_k}}\Big(\hata_{k}-\dfrac{\beta_k}{\sqrt{1-\bar{\alpha}_k}}\bepsilon_{\bphi}(\hata_{k},\bs,k)\Big)+\sigma_k\bz_k;$
               \ENDFOR                  
               \STATE $\ba_{t}\gets\hata_{0}$; $\bs_{t+1}\sim \Pro(\cdot|\bs_{t},\ba_{t})$; $\calD_{\mathrm{env}}\gets\calD_{\mathrm{env}}\cup\{\bs_{t},\ba_{t},\bs_{t+1},r(\bs_{t+1}|\bs_{t},\ba_{t})\}$;
                \ENDFOR
         \STATE  {\color{brown}{\underline{{\texttt{\#update~value~function}}}}}
         \FOR {each mini-batch data}
          \STATE sample mini-batch $\calD$ from $\calD_{\mathrm{env}}$ with size $N$, $\calD=\{\bs_{j},\ba_{j},\bs_{j+1},r(\bs_{j+1}|\bs_{j},\ba_{j})\}_{j=1}^{N}$;
       \STATE  take gradient descent as follows \[\bpsi\gets\bpsi-\eta_{\psi}\bnabla_{\bpsi}\dfrac{1}{N}\sum_{j=1}^{N}\Big(r(\bs_{j+1}|\bs_j,\ba_j)+\gamma Q_{\bpsi^{'}}(\bs_{j+1},\ba_{j+1})-Q_{\bpsi}(\bs_j,\ba_j)\Big)^2;\]
         \ENDFOR
         \STATE  {\color{brown}{\underline{{\texttt{\#improve~experience~through~action}}}}}
         \FOR{$t=0,1,\cdots,T$}
         \STATE replace the action $\ba_{t}\in\calD_{\mathrm{env}}$ as follows 
         \[\ba_{t}\gets\ba_{t}+\eta_{a}\bnabla_{\ba}Q_{\bpsi}(\bs_{t},\ba)\big|_{\ba=\ba_t};\]
         \ENDFOR
         \STATE  {\color{brown}{\underline{{\texttt{\#update~diffusion~policy}}}}}
               \FOR{each pair}
        \STATE sample a pair $(\bs,\ba)\sim\calD_{\mathrm{env}}$ uniformly; $k\sim \mathrm{Uniform}(\{1,\cdots,K\})$; $\bz\sim \calN(\bm{0},\bI)$;
        \STATE take gradient descent as follows \[\bphi\gets\bphi-\eta_{\phi}\bnabla_{\bphi}\left\|\bz-\bepsilon_{\bphi}\left(\sqrt{\bar\alpha_k}\ba+\sqrt{1-\bar{\alpha}_k}\bz,\bs,k\right)\right\|_{2}^{2};\]
          \ENDFOR
          \STATE soft update $\bpsi^{'}\gets\rho\bpsi^{'}+(1-\rho)\bpsi$;
           \STATE soft update $\bphi^{'}\gets\rho\bphi^{'}+(1-\rho)\bphi$;
           \UNTIL{the policy performs well in the real environment.}
     \end{algorithmic}
\end{algorithm}

\subsection{DIPO: Model-Free Learning with Diffusion Policy}

Our source code follows the Algorithm \ref{app-algo-diffusion-free-based-rl}.

\subsection{Loss Function of DIPO}
\label{app-sec:loss}

In this section, we provide the details of {\color{brown}{\underline{{\texttt{\#update~diffusion~policy}}}}} presented in Algorithm \ref{app-algo-diffusion-free-based-rl}.
We present the derivation of the loss of score matching (\ref{denosing-score-matching}) and present the details of updating the diffusion from samples.
First, the next Theorem \ref{loss-diff-policy} shows an equivalent version of the loss defined in (\ref{denosing-score-matching}), then we present the learning details from samples.

\subsubsection{Conditional Sampling Version of Score Matching}

\begin{theorem}
\label{loss-diff-policy}
For give a partition on the interval $[0,T]$, $0=t_{0}<t_{1}<\cdots<t_{k}<t_{k+1}<\cdots<t_{K}=T$, let $\alpha_{0}=\mathrm{e}^{-2T}$,
$\alpha_{k}=\mathrm{e}^{2(-t_{k+1}+t_{k})},$ and $\bar{\alpha}_{k-1}=\prod_{k^{'}=0}^{k}\alpha_{k^{'}}$.
Setting $\omega(t)$ according to the next (\ref{app-weight-setting}), then the objective (\ref{denosing-score-matching}) follows the next expectation version,
\begin{flalign}
\label{pro-loss-dipo}
\calL(\bphi)=\E_{k\sim\calU([K]),\bz_{k}\sim\calN(\bm{0},\bI),\bara_{0}\sim\pi(\cdot|\bs)}\left[\bz_{k}-\bm{\epsilon}_{\bphi}\left(\sqrt{\bar{\alpha}_{k}}\bara_0+\sqrt{1-\bar{\alpha}_{k}}\bz_{{k}},\bs,k\right)\right],
\end{flalign}
where $[K]=:\{1,2,\cdots,K\}$, $\calU(\cdot)$ denotes uniform distribution, the parametic funciton \[\bm{\epsilon}_{\bphi}(\cdot,\cdot,\cdot):\calA\times\calS\times [K]\rightarrow\R^{p}\] shares the parameter $\bphi$ according to: \[\bm{\epsilon}_{\bphi}\left(\cdot,\cdot,k\right)
=
-\sqrt{1-\bar{\alpha}_{k}}\hat{\bm{s}}_{\bphi}\left(\cdot,\cdot,T-t_k\right).\]
\end{theorem}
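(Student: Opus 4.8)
The plan is to recognize Theorem \ref{loss-diff-policy} as an instance of the standard equivalence between denoising score matching against a Gaussian transition kernel and an $\bm\epsilon$-prediction (noise-prediction) objective, specialized to the Ornstein--Uhlenbeck kernel (\ref{forward-process-kernel}) and then discretized in time. First I would compute the conditional score of $\varphi_t(\bara_t|\bara_0)$ in closed form; next reparametrize the Gaussian sample so as to convert the inner expectation into one over a standard normal and rewrite the score as a scaled noise vector; then substitute the defining relation $\bm\epsilon_\bphi(\cdot,\cdot,k)=-\sqrt{1-\bar\alpha_k}\,\hat{\bm s}_\bphi(\cdot,\cdot,T-t_k)$; and finally discretize the time integral over the partition and choose $\omega$ so that the resulting Riemann sum collapses to a uniform expectation over $k$.

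\textbf{Score of the kernel and reparametrization.} Since $\varphi_t(\bara_t|\bara_0)=\calN(\mathrm{e}^{-t}\bara_0,(1-\mathrm{e}^{-2t})\bI)$ by (\ref{forward-process-kernel}), a direct differentiation of the Gaussian log-density gives
\[
\bnabla_{\bara_t}\log\varphi_t(\bara_t|\bara_0)=-\frac{\bara_t-\mathrm{e}^{-t}\bara_0}{1-\mathrm{e}^{-2t}}.
\]
Writing the sample as $\bara_t=\mathrm{e}^{-t}\bara_0+\sqrt{1-\mathrm{e}^{-2t}}\,\bz$ with $\bz\sim\calN(\bm 0,\bI)$ turns $\E_{\bara_t|\bara_0}[\cdot]$ into $\E_{\bz\sim\calN(\bm 0,\bI)}[\cdot]$ and reduces the conditional score to $-\bz/\sqrt{1-\mathrm{e}^{-2t}}$, so the integrand of (\ref{denosing-score-matching}) becomes $\big\|\hat{\bm s}_\bphi(\bara_t,\bs,t)+\bz/\sqrt{1-\mathrm{e}^{-2t}}\big\|_2^2$. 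Substituting $\hat{\bm s}_\bphi(\cdot,\cdot,T-t_k)=-\bm\epsilon_\bphi(\cdot,\cdot,k)/\sqrt{1-\bar\alpha_k}$ at the discretization nodes and matching the continuous coefficients with the discrete ones (so that $\mathrm{e}^{-2t}=\bar\alpha_k$ and $\sqrt{\bar\alpha_k}\bara_0+\sqrt{1-\bar\alpha_k}\bz=\bara_{t_k}$ at node $k$), the squared term factors as $\tfrac{1}{1-\bar\alpha_k}\|\bz-\bm\epsilon_\bphi(\sqrt{\bar\alpha_k}\bara_0+\sqrt{1-\bar\alpha_k}\bz,\bs,k)\|_2^2$; because this map is, for each fixed $k$, an invertible affine correspondence between $\hat{\bm s}_\bphi$ and $\bm\epsilon_\bphi$, the $\min$ over $\bphi$ is preserved.

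\textbf{Discretization and assembly.} Partitioning $[0,T]$ at the nodes and choosing $\omega(t)\propto(1-\mathrm{e}^{-2t})$ as in (\ref{app-weight-setting}) cancels exactly the $1/(1-\bar\alpha_k)$ prefactor, so that $\omega$ times the subinterval width equals $1/K$; the Riemann sum then reads $\frac{1}{K}\sum_{k}\|\bz-\bm\epsilon_\bphi(\cdots)\|_2^2=\E_{k\sim\calU([K])}[\cdots]$. Swapping the order of the mutually independent expectations over $k$, $\bz\sim\calN(\bm 0,\bI)$, and $\bara_0\sim\pi(\cdot|\bs)$ then yields (\ref{pro-loss-dipo}).

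\textbf{Main obstacle.} The only genuine work is the bookkeeping in the substitution step: one must verify that the OU coefficients $\mathrm{e}^{-t}$ and $1-\mathrm{e}^{-2t}$ evaluated at the partition nodes coincide with $\sqrt{\bar\alpha_k}$ and $1-\bar\alpha_k$ under the stated definitions of $\alpha_k$ and $\bar\alpha_k$ (the indexing is delicate, and the time argument $T-t_k$ of the score network must be tracked consistently with the reverse-time relation $\tildepi_t(\cdot|\bs)=\barpi_{T-t}(\cdot|\bs)$ from (\ref{relation-policy-revers-forward})), and one must check that the chosen $\omega$ turns the time integral into a true uniform average over $k$ rather than merely a weighted sum. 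Everything else is the routine Gaussian-score calculation.
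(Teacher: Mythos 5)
Your proposal is correct and follows essentially the same route as the paper's proof: compute the Gaussian conditional score of the OU kernel, reparametrize $\bara_t=\mathrm{e}^{-t}\bara_0+\sqrt{1-\mathrm{e}^{-2t}}\,\bz$, substitute $\hat{\bm s}_\bphi=-\bm\epsilon_\bphi/\sqrt{1-\bar\alpha_k}$, and pick $\omega$ supported on the partition nodes so the time integral collapses to a uniform average over $k$. The one bookkeeping point you flag as delicate is resolved in the paper by the substitution $t\leftarrow T-t$ followed by evaluating at the nodes, so that $\bar\alpha_k=\mathrm{e}^{-2(T-t_k)}$ (i.e., the matching is $\mathrm{e}^{-2(T-t_k)}=\bar\alpha_k$, not $\mathrm{e}^{-2t_k}=\bar\alpha_k$), consistent with the reverse-time relation you cite.
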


\begin{proof}  According to (\ref{forward-process-kernel}), and Proposition \ref{solution4linearsdes}, we know $ \varphi_{t}(\bara_t|\bara_0)=\calN\left(\mathrm{e}^{-t}\bara_0,\left(1-\mathrm{e}^{-2t}\right)\bI\right)$, then 
\begin{flalign}
\label{expression-score-function}
\bnabla\log \varphi_{t}(\bara_t|\bara_0)=-\dfrac{\bara_t-\mathrm{e}^{-t}\bara_{0}}{1-2\mathrm{e}^{-t}}=-\dfrac{\bz_{t}}{\sqrt{1-2\mathrm{e}^{-t}}},
\end{flalign}
where $\bz_{t}\sim\calN(\bm{0},\bI)$,

Let $\sigma_{t}=\sqrt{1-\mathrm{e}^{-2t}}$, according to (\ref{forward-process-kernel}), we know
\begin{flalign}
\label{app-forward-process-kernel}
\bara_t =\mathrm{e}^{-t}\bara_0+\left(\sqrt{1-\mathrm{e}^{-2t}}\right)\bz_{t}=
\mathrm{e}^{-t}\bara_0+\sigma_{t}\bz_{t},
\end{flalign}
where $\bz_{t}\sim\calN(\bm{0},\bI)$.

Recall (\ref{denosing-score-matching}), we obtain
 \begin{flalign}
\nonumber
\calL(\bphi)=&\bigintsss_{0}^{T}
\omega (t)
\E_{\bara_{0}\sim\pi(\cdot|\bs)}
\E_{\bara_t|\bara_0}
\left[
\left\|
\hat{\bm{s}}_{\bphi}(\bara_t,\bs,t)-\bnabla\log \varphi_{t}(\bara_t|\bara_0)
\right\|_2^{2}
\right]\dd t\\
\label{app-eq--58}
\overset{(\ref{expression-score-function})}=&\bigintsss_{0}^{T}\dfrac{\omega (t)}{\sigma_{t}^{2}}
\E_{\bara_{0}\sim\pi(\cdot|\bs)}
\E_{\bara_t|\bara_0}
\left[
\left\|
\sigma_{t}\hat{\bm{s}}_{\bphi}(\bara_t,\bs,t)+\bz_{t}
\right\|_2^{2}
\right]\dd t\\
\overset{t\leftarrow T-t}=&\bigintsss_{0}^{T}\dfrac{\omega (T-t)}{\sigma_{T-t}^{2}}
\E_{\bara_{0}\sim\pi(\cdot|\bs)}
\E_{\bara_{T-t}|\bara_0}
\left[
\left\|
\sigma_{T-t}\hat{\bm{s}}_{\bphi}(\bara_{T-t},\bs,T-t)+\bz_{T-t}
\right\|_2^{2}
\right]\dd t.
\end{flalign}

Furthermore, we define an indicator function $\I_{t^{'}}(t)$ as follows,
\[
\I_{t^{'}}(t)=:\left\{
\begin{array}{rcl}
1 ,   & {\text{if}~t^{'}= t;}\\
0   ,     &  {\text{if}~t^{'}\ne t.}
\end{array} \right. 
\]
Let the weighting function be defined as follows, for any $t\in[0,T]$,
 \begin{flalign}
\label{app-weight-setting}
\omega(t)=\dfrac{1}{K}\sum_{k=1}^{K}\left(1-\mathrm{e}^{-2(T-t)}\right)\I_{T-t_{k}}(t),
\end{flalign}
where we give a partition on the interval $[0,T]$ as follows, 
\[0=t_{0}<t_{1}<\cdots<t_{k}<t_{k+1}<\cdots<t_{K}=T.\]
Then, we rewrite (\ref{app-eq--58}) as follows,
\begin{flalign}
\label{app-eq-59}
\calL(\bphi)=\dfrac{1}{K}\sum_{k=1}^{K}
\E_{\bara_{0}\sim\pi(\cdot|\bs)}
\E_{\bara_{T-t_k}|\bara_0}
\left[
\left\|
\sigma_{T-t_{k}}\hat{\bm{s}}_{\bphi}(\bara_{T-t_k},\bs,T-t_k)+\bz_{T-t_k}
\right\|_2^{2}
\right].
\end{flalign}

We consider the next term contained in (\ref{app-eq-59})
\begin{flalign}
\nonumber
\hat{\bm{s}}_{\bphi}\left(\bara_{T-t_k},\bs,T-t_k\right)\overset{(\ref{app-forward-process-kernel})}=&\hat{\bm{s}}_{\bphi}\left(\mathrm{e}^{-(T-t_{k})}\bara_0+\sigma_{T-t_{k}}\bz_{T-t_{k}},\bs,T-t_k\right)\\
\nonumber
=&\hat{\bm{s}}_{\bphi}\left(\mathrm{e}^{-(T-t_{k})}\bara_0+\sqrt{1-\mathrm{e}^{-2(T- t_{k})}}\bz_{T-t_{k}},\bs,T-t_k\right),
\end{flalign}
where $\bz_{T-t_{k}}\sim\calN(\bm{0},\bI)$,
then obtain 
\begin{flalign}
\nonumber
&\E_{\bara_{T- t_k}|\bara_0}\left[\left\|\hat{\bm{s}}_{\bphi}(\bara_{T-t_k},\bs,T-t_k)+\bz_{T-t_k}\right\|_2^{2}\right]\\
\nonumber
=&\E_{\bz_{T-t_{k}}\sim\calN(\bm{0},\bI)}\left[\sigma_{T-t_{k}}\hat{\bm{s}}_{\bphi}\left(\mathrm{e}^{-(T-t_{k})}\bara_0+\sqrt{1-\mathrm{e}^{-2(T- t_{k})}}\bz_{T-t_{k}},\bs,T-t_k\right)\right].
\end{flalign}

Now, we rewrite (\ref{app-eq-59}) as the next expectation version,
\begin{flalign}
\label{app-eq-60}
\calL(\bphi)=\E_{k\sim\calU([K]),\bz_{t_k}\sim\calN(\bm{0},\bI),\bara_{0}\sim\pi(\cdot|\bs)}\left[\sigma_{T-t_{k}}\hat{\bm{s}}_{\bphi}\left(\mathrm{e}^{-(T-t_{k})}\bara_0+\sqrt{1-\mathrm{e}^{-2(T- t_{k})}}\bz_{T-t_{k}},\bs,T-t_k\right)\right]
\end{flalign}
For $k=0,1,\cdots,K$, and $\alpha_{0}=\mathrm{e}^{-2T}$ and
\begin{flalign}
\label{app-eq-63}
\alpha_{k}=\mathrm{e}^{2(-t_{k+1}+t_{k})}.
\end{flalign}
Then we obtain
\[
\bar{\alpha}_{k}=\prod_{k^{'}=0}^{k-1}\alpha_{k^{'}}=\mathrm{e}^{-2(T- t_{k})}.
\]
With those notations, we rewrite (\ref{app-eq-60}) as follows, 
\begin{flalign}
\label{app-eq-61}
\calL(\bphi)=\E_{k\sim\calU([K]),\bz_{k}\sim\calN(\bm{0},\bI),\bara_{0}\sim\pi(\cdot|\bs)}
\left[\sqrt{1-\bar{\alpha}_{k}}\hat{\bm{s}}_{\bphi}\left(\sqrt{\bar{\alpha}_{k}}\bara_0+\sqrt{1-\bar{\alpha}_{k}}\bz_{{k}},\bs,T-t_k\right)+\bz_{k}\right].
\end{flalign}
Finally, we define a function $\bm{\epsilon}_{\bphi}(\cdot,\cdot,\cdot):\calS\times\calA\times [K]\rightarrow\R^{p}$, and 
\begin{flalign}
\label{app-eq-64}
\bm{\epsilon}_{\bphi}\left(\sqrt{\bar{\alpha}_{k}}\bara_0+\sqrt{1-\bar{\alpha}_{k}}\bz_{{k}},\bs,k\right)
=:
-\sqrt{1-\bar{\alpha}_{k}}\hat{\bm{s}}_{\bphi}\left(\sqrt{\bar{\alpha}_{k}}\bara_0+\sqrt{1-\bar{\alpha}_{k}}\bz_{{k}},\bs,T-t_k\right),
\end{flalign}
i,e. we estimate the score function via an estimator $\epsilon_{\bphi}$ as follows,
\begin{flalign}
\label{app-eq-68}
\hat{\bm{s}}_{\bphi}\left(\sqrt{\bar{\alpha}_{k}}\bara_0+\sqrt{1-\bar{\alpha}_{k}}\bz_{{k}},\bs,T-t_k\right)=-\dfrac{\bm{\epsilon}_{\bphi}\left(\sqrt{\bar{\alpha}_{k}}\bara_0+\sqrt{1-\bar{\alpha}_{k}}\bz_{{k}},\bs,k\right)}{\sqrt{1-\bar{\alpha}_{k}}}.
\end{flalign}
Then we rewrite (\ref{app-eq-61}) as follows,
\[
\calL(\bphi)=\E_{k\sim\calU([K]),\bz_{k}\sim\calN(\bm{0},\bI),\bara_{0}\sim\pi(\cdot|\bs)}\left[\bz_{k}-\bm{\epsilon}_{\bphi}\left(\sqrt{\bar{\alpha}_{k}}\bara_0+\sqrt{1-\bar{\alpha}_{k}}\bz_{{k}},\bs,k\right)\right].
\]
This concludes the proof.
\end{proof}

\subsubsection{Learning from Samples}

According to the expectation version of loss (\ref{pro-loss-dipo}), we know, for each pair $(\bs,\ba)$ sampled from experience memory, let $k\sim \mathrm{Uniform}(\{1,\cdots,K\})$ and $\bz\sim \calN(\bm{0},\bI)$, the following empirical loss
 \[\ell_{\mathrm{d}}(\bphi)=\|\bz-\bepsilon_{\bphi}\left(\sqrt{\bar\alpha_k}\ba+\sqrt{1-\bar{\alpha}_k}\bz,\bs,k\right)\|_{2}^{2}\]
is a unbiased estimator of $\calL(\bphi)$ defined in (\ref{pro-loss-dipo}).

Finally, we learn the parameter $\bphi$ by minimizing the empirical loss $\ell_{\mathrm{d}}(\bphi)$ according to gradient decent method:
 \[\bphi\gets\bphi-\eta_{\phi}\bnabla_{\bphi}\left\|\bz-\bepsilon_{\bphi}\left(\sqrt{\bar\alpha_k}\ba+\sqrt{1-\bar{\alpha}_k}\bz,\bs,k\right)\right\|_{2}^{2},\]
where $\bepsilon_{\bphi}$ is the step-size.
For the implementation, see lines 25-28 in Algorithm \ref{app-algo-diffusion-free-based-rl}.

\subsection{Playing Actions of DIPO}

\label{app-derivation-of-actions}

\begin{algorithm}[t]
    \caption{Diffusion Policy (A Backward Version \citep{ho2020denoising})}
    \label{algo:diffusion-policy-ddpm}
    \begin{algorithmic}[1]
         \STATE input state $\bs$; parameter $\bphi$; reverse length $K$;
           \STATE initialize $\{\beta_{i}\}_{i=1}^{K}$; $\alpha_i=:1-\beta_i,\bar{\alpha}_k=:\prod_{i=1}^{k}\alpha_i,~\sigma_{k}=:\sqrt{\dfrac{1-\bar{\alpha}_{k-1}}{1-\bar{\alpha}_{k}}\beta_k}$;
         \STATE  initial $\hata_{K}\sim\calN(\bm{0},\bI)$;
             \FOR{$k=K,\cdots,1$}
              \STATE $\bz_{k}\sim\calN(\bm{0},\bI)$, if $k>1$; else $\bz_{k}=0$;
              \STATE $\hata_{k-1}\gets\dfrac{1}{\sqrt{\alpha_k}}\Big(\hata_{k}-\dfrac{\beta_k}{\sqrt{1-\bar{\alpha}_k}}\bepsilon_{\bphi}(\hata_{k},\bs,k)\Big)+\sigma_k\bz_k;$
               \ENDFOR  
              \STATE return $\hata_{0}$
     \end{algorithmic}
\end{algorithm}

In this section, we present all the details of {\color{brown}{\underline{{\texttt{\#update~experience~with~diffusion~policy}}}}} presented in Algorithm \ref{app-algo-diffusion-free-based-rl}.

Let $\beta_{k}=1-\alpha_{k}$, then according to Taylar formualtion, we know
\begin{flalign}
\label{app-eq-65}
 \sqrt{\alpha_k}=1-\dfrac{1}{2}\beta_{k}+o(\beta_{k}).
 \end{flalign}
Recall the exponential integrator discretization (\ref{app-eq-62}), we know
\begin{flalign}
\nonumber
\hata_{t_{k+1}}-\hata_{t_{k}}=&\left(\mathrm{e}^{t_{k+1}-t_{k}}-1\right)\left(\hata_{t_{k}}+2 \bm{\hat \bm{s}_{\bphi}}(\hata_{t_k},\bs,T-t_{k})\right)+\sqrt{2}\int_{t_k}^{t_{k+1}}\mathrm{e}^{t^{'}-t_{k}}\dd \bw_{t^{'}},
\end{flalign}
which implies
\begingroup
\allowdisplaybreaks
\begin{flalign}
\nonumber
\hata_{t_{k+1}}=&\hata_{t_{k}}+\left(\mathrm{e}^{t_{k+1}-t_{k}}-1\right)\left(\hata_{t_{k}}+2 \bm{\hat \bm{s}_{\bphi}}(\hata_{t_k},\bs,T-t_{k})\right)+\sqrt{\mathrm{e}^{2(t_{k+1}-t_{k})}-1}\bz_{t_k}\\
\nonumber
\overset{(\ref{app-eq-63})}=&\hata_{t_{k}}+\left(\dfrac{1}{\sqrt{\alpha_{k}}}-1\right)\left(\hata_{t_{k}}
+2 \bm{\hat \bm{s}_{\bphi}}(\hata_{t_k},\bs,T-t_{k})\right)+\sqrt{\dfrac{1-\alpha_{k}}{\alpha_{k}}}\bz_{t_k}\\
\nonumber
\overset{(\ref{app-eq-68})}=&\dfrac{1}{\sqrt{\alpha_{k}}}\hata_{t_{k}}-2\left(\dfrac{1}{\sqrt{\alpha_{k}}}-1\right) \dfrac{1}{\sqrt{1-\bar{\alpha}_{k}}}\bm{ \bepsilon_{\bphi}}(\hata_{t_k},\bs,k)+\sqrt{\dfrac{1-\alpha_{k}}{\alpha_{k}}}\bz_{t_k}\\
\label{app-eq-66}
=&\dfrac{1}{\sqrt{\alpha_{k}}}\hata_{t_{k}}-\dfrac{\beta_{k}}{\sqrt{\alpha_{k}}}\cdot\dfrac{1}{\sqrt{1-\bar{\alpha}_{k}}}\bm{ \bepsilon_{\bphi}}(\hata_{t_k},\bs,k)
+\sqrt{\dfrac{1-\alpha_{k}}{\alpha_{k}}}\bz_{t_k},
\end{flalign}
\endgroup
where $\bz_{t_k}\sim\calN(\bm{0},\bI)$, Eq.(\ref{app-eq-66}) holds since we use the fact (\ref{app-eq-65}), which implies
\[
2\left(\dfrac{1}{\sqrt{\alpha_{k}}}-1\right)=2\left(\dfrac{1-\sqrt{\alpha_{k}}}{\sqrt{\alpha_{k}}}\right)=\dfrac{\beta_{k}}{\sqrt{\alpha_{k}}}+o\left(\dfrac{\beta_{k}}{\sqrt{\alpha_{k}}}\right).
\]
To simplify the expression, we rewrite (\ref{app-eq-66}) as follows,
\begin{flalign}
\label{app-eq-67}
\hata_{k+1}=&\dfrac{1}{\sqrt{\alpha_{k}}}\hata_{k}-\dfrac{\beta_{k}}{\sqrt{\alpha_{k}}}\cdot\dfrac{1}{\sqrt{1-\bar{\alpha}_{k}}}\bm{\bepsilon_{\bphi}}(\hata_{k},\bs,k)
+\sqrt{\dfrac{1-\alpha_{k}}{\alpha_{k}}}\bz_{k}\\
=&\dfrac{1}{\sqrt{\alpha_{k}}}\left(\hata_{k}-\dfrac{\beta_{k}}{\sqrt{1-\bar{\alpha}_{k}}}\bm{\bepsilon_{\bphi}}(\hata_{k},\bs,k)\right)
+\sqrt{\dfrac{1-\alpha_{k}}{\alpha_{k}}}\bz_{k}\\
\label{app-eq-69}
=&\dfrac{1}{\sqrt{\alpha_{k}}}\left(\hata_{k}-\dfrac{1-\alpha_{k}}{\sqrt{1-\bar{\alpha}_{k}}}\bm{\bepsilon_{\bphi}}(\hata_{k},\bs,k)\right)
+\sqrt{\dfrac{1-\alpha_{k}}{\alpha_{k}}}\bz_{k},
\end{flalign}
where $k=0,1,\cdots,K-1$ runs forward in time, $\bz_{k}\sim\calN(\bm{0},\bI)$. The agent plays the last action $\hata_{K}$.

Since we consider the SDE of the reverse process (\ref{def:diffusion-policy-sde-reverse-process}) that runs forward in time, while most diffusion probability model literature (e.g., \citep{ho2020denoising,song2020score}) consider the backward version for sampling.
To coordinate the relationship between the two versions, we also present the backward version in Algorithm \ref{algo:diffusion-policy-ddpm}, which is essentially identical to the iteration (\ref{app-eq-69}) but rewritten in the running in backward time version.

\clearpage

\section{Time Derivative of KL Divergence Between Difuffusion Policy and True Reverse Process}

In this section, we provide the time derivative of KL divergence between diffusion policy (Algorithm \ref{algo:diffusion-policy-general-case}) and true reverse process (defined in (\ref{def:diffusion-policy-sde-reverse-process})).

\subsection{Time Derivative of KL Divergence at Reverse Time $k=0$}

In this section, we consider the case $k=0$ of diffusion policy (see Algorithm \ref{algo:diffusion-policy-general-case} or the iteration (\ref{iteration-exponential-integrator-discretization})).
If $k=0$, then for $0\leq t\leq h$, the SDE (\ref{def:diffusion-policy-sde-reverse-process-s}) is reduced as follows,
\begin{flalign}
\label{def:diffusion-policy-sde-reverse-process-s-01}
\dd \hata_t=\left(\hata_t+2 \bm{\hat \mathbf{S}}({\hata_{0}},\bs,T)\right)\dd t+\sqrt{2}\dd \bw_{t},~t\in [0,h],
\end{flalign}
where $\bw_{t}$ is the standard Wiener process starting at $\bw_{0}=\bm{0}$.

Let the action $\hata_{t}\sim\hatpi_{t}(\cdot|\bs)$ follows the process (\ref{def:diffusion-policy-sde-reverse-process-s-01}).
The next Proposition \ref{primal-bound-01} considers the distribution difference between the diffusion policy $\hatpi_{t}(\cdot|\bs)$ and the true distribution of backward process (\ref{def:diffusion-policy-sde-reverse-process}) $\tildepi_{t}(\cdot|\bs)$ on the time interval $t\in[0,h]$.


\begin{proposition}
\label{primal-bound-01}
Under Assumption  \ref{assumption-score} and \ref{assumption-policy-calss}, let $\tildepi_{t}(\cdot|\bs)$ be the distribution at time $t$ with the process (\ref{def:diffusion-policy-sde-reverse-process}), and let $\hatpi_{t}(\cdot|\bs)$ be the distribution at time $t$ with the process (\ref{def:diffusion-policy-sde-reverse-process-s-01}).
Let
\begin{flalign}
 &~~~~~~~~~~~~~~~~~~~~~~~~~\tau_{0}=:\sup\left\{t:t\mathrm{e}^{t}\leq\dfrac{\sqrt{5\nu}}{96L_{s}L_{p}}\right\}, \tau=:\min\left\{ \tau_{0},\dfrac{1}{12L_{s}}\right\},\\
 &\epsilon_{\mathrm{score}}=:\sup_{(k,t)\in[K]\times[t_{k},t_{k+1}]}\left\{\log\E_{\ba\sim\tildepi_t(\cdot|\bs)}\left[\exp\left\|\bm{\hat \mathbf{S}}(\ba,\bs,T-hk)-\bnabla\log\tildepi_t(\ba|\bs)\right\|_{2}^{2}\right]\right\},
\end{flalign}
and $0\leq t\leq h\leq \tau$, then the following equation holds,
 \begin{flalign}
 \label{propo-01}
\dfrac{\dd}{\dd t}\KL\big(\hatpi_t(\cdot|\bs)\|\tildepi_t(\cdot|\bs)\big)\leq
-\dfrac{\nu}{4}\KL\left(\hatpi_{t}(\cdot|\bs)\|\tildepi_t(\cdot|\bs)\right)
+\dfrac{5}{4}\nu\epsilon_{\mathrm{score}}
+12pL_{s}\sqrt{5\nu}t.
\end{flalign}
\end{proposition}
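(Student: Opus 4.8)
The plan is to regard both $\hatpi_t(\cdot|\bs)$ and $\tildepi_t(\cdot|\bs)$ as marginal laws of It\^o diffusions sharing the diffusion coefficient $\sqrt 2$, and to differentiate their relative entropy along the coupled Fokker--Planck flows (Appendix \ref{app-fokker–planck-equation}). The one delicate point at the outset is that the drift of (\ref{def:diffusion-policy-sde-reverse-process-s-01}) is frozen at the initial data $\hata_0$, so $\hata_t$ is not Markov in its current state; nevertheless its marginal $\hatpi_t$ still solves a Fokker--Planck equation with the \emph{conditional} drift $\bar{\bm b}(\bx,t)=\bx+2\,\E[\bm{\hat \mathbf{S}}(\hata_0,\bs,T)\mid \hata_t=\bx]$, while $\tildepi_t$ solves one with drift $\bx+2\bnabla\log p_{T-t}(\bx)$. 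Writing each in continuity form $\partial_t q_t=-\DIV\cdot\big(q_t(\bm b_q-\bnabla\log q_t)\big)$ and using the integration-by-parts formula (\ref{integration-by-parts-formula}), I would obtain the dissipation identity
\begin{flalign}
\nonumber
\frac{\dd}{\dd t}\KL\big(\hatpi_t\|\tildepi_t\big)=-\FI\big(\hatpi_t\|\tildepi_t\big)+\E_{\hatpi_t}\!\left[\big\langle \bar{\bm b}(\cdot,t)-\bm b_{\tilde\pi}(\cdot,t),\,\bnabla\log\tfrac{\hatpi_t}{\tildepi_t}\big\rangle\right],
\end{flalign}
in which the linear parts of the two drifts cancel, so the drift gap equals $2\big(\E[\bm{\hat \mathbf{S}}(\hata_0,\bs,T)\mid\hata_t]-\bnabla\log p_{T-t}(\hata_t)\big)$, recalling $\bnabla\log p_{T-t}=\bnabla\log\tildepi_t$ from (\ref{relation-policy-revers-forward}).

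Next I would control the inner product. A conditional Jensen step lets me pull the squared norm through the conditional expectation, after which I split the pointwise error as $\bm{\hat \mathbf{S}}(\hata_0,\bs,T)-\bnabla\log\tildepi_t(\hata_t)=\big[\bm{\hat \mathbf{S}}(\hata_t,\bs,T)-\bnabla\log\tildepi_t(\hata_t)\big]+\big[\bm{\hat \mathbf{S}}(\hata_0,\bs,T)-\bm{\hat \mathbf{S}}(\hata_t,\bs,T)\big]$. The first bracket is a same-point score-matching residual; the second is a spatial-freezing (discretization) error, bounded by $L_s\|\hata_t-\hata_0\|$ via the Lipschitz hypothesis of Assumption \ref{assumption-score}. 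Applying Young's inequality to the inner product then absorbs a controllable fraction of $\FI(\hatpi_t\|\tildepi_t)$ against the $-\FI$ term and leaves a multiple of $\E_{\hatpi_t}\|\text{drift gap}\|_2^2$, which splits into these two squared contributions.

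For the score residual I would invoke the Donsker--Varadhan representation (\ref{donsker-varadhan-representation}) with test function $\|\bm{\hat \mathbf{S}}(\cdot,\bs,T)-\bnabla\log\tildepi_t(\cdot)\|_2^2$, converting its $\hatpi_t$-expectation into $\KL(\hatpi_t\|\tildepi_t)$ plus the exponential moment defining $\epsilon_{\mathrm{score}}$ (this is exactly why $\epsilon_{\mathrm{score}}$ is phrased through $\log\E[\exp\|\cdot\|_2^2]$, and why the time-mismatch between $T$ and $T-t$ is already absorbed there). For the discretization piece I would bound $\E\|\hata_t-\hata_0\|_2^2$ from the SDE (drift contribution plus the $2pt$ Brownian variance); the drift part forces a bound on $\E\|\bm{\hat \mathbf{S}}(\hata_0,\bs,T)\|_2^2$, which I would obtain from the smoothness of the true score (Proposition \ref{app-propo-04}, Lipschitz constant $L_p\mathrm e^{t}$ on $[0,\mathtt T_0]$) combined with the moment estimate of Proposition \ref{app-propo-03}, producing the $p$, $1/\nu$, $L_s$, $L_p$ factors; the window $h\le\tau=\min\{\tau_0,1/(12L_s)\}$ with $\tau_0$ as defined is precisely what keeps these constants uniformly bounded and yields the linear-in-$t$ factor $12pL_s\sqrt{5\nu}\,t$. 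Finally, since $\tildepi_t=\barpi_{T-t}$ is $\nu_{T-t}$-LSI with $\nu_{T-t}\ge\nu$ (Proposition \ref{app-propo-02} under Assumption \ref{assumption-policy-calss}), the surviving negative Fisher-information term dominates $2\nu\,\KL(\hatpi_t\|\tildepi_t)$, and after collecting all $\KL$ contributions the net coefficient becomes $-\tfrac{\nu}{4}$, giving (\ref{propo-01}).

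The main obstacle I anticipate is the joint bookkeeping of the $\KL$ terms: the Young split, the Donsker--Varadhan conversion, and the LSI each inject $\KL(\hatpi_t\|\tildepi_t)$ with $\nu$-dependent weights, and these must be balanced so that a strictly negative residual $-\tfrac{\nu}{4}\KL$ survives while the score error stays at coefficient $\tfrac54\nu$ and the discretization error remains linear in $t$. Ensuring the exponential-moment route cooperates with the small-time window $h\le\tau$—so that the growth $L_p\mathrm e^{t}$ and the increment moment $\E\|\hata_t-\hata_0\|_2^2$ stay uniformly small—is the step I expect to require the most care.
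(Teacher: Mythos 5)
Your plan reproduces the paper's argument essentially step for step: the Fokker–Planck equation with the conditional (frozen-drift) velocity, the entropy-dissipation identity, the Young split against the Fisher information, the decomposition of the drift gap into a same-point score residual (handled via Donsker–Varadhan, which is indeed why $\epsilon_{\mathrm{score}}$ is a log-exponential moment) plus a frozen-drift increment bounded through $L_s\|\hata_t-\hata_0\|$ and the moment estimates of Propositions \ref{app-propo-02}–\ref{app-propo-04}, and finally the LSI step with the coefficient balancing that the window $h\leq\tau$ makes possible. This is the same route as the paper's proof, so no further comparison is needed.
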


Before we show the details of the proof, we need to define some notations, which is useful later.
Let 
\begin{flalign}
\label{app-eq-48}
\hatpi_{t}({\color{blue}{\hat{\ba}}}|\bs)=:p({\hata_{t}}={\color{blue}{\hat{\ba}}}|\bs,t)
\end{flalign}
denote the distribution of the action ${\hata_{t}}=\hata$ be played at time $t$ along the process (\ref{def:diffusion-policy-sde-reverse-process-s-01}), where $t\in[0,h]$.
For each $t>0$, let $\rho_{0,t}({\hata_{0}},{\hata_{t}}|\bs)$ denote the joint distribution of $({\hata_{0}},{\hata_{t}})$ conditional on the state $\bs$,  which can be written in terms of the conditionals and marginals as follows,
\[\rho_{0|t}({\hata_{0}}|{\hata_{t}},\bs)=\dfrac{\rho_{0,t}({\hata_{0}},{\hata_{t}}|\bs)}{p({\hata_{t}}={\color{blue}{\hat{\ba}}}|\bs,t)}
=\dfrac{\rho_{0,t}({\hata_{0}},{\hata_{t}}|\bs)}{\hatpi_{t}({\hata_{t}}|\bs)}.
\]

\subsection{Auxiliary Results For Reverse Time $k=0$}

\begin{lemma}
\label{lem-01}
Let $\hatpi_{t}({\color{blue}{\hat{\ba}}}|\bs)$ be the distribution at time $t$ along interpolation SDE (\ref{def:diffusion-policy-sde-reverse-process-s-01}), where $\hatpi_{t}({\color{blue}{\hat{\ba}}}|\bs)$ is short for $p({\hata_{t}}={\color{blue}{\hat{\ba}}}|\bs,t)$, which is  the distribution of the action ${\hata_{t}}=\hata$ be played at time $t$ alongs the process (\ref{def:diffusion-policy-sde-reverse-process-s-01}) among the time $t\in[0,h]$.
 Then its derivation with respect to time satisfies 
\begin{flalign}
\label{app-eq-51}
\dfrac{\partial}{\partial t}\hatpi_{t}({\color{blue}{\hat{\ba}}}|\bs)
=-\hatpi_{t}({\color{blue}{\hat{\ba}}}|\bs)\DIV \cdot 
\left({\color{blue}{\hat{\ba}}}+2\E_{{\hata_{0}}\sim\rho_{0|t}(\cdot|{\color{blue}{\hat{\ba}}},\bs) }\big[\bm{\hat \mathbf{S}}({\hata_{0}},\bs,T)\big|{\hata_{t}}={\color{blue}{\hat{\ba}}}\big]
\right)+\Delta \hatpi_{t}({\color{blue}{\hat{\ba}}}|\bs).
\end{flalign}
\end{lemma}

Before we show the details of the proof, we need to clear the divergence term $\DIV$. In this section, all the notation is defined according to (\ref{app-eq-50}), and its value is at the point $\hata$.

For example, in Eq.(\ref{app-eq-51}), the divergence term $\DIV$ is defined as follows,
\begin{flalign}
\nonumber
&\DIV \cdot \left({\color{blue}{\hat{\ba}}}+2\E_{{\hata_{0}}\sim\rho_{0|t}(\cdot|{\color{blue}{\hat{\ba}}},\bs) }\big[\bm{\hat \mathbf{S}}({\hata_{0}},\bs,T)\big|{\hata_{t}}={\color{blue}{\hat{\ba}}}\big]\right)=(\DIV \cdot\mathbf{p})(\hata),\\
&\mathbf{p}(\ba)=\ba+2\E_{{\hata_{0}}\sim\rho_{0|t}(\cdot|\ba,\bs) }\big[\bm{\hat \mathbf{S}}({\hata_{0}},\bs,T)\big|{\hata_{t}}=\ba\big].
\end{flalign}
For example, in Eq.(\ref{app-eq-52}), the divergence term $\DIV$ is defined as follows,
\begin{flalign}
\nonumber
&\DIV\cdot \Big(p({\color{blue}{\hat{\ba}}}|{\hata_{0}},\bs,t)\left({\color{blue}{\hat{\ba}}}+2 \bm{\hat \mathbf{S}}({\hata_{0}},\bs,T)\right)\Big)=(\DIV \cdot\mathbf{p})(\hata),\\
&\bp(\ba)=p(\ba|{\hata_{0}},\bs,t)\left(\ba+2 \bm{\hat \mathbf{S}}({\hata_{0}},\bs,T)\right).
\end{flalign}
Similar definitions are parallel in Eq.(\ref{app-eq-01}), from Eq.(\ref{app-eq-53}) to Eq.(\ref{app-eq-03}).

\begin{proof}
First, for a given state $\bs$, conditioning on the initial action ${\hata_{0}}$, we introduce a notation
\begin{flalign}
p(\cdot|{\hata_{0}},\bs,t): \R^{p}\rightarrow [0,1],
\end{flalign}
and each \[p({\color{blue}{\hat{\ba}}}|{\hata_{0}},\bs,t)=:p({\hata_{t}}={\color{blue}{\hat{\ba}}}|{\hata_{0}},\bs,t)\]
that denotes the conditional probability distribution starting from ${\hata_{0}}$ to the action ${\hata_{t}}={\color{blue}{\hat{\ba}}}$ at time $t$ under the state $\bs$.
Besides, we also know,
\begin{flalign}
\label{relationship-01}
\hatpi_{t}(\cdot|\bs)=\E_{{\hata_{0}}\sim\calN(\bm{0},\bI)}[p(\cdot|{\hata_{0}},\bs,t)]
=\int_{\R^{p}} \rho_{0}({\hata_{0}})p(\cdot|{\hata_{0}},\bs,t)\dd {\hata_{0}},
\end{flalign}
where $\rho_{0}(\cdot)=\calN(\bm{0},\bI)$ is the initial action distribution for reverse process.

For each $t>0$, let $\rho_{0,t}({\hata_{0}},{\hata_{t}}|\bs)$ denote the joint distribution of $({\hata_{0}},{\hata_{t}})$ conditional on the state $\bs$,  which can be written in terms of the conditionals and marginals as follows,
\[\rho_{0,t}({\hata_{0}},{\hata_{t}}|\bs)
=p({\hata_{0}}|\bs)\rho_{t|0}({\hata_{t}}|{\hata_{0}},\bs)
=p({\hata_{t}}|\bs)\rho_{0|t}({\hata_{0}}|{\hata_{t}},\bs).
\]
Then we obtain the Fokker–Planck equation for the distribution $p(\cdot|{\hata_{0}},\bs,t)$ as follows,
\begin{flalign}
\label{app-eq-52}
\dfrac{\partial}{\partial t}p({\color{blue}{\hat{\ba}}}|{\hata_{0}},\bs,t)=-\DIV\cdot \Big(p({\color{blue}{\hat{\ba}}}|{\hata_{0}},\bs,t)\left({\color{blue}{\hat{\ba}}}+2 \bm{\hat \mathbf{S}}({\hata_{0}},\bs,T)\right)\Big)+\Delta p({\color{blue}{\hat{\ba}}}|{\hata_{0}},\bs,t),
\end{flalign}
where the $\DIV$ term is defined according to (\ref{app-eq-49}) and (\ref{app-eq-50}) if $\bp(\ba)=p(\ba|{\hata_{0}},\bs,t)\left(\ba+2 \bm{\hat \mathbf{S}}({\hata_{0}},\bs,T)\right)$.

Furthermore, according to (\ref{relationship-01}), we know 
\begingroup
\allowdisplaybreaks
\begin{flalign}
\dfrac{\partial}{\partial t}\hatpi_{t}({\color{blue}{\hat{\ba}}}|\bs)=&\dfrac{\partial}{\partial t}\int_{\R^{p}} \rho_{0}({\hata_{0}})p({\color{blue}{\hat{\ba}}}|{\hata_{0}},\bs,t)\dd {\hata_{0}}
=\int_{\R^{p}} \rho_{0}({\hata_{0}})\dfrac{\partial}{\partial t} p({\color{blue}{\hat{\ba}}}|{\hata_{0}},\bs,t)\dd {\hata_{0}}\\
\label{app-eq-04}
=&\int_{\R^{p}} \rho_{0}({\hata_{0}})
\left(
-\DIV\cdot \Big(p({\color{blue}{\hat{\ba}}}|{\hata_{0}},\bs,t)\left({\color{blue}{\hat{\ba}}}+2 \bm{\hat \mathbf{S}}({\hata_{0}},\bs,T)\right)\Big)+\Delta p({\color{blue}{\hat{\ba}}}|{\hata_{0}},\bs,t)
\right)
\dd {\hata_{0}}\\
\label{app-01}
=&-\hatpi_{t}({\color{blue}{\hat{\ba}}}|\bs)\DIV\cdot{\color{blue}{\hat{\ba}}}
-2\DIV \cdot 
\left(
\hatpi_{t}({\color{blue}{\hat{\ba}}}|\bs)\E_{{\hata_{0}}\sim\rho_{0|t}(\cdot|{\color{blue}{\hat{\ba}}},\bs) }\big[\bm{\hat \mathbf{S}}({\hata_{0}},\bs,T)\big|{\hata_{t}}={\color{blue}{\hat{\ba}}}\big]
\right)+\Delta \hatpi_{t}({\color{blue}{\hat{\ba}}}|\bs)
\\
=&-\hatpi_{t}({\color{blue}{\hat{\ba}}}|\bs)\DIV \cdot 
\left({\color{blue}{\hat{\ba}}}+2\E_{{\hata_{0}}\sim\rho_{0|t}(\cdot|{\color{blue}{\hat{\ba}}},\bs) }\big[\bm{\hat \mathbf{S}}({\hata_{0}},\bs,T)\big|{\hata_{t}}={\color{blue}{\hat{\ba}}}\big]
\right)+\Delta \hatpi_{t}({\color{blue}{\hat{\ba}}}|\bs),
\end{flalign}
\endgroup
where Eq.(\ref{app-01}) holds since: with the definition of $\hatpi_{t}({\color{blue}{\hat{\ba}}}|\bs)=:p({\color{blue}{\hat{\ba}}}|\bs,t)$, we obtain 
\begin{flalign}
\label{app-eq-01}
\int_{\R^{p}} \rho_{0}({\hata_{0}})
\left(
-\DIV\cdot \Big(p({\color{blue}{\hat{\ba}}}|{\hata_{0}},\bs,t){\color{blue}{\hat{\ba}}}\Big)
\right)
\dd {\hata_{0}}
=-\hatpi_{t}({\color{blue}{\hat{\ba}}}|\bs)\DIV\cdot{\color{blue}{\hat{\ba}}};
\end{flalign}
recall 
\begin{flalign}
\hatpi_{t}({\color{blue}{\hat{\ba}}}|\bs)=:p({\hata_{t}}={\color{blue}{\hat{\ba}}}|\bs,t),
\end{flalign}
we know
\begin{flalign}
&\rho_{0}({\hata_{0}}) p({\color{blue}{\hat{\ba}}}|{\hata_{0}},\bs,t)=p({\color{blue}{\hat{\ba}}},{\hata_{0}}|\bs,t), \tag*{$\blacktriangleright$ Bayes' theorem}\\
\label{app-eq-02}
p({\color{blue}{\hat{\ba}}},{\hata_{0}}|\bs,t)=&p({\color{blue}{\hat{\ba}}}|\bs,t)
p({\hata_{0}}|{\hata_{t}}={\color{blue}{\hat{\ba}}},\bs,t)=
\hatpi_{t}({\color{blue}{\hat{\ba}}}|\bs)p({\hata_{0}}|{\hata_{t}}={\color{blue}{\hat{\ba}}},\bs,t) ,
\end{flalign}
then we obtain 
\begingroup
\allowdisplaybreaks
\begin{flalign}
\label{app-eq-53}
&-\int_{\R^{p}} \rho_{0}({\hata_{0}})
\DIV\cdot \Big(p({\color{blue}{\hat{\ba}}}|{\hata_{0}},\bs,t) \bm{\hat \mathbf{S}}({\hata_{0}},\bs,T)\Big)
\dd {\hata_{0}}\\
=&-\int_{\R^{p}} \DIV\cdot \Big(p({\color{blue}{\hat{\ba}}},{\hata_{0}}|\bs,t) \bm{\hat \mathbf{S}}({\hata_{0}},\bs,T)\Big)
\dd {\hata_{0}}\\
=&-\int_{\R^{p}} \DIV\cdot \Big(\hatpi_{t}({\color{blue}{\hat{\ba}}}|\bs)p({\hata_{0}}|{\hata_{t}}={\color{blue}{\hat{\ba}}},\bs,t)  \bm{\hat \mathbf{S}}({\hata_{0}},\bs,T)\Big)
\dd {\hata_{0}}  \tag*{$\blacktriangleright$ see Eq.(\ref{app-eq-02})}\\
=&-\DIV \cdot 
\left(
\hatpi_{t}({\color{blue}{\hat{\ba}}}|\bs)\int_{\R^{p}}  p({\hata_{0}}|{\hata_{t}}={\color{blue}{\hat{\ba}}},\bs,t)  \bm{\hat \mathbf{S}}({\hata_{0}},\bs,T)
\dd {\hata_{0}}
\right)\\
\label{app-eq-03}
=&-\DIV \cdot 
\left(
\hatpi_{t}({\color{blue}{\hat{\ba}}}|\bs)\E_{{\hata_{0}}\sim\rho_{0|t}(\cdot|{\color{blue}{\hat{\ba}}},\bs) }\big[\bm{\hat \mathbf{S}}({\hata_{0}},\bs,T)\big|{\hata_{t}}={\color{blue}{\hat{\ba}}}\big]
\right),
\end{flalign}
\endgroup
where the last equation holds since 
\begin{flalign}
\int_{\R^{p}}  p({\hata_{0}}|{\hata_{t}}={\color{blue}{\hat{\ba}}},\bs,t)  \bm{\hat \mathbf{S}}({\hata_{0}},\bs,T)
\dd {\hata_{0}}
=\E_{{\hata_{0}}\sim\rho_{0|t}(\cdot|{\color{blue}{\hat{\ba}}},\bs) }\left[\bm{\hat \mathbf{S}}({\hata_{0}},\bs,T)|{\hata_{t}}={\color{blue}{\hat{\ba}}}\right].
\end{flalign}
Finally, consider (\ref{app-eq-04}) with (\ref{app-eq-01}) and (\ref{app-eq-03}), we conclude the Lemma \ref{lem-01}.
\end{proof}

We consider the time derivative of KL-divergence between the distribution $\hatpi_t(\cdot|\bs)$ and $\tildepi_{t}(\cdot|\bs)$, and decompose it as follows.
\begin{lemma}
\label{app-lem-03}
The time derivative of KL-divergence between the distribution $\hatpi_t(\cdot|\bs)$ and $\tildepi_{t}(\cdot|\bs)$ can be decomposed  as follows,
\begin{flalign}
\dfrac{\dd}{\dd t}\KL\big(\hatpi_t(\cdot|\bs)\|\tildepi_t(\cdot|\bs)\big)=\bigintsss_{\R^{p}}\dfrac{\partial\hatpi_t(\ba|\bs)}{\partial t}\log \dfrac{\hatpi_t(\ba|\bs)}{\tildepi_{t}(\ba|\bs)}\dd \ba
-\bigintsss_{\R^{p}}\dfrac{\hatpi_t(\ba|\bs)}{\tildepi_{t}(\ba|\bs)}\dfrac{\partial\tildepi_t(\ba|\bs)}{\partial t}\dd \ba.
\end{flalign}
\end{lemma}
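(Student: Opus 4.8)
The plan is to differentiate the KL divergence directly under the integral sign and then simplify using the product rule together with the fact that $\hatpi_t(\cdot|\bs)$ is a probability density for every $t$. First I would recall the definition $\KL(\hatpi_t\|\tildepi_t)=\int_{\R^p}\hatpi_t(\ba|\bs)\log\frac{\hatpi_t(\ba|\bs)}{\tildepi_t(\ba|\bs)}\dd\ba$ and, assuming enough regularity on the two flows to exchange $\frac{\dd}{\dd t}$ with the spatial integral, move the time derivative inside to obtain $\frac{\dd}{\dd t}\KL=\int_{\R^p}\frac{\partial}{\partial t}\big[\hatpi_t(\ba|\bs)\log\frac{\hatpi_t(\ba|\bs)}{\tildepi_t(\ba|\bs)}\big]\dd\ba$.

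Next I would expand the integrand with the product rule. Writing the integrand as $\hatpi_t(\log\hatpi_t-\log\tildepi_t)$ and suppressing the arguments $(\ba|\bs)$, its $t$-derivative splits into two pieces: the term $\frac{\partial\hatpi_t}{\partial t}\log\frac{\hatpi_t}{\tildepi_t}$ arising from differentiating the leading factor, and $\hatpi_t\big(\frac{1}{\hatpi_t}\frac{\partial\hatpi_t}{\partial t}-\frac{1}{\tildepi_t}\frac{\partial\tildepi_t}{\partial t}\big)$ arising from differentiating the logarithm. The second piece collapses to $\frac{\partial\hatpi_t}{\partial t}-\frac{\hatpi_t}{\tildepi_t}\frac{\partial\tildepi_t}{\partial t}$.

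Then I would integrate the three resulting terms separately. The key step is that $\int_{\R^p}\frac{\partial\hatpi_t(\ba|\bs)}{\partial t}\dd\ba$ vanishes: since $\hatpi_t(\cdot|\bs)$ is a probability density, $\int_{\R^p}\hatpi_t(\ba|\bs)\dd\ba=1$ for all $t$, so interchanging differentiation and integration once more yields $\frac{\dd}{\dd t}1=0$. What survives is precisely $\int_{\R^p}\frac{\partial\hatpi_t}{\partial t}\log\frac{\hatpi_t}{\tildepi_t}\dd\ba-\int_{\R^p}\frac{\hatpi_t}{\tildepi_t}\frac{\partial\tildepi_t}{\partial t}\dd\ba$, which is the claimed decomposition.

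The main obstacle is not the algebra but the analytic justification for interchanging $\frac{\dd}{\dd t}$ with the integral, both in the opening step and in showing that the normalization-mass term vanishes; this requires the densities and their time derivatives to decay fast enough to apply a dominated-convergence-type argument. I would argue that this is supplied implicitly by the smoothness and Lipschitz hypotheses of Assumption \ref{assumption-score} together with the Gaussian tails inherited from the Ornstein--Uhlenbeck dynamics driving (\ref{def:diffusion-policy-sde-reverse-process}) and (\ref{def:diffusion-policy-sde-reverse-process-s-01}), and invoke these to legitimize the exchange rather than verify the integrability bounds term by term.
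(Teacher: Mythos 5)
Your proposal is correct and follows essentially the same route as the paper's proof: differentiate under the integral, apply the product rule to $\hatpi_t\log(\hatpi_t/\tildepi_t)$, and kill the extra $\int\partial_t\hatpi_t\,\dd\ba$ term by conservation of probability mass. The paper performs the same cancellation (writing the second product-rule term as $\tildepi_t\,\partial_t(\hatpi_t/\tildepi_t)$ before expanding), and likewise leaves the interchange of derivative and integral implicit.
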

\begin{proof}
We consider the time derivative of KL-divergence between the distribution $\hatpi_t(\cdot|\bs)$ and $\tildepi_{t}(\cdot|\bs)$, and we know
\begin{flalign}
\nonumber
\dfrac{\dd}{\dd t}\KL\big(\hatpi_t(\cdot|\bs)\|\tildepi_t(\cdot|\bs)\big)=&\dfrac{\dd}{\dd t}\bigintsss_{\R^{p}}\hatpi_t(\ba|\bs)\log \dfrac{\hatpi_t(\ba|\bs)}{\tildepi_{t}(\ba|\bs)}\dd \ba\\
\nonumber
=&\bigintsss_{\R^{p}}\dfrac{\partial\hatpi_t(\ba|\bs)}{\partial t}\log \dfrac{\hatpi_t(\ba|\bs)}{\tildepi_{t}(\ba|\bs)}\dd \ba
+\bigintsss_{\R^{p}}\tildepi_{t}(\ba|\bs)\dfrac{\partial }{\partial t}\left(\dfrac{\hatpi_t(\ba|\bs)}{\tildepi_{t}(\ba|\bs)}\right)\dd\ba\\
\nonumber
=&\bigintsss_{\R^{p}}\dfrac{\partial\hatpi_t(\ba|\bs)}{\partial t}\log \dfrac{\hatpi_t(\ba|\bs)}{\tildepi_{t}(\ba|\bs)}\dd \ba
+\bigintsss_{\R^{p}}\left(\bcancel{\dfrac{\partial\hatpi_t(\ba|\bs)}{\partial t}}- \dfrac{\hatpi_t(\ba|\bs)}{\tildepi_{t}(\ba|\bs)}\dfrac{\partial\tildepi_t(\ba|\bs)}{\partial t}\right)\dd \ba\\
\label{app-eq-05}
=&\bigintsss_{\R^{p}}\dfrac{\partial\hatpi_t(\ba|\bs)}{\partial t}\log \dfrac{\hatpi_t(\ba|\bs)}{\tildepi_{t}(\ba|\bs)}\dd \ba
-\bigintsss_{\R^{p}}\dfrac{\hatpi_t(\ba|\bs)}{\tildepi_{t}(\ba|\bs)}\dfrac{\partial\tildepi_t(\ba|\bs)}{\partial t}\dd \ba,
\end{flalign}
where the last equation holds since 
\[
\bigintsss_{\R^{p}}\dfrac{\partial\hatpi_t(\ba|\bs)}{\partial t}\dd\ba=\dfrac{\dd}{\dd t}\underbrace{\bigintsss_{\R^{p}}\hatpi_t(\ba|\bs)\dd\ba}_{=1}=0.
\]
That concludes the proof.
\end{proof}

The relative entropy and relative Fisher information $\FI \big(\hatpi_t(\cdot|\bs)\|\tildepi_t(\cdot|\bs)\big)$ can be rewritten as follows.
\begin{lemma}
\label{app-lemma-fisher-information}
The relative entropy and relative Fisher information $\FI \big(\hatpi_t(\cdot|\bs)\|\tildepi_t(\cdot|\bs)\big)$ can be rewritten as the following identity,
\begin{flalign}
\nonumber
\FI \big(\hatpi_t(\cdot|\bs)\|\tildepi_t(\cdot|\bs)\big)=\bigintsss_{\R^{p}}\left(\left \langle \bnabla \hatpi_t(\ba|\bs),\bnabla\log \dfrac{\hatpi_t(\ba|\bs)}{\tildepi_{t}(\ba|\bs)}\right\rangle-\left\langle\bnabla\dfrac{\hatpi_t(\ba|\bs)}{\tildepi_{t}(\ba|\bs)},\bnabla\tildepi_t(\ba|\bs)\right\rangle\right)\dd \ba.
\end{flalign}
\end{lemma}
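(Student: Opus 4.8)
The plan is to establish the stated identity as a \emph{pointwise} algebraic identity between the two integrands, so that no integration by parts and no decay or boundary hypotheses are needed. Writing $\rho=\hatpi_t(\cdot|\bs)$, $\mu=\tildepi_t(\cdot|\bs)$, and $r=\rho/\mu$ for brevity, the definition of the relative Fisher information gives $\FI(\rho\|\mu)=\int_{\R^{p}}\rho\,\|\bnabla\log r\|_2^{2}\,\dd\ba$, so it suffices to show that the integrand appearing on the right-hand side equals $\rho\,\|\bnabla\log r\|_2^{2}$ at every point, after which integrating over $\R^{p}$ concludes the proof.

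First I would record two elementary chain-rule identities. Since $\bnabla\log r=\bnabla r/r$, we have $\bnabla r=r\,\bnabla\log r$. And expanding $\bnabla\log r=\bnabla\log\rho-\bnabla\log\mu=\bnabla\rho/\rho-\bnabla\mu/\mu$ and multiplying through by $\rho$ yields $\rho\,\bnabla\log r=\bnabla\rho-r\,\bnabla\mu$, where I have used $\rho/\mu=r$.

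Next I would substitute $\bnabla r=r\,\bnabla\log r$ into the second inner product of the right-hand integrand and factor out $\bnabla\log r$:
\[
\langle\bnabla\rho,\bnabla\log r\rangle-\langle\bnabla r,\bnabla\mu\rangle
=\langle\bnabla\rho,\bnabla\log r\rangle-r\,\langle\bnabla\log r,\bnabla\mu\rangle
=\langle\,\bnabla\rho-r\,\bnabla\mu,\ \bnabla\log r\,\rangle.
\]
Applying the second identity $\bnabla\rho-r\,\bnabla\mu=\rho\,\bnabla\log r$ then collapses this to $\langle\rho\,\bnabla\log r,\bnabla\log r\rangle=\rho\,\|\bnabla\log r\|_2^{2}$, which is exactly the integrand of $\FI(\rho\|\mu)$; integrating over $\R^{p}$ gives the claimed identity.

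There is no genuine obstacle here, as the result reduces to a one-line rearrangement; the only care required is bookkeeping, namely keeping the scalar factor $r$ attached to the correct term when factoring out $\bnabla\log r$ and tracking the sign of the second inner product. I would emphasize explicitly that, in contrast to the surrounding lemmas (which invoke the Fokker--Planck equation and integration by parts), this identity holds pointwise under only differentiability of $\rho,\mu$ and $\mu>0$, so it requires no additional regularity or decay assumptions.
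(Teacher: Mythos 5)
Your proof is correct and follows essentially the same route as the paper: the paper's own argument is likewise a pointwise chain-rule rearrangement (expanding $\bnabla(\hatpi_t/\tildepi_{t})$ via the quotient rule and collapsing both inner products into $\hatpi_t\|\bnabla\log(\hatpi_t/\tildepi_{t})\|_2^2$), with no integration by parts anywhere. Your observation that the identity needs only differentiability and positivity of $\tildepi_{t}$ is accurate and consistent with the paper's treatment.
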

\begin{proof}
We consider the following identity,
\begingroup
\allowdisplaybreaks
\begin{flalign}
\nonumber
&\bigintsss_{\R^{p}}\left(\left\langle\bnabla\dfrac{\hatpi_t(\ba|\bs)}{\tildepi_{t}(\ba|\bs)},\bnabla\tildepi_t(\ba|\bs)\right\rangle-\left \langle \bnabla \hatpi_t(\ba|\bs),\bnabla\log \dfrac{\hatpi_t(\ba|\bs)}{\tildepi_{t}(\ba|\bs)}\right\rangle\right)\dd \ba\\
\nonumber
=&\bigintsss_{\R^{p}}\Bigg(\left\langle\dfrac{\tildepi_{t}(\ba|\bs)\bnabla\hatpi_t(\ba|\bs)-\hatpi_{t}(\ba|\bs)\bnabla\tildepi_t(\ba|\bs)}{\tildepi_{t}(\ba|\bs)},\bnabla\log\tildepi_t(\ba|\bs)\right\rangle-\hatpi_t(\ba|\bs)\left \langle \bnabla\log \hatpi_t(\ba|\bs),\bnabla\log \dfrac{\hatpi_t(\ba|\bs)}{\tildepi_{t}(\ba|\bs)}\right\rangle\Bigg)\dd \ba
\\
\nonumber
=&\bigintsss_{\R^{p}}\hatpi_t(\ba|\bs)\left\langle\bnabla\log \dfrac{\hatpi_t(\ba|\bs)}{\tildepi_{t}(\ba|\bs)},\bnabla\log\tildepi_t(\ba|\bs)\right\rangle\dd\ba-\bigintsss_{\R^{p}}\hatpi_t(\ba|\bs)\left \langle \bnabla\log \hatpi_t(\ba|\bs),\bnabla\log \dfrac{\hatpi_t(\ba|\bs)}{\tildepi_{t}(\ba|\bs)}\right\rangle\dd \ba\\
\nonumber
=&-\bigintsss_{\R^{p}}\hatpi_t(\ba|\bs)\left\langle\bnabla\log \dfrac{\hatpi_t(\ba|\bs)}{\tildepi_{t}(\ba|\bs)},\bnabla\log \dfrac{\hatpi_t(\ba|\bs)}{\tildepi_{t}(\ba|\bs)}\right\rangle\dd\ba\\
\nonumber
=&-\bigintsss_{\R^{p}}\hatpi_t(\ba|\bs)\left\|\bnabla\log \dfrac{\hatpi_t(\ba|\bs)}{\tildepi_{t}(\ba|\bs)}\right\|_{2}^{2}\dd\ba=:-\FI \big(\hatpi_t(\cdot|\bs)\|\tildepi_t(\cdot|\bs)\big),
\end{flalign}
\endgroup
which concludes the proof.
\end{proof}
Lemma \ref{app-lemma-fisher-information} implies the following identity, which is useful later,
\begingroup
\allowdisplaybreaks
\begin{flalign}
\nonumber
&\bigintsss_{\R^{p}}\left(-\left\langle\bnabla\dfrac{\hatpi_t(\ba|\bs)}{\tildepi_{t}(\ba|\bs)},\bnabla\tildepi_t(\ba|\bs)\right\rangle-\left \langle \bnabla \hatpi_t(\ba|\bs),\bnabla\log \dfrac{\hatpi_t(\ba|\bs)}{\tildepi_{t}(\ba|\bs)}\right\rangle\right)\dd \ba\\
\nonumber
=&\bigintsss_{\R^{p}}\left(\left\langle\bnabla\dfrac{\hatpi_t(\ba|\bs)}{\tildepi_{t}(\ba|\bs)},\bnabla\tildepi_t(\ba|\bs)\right\rangle-\left \langle \bnabla \hatpi_t(\ba|\bs),\bnabla\log \dfrac{\hatpi_t(\ba|\bs)}{\tildepi_{t}(\ba|\bs)}\right\rangle-2\left\langle\bnabla\dfrac{\hatpi_t(\ba|\bs)}{\tildepi_{t}(\ba|\bs)},\bnabla\tildepi_t(\ba|\bs)\right\rangle\right)\dd \ba\\
=&-\FI \big(\hatpi_t(\cdot|\bs)\|\tildepi_t(\cdot|\bs)\big)-2\bigintsss_{\R^{p}}\hatpi_t(\ba|\bs)\left\langle\bnabla\log \dfrac{\hatpi_t(\ba|\bs)}{\tildepi_{t}(\ba|\bs)},\bnabla\log\tildepi_t(\ba|\bs)\right\rangle\dd\ba.
\end{flalign}
\endgroup

\begin{lemma}
\label{app-lemma-05}
The time derivative of KL-divergence between the distribution $\hatpi_t(\cdot|\bs)$ and $\tildepi_{t}(\cdot|\bs)$ can be further decomposed  as follows,
\begin{flalign}
&\dfrac{\dd}{\dd t}\KL\big(\hatpi_t(\cdot|\bs)\|\tildepi_t(\cdot|\bs)\big)=-\FI \big(\hatpi_t(\cdot|\bs)\|\tildepi_t(\cdot|\bs)\big)\\
&~~~~~~~~~~+2\bigintsss_{\R^{p}}\bigintsss_{\R^{p}}\rho_{0,t}({\hata_{0}},\ba|\bs)\left\langle\bnabla\log \dfrac{\hatpi_t(\ba|\bs)}{\tildepi_{t}(\ba|\bs)},\bm{\hat \mathbf{S}}({\hata_{0}},\bs,T)-\bnabla\log\tildepi_t(\ba|\bs)\right\rangle\dd\ba\dd {\hata_{0}}.
\end{flalign}
\end{lemma}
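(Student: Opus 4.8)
The plan is to begin from the two-term expression already secured in Lemma \ref{app-lem-03},
\[
\dfrac{\dd}{\dd t}\KL\big(\hatpi_t(\cdot|\bs)\|\tildepi_t(\cdot|\bs)\big)=\bigintsss_{\R^{p}}\dfrac{\partial\hatpi_t(\ba|\bs)}{\partial t}\log \dfrac{\hatpi_t(\ba|\bs)}{\tildepi_{t}(\ba|\bs)}\dd \ba
-\bigintsss_{\R^{p}}\dfrac{\hatpi_t(\ba|\bs)}{\tildepi_{t}(\ba|\bs)}\dfrac{\partial\tildepi_t(\ba|\bs)}{\partial t}\dd \ba,
\]
and to substitute Fokker--Planck forms for both time derivatives. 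For $\partial_t\hatpi_t$ I would use Lemma \ref{lem-01} directly. For $\partial_t\tildepi_t$ I would write down the Fokker--Planck equation attached to the reverse SDE (\ref{def:diffusion-policy-sde-reverse-process}), invoking (\ref{relation-policy-revers-forward}) to identify the score $\bnabla\log p_{T-t}(\cdot)=\bnabla\log\tildepi_t(\cdot|\bs)$, so that $\partial_t\tildepi_t(\ba|\bs)=-\DIV\cdot\big(\tildepi_t(\ba|\bs)(\ba+2\bnabla\log\tildepi_t(\ba|\bs))\big)+\Delta\tildepi_t(\ba|\bs)$.

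The second step is to remove every $\DIV$ and $\Delta$ using the integration-by-parts formula (\ref{integration-by-parts-formula}): each divergence term $-\int(\DIV\cdot\bm{v})f\,\dd\ba$ becomes $\int\langle\bm{v},\bnabla f\rangle\,\dd\ba$, and each $\int(\Delta q)f\,\dd\ba$ becomes $-\int\langle\bnabla q,\bnabla f\rangle\,\dd\ba$, with $f=\log(\hatpi_t/\tildepi_t)$ in the first integral and $f=\hatpi_t/\tildepi_t$ in the second. I would then isolate the identity part of each drift, i.e. the term coming from $\ba$ itself, and show these two transport contributions cancel: by $\bnabla(\hatpi_t/\tildepi_t)=(\hatpi_t/\tildepi_t)\bnabla\log(\hatpi_t/\tildepi_t)$ one has $\langle\ba,\tildepi_t\bnabla(\hatpi_t/\tildepi_t)\rangle=\langle\ba,\hatpi_t\bnabla\log(\hatpi_t/\tildepi_t)\rangle$, so the $\ba$-contributions arising from the two Fokker--Planck equations coincide and drop out. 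Note that in $\partial_t\tildepi_t$ the drift piece $2\bnabla\tildepi_t$ partly cancels the Laplacian term, leaving a single gradient pairing $\int\langle\bnabla\tildepi_t,\bnabla(\hatpi_t/\tildepi_t)\rangle$.

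What survives is the lone score contribution $2\int_{\R^{p}}\hatpi_t\langle\E_{\hata_0}[\bm{\hat \mathbf{S}}],\bnabla\log(\hatpi_t/\tildepi_t)\rangle\,\dd\ba$ together with the Fisher-information skeleton $-\int_{\R^{p}}\langle\bnabla\hatpi_t,\bnabla\log(\hatpi_t/\tildepi_t)\rangle\,\dd\ba-\int_{\R^{p}}\langle\bnabla\tildepi_t,\bnabla(\hatpi_t/\tildepi_t)\rangle\,\dd\ba$. I would then apply the identity recorded immediately after Lemma \ref{app-lemma-fisher-information} to rewrite this skeleton as $-\FI(\hatpi_t\|\tildepi_t)-2\int_{\R^{p}}\hatpi_t\langle\bnabla\log(\hatpi_t/\tildepi_t),\bnabla\log\tildepi_t\rangle\,\dd\ba$, and the $\bnabla\log\tildepi_t$ cross term produced here merges with the score term to give $2\int_{\R^{p}}\hatpi_t\langle\bnabla\log(\hatpi_t/\tildepi_t),\E_{\hata_0}[\bm{\hat \mathbf{S}}]-\bnabla\log\tildepi_t\rangle\,\dd\ba$. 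Finally I would promote the conditional expectation to the joint law through $\hatpi_t(\ba|\bs)\rho_{0|t}(\hata_0|\ba,\bs)=\rho_{0,t}(\hata_0,\ba|\bs)$, turning the single integral into the stated double integral over $\hata_0$ and $\ba$ (using $\int\rho_{0,t}(\hata_0,\ba|\bs)\,\dd\hata_0=\hatpi_t(\ba|\bs)$ for the $\bnabla\log\tildepi_t$ piece).

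The main obstacle I anticipate is the sign bookkeeping through the successive integration-by-parts steps and, in particular, matching the two leftover gradient pairings precisely to the combination appearing in the post-Lemma-\ref{app-lemma-fisher-information} identity; once the identity-drift cancellation and the absorption of the $2\bnabla\tildepi_t$ drift into the Laplacian are carried out correctly, the remaining work is the routine substitution $\rho_{0,t}=\hatpi_t\,\rho_{0|t}$.
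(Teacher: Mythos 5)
Your proposal is correct and follows essentially the same route as the paper: Lemma \ref{app-lem-03} plus the Fokker--Planck forms of both time derivatives, integration by parts, cancellation of the identity-drift contributions, the identity recorded after Lemma \ref{app-lemma-fisher-information}, and finally the substitution $\rho_{0,t}=\hatpi_t\,\rho_{0|t}$. The only cosmetic difference is that you obtain $\partial_t\tildepi_t=-\DIV\cdot(\tildepi_t\ba+\bnabla\tildepi_t)$ from the reverse-SDE Fokker--Planck equation via the cancellation $-\DIV\cdot(2\tildepi_t\bnabla\log\tildepi_t)=-2\Delta\tildepi_t$, whereas the paper writes the same formula directly as the time-reversal of the forward equation; the two are identical.
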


\begin{proof}
According to Lemma \ref{app-lem-03}, we need to consider the two terms in (\ref{app-eq-05}) correspondingly.

\textbf{\underline{First term in (\ref{app-eq-05})}.}
Recall Lemma \ref{lem-01}, we know
\begin{flalign}
\nonumber
\dfrac{\partial}{\partial t}\hatpi_t(\ba|\bs)=&-{\hatpi_t}(\ba|\bs)\DIV \cdot 
\left(\ba+2\E_{{\hata_{0}}\sim\rho_{0|t}(\cdot|\ba,\bs) }\big[\bm{\hat \mathbf{S}}({\hata_{0}},\bs,T)\big|{\hata_{t}}={\ba}\big]
\right)+\Delta \hatpi_t(\ba|\bs)\\
\nonumber
\overset{(\ref{laplacian-operator})}=&\DIV \cdot \left(-
\left(\ba+2\E_{{\hata_{0}}\sim\rho_{0|t}(\cdot|\ba,\bs) }\big[\bm{\hat \mathbf{S}}({\hata_{0}},\bs,T)\big|{\hata_{t}}={\ba}\big]
\right){\hatpi_t}(\ba|\bs)+\bnabla \hatpi_t(\ba|\bs)
\right).
\end{flalign}
To short the expression, we define a notation $\bg_{t}(\cdot,\cdot):\calS\times\calA\rightarrow\R^{p}$ as follows,
\[\bg_{t}(\bs,\ba)=:\ba+2\E_{{\hata_{0}}\sim\rho_{0|t}(\cdot|\ba,\bs) }\big[\bm{\hat \mathbf{S}}({\hata_{0}},\bs,T)\big|{\hata_{t}}={\ba}\big],
\]
then we rewrite the distribution at time $t$ along interpolation SDE (\ref{def:diffusion-policy-sde-reverse-process-s-01}) as follows,
\begin{flalign}
\nonumber
\dfrac{\partial}{\partial t}\hatpi_t(\ba|\bs)=\DIV \cdot \Big(-\bg_{t}(\bs,\ba){\hatpi_t}(\ba|\bs)+\bnabla \hatpi_t(\ba|\bs)\Big).
\end{flalign}

We consider the first term in (\ref{app-eq-05}), according to integration by parts formula (\ref{integration-by-parts-formula}), we know
\begin{flalign}
\nonumber
\bigintsss_{\R^{p}}\dfrac{\partial\hatpi_t(\ba|\bs)}{\partial t}\log \dfrac{\hatpi_t(\ba|\bs)}{\tildepi_{t}(\ba|\bs)}\dd \ba
=&\bigintsss_{\R^{p}}\DIV \cdot \Big(-\bg_{t}(\bs,\ba){\hatpi_t}(\ba|\bs)+\bnabla \hatpi_t(\ba|\bs)\Big)\log \dfrac{\hatpi_t(\ba|\bs)}{\tildepi_{t}(\ba|\bs)}\dd \ba\\
\nonumber
\overset{(\ref{integration-by-parts-formula})}=&\bigintsss_{\R^{p}}\left \langle \bg_{t}(\bs,\ba){\hatpi_t}(\ba|\bs)-\bnabla \hatpi_t(\ba|\bs),\bnabla\log \dfrac{\hatpi_t(\ba|\bs)}{\tildepi_{t}(\ba|\bs)}\right\rangle\dd \ba\\
\nonumber
=&\bigintsss_{\R^{p}}{\hatpi_t}(\ba|\bs)\left \langle \bg_{t}(\bs,\ba),\bnabla\log \dfrac{\hatpi_t(\ba|\bs)}{\tildepi_{t}(\ba|\bs)}\right\rangle\dd \ba\\
\label{app-eq-06}
&~~~~~~~~-\bigintsss_{\R^{p}}\left \langle \bnabla \hatpi_t(\ba|\bs),\bnabla\log \dfrac{\hatpi_t(\ba|\bs)}{\tildepi_{t}(\ba|\bs)}\right\rangle\dd \ba.
\end{flalign}

\textbf{\underline{Second term in (\ref{app-eq-05})}.} According to the Kolmogorov backward equation, we know
\begin{flalign}
\dfrac{\partial\tildepi_t(\ba|\bs)}{\partial t}=-\DIV\cdot\left(\tildepi_t(\ba|\bs)\ba\right)-\Delta\tildepi_t(\ba|\bs)
=-\DIV\cdot\Big(\tildepi_t(\ba|\bs)\ba+\bnabla\tildepi_t(\ba|\bs)\Big),
\end{flalign}
then we obtain
\begin{flalign}
\nonumber
\bigintsss_{\R^{p}}\dfrac{\hatpi_t(\ba|\bs)}{\tildepi_{t}(\ba|\bs)}\dfrac{\partial\tildepi_t(\ba|\bs)}{\partial t}\dd \ba=&-\bigintsss_{\R^{p}}\dfrac{\hatpi_t(\ba|\bs)}{\tildepi_{t}(\ba|\bs)}\DIV\cdot\Big(\tildepi_t(\ba|\bs)\ba+\bnabla\tildepi_t(\ba|\bs)\Big)\dd\ba\\
\nonumber
\overset{(\ref{integration-by-parts-formula})}=&\bigintsss_{\R^{p}}\left\langle\bnabla\dfrac{\hatpi_t(\ba|\bs)}{\tildepi_{t}(\ba|\bs)},\tildepi_t(\ba|\bs)\ba+\bnabla\tildepi_t(\ba|\bs)\right\rangle\dd\ba\\
\label{app-eq-07}
=&\bigintsss_{\R^{p}}\tildepi_t(\ba|\bs)\left\langle\bnabla\dfrac{\hatpi_t(\ba|\bs)}{\tildepi_{t}(\ba|\bs)},\ba\right\rangle\dd\ba
+
\bigintsss_{\R^{p}}\left\langle\bnabla\dfrac{\hatpi_t(\ba|\bs)}{\tildepi_{t}(\ba|\bs)},\bnabla\tildepi_t(\ba|\bs)\right\rangle\dd\ba.
\end{flalign}

\textbf{\underline{Time derivative of KL-divergence}.} We consider the next identity,
\begingroup
\allowdisplaybreaks
\begin{flalign}
\nonumber
&\bigintsss_{\R^{p}}\left({\hatpi_t}(\ba|\bs)\left \langle \bg_{t}(\bs,\ba),\bnabla\log \dfrac{\hatpi_t(\ba|\bs)}{\tildepi_{t}(\ba|\bs)}\right\rangle-\tildepi_t(\ba|\bs)\left\langle\bnabla\dfrac{\hatpi_t(\ba|\bs)}{\tildepi_{t}(\ba|\bs)},\ba\right\rangle\right)\dd \ba\\
\nonumber
=&\bigintsss_{\R^{p}}\left({\hatpi_t}(\ba|\bs)\left \langle \bg_{t}(\bs,\ba),\bnabla\log \dfrac{\hatpi_t(\ba|\bs)}{\tildepi_{t}(\ba|\bs)}\right\rangle-\hatpi_t(\ba|\bs)\dfrac{\tildepi_t(\ba|\bs)}{\hatpi_t(\ba|\bs)}\left\langle\bnabla\dfrac{\hatpi_t(\ba|\bs)}{\tildepi_{t}(\ba|\bs)},\ba\right\rangle\right)\dd \ba\\
\nonumber
=&2\bigintsss_{\R^{p}}{\hatpi_t}(\ba|\bs)\left \langle\E_{{\hata_{0}}\sim\rho_{0|t}(\cdot|\ba,\bs) }\big[\bm{\hat \mathbf{S}}({\hata_{0}},\bs,T)\big|{\hata_{t}}={\ba}\big],\bnabla\log \dfrac{\hatpi_t(\ba|\bs)}{\tildepi_{t}(\ba|\bs)}\right\rangle\dd \ba,
\end{flalign}
\endgroup
then according to (\ref{app-eq-05}), and with the results (\ref{app-eq-06}), (\ref{app-eq-07}), we obtain
\begin{flalign}
\label{app-eq-09}
&\dfrac{\dd}{\dd t}\KL\big(\hatpi_t(\cdot|\bs)\|\tildepi_t(\cdot|\bs)\big)=-\FI \big(\hatpi_t(\cdot|\bs)\|\tildepi_t(\cdot|\bs)\big)\\
\nonumber
&~~~~+2\bigintsss_{\R^{p}}\hatpi_t(\ba|\bs)\left\langle\bnabla\log \dfrac{\hatpi_t(\ba|\bs)}{\tildepi_{t}(\ba|\bs)},\E_{{\hata_{0}}\sim\rho_{0|t}(\cdot|\ba,\bs) }\big[\bm{\hat \mathbf{S}}({\hata_{0}},\bs,T)\big|{\hata_{t}}={\ba}\big]-\bnabla\log\tildepi_t(\ba|\bs)\right\rangle\dd\ba.
\end{flalign}
Furthermore, we consider
\begin{flalign}
\nonumber
&\bigintsss_{\R^{p}}\hatpi_t(\ba|\bs)\left\langle\bnabla\log \dfrac{\hatpi_t(\ba|\bs)}{\tildepi_{t}(\ba|\bs)},\E_{{\hata_{0}}\sim\rho_{0|t}(\cdot|\ba,\bs) }\big[\bm{\hat \mathbf{S}}({\hata_{0}},\bs,T)\big|{\hata_{t}}={\ba}\big]-\bnabla\log\tildepi_t(\ba|\bs)\right\rangle\dd\ba\\
\label{app-eq-08}
=&\bigintsss_{\R^{p}}\bigintsss_{\R^{p}}\rho_{0,t}({\hata_{0}},\ba|\bs)\left\langle\bnabla\log \dfrac{\hatpi_t(\ba|\bs)}{\tildepi_{t}(\ba|\bs)},\bm{\hat \mathbf{S}}({\hata_{0}},\bs,T)-\bnabla\log\tildepi_t(\ba|\bs)\right\rangle\dd\ba\dd {\hata_{0}},
\end{flalign}
where Eq.(\ref{app-eq-08}) holds due to $\rho_{0,t}({\hata_{0}},{\hata_{t}}|\bs)$ denotes the joint distribution of $({\hata_{0}},{\hata_{t}})$ conditional on the state $\bs$,  which can be written in terms of the conditionals and marginals as follows,
\begin{flalign}
\label{app-eq-17}
\rho_{0|t}({\hata_{0}}|{\hata_{t}},\bs)=\dfrac{\rho_{0,t}({\hata_{0}},{\hata_{t}}|\bs)}{p_t({\hata_{t}}|\bs)}
=\dfrac{\rho_{0,t}({\hata_{0}},{\hata_{t}}|\bs)}{\hatpi_{t}({\hata_{t}}|\bs)};
\end{flalign}
and in Eq.(\ref{app-eq-08}), we denote ${\hata_{t}}={\ba}$.

Finally, combining (\ref{app-eq-09}) and (\ref{app-eq-08}), we obtain the following equation,
\begin{flalign}
&\dfrac{\dd}{\dd t}\KL\big(\hatpi_t(\cdot|\bs)\|\tildepi_t(\cdot|\bs)\big)=-\FI \big(\hatpi_t(\cdot|\bs)\|\tildepi_t(\cdot|\bs)\big)\\
&~~~~~~~~~~+2\bigintsss_{\R^{p}}\bigintsss_{\R^{p}}\rho_{0,t}({\hata_{0}},\ba|\bs)\left\langle\bnabla\log \dfrac{\hatpi_t(\ba|\bs)}{\tildepi_{t}(\ba|\bs)},\bm{\hat \mathbf{S}}({\hata_{0}},\bs,T)-\bnabla\log\tildepi_t(\ba|\bs)\right\rangle\dd\ba\dd {\hata_{0}},
\end{flalign}
which concludes the proof.
\end{proof}

\begin{lemma}
\label{app-lemma-08}
The time derivative of KL-divergence between the distribution $\hatpi_t(\cdot|\bs)$ and $\tildepi_{t}(\cdot|\bs)$ is bounded as follows,
\begin{flalign}
\nonumber
&\dfrac{\dd}{\dd t}\KL\big(\hatpi_t(\cdot|\bs)\|\tildepi_t(\cdot|\bs)\big)\\
\leq&-\dfrac{3}{4}\FI \big(\hatpi_t(\cdot|\bs)\|\tildepi_t(\cdot|\bs)\big)+4\bigintsss_{\R^{p}}\bigintsss_{\R^{p}}\rho_{0,t}({\hata_{0}},\ba|\bs)\Big\|\bm{\hat \mathbf{S}}({\hata_{0}},\bs,T)-\bnabla\log\tildepi_t(\ba|\bs)\Big\|_{2}^{2}
\dd\ba\dd {\hata_{0}}.
\end{flalign}
\end{lemma}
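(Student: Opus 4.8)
The plan is to start directly from the exact identity established in Lemma \ref{app-lemma-05}, which already isolates the negative Fisher-information term together with a single cross term, and then control that cross term by Young's inequality. Abbreviating $\bu(\ba)=\bnabla\log\frac{\hatpi_t(\ba|\bs)}{\tildepi_t(\ba|\bs)}$ and $\bv({\hata_0},\ba)=\bm{\hat \mathbf{S}}({\hata_0},\bs,T)-\bnabla\log\tildepi_t(\ba|\bs)$, that lemma reads
\[
\dfrac{\dd}{\dd t}\KL\big(\hatpi_t(\cdot|\bs)\|\tildepi_t(\cdot|\bs)\big)=-\FI\big(\hatpi_t(\cdot|\bs)\|\tildepi_t(\cdot|\bs)\big)+2\bigintsss_{\R^{p}}\bigintsss_{\R^{p}}\rho_{0,t}({\hata_0},\ba|\bs)\left\langle\bu(\ba),\bv({\hata_0},\ba)\right\rangle\dd\ba\dd{\hata_0}.
\]
The key observation is that $\bu$ depends only on $\ba$ and not on ${\hata_0}$, so when $\|\bu\|_2^2$ is integrated against the joint law $\rho_{0,t}$ the inner ${\hata_0}$-integral merely recovers the marginal, $\int_{\R^{p}}\rho_{0,t}({\hata_0},\ba|\bs)\dd{\hata_0}=\hatpi_t(\ba|\bs)$, and hence $\bigintsss_{\R^{p}}\bigintsss_{\R^{p}}\rho_{0,t}\|\bu\|_2^2\dd\ba\dd{\hata_0}=\bigintsss_{\R^{p}}\hatpi_t(\ba|\bs)\|\bu(\ba)\|_2^2\dd\ba=\FI\big(\hatpi_t(\cdot|\bs)\|\tildepi_t(\cdot|\bs)\big)$.

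Next I would apply the elementary bound $2\langle\bu,\bv\rangle\le c\|\bu\|_2^2+\frac{1}{c}\|\bv\|_2^2$ pointwise with the tuned choice $c=\frac14$, integrate both sides against $\rho_{0,t}$, and invoke the marginalization identity just noted. This converts the cross term into $\frac14\FI$ plus $4$ times the squared score-matching discrepancy, so the $-\FI$ coming from the exact identity is diminished to $-\FI+\frac14\FI=-\frac34\FI$, which matches exactly the coefficient $-\frac34$ and the error prefactor $4$ claimed in the statement.

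The argument is essentially bookkeeping, so I do not expect a serious obstacle; the only point requiring a little care is tuning the Young constant so that the retained Fisher-information coefficient lands on $-\frac34$ (the constraint $c=\frac14$ forces the error prefactor $1/c=4$). One must also resist collapsing the $\|\bv\|_2^2$-integral to a marginal: since $\bv$ genuinely depends on ${\hata_0}$ through $\bm{\hat \mathbf{S}}({\hata_0},\bs,T)$, it has to be kept as the double integral against $\rho_{0,t}$, which is precisely the form that survives in the conclusion.
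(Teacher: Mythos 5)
Your proposal is correct and follows essentially the same route as the paper: the paper likewise starts from the identity of Lemma \ref{app-lemma-05} and applies Young's inequality to the cross term in the form $\langle \bu,\bv\rangle\le \frac{1}{8}\|\bu\|_2^2+2\|\bv\|_2^2$, which after the overall factor $2$ gives exactly your $\frac{1}{4}\FI + 4\int\!\!\int\rho_{0,t}\|\bv\|_2^2$ and hence the $-\frac{3}{4}\FI$ coefficient. Your explicit remark that the $\|\bu\|_2^2$-integral collapses to the Fisher information via marginalization of $\rho_{0,t}$ over $\hata_0$ is the same step the paper uses implicitly in passing to its Eq.\ (\ref{app-eq-14}).
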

\begin{proof}
First, we consider
\begingroup
\allowdisplaybreaks
\begin{flalign}
\nonumber
&\bigintsss_{\R^{p}}\bigintsss_{\R^{p}}\rho_{0,t}(\hata_0,\ba|\bs)\left\langle\bnabla\log \dfrac{\hatpi_t(\ba|\bs)}{\tildepi_{t}(\ba|\bs)},\bm{\hat \mathbf{S}}({\hata_{0}},\bs,T)-\bnabla\log\tildepi_t(\ba|\bs)\right\rangle\dd\ba\dd {\hata_{0}}\\
\label{app-eq-13}
\leq&\bigintsss_{\R^{p}}\bigintsss_{\R^{p}}\rho_{0,t}({\hata_{0}},\ba|\bs)\left(2\Big\|\bm{\hat \mathbf{S}}({\hata_{0}},\bs,T)-\bnabla\log\tildepi_t(\ba|\bs)\Big\|_{2}^{2}+\dfrac{1}{8}\left\|\bnabla\log \dfrac{\hatpi_t(\ba|\bs)}{\tildepi_{t}(\ba|\bs)}\right\|_{2}^{2}
\right)
\dd\ba\dd {\hata_{0}}\\
\label{app-eq-14}
=&2\bigintsss_{\R^{p}}\bigintsss_{\R^{p}}\rho_{0,t}({\hata_{0}},\ba|\bs)\Big\|\bm{\hat \mathbf{S}}({\hata_{0}},\bs,T)-\bnabla\log\tildepi_t(\ba|\bs)\Big\|_{2}^{2}
\dd\ba\dd {\hata_{0}}+\dfrac{1}{8}\FI \big(\hatpi_t(\cdot|\bs)\|\tildepi_t(\cdot|\bs)\big),
\end{flalign}
\endgroup
where Eq.(\ref{app-eq-13}) holds since we consider $\langle \ba,\bb\rangle\leq 2\|\ba\|^2+\frac{1}{8}\|\bb\|^2$.

Then, according to Lemma \ref{app-lemma-05}, we obtain
\begin{flalign}
\nonumber
&\dfrac{\dd}{\dd t}\KL\big(\hatpi_t(\cdot|\bs)\|\tildepi_t(\cdot|\bs)\big)=-\FI \big(\hatpi_t(\cdot|\bs)\|\tildepi_t(\cdot|\bs)\big)\\
\nonumber
&~~~~~~~~~~+2\bigintsss_{\R^{p}}\bigintsss_{\R^{p}}\rho_{0,t}({\hata_{0}},\ba|\bs)\left\langle\bnabla\log \dfrac{\hatpi_t(\ba|\bs)}{\tildepi_{t}(\ba|\bs)},\bm{\hat \mathbf{S}}({\hata_{0}},\bs,T)-\bnabla\log\tildepi_t(\ba|\bs)\right\rangle\dd\ba\dd {\hata_{0}}\\
\label{app-eq-15}
\leq&-\dfrac{3}{4}\FI \big(\hatpi_t(\cdot|\bs)\|\tildepi_t(\cdot|\bs)\big)+4\bigintsss_{\R^{p}}\bigintsss_{\R^{p}}\rho_{0,t}({\hata_{0}},\ba|\bs)\Big\|\bm{\hat \mathbf{S}}({\hata_{0}},\bs,T)-\bnabla\log\tildepi_t(\ba|\bs)\Big\|_{2}^{2}
\dd\ba\dd {\hata_{0}},
\end{flalign}
which concludes the proof.
\end{proof}

Before we provide further analysis to show the boundedness of (\ref{app-eq-15})., we need to consider SDE (\ref{def:diffusion-policy-sde-reverse-process-s}).
Let $h>0$ be the step-size, assume $K=\frac{T}{h}\in\N$, and $t_{k}=:hk$, $k=0,1,\cdots,K$.
SDE (\ref{def:diffusion-policy-sde-reverse-process-s}) considers as follows, for $t\in[hk,h(k+1)]$,
\begin{flalign}
\label{def:app-diffusion-policy-sde-reverse-process-s}
\dd \hata_t=\left(\hata_t+2 \bm{\hat \mathbf{S}}(\hata_{t_k},\bs,T-t_{k})\right)\dd t+\sqrt{2}\dd \bw_{t},
\end{flalign}

Recall the SDE (\ref{def:app-diffusion-policy-sde-reverse-process-s}), in this section, we only consider $k=0$, and we obtain the following SDE,
\begin{flalign}
\label{def:diffusion-policy-sde-reverse-process-s-app}
\dd \hata_t=\left(\hata_t+2 \bm{\hat \mathbf{S}}(\hata_0,\bs,T)\right)\dd t+\sqrt{2}\dd \bw_{t},
\end{flalign}
where $\bw_{t}$ is the standard Wiener process starting at $\bw_{0}=\bm{0}$, and $t$ is from $0$ to $h$.

Integration with (\ref{def:diffusion-policy-sde-reverse-process-s-app}), we obtain  
\begin{flalign}
\hata_t-\hata_0=(\mathrm{e}^{t}-1)\left(\hata_{0}+2 \bm{\hat \mathbf{S}}(\hata_0,\bs,T)\right)+\sqrt{2}\int^{t}_{0}\mathrm{e}^{t}\dd \bw_{t},
\end{flalign}
which implies 
\begin{flalign}
\label{app-eq-12}
\hata_t=\mathrm{e}^{t}\hata_0+2(\mathrm{e}^{t}-1) \bm{\hat \mathbf{S}}(\hata_0,\bs,T)+\sqrt{\mathrm{e}^{t}-1}\bz,~\bz\sim\calN(\bm{0},\bI).
\end{flalign}

\begin{lemma}
\label{app-lem-09}
Under Assumption \ref{assumption-score}, for all $0\leq t\leq\frac{1}{12L_{s}}$, then the following holds,
\begin{flalign}
\nonumber
&\bigintsss_{\R^{p}}\bigintsss_{\R^{p}}\rho_{0,t}({\hata_{0}},\ba|\bs)\Big\|\bm{\hat \mathbf{S}}({\hata_{0}},\bs,T)-\bnabla\log\tildepi_t(\ba|\bs)\Big\|_{2}^{2}
\dd\ba\dd {\hata_{0}}\\
\nonumber
\leq&36pt(1+t)L^{2}_{s}+
144t^{2}L^{2}_{s}\bigintsss_{\R^{p}}\hatpi_{t}(\ba|\bs)\left(\Big\|\bm{\hat \mathbf{S}}(\ba,\bs,T)-\bnabla\log\tildepi_t(\ba|\bs)\Big\|_{2}^{2}
+
\Big\|\bnabla\log\tildepi_t(\ba|\bs)\Big\|_{2}^{2}\right)\dd\ba,
\end{flalign}
where $\hata_t$ updated according to (\ref{app-eq-12}).
\end{lemma}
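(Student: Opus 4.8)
The plan is to bound the mismatched discrepancy $\bm{\hat \mathbf{S}}(\hata_0,\bs,T)-\bnabla\log\tildepi_t(\ba|\bs)$ with $\ba=\hata_t$ by separating a \emph{displacement} contribution, driven by how far the $k=0$ trajectory travels on $[0,t]$, from a \emph{current-point} contribution. First I would insert $\bm{\hat \mathbf{S}}(\hata_t,\bs,T)$ and apply $\|\bx+\by\|_2^2\le 2\|\bx\|_2^2+2\|\by\|_2^2$:
\[
\big\|\bm{\hat \mathbf{S}}(\hata_0,\bs,T)-\bnabla\log\tildepi_t(\hata_t|\bs)\big\|_2^2\le 2\big\|\bm{\hat \mathbf{S}}(\hata_0,\bs,T)-\bm{\hat \mathbf{S}}(\hata_t,\bs,T)\big\|_2^2+2\big\|\bm{\hat \mathbf{S}}(\hata_t,\bs,T)-\bnabla\log\tildepi_t(\hata_t|\bs)\big\|_2^2 .
\]
Since the lemma only assumes Assumption~\ref{assumption-score} (Lipschitzness of $\bm{\hat \mathbf{S}}$, not of the true score), I would control the first summand by $\|\bm{\hat \mathbf{S}}(\hata_0,\bs,T)-\bm{\hat \mathbf{S}}(\hata_t,\bs,T)\|_2\le L_s\|\hata_0-\hata_t\|_2$, and keep the second summand as the current-point score-matching term (which, integrated against $\hatpi_t$, is exactly the $\|\bm{\hat \mathbf{S}}(\ba,\bs,T)-\bnabla\log\tildepi_t(\ba|\bs)\|_2^2$ piece of the claimed integral). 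Everything then reduces to estimating the mean-squared displacement $\E\|\hata_t-\hata_0\|_2^2$ under the joint law $\rho_{0,t}$.

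For the displacement I would use the closed form (\ref{app-eq-12}), writing $\hata_t-\hata_0=(\mathrm{e}^{t}-1)\big(\hata_0+2\bm{\hat \mathbf{S}}(\hata_0,\bs,T)\big)+\sqrt{\mathrm{e}^{t}-1}\,\bz$ with $\bz\sim\calN(\bm{0},\bI)$ independent of $\hata_0$. Taking expectations gives $\E\|\hata_t-\hata_0\|_2^2=(\mathrm{e}^{t}-1)^2\,\E\|\hata_0+2\bm{\hat \mathbf{S}}(\hata_0,\bs,T)\|_2^2+(\mathrm{e}^{t}-1)\,\E\|\bz\|_2^2$. The structural fact special to reverse time $k=0$ is that $\hata_0\sim\calN(\bm{0},\bI)$, so $\E\|\hata_0\|_2^2=p$ and $\E\|\bz\|_2^2=p$; these dimensional terms, together with elementary exponential estimates such as $\mathrm{e}^{t}-1\le t(1+t)$ valid on $[0,\tfrac{1}{12L_s}]$, are what produce the term $36pt(1+t)L_s^2$. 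This is also why no $\int\hatpi_t(\ba|\bs)\|\ba\|_2^2\dd\ba$ term appears: the linear part of the frozen drift contributes only through $\E\|\hata_0\|_2^2=p$.

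It remains to handle $\E\|\hata_0+2\bm{\hat \mathbf{S}}(\hata_0,\bs,T)\|_2^2$. I would peel off $\E\|\hata_0\|_2^2=p$ and move the estimator from $\hata_0$ to $\hata_t$ via $\|\bm{\hat \mathbf{S}}(\hata_0,\bs,T)\|_2\le\|\bm{\hat \mathbf{S}}(\hata_t,\bs,T)\|_2+L_s\|\hata_0-\hata_t\|_2$. This makes the displacement estimate \emph{self-referential}: $\E\|\hata_t-\hata_0\|_2^2$ reappears on the right with a coefficient of order $L_s^2(\mathrm{e}^{t}-1)^2$. The restriction $t\le\tfrac{1}{12L_s}$ is precisely what forces this feedback coefficient strictly below one (so that $144\,t^2L_s^2\le 1$), letting me absorb the $\E\|\hata_t-\hata_0\|_2^2$ term to the left and invert. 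The leftover $\E\|\bm{\hat \mathbf{S}}(\hata_t,\bs,T)\|_2^2$ I would finally bound by $2\E\big[\|\bm{\hat \mathbf{S}}(\hata_t,\bs,T)-\bnabla\log\tildepi_t(\hata_t|\bs)\|_2^2+\|\bnabla\log\tildepi_t(\hata_t|\bs)\|_2^2\big]$, which is exactly the combination inside $\int\hatpi_t(\ba|\bs)(\cdots)\dd\ba$; collecting constants organizes all score-dependent contributions so that they carry the prefactor controlled by $t^2L_s^2$.

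The step I expect to be the main obstacle is the self-referential displacement bound and the accompanying constant bookkeeping: one must track the absorption carefully so that (i) the feedback term is genuinely contractive on the whole range $t\in[0,\tfrac{1}{12L_s}]$ and (ii) the score-dependent terms come out with the sharp coefficient $144\,t^2L_s^2$ rather than merely something of the same order, since the later relative-Fisher-information absorption in Proposition~\ref{primal-bound-01} relies on this coefficient being at most one. A secondary delicate point is the current-point score-matching term: because only $L_s$-Lipschitzness of $\bm{\hat \mathbf{S}}$ is available, this term cannot be converted into displacement and must be retained through the integral, so care is needed to make sure its coefficient is squeezed into the claimed form (and the residual time-$0$ score discrepancy is what ultimately feeds the $\epsilon_{\mathrm{score}}$ quantity handled via the Donsker--Varadhan bound of Section~\ref{sec-re-kl}).
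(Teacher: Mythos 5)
Your proposal is correct and follows essentially the same route as the paper's proof: insert $\bm{\hat \mathbf{S}}(\hata_t,\bs,T)$ and split with $\|\bx+\by\|_2^2\le 2\|\bx\|_2^2+2\|\by\|_2^2$, use the Lipschitz property together with the closed form (\ref{app-eq-12}), exploit $\hata_0,\bz\sim\calN(\bm{0},\bI)$ to reduce the dimensional contributions to $p$ and obtain the $36pt(1+t)L_s^2$ term, absorb the self-referential piece via the restriction $t\le\frac{1}{12L_s}$, and finally split $\|\bm{\hat \mathbf{S}}(\ba,\bs,T)\|_2^2$ against the true score to land on the $144t^2L_s^2$ integral. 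The only cosmetic difference is that the paper performs the absorption pointwise on $\|\bm{\hat \mathbf{S}}(\hata_0,\bs,T)\|$ inside Lemma \ref{app-lem-06} (using $4L_st\le\frac{1}{3}$) rather than at the level of the mean-squared displacement, which changes nothing substantive.
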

\begin{proof}
See Section \ref{sec-proof-lem-09}.
\end{proof}

\subsection{Proof for Result at Reverse Time $k = 0$}

\begin{proof}
According to the definition of diffusion policy, we know $\tildepi_{t}(\cdot|\bs)=\barpi_{T-t}(\cdot|\bs)$. Then according to Proposition  \ref{app-propo-02}, we know $\tildepi_{t}(\cdot|\bs)$ is $\nu_{T-t}$-LSI, where
\[
\nu_{T-t}=\dfrac{\nu}{\nu+(1-\nu)\mathrm{e}^{-2(T-t)}}.
\]
Since we consider the time-step $0\leq t\leq T$, then
\begin{flalign}
 \label{app-eq-31}
\nu_{T-t}=\dfrac{\nu}{\nu+(1-\nu)\mathrm{e}^{-2(T-t)}}\ge 1,~\forall t\in[0,T].
\end{flalign}

According to Proposition \ref{app-propo-04}, we know under Assumption \ref{assumption-score}, $\bnabla\log {\color{orange}{\tilde{\pi}_{t}}}(\cdot|\bs)$ is $L_{p}\mathrm{e}^{t}$-Lipschitz on the time interval $[0,\mathtt{T}_{0}]$, where
 \[
\mathtt{T}_{0}=:\sup_{t\ge0}\left\{t:1-\mathrm{e}^{-2t}\leq \frac{\mathrm{e}^{t}}{L_p}\right\}.
\]

Then according to Proposition \ref{app-propo-03}, we obtain
\begin{flalign}
\label{app-eq-26}
\bigintsss_{\R^{p}}\hatpi_{t}(\ba|\bs)
\Big\|\bnabla\log\tildepi_t(\ba|\bs)\Big\|_{2}^{2}\dd\ba\leq\dfrac{4L_{p}^2\mathrm{e}^{2t}}{\nu_{T-t}}\KL\left(\hatpi_{t}(\cdot|\bs)\|\tildepi_t(\cdot|\bs)\right)+2pL_{p}\mathrm{e}^{t}.
\end{flalign}

Furthermore, according to Donsker-Varadhan representation (see Section \ref{sec-re-kl}), let \[ f(\ba)=:{\beta_t}\Big\|\bm{\hat \mathbf{S}}(\ba,\bs,T)-\bnabla\log\tildepi_t(\ba|\bs)\Big\|_{2}^{2}
,\] 
the positive constant ${\beta_t}$ will be special later, see Eq.(\ref{app-eq-28}).
With the result (\ref{donsker-varadhan-representation}), we know
\begin{flalign}
\nonumber
 \KL\big(\hatpi_t(\cdot|\bs)\|\tildepi_t(\cdot|\bs)\big)\ge\int_{\R^{p}}\hatpi_t(\ba|\bs)f(\ba)\dd\ba-\log\int_{\R^{p}}\tildepi_t(\ba|\bs)\exp(f(\ba))\dd\ba,
 \end{flalign}
 which implies 
 \begin{flalign}
\nonumber
&\bigintsss_{\R^{p}}\hatpi_t(\ba|\bs)\Big\|\bm{\hat \mathbf{S}}(\ba,\bs,T)-\bnabla\log\tildepi_t(\ba|\bs)\Big\|_{2}^{2}\dd\ba\\
\nonumber
\leq&\dfrac{1}{{\beta_t}}  \KL\big(\hatpi_t(\cdot|\bs)\|\tildepi_t(\cdot|\bs)\big)+\dfrac{1}{{\beta_t}} \log\bigintsss_{\R^{p}}\tildepi_t(\ba|\bs)\exp\left(\Big\|\bm{\hat \mathbf{S}}(\ba,\bs,T)-\bnabla\log\tildepi_t(\ba|\bs)\Big\|_{2}^{2}\right)\dd\ba\\
\label{app-eq-25}
=&\dfrac{1}{{\beta_t}}  \KL\big(\hatpi_t(\cdot|\bs)\|\tildepi_t(\cdot|\bs)\big)+\dfrac{1}{{\beta_t}} \log\E_{\ba\sim\tildepi_t(\cdot|\bs)}\left[\exp\left\|\bm{\hat \mathbf{S}}(\ba,\bs,T)-\bnabla\log\tildepi_t(\ba|\bs)\right\|_{2}^{2}\right].
 \end{flalign}

 Finally, according to Lemma \ref{app-lemma-08}-\ref{app-lem-09}, Eq.(\ref{app-eq-26})-(\ref{app-eq-25}), we obtain 
 \begingroup
\allowdisplaybreaks
 \begin{flalign}
\nonumber
&\dfrac{\dd}{\dd t}\KL\big(\hatpi_t(\cdot|\bs)\|\tildepi_t(\cdot|\bs)\big)\\
\nonumber
\overset{(\ref{app-eq-15})}\leq&-\dfrac{3}{4}\FI \big(\hatpi_t(\cdot|\bs)\|\tildepi_t(\cdot|\bs)\big)+4\bigintsss_{\R^{p}}\bigintsss_{\R^{p}}\rho_{0,t}({\hata_{0}},\ba|\bs)\Big\|\bm{\hat \mathbf{S}}({\hata_{0}},\bs,T)-\bnabla\log\tildepi_t(\ba|\bs)\Big\|_{2}^{2}
\dd\ba\dd {\hata_{0}}\\
\nonumber
\overset{\text{Lemma}~\ref{app-lem-09}}\leq&-\dfrac{3}{4}\FI \big(\hatpi_t(\cdot|\bs)\|\tildepi_t(\cdot|\bs)\big)+576t^{2}L^{2}_{s}\bigintsss_{\R^{p}}\hatpi_{t}(\ba|\bs)\Big\|\bnabla\log\tildepi_t(\ba|\bs)\Big\|_{2}^{2}\dd\ba\\
\nonumber
&~~~~~~~~~~~~~~~~~~~+576t^{2}L^{2}_{s}\bigintsss_{\R^{p}}\hatpi_{t}(\ba|\bs)\Big\|\bm{\hat \mathbf{S}}(\ba,\bs,T)-\bnabla\log\tildepi_t(\ba|\bs)\Big\|_{2}^{2}\dd\ba\\
\nonumber
\leq&-\dfrac{3}{4}\FI \big(\hatpi_t(\cdot|\bs)\|\tildepi_t(\cdot|\bs)\big)+576t^{2}L^{2}_{s}\left(\dfrac{4L_{p}^2\mathrm{e}^{2t}}{\nu_{T-t}}\KL\left(\hatpi_{t}(\cdot|\bs)\|\tildepi_t(\cdot|\bs)\right)+2pL_{p}\mathrm{e}^{t}\right) \tag*{$\blacktriangleright$ due to Eq.(\ref{app-eq-26})}\\
\nonumber
&~~+\dfrac{576t^{2}L^{2}_{s}}{{\beta_t}} \bigg( \KL\big(\hatpi_t(\cdot|\bs)\|\tildepi_t(\cdot|\bs)\big)+ \log\E_{\ba\sim\tildepi_t(\cdot|\bs)}\left[\exp\left\|\bm{\hat \mathbf{S}}(\ba,\bs,T)-\bnabla\log\tildepi_t(\ba|\bs)\right\|_{2}^{2}\right]\bigg)\tag*{$\blacktriangleright$ due to Eq.(\ref{app-eq-25})}\\
\nonumber
=&-\dfrac{3}{4}\FI \big(\hatpi_t(\cdot|\bs)\|\tildepi_t(\cdot|\bs)\big)+576t^{2}L^{2}_{s}\left(\dfrac{4L_{p}^2\mathrm{e}^{2t}}{\nu_{T-t}}+\dfrac{1}{{\beta_t}}\right)\KL\left(\hatpi_{t}(\cdot|\bs)\|\tildepi_t(\cdot|\bs)\right)\\
\nonumber
&~+\dfrac{576t^{2}L^{2}_{s}}{{\beta_t}}\log\E_{\ba\sim\tildepi_t(\cdot|\bs)}\left[\exp\left\|\bm{\hat \mathbf{S}}(\ba,\bs,T)-\bnabla\log\tildepi_t(\ba|\bs)\right\|_{2}^{2}\right]
+1152t^{2}pL^{2}_{s}L_{p}\mathrm{e}^{t}\\
\nonumber
\leq&\left(576t^{2}L^{2}_{s}\left(\dfrac{4L_{p}^2\mathrm{e}^{2t}}{\nu_{T-t}}+\dfrac{1}{{\beta_t}}\right)-\dfrac{3}{2}\nu\right)\KL\left(\hatpi_{t}(\cdot|\bs)\|\tildepi_t(\cdot|\bs)\right)
+\dfrac{576t^{2}L^{2}_{s}}{{\beta_t}}\epsilon_{\mathrm{score}}
+1152t^{2}pL^{2}_{s}L_{p}\mathrm{e}^{t}
\tag*{$\blacktriangleright$ due to Assumption \ref{assumption-policy-calss}}\\
\nonumber
=&\left(576t^{2}L^{2}_{s}\left(4c_{t}+\dfrac{1}{{\beta_t}}\right)-\dfrac{3}{2}\nu\right)\KL\left(\hatpi_{t}(\cdot|\bs)\|\tildepi_t(\cdot|\bs)\right)
+\dfrac{576t^{2}L^{2}_{s}}{{\beta_t}}\epsilon_{\mathrm{score}}
+1152t^{2}pL^{2}_{s}L_{p}\mathrm{e}^{t}
\tag*{$\blacktriangleright$ due to $\frac{L_{p}^2\mathrm{e}^{2t}}{\nu_{T-t}}=:c_{t}$}\\
\label{app-eq-27}
\overset{(\ref{app-eq-28})}=&-\dfrac{\nu}{4}\KL\left(\hatpi_{t}(\cdot|\bs)\|\tildepi_t(\cdot|\bs)\right)
+\dfrac{576t^{2}L^{2}_{s}}{{\beta_t}}\epsilon_{\mathrm{score}}
+1152t^{2}pL^{2}_{s}L_{p}\mathrm{e}^{t}\\
\label{app-eq-29}
\overset{(\ref{app-eq-30})}\leq&-\dfrac{\nu}{4}\KL\left(\hatpi_{t}(\cdot|\bs)\|\tildepi_t(\cdot|\bs)\right)
+\dfrac{5}{4}\nu\epsilon_{\mathrm{score}}
+1152t^{2}pL^{2}_{s}L_{p}\mathrm{e}^{t}
\end{flalign}
 \endgroup
where 
 \begin{flalign}
 \label{app-eq-46}
 \epsilon_{\mathrm{score}}=\sup_{(k,t)\in[K]\times[kh,(k+1)h]}\left\{\log\E_{\ba\sim\tildepi_t(\cdot|\bs)}\left[\exp\left\|\bm{\hat \mathbf{S}}(\ba,\bs,T-hk)-\bnabla\log\tildepi_t(\ba|\bs)\right\|_{2}^{2}\right]\right\};
 \end{flalign}
 Eq.(\ref{app-eq-27}) holds since we set ${\beta_t}$ as follows,
 we set 
$
 576t^{2}L^{2}_{s}\left(4c_{t}+\frac{1}{{\beta_t}}\right)=\frac{5\nu}{4},
$
i.e,
 \begin{flalign}
 \label{app-eq-28}
\dfrac{1}{{\beta_t}}=\dfrac{5\nu}{2304t^{2}L^{2}_{s}}-4c_{t};
 \end{flalign}
where Eq.(\ref{app-eq-29}) holds since 
 \begin{flalign}
 \label{app-eq-30}
\dfrac{576t^{2}L^{2}_{s}}{{\beta_t}}=576t^{2}L^{2}_{s}\left(\dfrac{5\nu}{2304t^{2}L^{2}_{s}}-4c_{t}\right)\leq\dfrac{5\nu}{4}.
 \end{flalign}
 
 Now, we consider the time-step $t$ keeps the constant $\beta_t$ positive, it is sufficient to consider the next condition due to the property (\ref{app-eq-31}),
 \begin{flalign}
 \label{app-eq-32}
\dfrac{5\nu}{2304t^{2}L^{2}_{s}}\ge 4L^{2}_{p}\mathrm{e}^{2t},
 \end{flalign}
which implies $ t\mathrm{e}^{t}\leq\dfrac{\sqrt{5\nu}}{96L_{s}L_{p}}.$

Formally, we define a notation 
\begin{flalign}
 \label{app-eq-43}
 \tau_{0}&=:\sup\left\{t:t\mathrm{e}^{t}\leq\dfrac{\sqrt{5\nu}}{96L_{s}L_{p}}\right\},\\
 \label{app-eq-44}
  \tau&=:\min\left\{ \tau_{0},\mathtt{T}_{0},\dfrac{1}{12L_{s}}\right\}.
\end{flalign}
Then with result of Eq.(\ref{app-eq-32}), if $0\leq t\leq h\leq \tau$, we rewrite Eq.(\ref{app-eq-29}) as follows,
 \begin{flalign}
\nonumber
\dfrac{\dd}{\dd t}\KL\big(\hatpi_t(\cdot|\bs)\|\tildepi_t(\cdot|\bs)\big)\leq&
-\dfrac{\nu}{4}\KL\left(\hatpi_{t}(\cdot|\bs)\|\tildepi_t(\cdot|\bs)\right)
+\dfrac{5}{4}\nu\epsilon_{\mathrm{score}}
+12pL_{s}\sqrt{5\nu}t\\
\nonumber
=&-\dfrac{\nu}{4}\KL\left(\hatpi_{t}(\cdot|\bs)\|\tildepi_t(\cdot|\bs)\right)+12pL_{s}\sqrt{5\nu}t\\
+\dfrac{5}{4}\nu\sup_{(k,t)\in[K]\times[t_{k},t_{k+1}]}&\left\{\log\E_{\ba\sim\tildepi_t(\cdot|\bs)}\left[\exp\left\|\bm{\hat \mathbf{S}}(\ba,\bs,T-hk)-\bnabla\log\tildepi_t(\ba|\bs)\right\|_{2}^{2}\right]\right\},
\end{flalign}
which concludes the proof.
\end{proof}

\begin{remark}
The result of Proposition \ref{app-propo-04} only depends on Assumption \ref{assumption-score}, thus the result (\ref{app-eq-26}) does not depend on additional assumption of the uniform $L$-smooth of $\log\tildepi_{t}$ on the time interval $[0,T]$, e.g., \cite{wibisono2022convergence}. Instead of  the uniform $L$-smooth of $\log\tildepi_{t}$, we consider the $L_{p}\mathrm{e}^{t}$-Lipschitz on the time interval $[0,\mathtt{T}_{0}]$, which is one of the difference between our proof and \citep{wibisono2022convergence}. Although we obtain a similar convergence rate from the view of Langevin-based algorithms, we need a weak condition.
\end{remark}

\subsection{Proof for Result at Arbitrary Reverse Time $k$}


\begin{proposition}
\label{primal-bound-02}
Under Assumption \ref{assumption-score} and \ref{assumption-policy-calss}.
Let $\tildepi_{k}(\cdot|\bs)$ be the distribution at the time $t=hk$ along the process (\ref{def:diffusion-policy-sde-reverse-process}) that starts from $\tildepi_{0}(\cdot|\bs)=\barpi_{T}(\cdot|\bs)$, then $\tildepi_{k}(\cdot|\bs)=\barpi_{T-hk}(\cdot|\bs)$. 
Let $\hatpi_{k}(\cdot|\bs)$ be the distribution of the iteration (\ref{iteration-exponential-integrator-discretization}) at the $k$-the time $t_{k}=hk$, starting from $\hatpi_{0}(\cdot|\bs)=\calN(\bm{0},\bI)$.
Let $0<h\leq \tau$, then for all $k=0,1,\cdots,K-1$,
\begin{flalign}
\nonumber
\KL\big(\hatpi_{k+1}(\cdot|\bs)\|\tildepi_{k+1}(\cdot|\bs)\big)
\leq&\mathrm{e}^{-\frac{1}{4}\nu h}\KL\big(\hatpi_{k}(\cdot|\bs)\|\tildepi_{k}(\cdot|\bs)\big)+\dfrac{5}{4}\nu\epsilon_{\mathrm{score}} h+12pL_{s}\sqrt{5\nu}h^{2},
\end{flalign}
where $\tau$ is defined in (\ref{app-eq-44}).
\end{proposition}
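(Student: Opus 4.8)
The plan is to upgrade the \emph{differential} inequality of Proposition \ref{primal-bound-01}, which was proved only for the first reverse interval $[0,h]$, into the \emph{one-step} recursion claimed here by running exactly the same argument on an arbitrary interval $[t_k,t_{k+1}]$ and then integrating. First I would observe that on $[t_k,t_{k+1}]$ the discretized dynamics (\ref{def:app-diffusion-policy-sde-reverse-process-s}) is the Ornstein--Uhlenbeck SDE with the score frozen at the left endpoint, $\bm{\hat \mathbf{S}}(\hata_{t_k},\bs,T-t_k)$, while the true reverse flow (\ref{def:diffusion-policy-sde-reverse-process}) satisfies $\tildepi_{k}(\cdot|\bs)=\barpi_{T-hk}(\cdot|\bs)$. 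Because the forward process is time-homogeneous, the entire Fokker--Planck and Fisher-information computation behind Lemmas \ref{lem-01}--\ref{app-lem-09} is invariant under the shift $t\mapsto t-t_k$ together with the relabelling $T\mapsto T-t_k$; carrying this shift through verbatim yields, for $t\in[t_k,t_{k+1}]$ and $h\le\tau$,
\[
\frac{\dd}{\dd t}\KL\big(\hatpi_t(\cdot|\bs)\|\tildepi_t(\cdot|\bs)\big)\le -\frac{\nu}{4}\KL\big(\hatpi_t(\cdot|\bs)\|\tildepi_t(\cdot|\bs)\big)+\frac{5}{4}\nu\epsilon_{\mathrm{score}}+12pL_s\sqrt{5\nu}\,(t-t_k),
\]
the only change from Proposition \ref{primal-bound-01} being that the discretization term now measures the elapsed time $t-t_k$ since the last score update.

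With this inequality in hand I would finish by a Grönwall / integrating-factor step. Writing $v(s)=\KL\big(\hatpi_{t_k+s}(\cdot|\bs)\|\tildepi_{t_k+s}(\cdot|\bs)\big)$ for $s\in[0,h]$, the inequality reads $v'(s)\le-\frac{\nu}{4}v(s)+\frac{5}{4}\nu\epsilon_{\mathrm{score}}+12pL_s\sqrt{5\nu}\,s$; multiplying by $\mathrm{e}^{\nu s/4}$ and integrating over $[0,h]$ gives
\[
v(h)\le \mathrm{e}^{-\frac{\nu}{4}h}v(0)+\mathrm{e}^{-\frac{\nu}{4}h}\int_0^h\mathrm{e}^{\frac{\nu}{4}s}\Big(\frac{5}{4}\nu\epsilon_{\mathrm{score}}+12pL_s\sqrt{5\nu}\,s\Big)\dd s.
\]
Bounding $\mathrm{e}^{\nu s/4}\le \mathrm{e}^{\nu h/4}$ inside the integral collapses the last two terms to $\frac{5}{4}\nu\epsilon_{\mathrm{score}}h+6pL_s\sqrt{5\nu}\,h^2$, and relaxing $h^2/2\le h^2$ produces the stated $12pL_s\sqrt{5\nu}\,h^2$. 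Since the exponential integrator is exact on each subinterval we have $\hatpi_{t_k}=\hatpi_k$ and $\tildepi_{t_{k+1}}=\tildepi_{k+1}$, so $v(0)=\KL(\hatpi_k(\cdot|\bs)\|\tildepi_k(\cdot|\bs))$ and $v(h)=\KL(\hatpi_{k+1}(\cdot|\bs)\|\tildepi_{k+1}(\cdot|\bs))$, which is exactly the claimed recursion.

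The hard part will be the first step: justifying that the per-interval inequality holds with constants that are \emph{uniform in $k$}. Concretely, I must check that the two structural facts used in the $k=0$ proof survive the passage to a window anchored at $\barpi_{T-t_k}$ rather than at $\pi$ --- namely the log-Sobolev bound $\nu_{T-t}\ge 1$ of (\ref{app-eq-31}) (via Proposition \ref{app-propo-02}) and the $L_p\mathrm{e}^{t}$-Lipschitzness of $\bnabla\log\tildepi_t$ (Proposition \ref{app-propo-04}), since these feed the smoothness estimate (\ref{app-eq-26}) and the Donsker--Varadhan bound (\ref{app-eq-25}). The cleanest way to secure uniformity is to use the Markov/semigroup property of the time-homogeneous forward flow: the reverse process on $[t_k,t_{k+1}]$ is the time reversal of the forward Ornstein--Uhlenbeck flow launched from $\barpi_{T-t_{k+1}}$, so the $k=0$ estimates apply with $\pi$ replaced by the smoothed surrogate $\barpi_{T-t_{k+1}}$, and one verifies that this surrogate inherits the same LSI and score-Lipschitz constants, keeping $\nu/4$, $\tfrac54\nu$, and $12pL_s\sqrt{5\nu}$ independent of $k$. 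Once this uniformity is established, the restriction $h\le\tau$ of (\ref{app-eq-44}) is precisely what the argument requires on every interval, and the Grönwall integration above closes the proof.
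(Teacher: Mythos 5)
Your proposal is correct and follows essentially the same route as the paper: the paper likewise takes the differential inequality of Proposition \ref{primal-bound-01} (with the discretization term bounded by $12pL_{s}\sqrt{5\nu}\,h$ using $t\le h$), multiplies by the integrating factor $\mathrm{e}^{\nu t/4}$, integrates over $[0,h]$, applies $1-\mathrm{e}^{-x}\le x$, and then transfers the bound to the $k$-th interval by relabelling $\hatpi_{0}\mapsto\hatpi_{k}$, $\tildepi_{0}\mapsto\tildepi_{k}$. The only differences are cosmetic: you integrate the linear-in-time term exactly (getting $6pL_{s}\sqrt{5\nu}\,h^{2}$ before relaxing to $12$), and you are more explicit than the paper about why the per-interval constants are uniform in $k$ --- a point the paper dispatches with a one-line renaming.
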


\begin{proof}
Recall Proposition \ref{primal-bound-01}, we know for any $0\leq t\leq h\leq \tau$, the following holds
\begin{flalign}
 \label{app-eq-33}
\dfrac{\dd}{\dd t}\KL\big(\hatpi_t(\cdot|\bs)\|\tildepi_t(\cdot|\bs)\big)\leq
-\dfrac{\nu}{4}\KL\left(\hatpi_{t}(\cdot|\bs)\|\tildepi_t(\cdot|\bs)\right)
+\dfrac{5}{4}\nu\epsilon_{\mathrm{score}}
+12pL_{s}\sqrt{5\nu}h,
\end{flalign}
where comparing to (\ref{propo-01}), we use the condition $t\leq h$.

We rewrite (\ref{app-eq-33}) as follows,
\begin{flalign}
\nonumber
\dfrac{\dd}{\dd t}\left(
\mathrm{e}^{\frac{1}{4}\nu t}
\KL\big(\hatpi_t(\cdot|\bs)\|\tildepi_t(\cdot|\bs)\big)
\right)
\leq \mathrm{e}^{\frac{1}{4}\nu t}
\left(
\dfrac{5}{4}\nu\epsilon_{\mathrm{score}}
+12pL_{s}\sqrt{5\nu}h
\right).
\end{flalign}
Then, on the interval $[0,h]$, we obtain 
\begin{flalign}
\nonumber
\bigintsss_{0}^{h}
\dfrac{\dd}{\dd t}\left(
\mathrm{e}^{\frac{1}{4}\nu t}
\KL\big(\hatpi_t(\cdot|\bs)\|\tildepi_t(\cdot|\bs)\big)
\right)\dd t
\leq \bigintsss_{0}^{h} \mathrm{e}^{\frac{1}{4}\nu t}
\left(
\dfrac{5}{4}\nu\epsilon_{\mathrm{score}}
+12pL_{s}\sqrt{5\nu}h
\right)\dd t,
\end{flalign}
which implies 
\begin{flalign}
\nonumber
\mathrm{e}^{\frac{1}{4}\nu h}\KL\big(\hatpi_h(\cdot|\bs)\|\tildepi_h(\cdot|\bs)\big)\leq
\KL\big(\hatpi_{0}(\cdot|\bs)\|\tildepi_{0}(\cdot|\bs)\big)+\dfrac{4}{\nu}\left(\mathrm{e}^{\frac{1}{4}\nu h}-1\right)\left(\dfrac{5}{4}\nu\epsilon_{\mathrm{score}}+12pL_{s}\sqrt{5\nu}h\right).
\end{flalign}
Furthermore, we obtain
\begin{flalign}
\nonumber
\KL\big(\hatpi_h(\cdot|\bs)\|\tildepi_h(\cdot|\bs)\big)
\leq&\mathrm{e}^{-\frac{1}{4}\nu h}\KL\big(\hatpi_{0}(\cdot|\bs)\|\tildepi_{0}(\cdot|\bs)\big)+\dfrac{4}{\nu}\left(1-\mathrm{e}^{-\frac{1}{4}\nu h}\right)\left(\dfrac{5}{4}\nu\epsilon_{\mathrm{score}}+12pL_{s}\sqrt{5\nu}h\right)\\
\label{app-eq-35}
\leq&\mathrm{e}^{-\frac{1}{4}\nu h}\KL\big(\hatpi_{0}(\cdot|\bs)\|\tildepi_{0}(\cdot|\bs)\big)+\dfrac{5}{4}\nu\epsilon_{\mathrm{score}} h+12pL_{s}\sqrt{5\nu}h^{2},
\end{flalign}
where last equation holds since we use $1-\mathrm{e}^{-x}\leq x$, if $x\ge0$.

Recall $\tildepi_{k}(\cdot|\bs)$ is the distribution at the time $t=hk$ along the process (\ref{def:diffusion-policy-sde-reverse-process}) that starts from $\tildepi_{0}(\cdot|\bs)=\barpi_{T}(\cdot|\bs)$, then $\tildepi_{k}(\cdot|\bs)=\barpi_{T-hk}(\cdot|\bs)$. 

Recall $\hatpi_{k}(\cdot|\bs)$ is the distribution of the iteration (\ref{iteration-exponential-integrator-discretization}) at the $k$-the time $t_{k}=hk$, starting from $\hatpi_{0}(\cdot|\bs)=\calN(\bm{0},\bI)$.

According to (\ref{app-eq-35}), we rename the $\tildepi_{0}(\cdot|\bs)$ with $\tildepi_{k}(\cdot|\bs)$, $\tildepi_{h}(\cdot|\bs)$ with $\tildepi_{k+1}(\cdot|\bs)$,
$\hatpi_{0}(\cdot|\bs)$ with $\hatpi_{k}(\cdot|\bs)$ and $\hatpi_{h}(\cdot|\bs)$ with $\hatpi_{k+1}(\cdot|\bs)$, then we obtain 
\begin{flalign}
\nonumber
\KL\big(\hatpi_{k+1}(\cdot|\bs)\|\tildepi_{k+1}(\cdot|\bs)\big)
\leq&\mathrm{e}^{-\frac{1}{4}\nu h}\KL\big(\hatpi_{k}(\cdot|\bs)\|\tildepi_{k}(\cdot|\bs)\big)+\dfrac{5}{4}\nu\epsilon_{\mathrm{score}} h+12pL_{s}\sqrt{5\nu}h^{2},
\end{flalign}
which concludes the result.
\end{proof}

\section{Proof of Theorem \ref{finite-time-diffusion-policy}}

\textbf{Theorem \ref{finite-time-diffusion-policy}} (Finite-time Analysis of Diffusion Policy).
\emph{
For a given state $\bs$, let $\{\barpi_{t}(\cdot|\bs)\}_{t=0:T}$ and $\{\tildepi_{t}(\cdot|\bs)\}_{t=0:T}$ be the distributions along the Ornstein-Uhlenbeck flow (\ref{def:diffusion-policy-sde-forward-process-01}) and (\ref{def:diffusion-policy-sde-reverse-process}) correspondingly, where $\{\barpi_{t}(\cdot|\bs)\}_{t=0:T}$ starts at $\barpi_{0}(\cdot|\bs)=\pi(\cdot|\bs)$ and 
$\{\tildepi_{t}(\cdot|\bs)\}_{t=0:T}$ starts at $\tildepi_{0}(\cdot|\bs)=\barpi_{T}(\cdot|\bs)$.
Let $\hatpi_{k}(\cdot|\bs)$ be the distribution of the exponential integrator discretization iteration (\ref{iteration-exponential-integrator-discretization}) at the $k$-the time $t_{k}=hk$, i.e.,
$\hata_{t_k}\sim \hatpi_{k}(\cdot|\bs)$ denotes the distribution of the diffusion policy (see Algorithms \ref{algo:diffusion-policy-general-case}) at the time $t_{k}=hk$.
Let $\{\hatpi_{k}(\cdot|\bs)\}_{k=0:K}$ be starting at $\hatpi_{0}(\cdot|\bs)=\calN(\bm{0},\bI)$, under Assumption \ref{assumption-score} and \ref{assumption-policy-calss}, let the reverse length $K$ satisfy
\[K\ge
 T\cdot\max\left\{\dfrac{1}{\tau_{0}},\dfrac{1}{\mathtt{T}_{0}},12L_{s},\nu\right\},
\]
where
\[\tau_{0}=:\sup_{t\ge0}\left\{t:t\mathrm{e}^{t}\leq\dfrac{\sqrt{5\nu}}{96L_{s}L_{p}}\right\},\mathtt{T}_{0}=:\sup_{t\ge0}\left\{t:1-\mathrm{e}^{-2t}\leq \frac{\mathrm{e}^{t}}{L_p}\right\}.\]
Then the KL-divergence between the diffusion policy $\hata_{K}\sim\hatpi_{K}(\cdot|\bs)$ and input policy $\pi(\cdot|\bs)$ is upper-bounded as follows,
\begin{flalign}
\nonumber
&\KL\big(\hatpi_{K}(\cdot|\bs)\|\pi(\cdot|\bs)\big)\leq\underbrace{\mathrm{e}^{-\frac{9}{4}\nu hK}\KL \big(\calN(\bm{0},\bI)\|\pi(\cdot|\bs)\big)}_{\mathrm{convergence~of~forward~process}}+\underbrace{64pL_{s}\sqrt{\dfrac{5}{\nu}}\cdot\dfrac{T}{K}}_{\mathrm{errors~from~discretization}}\\
\nonumber
&~~~~~~~~~~~~~~~+\dfrac{20}{3}\underbrace{\sup_{(k,t)\in[K]\times[t_{k},t_{k+1}]}\left\{\log\E_{\ba\sim\tildepi_t(\cdot|\bs)}\left[\exp\left\|\bm{\hat \mathbf{S}}(\ba,\bs,T-hk)-\bnabla\log\tildepi_t(\ba|\bs)\right\|_{2}^{2}\right]\right\}}_{\mathrm{errors~from~score~matching}}.
\end{flalign}
}
\begin{proof}
Recall $\tildepi_{k}(\cdot|\bs)=\barpi_{T-hk}(\cdot|\bs)$, then we know
\begin{flalign}
\label{app-eq-38}
\tildepi_{K}(\cdot|\bs)=\barpi_{T-hK}(\cdot|\bs)=\barpi_{0}(\cdot|\bs)=\pi(\cdot|\bs),
\end{flalign}
then according to Proposition \ref{primal-bound-02}, we know
 \begingroup
\allowdisplaybreaks
\begin{flalign}
\nonumber
\KL\big(\hatpi_{K}(\cdot|\bs)\|\pi(\cdot|\bs)\big)\overset{(\ref{app-eq-38})}=&\KL\big(\hatpi_{K}(\cdot|\bs)\|\tildepi_{K}(\cdot|\bs)\big)\\
\nonumber
\leq&\mathrm{e}^{-\frac{1}{4}\nu K}\KL\big(\hatpi_{0}(\cdot|\bs)\|\tildepi_{0}(\cdot|\bs)\big)+\sum_{j=0}^{K-1}\mathrm{e}^{-\frac{1}{4}\nu h j}\left(\dfrac{5}{4}\nu\epsilon_{\mathrm{score}} h+12pL_{s}\sqrt{5\nu}h^{2}\right)\\
\nonumber
\leq&\mathrm{e}^{-\frac{1}{4}\nu K}\KL\big(\hatpi_{0}(\cdot|\bs)\|\tildepi_{0}(\cdot|\bs)\big)+\dfrac{1}{1-\mathrm{e}^{-\frac{1}{4}\nu h}}\left(\dfrac{5}{4}\nu\epsilon_{\mathrm{score}} h+12pL_{s}\sqrt{5\nu}h^{2}\right)\\
\label{app-eq-36}
\leq&\mathrm{e}^{-\frac{1}{4}\nu K}\KL\big(\hatpi_{0}(\cdot|\bs)\|\tildepi_{0}(\cdot|\bs)\big)+\dfrac{16}{3\nu h}\left(\dfrac{5}{4}\nu\epsilon_{\mathrm{score}} h+12pL_{s}\sqrt{5\nu}h^{2}\right)\\
\label{app-eq-42}
=&\mathrm{e}^{-\frac{1}{4}\nu K}\KL\big(\hatpi_{0}(\cdot|\bs)\|\tildepi_{0}(\cdot|\bs)\big)+\dfrac{20}{3}\epsilon_{\mathrm{score}}+64\sqrt{\dfrac{5}{\nu}}pL_{s}h,
\end{flalign}
 \endgroup
where Eq.(\ref{app-eq-36})  holds since we consider the 
\begin{flalign}
 \label{app-eq-45}
1-\mathrm{e}^{-x}\ge\dfrac{3}{4}x, ~\text{if}~0<x\leq\dfrac{1}{4},
\end{flalign}
and we set the step-size $h$ satisfies the next condition:
\[h\nu\leq 1,~\text{i.e.,}~h\leq\dfrac{1}{\nu}.\]
Let $\xi(\cdot)$ be standard Gaussian distribution on $\R^{p}$, i.e., $\xi(\cdot)\sim\calN(\bm{0},\bI)$, then we obtain the following result: for a given state $\bs$,
 \begingroup
\allowdisplaybreaks
\begin{flalign}
\nonumber
\dfrac{\dd }{\dd t}\KL \left(\xi(\cdot)\|\barpi_{t}(\cdot|\bs)\right)&=\dfrac{\dd }{\dd t}\bigintsss_{\R^{p}}\xi(\ba)\log\dfrac{\xi(\ba)}{\barpi_{t}(\ba|\bs)}\dd \ba\\
\nonumber
&=-\bigintsss_{\R^{p}}\dfrac{\xi(\ba)}{\barpi_{t}(\ba|\bs)}\dfrac{\partial\barpi_{t}(\ba|\bs)}{\partial t}\dd \ba\\
\nonumber
&=-\bigintsss_{\R^{p}}\dfrac{\xi(\ba)}{\barpi_{t}(\ba|\bs)}\left(
\DIV\cdot\left(\barpi_{t}(\ba|\bs)\bnabla\log \dfrac{\barpi_{t}(\ba|\bs)}{\xi(\ba)}
\right)
\right)\dd\ba\tag*{$\blacktriangleright$ Fokker–Planck Equation}\\
\nonumber
&=\bigintsss_{\R^{p}}\left\langle\bnabla\dfrac{\xi(\ba)}{\barpi_{t}(\ba|\bs)},\barpi_{t}(\ba|\bs)\bnabla\log \dfrac{\barpi_{t}(\ba|\bs)}{\xi(\ba)}\right\rangle\dd\ba\tag*{$\blacktriangleright$ Integration by Parts}\\
\nonumber
&=\bigintsss_{\R^{p}}\left\langle\dfrac{\xi(\ba)}{\barpi_{t}(\ba|\bs)}\bnabla\log\dfrac{\xi(\ba)}{\barpi_{t}(\ba|\bs)},\barpi_{t}(\ba|\bs)\bnabla\log \dfrac{\barpi_{t}(\ba|\bs)}{\xi(\ba)}\right\rangle\dd\ba\\
\nonumber
&=\bigintsss_{\R^{p}}\xi(\ba)\left\langle\bnabla\log\dfrac{\xi(\ba)}{\barpi_{t}(\ba|\bs)},\bnabla\log \dfrac{\barpi_{t}(\ba|\bs)}{\xi(\ba)}\right\rangle\dd\ba\\
\nonumber
&=-\bigintsss_{\R^{p}}\xi(\ba)\left\|\bnabla\log\dfrac{\xi(\ba)}{\barpi_{t}(\ba|\bs)}\right\|^{2}_{2}=-\E_{\ba\sim\xi(\cdot)}\left[\left\|\bnabla\log\dfrac{\xi(\ba)}{\barpi_{t}(\ba|\bs)}\right\|^{2}_{2}\right]\\
\nonumber
&=-\FI \left(\xi(\cdot)\|\barpi_{t}(\cdot|\bs)\right)\\
&\leq-2\nu_t\KL\left(\xi(\cdot)\|\barpi_{t}(\cdot|\bs)\right)\tag*{$\blacktriangleright$ Assumption \ref{assumption-policy-calss} and Proposition \ref{app-propo-02}}\\
\nonumber
&=-\dfrac{2\nu}{\nu+(1-\nu)\mathrm{e}^{-2t}}\KL\left(\xi(\cdot)\|\barpi_{t}(\cdot|\bs)\right)\\
\label{app-eq-37}
&\leq-2\nu\KL\left(\xi(\cdot)\|\barpi_{t}(\cdot|\bs)\right),
\end{flalign}
\endgroup
where the last equation holds since $\mathrm{e}^{-t}\leq 1$ with $t\ge0$.

Eq.(\ref{app-eq-37}) implies
\[
\dfrac{\dd}{\dd t}\log\KL \left(\xi(\cdot)\|\barpi_{t}(\cdot|\bs)\right)\leq-2\nu,
\]
integrating both sides of above equation on the interval $[0,T]$, we obtain 
\begin{flalign}
\label{app-eq-40}
\KL \left(\xi(\cdot)\|\barpi_{T}(\cdot|\bs)\right)\leq\mathrm{e}^{-2\nu T}\KL \left(\xi(\cdot)\|\barpi_{0}(\cdot|\bs)\right).
\end{flalign}
According to definition of diffusion policy, since: $\hata_{0}\sim\calN(\bm{0},\bI)$, and $\tildepi_{0}(\cdot|\bs)\overset{(\ref{relation-policy-revers-forward})}=\barpi_{T}(\cdot|\bs)$, then we know 
\begin{flalign}
\label{app-eq-39}
\KL\big(\hatpi_{0}(\cdot|\bs)\|\tildepi_{0}(\cdot|\bs)\big)=\KL\big(\xi(\cdot)\|\barpi_{T}(\cdot|\bs)\big),
\end{flalign}
which implies 
\begin{flalign}
\label{app-eq-41}
\KL\big(\hatpi_{0}(\cdot|\bs)\|\tildepi_{0}(\cdot|\bs)\big)\overset{(\ref{app-eq-39})}=\KL\big(\xi(\cdot)\|\barpi_{T}(\cdot|\bs)\big)\overset{(\ref{app-eq-40})}\leq\mathrm{e}^{-2\nu T}\KL \left(\xi(\cdot)\|\barpi_{0}(\cdot|\bs)\right).
\end{flalign}
Combining (\ref{app-eq-42}) and (\ref{app-eq-41}), we obtain
\begin{flalign}
\nonumber
\KL\big(\hatpi_{K}(\cdot|\bs)\|\pi(\cdot|\bs)\big)\leq&\mathrm{e}^{-\frac{1}{4}\nu hK-T}\KL \left(\xi(\cdot)\|\barpi_{0}(\cdot|\bs)\right)+\dfrac{20}{3}\epsilon_{\mathrm{score}}+64\sqrt{\dfrac{5}{\nu}}pL_{s}h\\
\label{app-eq-47}
\overset{(\ref{app-eq-38})}=&\mathrm{e}^{-\frac{9}{4}\nu hK}\KL \big(\calN(\bm{0},\bI)\|\pi(\cdot|\bs)\big)+\dfrac{20}{3}\epsilon_{\mathrm{score}}+64\sqrt{\dfrac{5}{\nu}}pL_{s}h.
\end{flalign}
Recall the following conditions (\ref{app-eq-43}), (\ref{app-eq-44}), and (\ref{app-eq-45}) on the step-size $h$,
\[
h\leq \min\left\{ \tau_{0},\mathtt{T}_{0},\dfrac{1}{12L_{s}},\dfrac{1}{\nu}\right\},
\]
which implies the reverse length $K$ satisfy the following condition
\[
K=\dfrac{T}{h}\ge T\cdot\max\left\{\dfrac{1}{\tau_{0}},\dfrac{1}{\mathtt{T}_{0}},12L_{s},\nu\right\}.
\]
Finally, recall the definition of $\epsilon$ (\ref{app-eq-46}), we rewrite (\ref{app-eq-47}) as follows
\begin{flalign}
\nonumber
&\KL\big(\hatpi_{K}(\cdot|\bs)\|\pi(\cdot|\bs)\big)\leq\mathrm{e}^{-\frac{9}{4}\nu hK}\KL \big(\calN(\bm{0},\bI)\|\pi(\cdot|\bs)\big)+64pL_{s}\sqrt{\dfrac{5}{\nu}}\cdot\dfrac{T}{K}\\
\nonumber
&~~~~~~~~~~~~~~~~~~~~~~~~+\dfrac{20}{3}\sup_{(k,t)\in[K]\times[t_{k},t_{k+1}]}\left\{\log\E_{\ba\sim\tildepi_t(\cdot|\bs)}\left[\exp\left\|\bm{\hat \mathbf{S}}(\ba,\bs,T-hk)-\bnabla\log\tildepi_t(\ba|\bs)\right\|_{2}^{2}\right]\right\},
\end{flalign}
which concludes the proof.
\end{proof}

\section{Additional Details}
\subsection{Proof of Lemma \ref{app-lem-06}}

\begin{lemma} 
\label{app-lem-06}
Under Assumption \ref{assumption-score}, for all $0\leq t^{'}\leq T$, if $t\leq\frac{1}{12L_{s}}$, then for any given state $\bs$
\begin{flalign}
\nonumber
\left\|\bm{\hat \mathbf{S}}\left(\hata_t,\bs,t^{'}\right)-\bm{\hat \mathbf{S}}\left(\hata_0,\bs,t^{'}\right)\right\|\leq 3L_{s}t\left\|\hata_{0}\right\|+6L_{s}t\left\|\bm{\hat \mathbf{S}}\left(\hata_t,\bs,t^{'}\right)\right\|+3L_{s}\sqrt{t}\|\bz\|,
\end{flalign}
and
\begin{flalign}
\label{app-eq-19}
\left\|\bm{\hat \mathbf{S}}\left(\hata_t,\bs,t^{'}\right)-\bm{\hat \mathbf{S}}\left(\hata_0,\bs,t^{'}\right)\right\|^{2}_{2}\leq 36L^{2}_{s}t^{2}\left\|\hata_{0}\right\|_{2}^{2}+72L^{2}_{s}t^{2}\left\|\bm{\hat \mathbf{S}}\left(\hata_t,\bs,t^{'}\right)\right\|_{2}^{2}+36L^{2}_{s}t\|\bz\|^{2}_{2},
\end{flalign}
where $\hata_t$ updated according to (\ref{app-eq-12}).
\end{lemma}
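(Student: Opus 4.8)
The plan is to derive both bounds from a single source of smoothness, namely the $L_s$-Lipschitz property of the score estimator in Assumption~\ref{assumption-score}, applied to the closed-form displacement (\ref{app-eq-12}) of the $k=0$ interpolation. First I would invoke Assumption~\ref{assumption-score} at the frozen time argument of the $k=0$ step (so the third argument is $T$, which is exactly what Lemma~\ref{app-lem-09} later consumes) to get $\|\bm{\hat \mathbf{S}}(\hata_t,\bs,t')-\bm{\hat \mathbf{S}}(\hata_0,\bs,t')\|\le L_s\|\hata_t-\hata_0\|$, thereby reducing everything to controlling the displacement norm $\|\hata_t-\hata_0\|$.

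For the displacement I would read off from (\ref{app-eq-12}) that $\hata_t-\hata_0=(\mathrm{e}^{t}-1)\hata_0+2(\mathrm{e}^{t}-1)\bm{\hat \mathbf{S}}(\hata_0,\bs,T)+\sqrt{\mathrm{e}^{t}-1}\,\bz$ and apply the triangle inequality, producing three terms scaling like $(\mathrm{e}^t-1)\|\hata_0\|$, $2(\mathrm{e}^t-1)\|\bm{\hat \mathbf{S}}(\hata_0,\bs,T)\|$, and $\sqrt{\mathrm{e}^t-1}\,\|\bz\|$. The delicate point is that this expression carries the score at the initial point $\hata_0$, whereas the claimed bound must be stated through the score at $\hata_t$. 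I would resolve this by the self-referential estimate $\|\bm{\hat \mathbf{S}}(\hata_0,\bs,T)\|\le \|\bm{\hat \mathbf{S}}(\hata_t,\bs,T)\|+\|\bm{\hat \mathbf{S}}(\hata_t,\bs,T)-\bm{\hat \mathbf{S}}(\hata_0,\bs,T)\|$, recognizing the trailing term as $L_s\|\hata_t-\hata_0\|$ once more; substituting it back produces a term proportional to $\|\hata_t-\hata_0\|$ on the right, which I then absorb into the left. The absorption is legitimate precisely because $t\le \tfrac{1}{12L_s}$ keeps the coefficient small: using $\mathrm{e}^t-1\le 2t$ (valid on the relevant range by convexity of $\mathrm{e}^t$ on $[0,1]$, since $\mathrm{e}^t-1\le(\mathrm{e}-1)t\le 2t$) gives $2L_s(\mathrm{e}^t-1)\le 4L_s t\le \tfrac13$, so $1-2L_s(\mathrm{e}^t-1)\ge \tfrac23$ and the reciprocal prefactor is at most $\tfrac32$. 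Combining $\tfrac32$ with $\mathrm{e}^t-1\le 2t$, $\sqrt{\mathrm{e}^t-1}\le\sqrt{2t}$, and $\tfrac{3\sqrt2}{2}\le 3$ yields the first inequality with the stated constants $3L_s t$, $6L_s t$, $3L_s\sqrt t$.

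For the squared bound (\ref{app-eq-19}), rather than re-running the argument I would simply square the first inequality and split the cross terms with the weighted Cauchy--Schwarz inequality $(a+b+c)^2\le 4a^2+2b^2+4c^2$, which follows from Cauchy--Schwarz with weights $\tfrac14,\tfrac12,\tfrac14$. Taking $a=3L_s t\|\hata_0\|$, $b=6L_s t\|\bm{\hat \mathbf{S}}(\hata_t,\bs,t')\|$, and $c=3L_s\sqrt t\|\bz\|$, this produces exactly $36L_s^2 t^2\|\hata_0\|_2^2+72L_s^2 t^2\|\bm{\hat \mathbf{S}}(\hata_t,\bs,t')\|_2^2+36L_s^2 t\|\bz\|_2^2$, matching the claim precisely (the asymmetric weights are what convert the coefficients $9,36,9$ from naive squaring into the required $36,72,36$).

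I expect the main obstacle to be the circularity described above: the displacement formula naturally presents the score at $\hata_0$, while the target—and its downstream use in Lemma~\ref{app-lem-09}—requires the score at $\hata_t$, so the estimate only closes through the fixed-point/absorption step, whose validity hinges entirely on the step-size restriction $t\le\frac{1}{12L_s}$. A secondary bookkeeping point is the time argument: because the drift of the $k=0$ interpolation freezes the score at time $T$, the bound is genuinely in terms of $\bm{\hat \mathbf{S}}(\hata_t,\bs,T)$, and one should read the ``for all $0\le t'\le T$'' qualifier as matching this frozen argument rather than introducing an independent Lipschitz-in-time requirement, which the assumptions do not provide.
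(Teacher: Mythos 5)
Your proposal is correct and follows essentially the same route as the paper's proof: reduce to the displacement $\|\hata_t-\hata_0\|$ via the $L_s$-Lipschitz assumption, expand the displacement from (\ref{app-eq-12}) with $\mathrm{e}^t-1\le 2t$, close the circularity by the self-referential bound on the score at $\hata_0$ absorbed using $4L_st\le\tfrac13$, and obtain (\ref{app-eq-19}) by the weighted inequality $(a+b+c)^2\le 4a^2+2b^2+4c^2$. The only cosmetic difference is that you absorb at the level of the displacement while the paper absorbs at the level of $\|\bm{\hat\mathbf{S}}(\hata_0,\bs,t^{'})\|$; the constants come out identically, and your explicit justification of the squared step and of the frozen time argument $T$ versus $t^{'}$ is if anything more careful than the paper's.
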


\begin{proof} (of Lemma \ref{app-lem-06}).
First, we consider 
\begin{flalign}
\nonumber
\left\|\bm{\hat \mathbf{S}}\left(\hata_t,\bs,t^{'}\right)-\bm{\hat \mathbf{S}}\left(\hata_0,\bs,t^{'}\right)\right\|
\leq& L_{s}\left\|\hata_t-\hata_{0}\right\|\\
\nonumber
=&\left\|(\mathrm{e}^{t}-1)\hata_{0}+2(\mathrm{e}^{t}-1) \bm{\hat \mathbf{S}}(\hata_0,\bs,t^{'})+\sqrt{\mathrm{e}^{t}-1}\bz\right\|\\
\label{app-eq-11}
\leq&2L_{s}t\left\|\hata_{0}\right\|_{2}^{2}+4L_{s}t\left\|\bm{\hat \mathbf{S}}(\hata_0,\bs,t^{'})\right\|+2L_{s}\sqrt{t}\|\bz\|,
\end{flalign}
where the last equation holds due to $\mathrm{e}^{t}-1\leq 2t$.

Furthermore, we consider the case with $t\leq\frac{1}{12L_s}$, then we obtain the boundedness of the term 
\begin{flalign}
\nonumber
\left\|\bm{\hat \mathbf{S}}\left(\hata_0,\bs,t^{'}\right)\right\|\leq& \left\|\bm{\hat \mathbf{S}}\left(\hata_{t},\bs,t^{'}\right)\right\|+L_{s}\left\|\hata_t-\hata_{0}\right\|\\
\nonumber
\leq& \left\|\bm{\hat \mathbf{S}}\left(\hata_{t},\bs,t^{'}\right)\right\|+2L_{s}t\left\|\hata_{0}\right\|+4L_{s}t\left\|\bm{\hat \mathbf{S}}(\hata_0,\bs,t^{'})\right\|+2L_{s}\sqrt{t}\|\bz\|\\
\nonumber
\leq& \left\|\bm{\hat \mathbf{S}}\left(\hata_{t},\bs,t^{'}\right)\right\|+2L_{s}t\left\|\hata_{0}\right\|+\dfrac{1}{3}\left\|\bm{\hat \mathbf{S}}(\hata_0,\bs,t^{'})\right\|+2L_{s}\sqrt{t}\|\bz\|,
\end{flalign}
which implies
\begin{flalign}
\label{app-eq-10}
\left\|\bm{\hat \mathbf{S}}\left(\hata_0,\bs,t^{'}\right)\right\|\leq \dfrac{3}{2}\left\|\bm{\hat \mathbf{S}}\left(\hata_{t},\bs,t^{'}\right)\right\|+3L_{s}t\left\|\hata_{0}\right\|_{2}^{2}+3L_{s}\sqrt{t}\|\bz\|.
\end{flalign}
Taking Eq.(\ref{app-eq-10}) into Eq.(\ref{app-eq-11}), and with $t\leq\frac{1}{12L_s}$, we obtain
\begin{flalign}
\nonumber
\left\|\bm{\hat \mathbf{S}}\left(\hata_t,\bs,t^{'}\right)-\bm{\hat \mathbf{S}}\left(\hata_0,\bs,t^{'}\right)\right\|\leq 3L_{s}t\left\|\hata_{0}\right\|+6L_{s}t\left\|\bm{\hat \mathbf{S}}\left(\hata_t,\bs,t^{'}\right)\right\|+3L_{s}\sqrt{t}\|\bz\|.
\end{flalign}
Finally, we know
\begin{flalign}
\left\|\bm{\hat \mathbf{S}}\left(\hata_t,\bs,t^{'}\right)-\bm{\hat \mathbf{S}}\left(\hata_0,\bs,t^{'}\right)\right\|^{2}_{2}\leq 36L^{2}_{s}t^{2}\left\|\hata_{0}\right\|_{2}^{2}+72L^{2}_{s}t^{2}\left\|\bm{\hat \mathbf{S}}\left(\hata_{t},\bs,t^{'}\right)\right\|_{2}^{2}+36L^{2}_{s}t\|\bz\|^{2}_{2},
\end{flalign}
which concludes the proof of Lemma \ref{app-lem-06}.
\end{proof}

\subsection{Proof of Lemma \ref{app-lem-09}}
\label{sec-proof-lem-09}
\begin{proof}(Lemma \ref{app-lem-09})
Recall the update rule of $\hata_t$ (\ref{app-eq-12}), 
\[
\hata_t=\mathrm{e}^{t}\hata_0+2(\mathrm{e}^{t}-1) \bm{\hat \mathbf{S}}(\hata_0,\bs,T)+\sqrt{\mathrm{e}^{t}-1}\bz,~\bz\sim\calN(\bm{0},\bI).
\]
To simplify the expression, in this section, we introduce the following notation
\begin{flalign}
\label{app-eq-21}
\bz\sim \rho_{z}(\cdot),~\text{where}~\rho_{z}(\cdot)=\calN(\bm{0},\bI).
\end{flalign}
According to the definition of $\rho_{0,t}({\hata_{0}},{\hata_{t}}|\bs)$ (\ref{app-eq-17}), we denote
$\hata_t=\ba$, then we know,
\begin{flalign}
\nonumber
&\bigintsss_{\R^{p}}\bigintsss_{\R^{p}}\rho_{0,t}({\hata_{0}},\ba|\bs)\Big\|\bm{\hat \mathbf{S}}({\hata_{0}},\bs,T)-\bnabla\log\tildepi_t(\ba|\bs)\Big\|_{2}^{2}
\dd\ba\dd {\hata_{0}}\\
\nonumber
\leq&2\bigintsss_{\R^{p}}\bigintsss_{\R^{p}}\rho_{0,t}({\hata_{0}},\ba|\bs)\left(\Big\|\bm{\hat \mathbf{S}}({\hata_{0}},\bs,T)-\bm{\hat \mathbf{S}}({\hata_{t}},\bs,T)\Big\|_{2}^{2}
+
\Big\|\bm{\hat \mathbf{S}}({\hata_{t}},\bs,T)-\bnabla\log\tildepi_t(\ba|\bs)\Big\|_{2}^{2}
\right)
\dd\ba\dd {\hata_{0}}\\
\nonumber
=&2\bigintsss_{\R^{p}}\bigintsss_{\R^{p}}\rho_{0,t}({\hata_{0}},\ba|\bs)\left(\Big\|\bm{\hat \mathbf{S}}({\hata_{0}},\bs,T)-\bm{\hat \mathbf{S}}(\ba,\bs,T)\Big\|_{2}^{2}
+
\Big\|\bm{\hat \mathbf{S}}(\ba,\bs,T)-\bnabla\log\tildepi_t(\ba|\bs)\Big\|_{2}^{2}
\right)
\dd\ba\dd {\hata_{0}}
\end{flalign}
Recall Lemma \ref{app-lem-06}, we know
\begingroup
\allowdisplaybreaks
\begin{flalign}
\nonumber
&\bigintsss_{\R^{p}}\bigintsss_{\R^{p}}\rho_{0,t}({\hata_{0}},\ba|\bs)\Big\|\bm{\hat \mathbf{S}}({\hata_{0}},\bs,T)-\bm{\hat \mathbf{S}}(\ba,\bs,T)\Big\|_{2}^{2}\dd\ba\dd {\hata_{0}}\\
\nonumber
\overset{(\ref{app-eq-19})}\leq&\bigintsss_{\R^{p}}\bigintsss_{\R^{p}}\bigintsss_{\R^{p}}\rho_{0,t}({\hata_{0}},\ba|\bs)\rho_{z}(\bz)\left(36L^{2}_{s}t^{2}\left\|\hata_{0}\right\|_{2}^{2}+72L^{2}_{s}t^{2}\left\|\bm{\hat \mathbf{S}}\left(\ba, \bs,T\right)\right\|_{2}^{2}+36L^{2}_{s}t\|\bz\|^{2}_{2}\right)\dd\ba\dd {\hata_{0}}\dd\bz\\
\nonumber
=&\bigintsss_{\R^{p}}\bigintsss_{\R^{p}}\bigintsss_{\R^{p}}\rho_{0,t}({\hata_{0}},\ba|\bs)\rho_{z}(\bz)\left(36L^{2}_{s}t^{2}\left\|\hata_{0}\right\|_{2}^{2}+72L^{2}_{s}t^{2}\left\|\bm{\hat \mathbf{S}}\left(\ba,\bs,T\right)\right\|_{2}^{2}\right)\dd\ba\dd {\hata_{0}}\dd\bz\\
\nonumber
&~~~~~~~~+36L^{2}_{s}t\bigintsss_{\R^{p}}\bigintsss_{\R^{p}}\bigintsss_{\R^{p}}\rho_{0,t}({\hata_{0}},\ba|\bs)\rho_{z}(\bz)\|\bz\|^{2}_{2}\dd\ba\dd {\hata_{0}}\dd\bz\\
\label{app-eq-20}
=&36L^{2}_{s}t^{2}\bigintsss_{\R^{p}}\hatpi_{0}({\hata_{0}}|\bs)\left\|\hata_{0}\right\|_{2}^{2}\dd {\hata_{0}}
+72L^{2}_{s}t^{2}\bigintsss_{\R^{p}}\hatpi_{t}(\ba|\bs)\left\|\bm{\hat \mathbf{S}}\left(\ba,\bs,T\right)\right\|_{2}^{2}\dd \ba
+36L^{2}_{s}pt\\
\label{app-eq-23}
=&36L^{2}_{s}pt^{2}+
72L^{2}_{s}t^{2}\bigintsss_{\R^{p}}\hatpi_{t}(\ba|\bs)\left\|\bm{\hat \mathbf{S}}\left(\ba,\bs,T\right)\right\|_{2}^{2}\dd {\ba}
+36L^{2}_{s}pt\\
\label{app-eq-24}
\leq&36pt(1+t)L^{2}_{s}+
144t^{2}L^{2}_{s}\bigintsss_{\R^{p}}\hatpi_{t}(\ba|\bs)\left(\Big\|\bm{\hat \mathbf{S}}(\ba,\bs,T)-\bnabla\log\tildepi_t(\ba|\bs)\Big\|_{2}^{2}
+
\Big\|\bnabla\log\tildepi_t(\ba|\bs)\Big\|_{2}^{2}
\right)\dd {\ba},
\end{flalign}
\endgroup
where the first term in Eq.(\ref{app-eq-20}) holds since: 
\[
\int_{\R^{p}}\int_{\R^{p}}\rho_{0,t}({\hata_{0}},\ba|\bs)\rho_{z}(\bz)\dd\ba\dd\bz=\hatpi_{0}({\hata_{0}}|\bs);
\]
the second term in Eq.(\ref{app-eq-20}) holds since: 
\[
\int_{\R^{p}}\int_{\R^{p}}\int_{\R^{p}}\rho_{0,t}({\hata_{0}},\ba|\bs)\rho_{z}(\bz)\dd {\hata_{0}}\dd\bz=\hatpi_{t}(\ba|\bs);
\]
the third term in Eq.(\ref{app-eq-20}) holds since: $\bz\sim\calN(\bm{0},\bI)$, then $\|\bz\|_{2}^{2}\sim\chi ^{2}(p)$-distribution with $p$ degrees of freedom, then 
\begin{flalign}
\label{app-eq-22}
\int_{\R^{p}}\int_{\R^{p}}\int_{\R^{p}}\rho_{0,t}({\hata_{0}},\ba|\bs)\rho_{z}(\bz)\|\bz\|^{2}_{2}\dd\ba\dd {\hata_{0}}\dd\bz=p;
\end{flalign}
Eq.(\ref{app-eq-23}) holds with the same analysis of (\ref{app-eq-22}), since ${\hata_{0}}\sim\calN(\bm{0},\bI)$, then $\|{\hata_{0}}\|_{2}^{2}\sim\chi ^{2}(p)$, which implies
\[
\int_{\R^{p}}\hatpi_{0}({\hata_{0}}|\bs)\left\|\hata_{0}\right\|_{2}^{2}\dd {\hata_{0}}=p;
\]
Eq.(\ref{app-eq-24}) holds since we use the fact: $\|\langle \bm{\alpha}+\bm{\beta}\rangle\|_{2}^{2}\leq 2\|\bm{\alpha}\|_{2}^{2}+2\|\bm{\beta}\|_{2}^{2}$.
\end{proof}

\section{Details and Discussions for multimodal Experiments}

\label{app-sec-multi-goal}

In this section, we present all the implementation details and the plots of both 2D and 3D Visualization.
Then we provide additional discussions for empirical results of the task of the multimodal environment in Section \ref{diff-me-po-re}.

\begin{figure*}[t]
    \centering
    {\includegraphics[width=3cm,height=3cm]{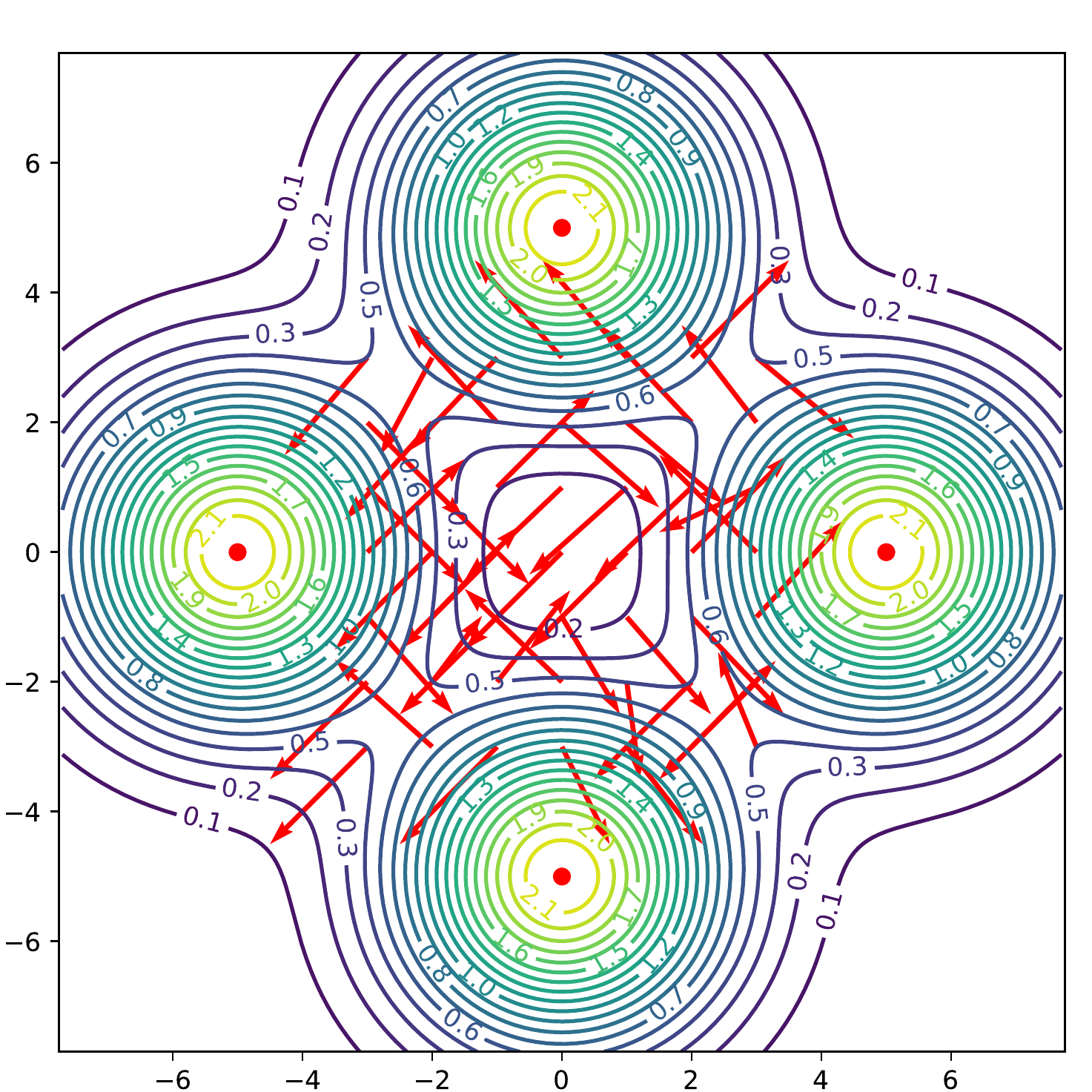}}
        {\includegraphics[width=3cm,height=3cm]{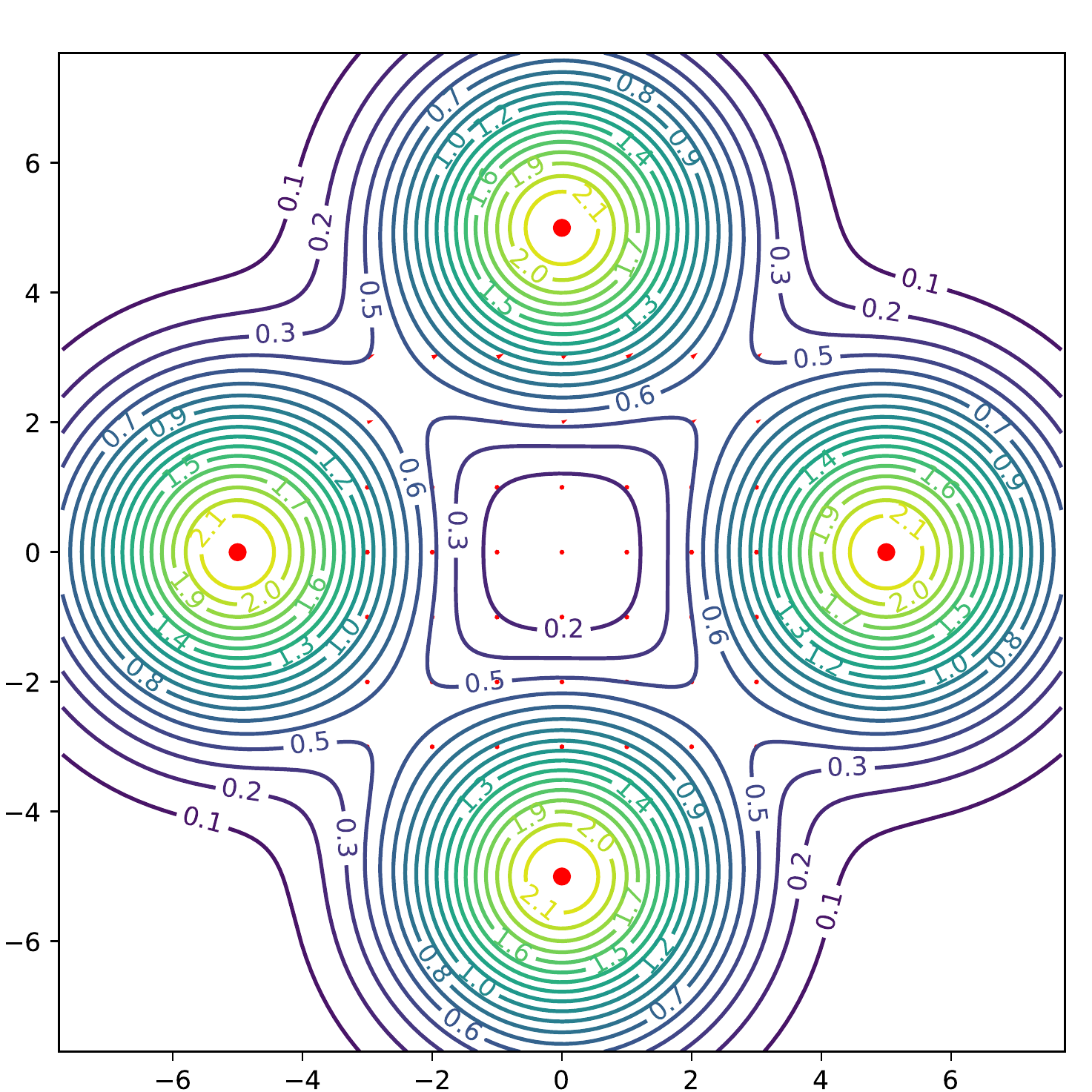}}
            {\includegraphics[width=3cm,height=3cm]{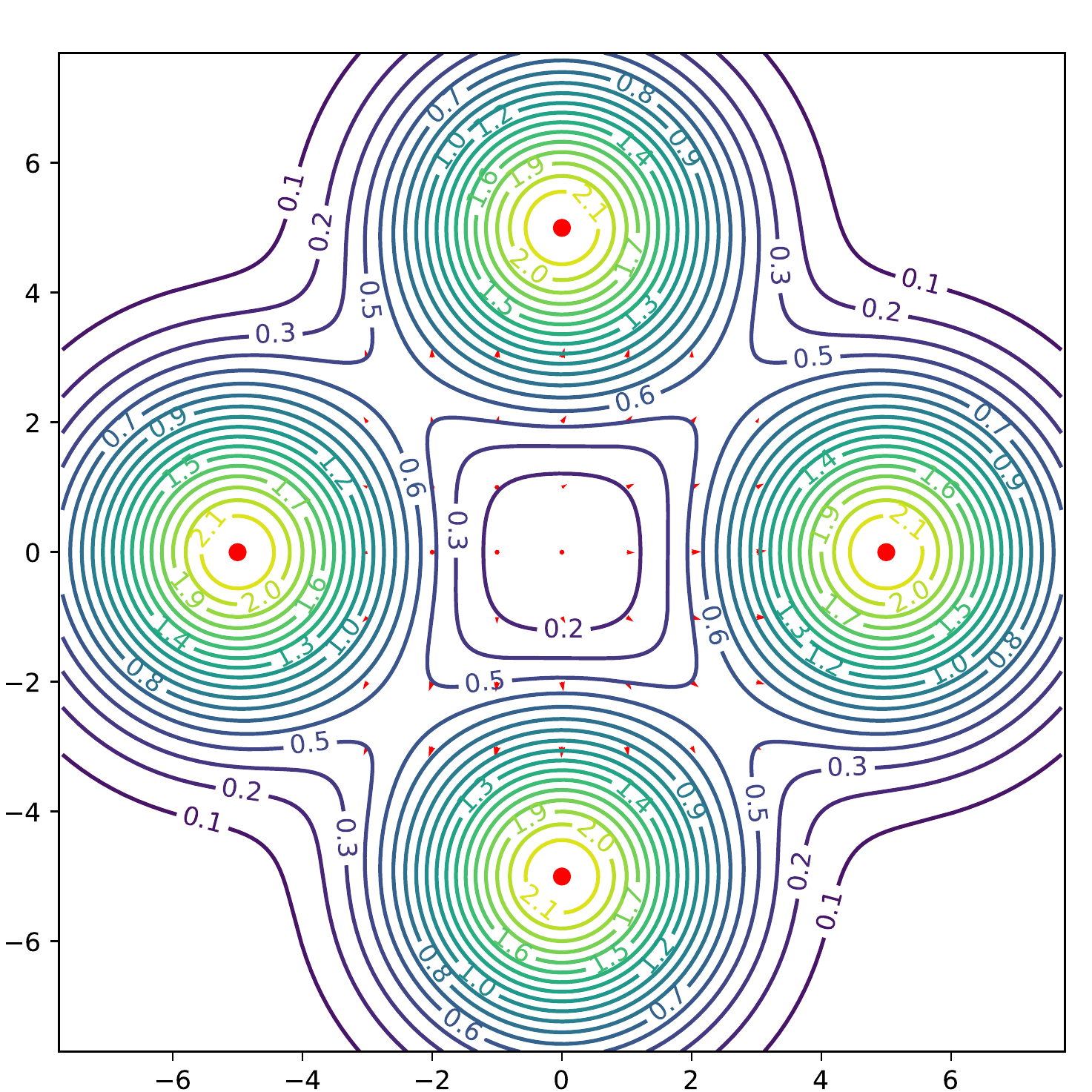}}
                {\includegraphics[width=3cm,height=3cm]{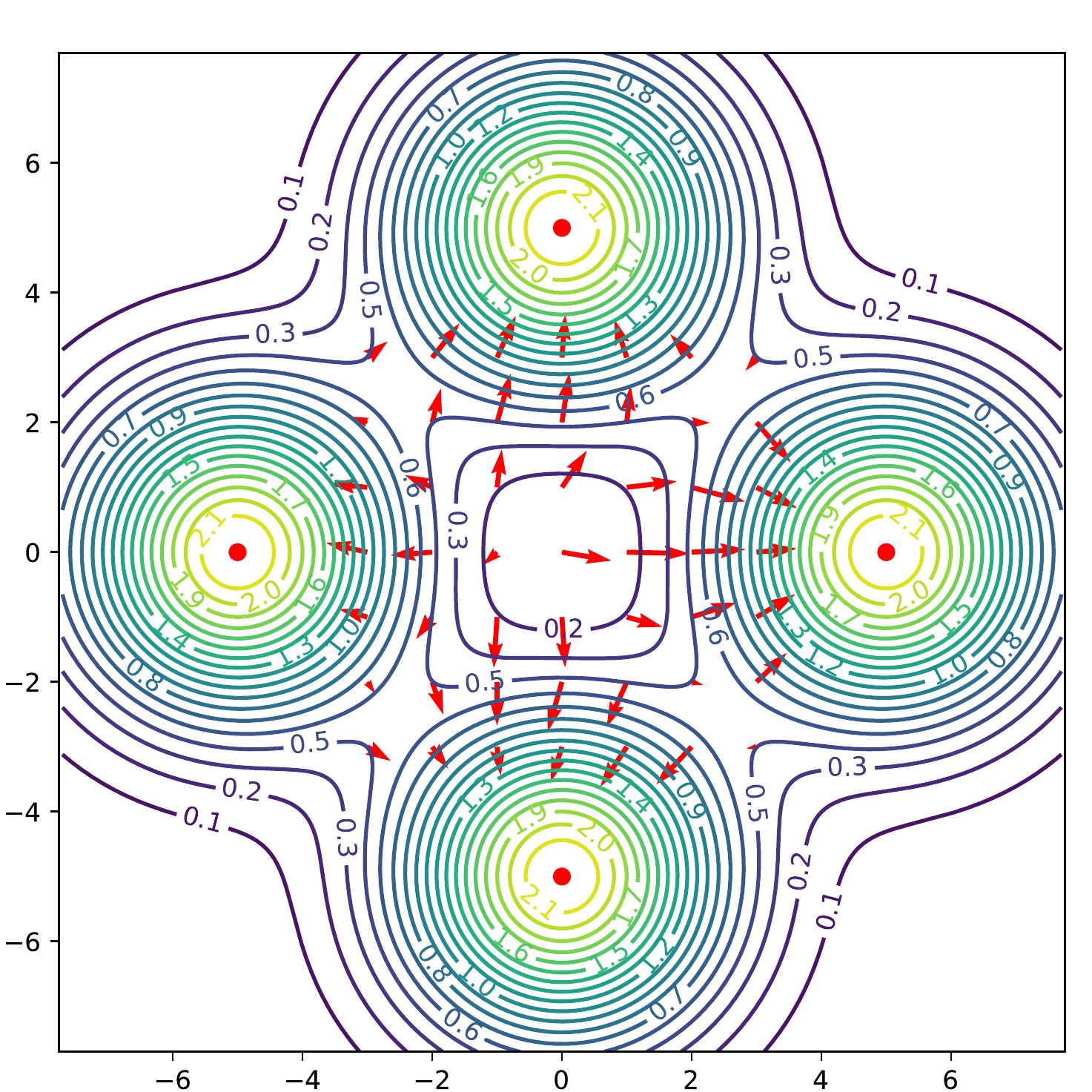}}
                    {\includegraphics[width=3cm,height=3cm]{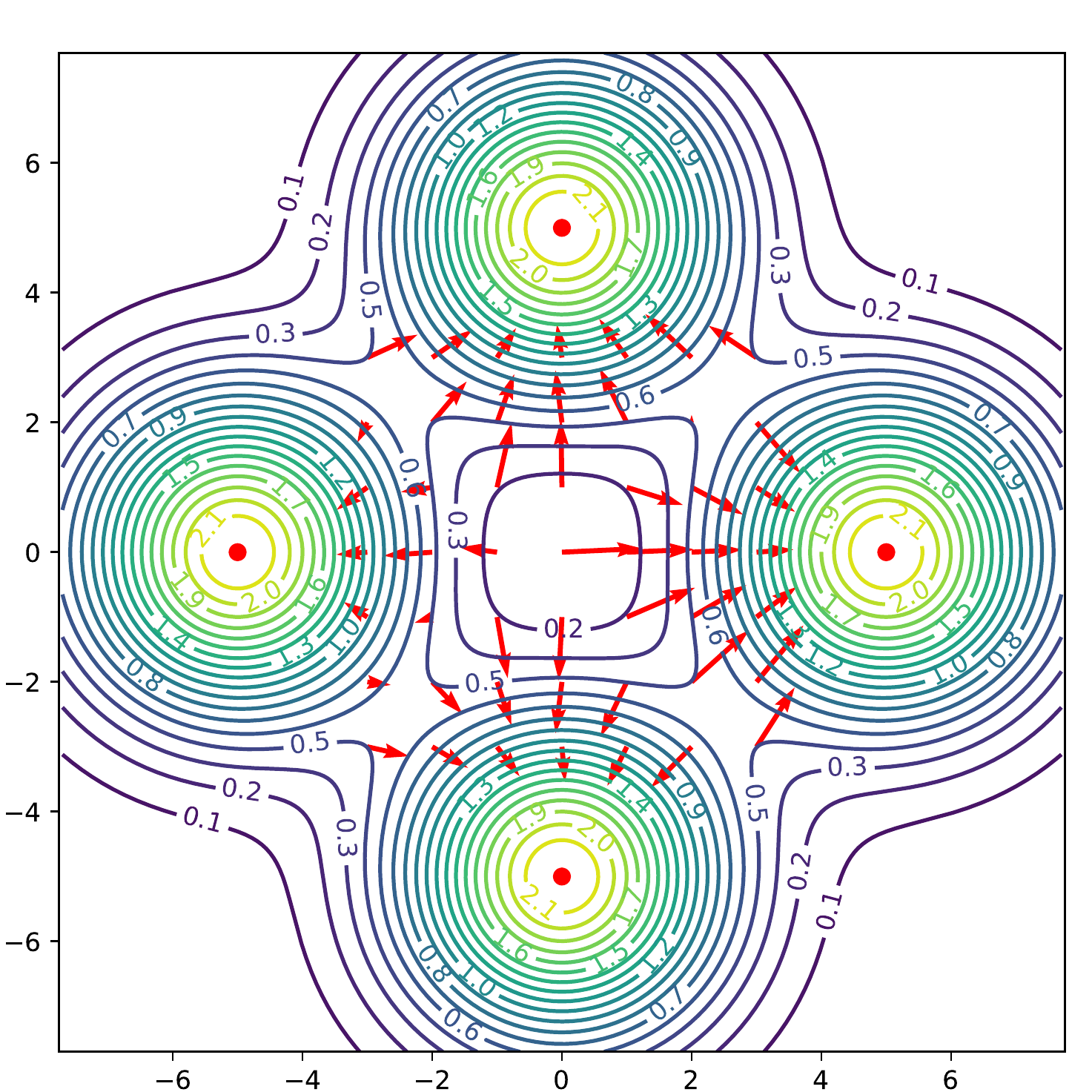}}
 \subfigure[1E$3$ iterations]
    {\includegraphics[width=3cm,height=3cm]{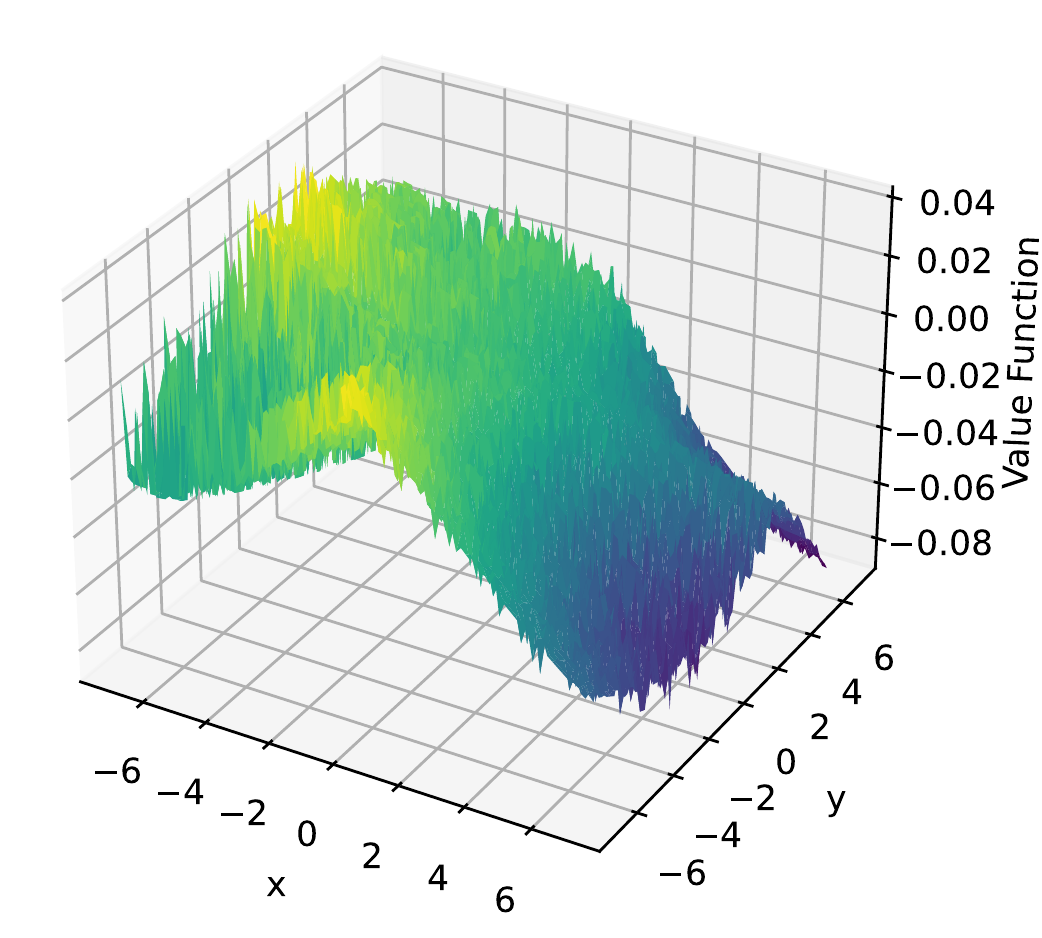}}
         \subfigure[2E$3$ iterations]
        {\includegraphics[width=3cm,height=3cm]{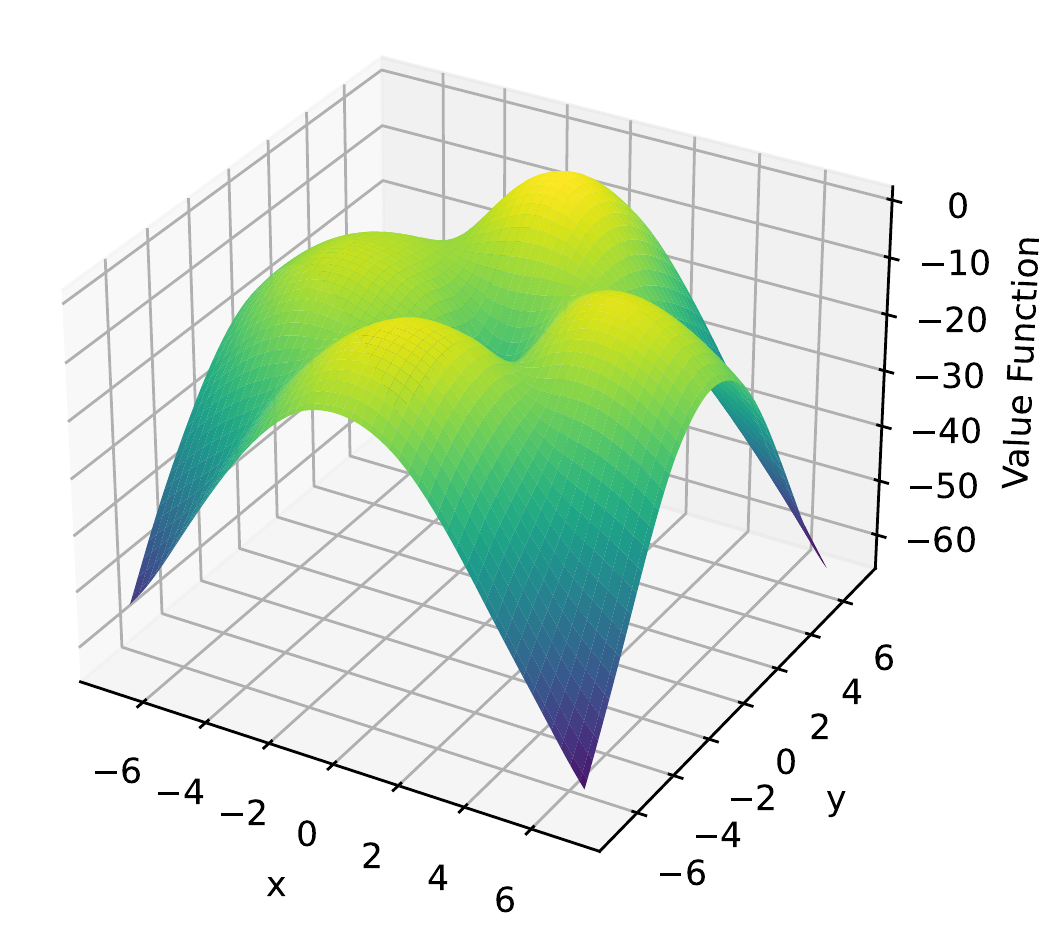}}
             \subfigure[3E$3$ iterations]
            {\includegraphics[width=3cm,height=3cm]{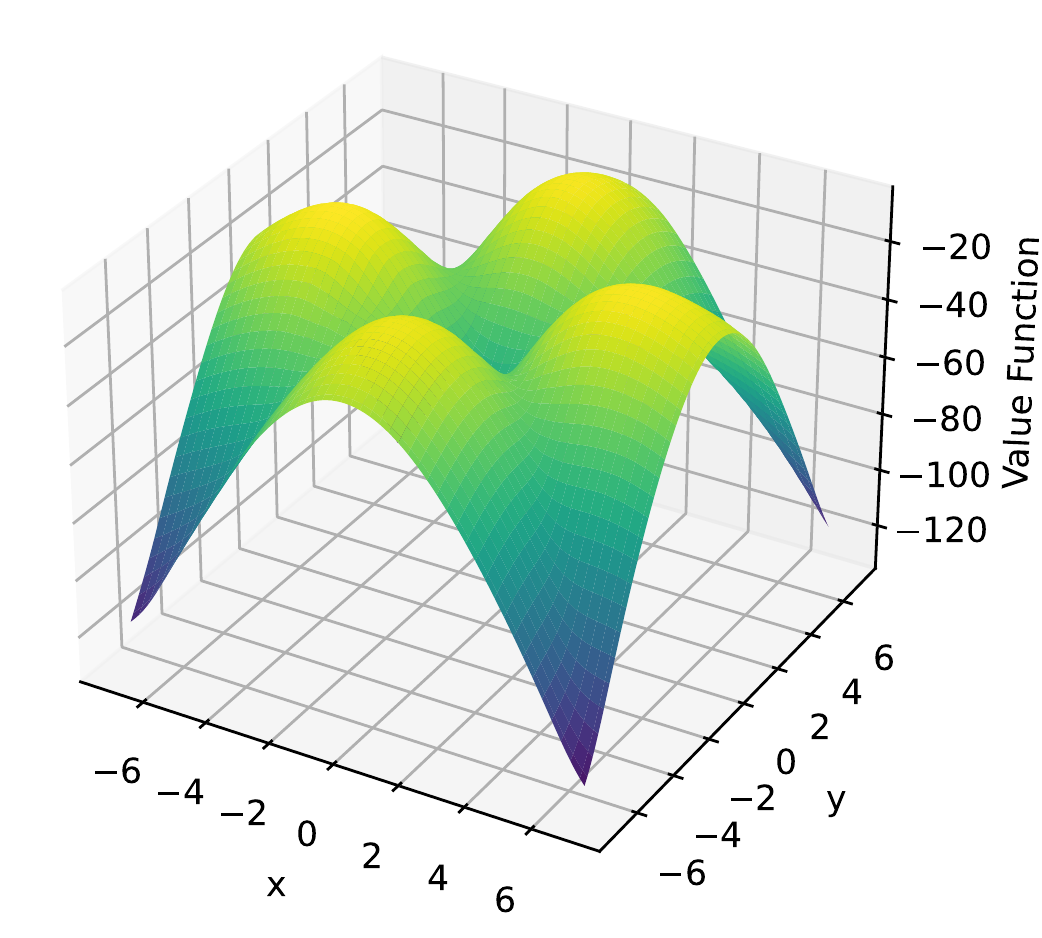}}
                 \subfigure[4E$3$ iterations]
                {\includegraphics[width=3cm,height=3cm]{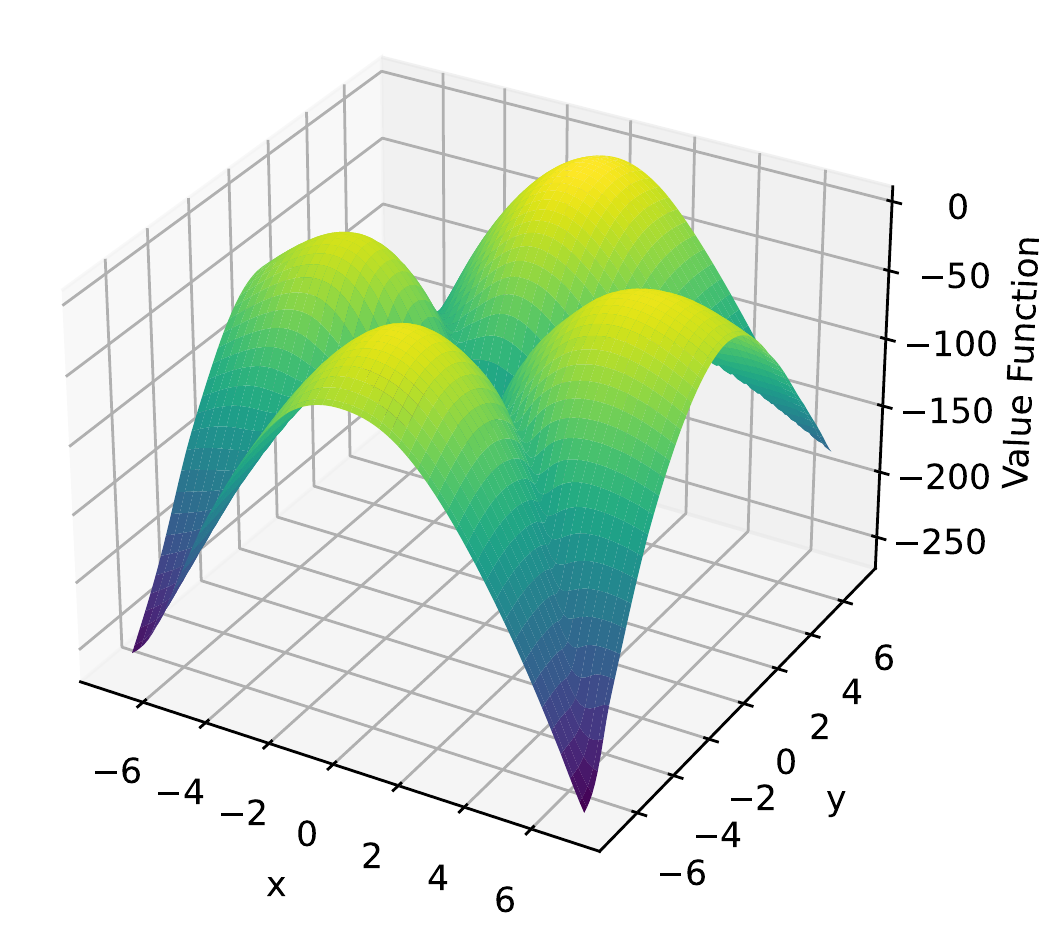}}
                     \subfigure[5E$3$ iterations]
                    {\includegraphics[width=3cm,height=3cm]{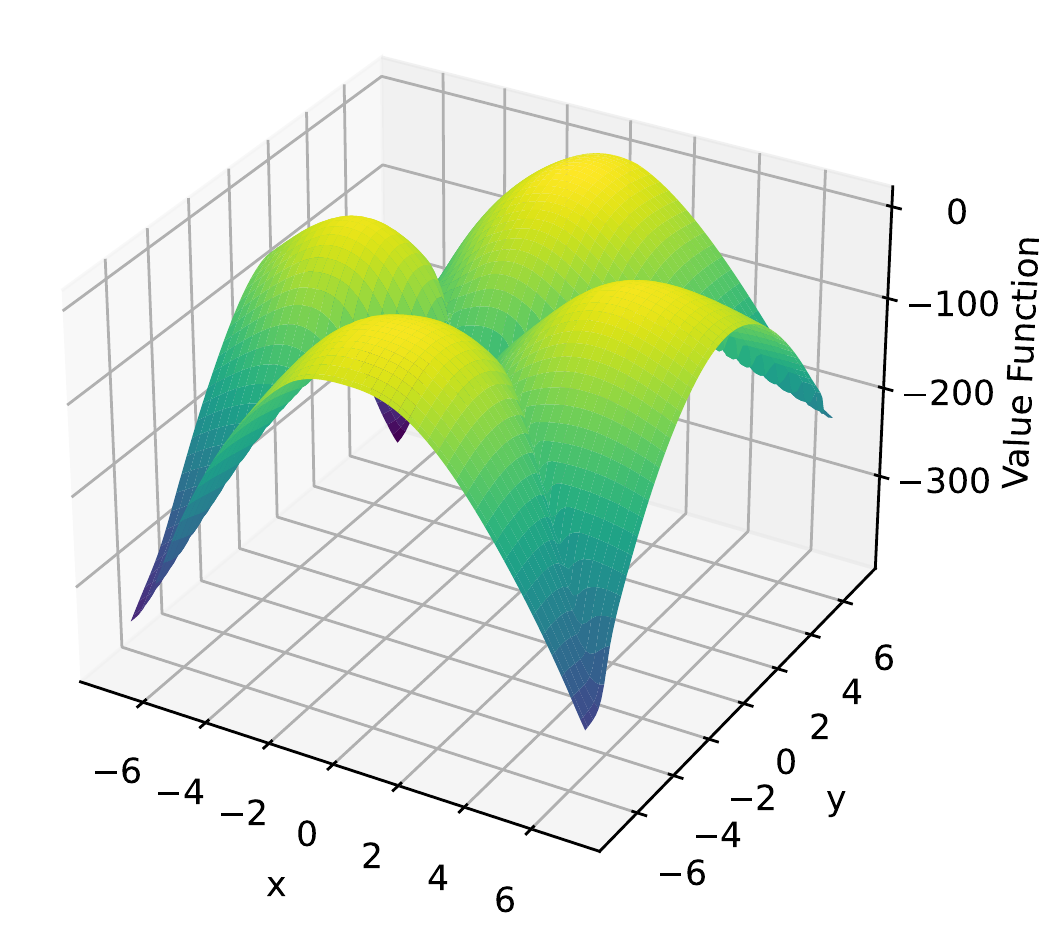}}
 {\includegraphics[width=3cm,height=3cm]{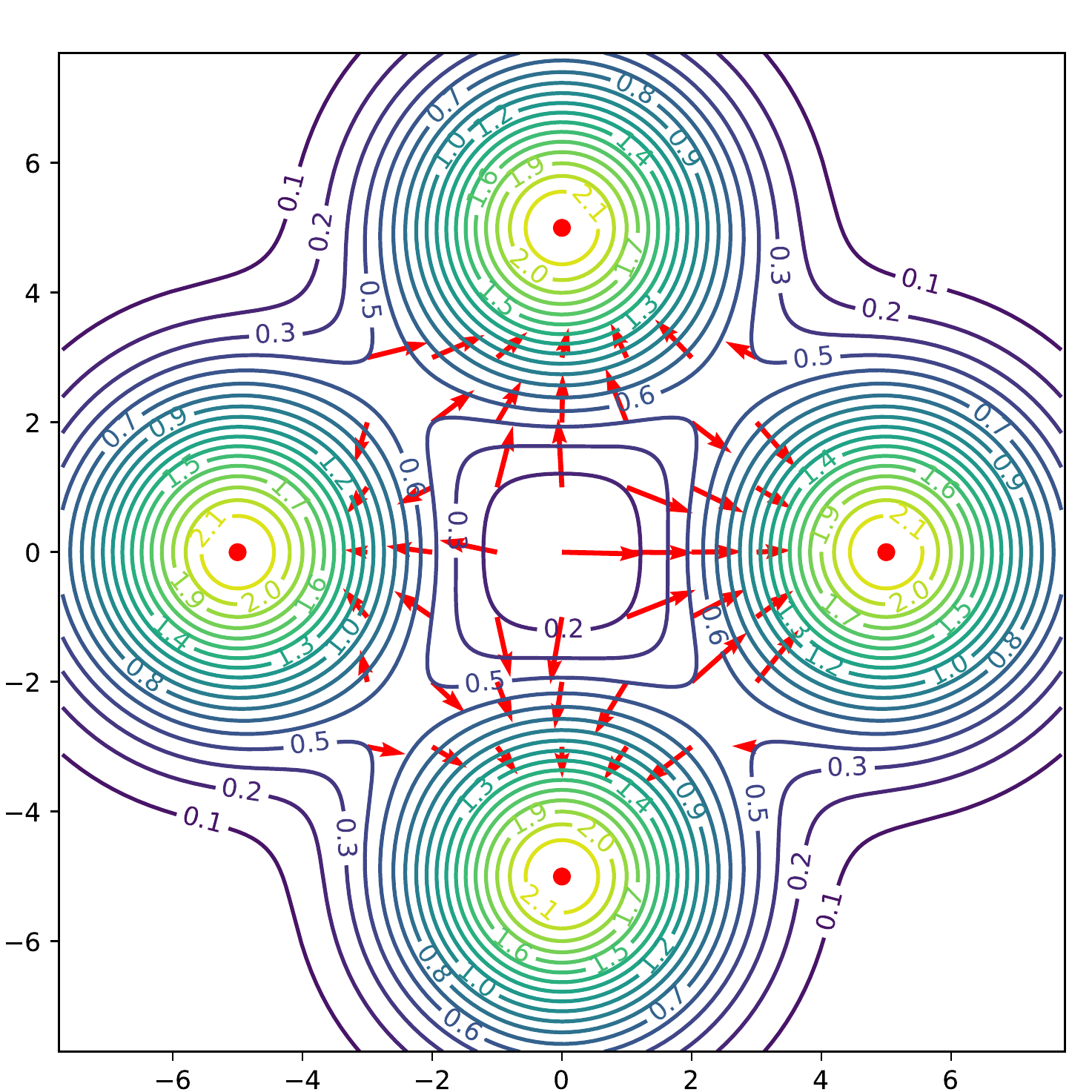}}
        {\includegraphics[width=3cm,height=3cm]{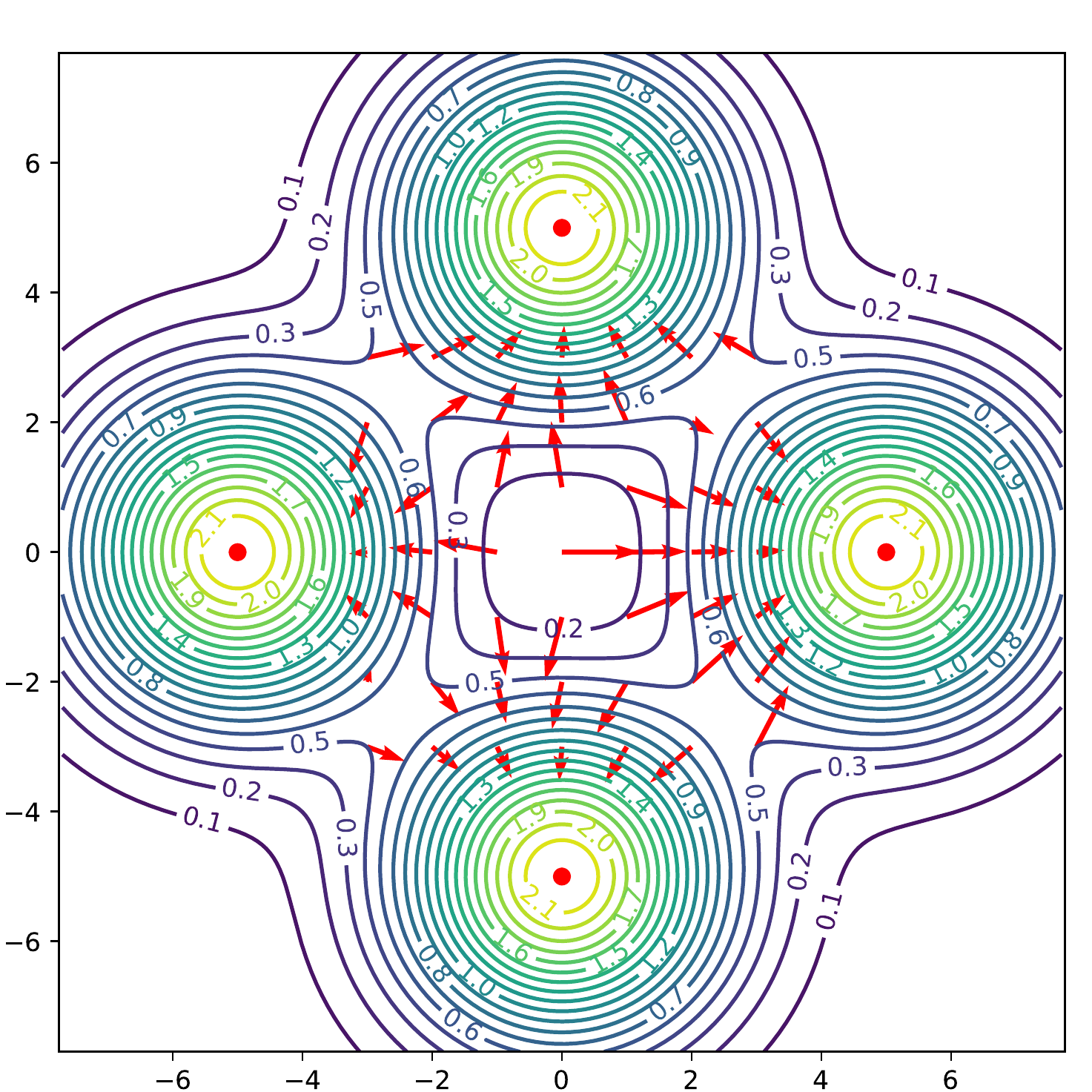}}
            {\includegraphics[width=3cm,height=3cm]{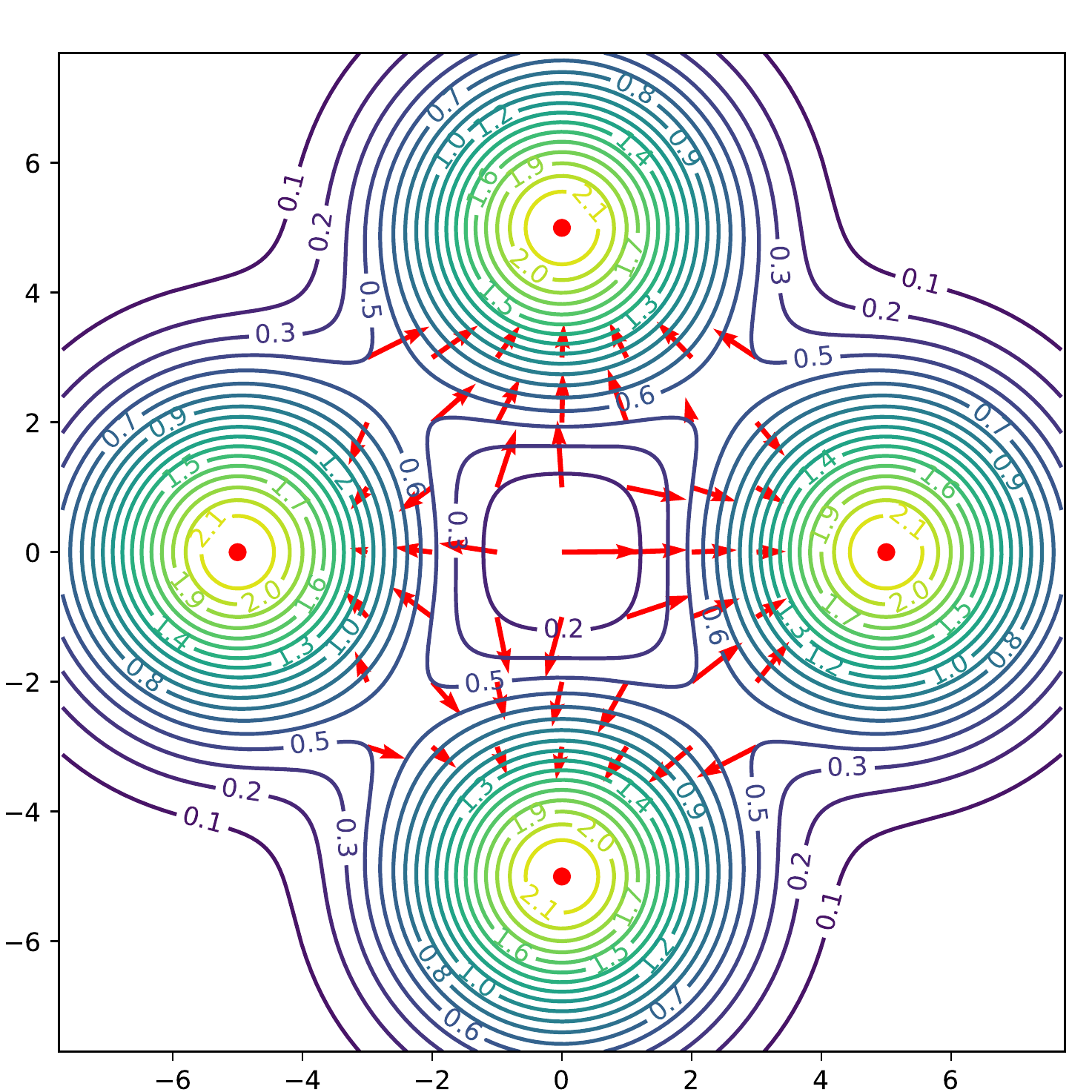}}
                {\includegraphics[width=3cm,height=3cm]{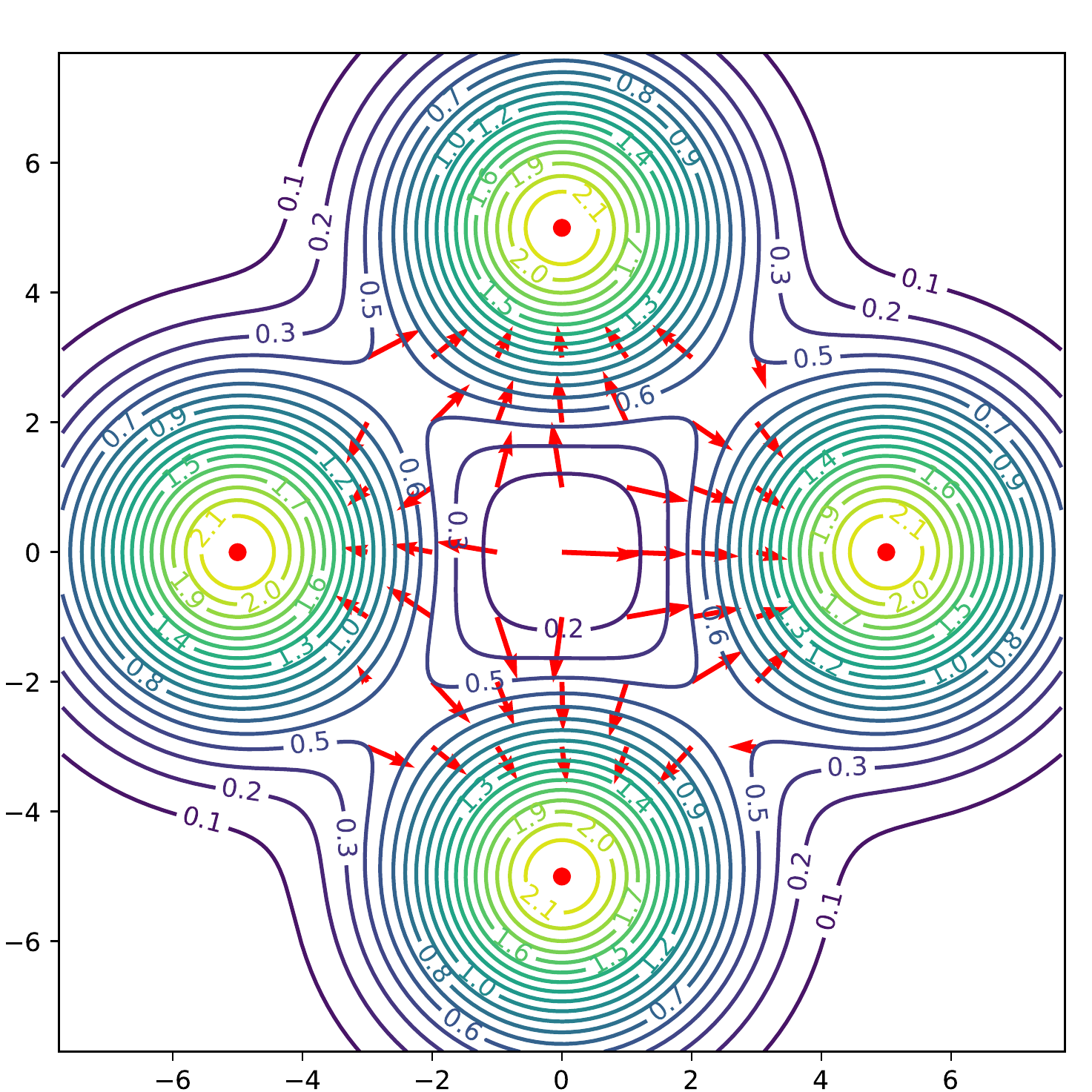}}
                    {\includegraphics[width=3cm,height=3cm]{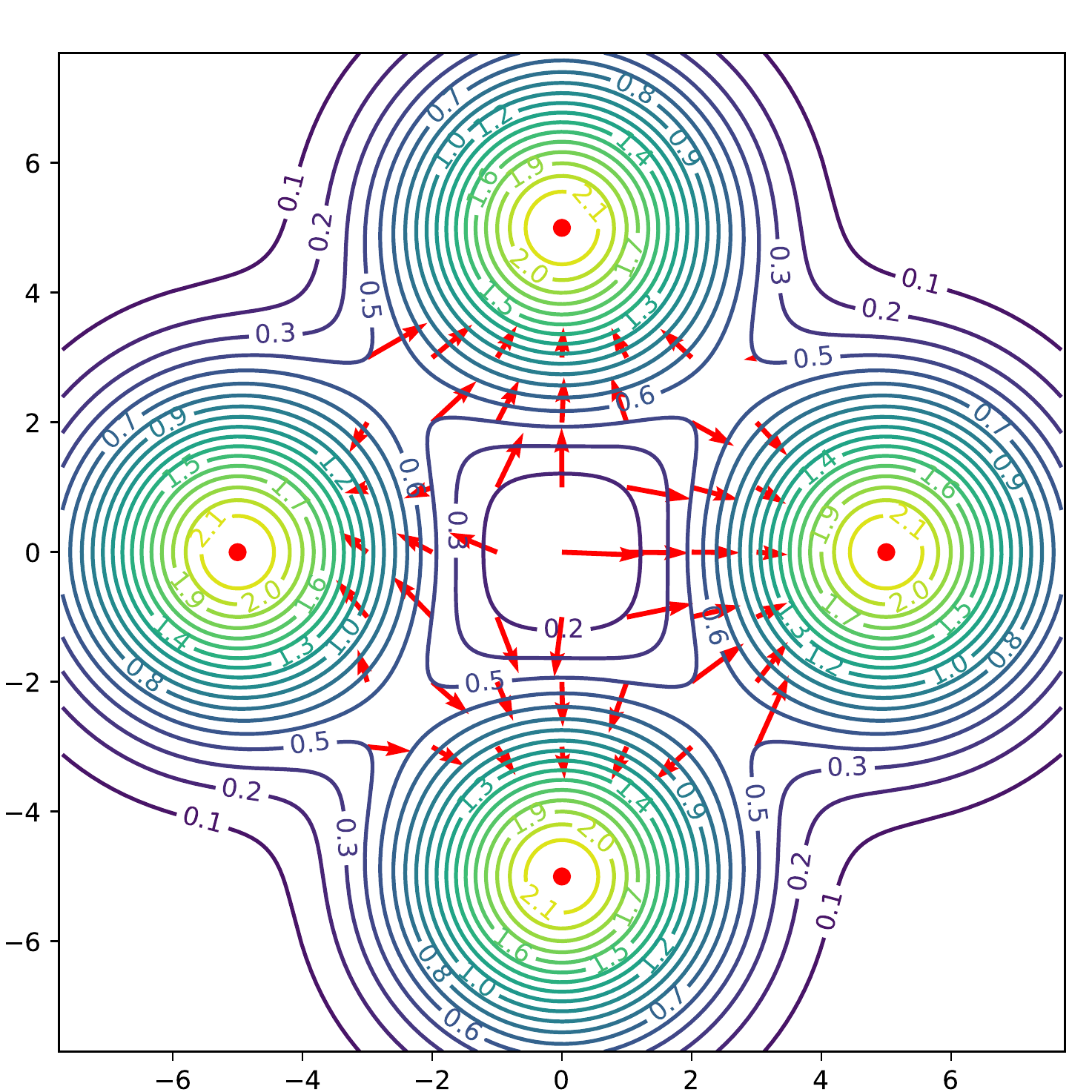}}
 \subfigure[6E$3$ iterations]
    {\includegraphics[width=3cm,height=3cm]{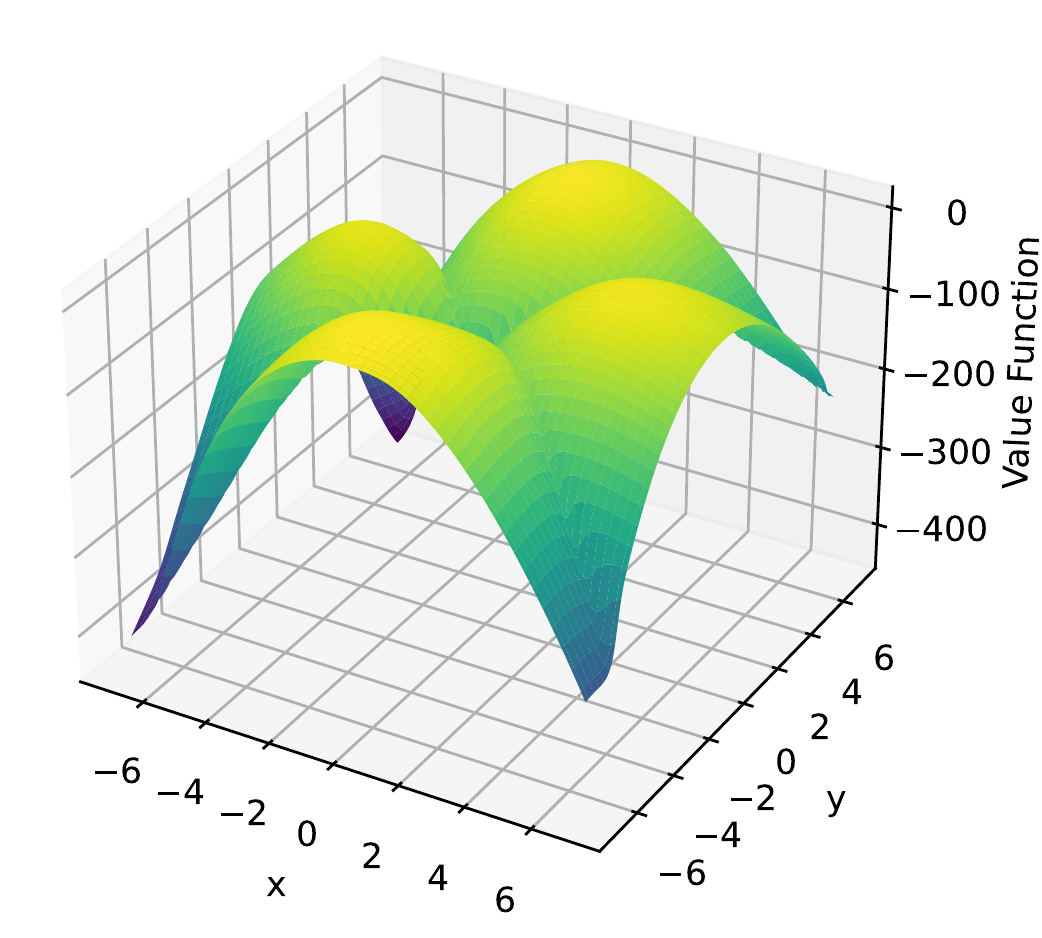}}
         \subfigure[7E$3$ iterations]
        {\includegraphics[width=3cm,height=3cm]{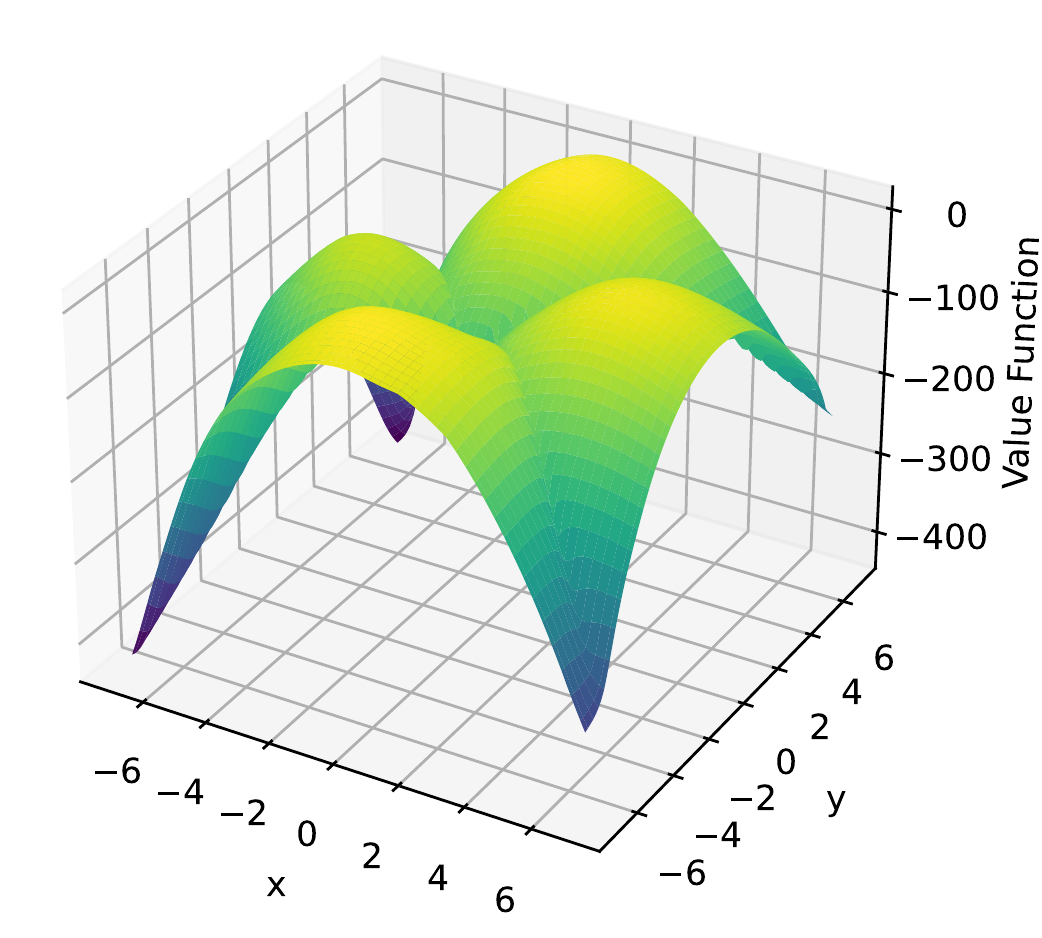}}
             \subfigure[8E$3$ iterations]
            {\includegraphics[width=3cm,height=3cm]{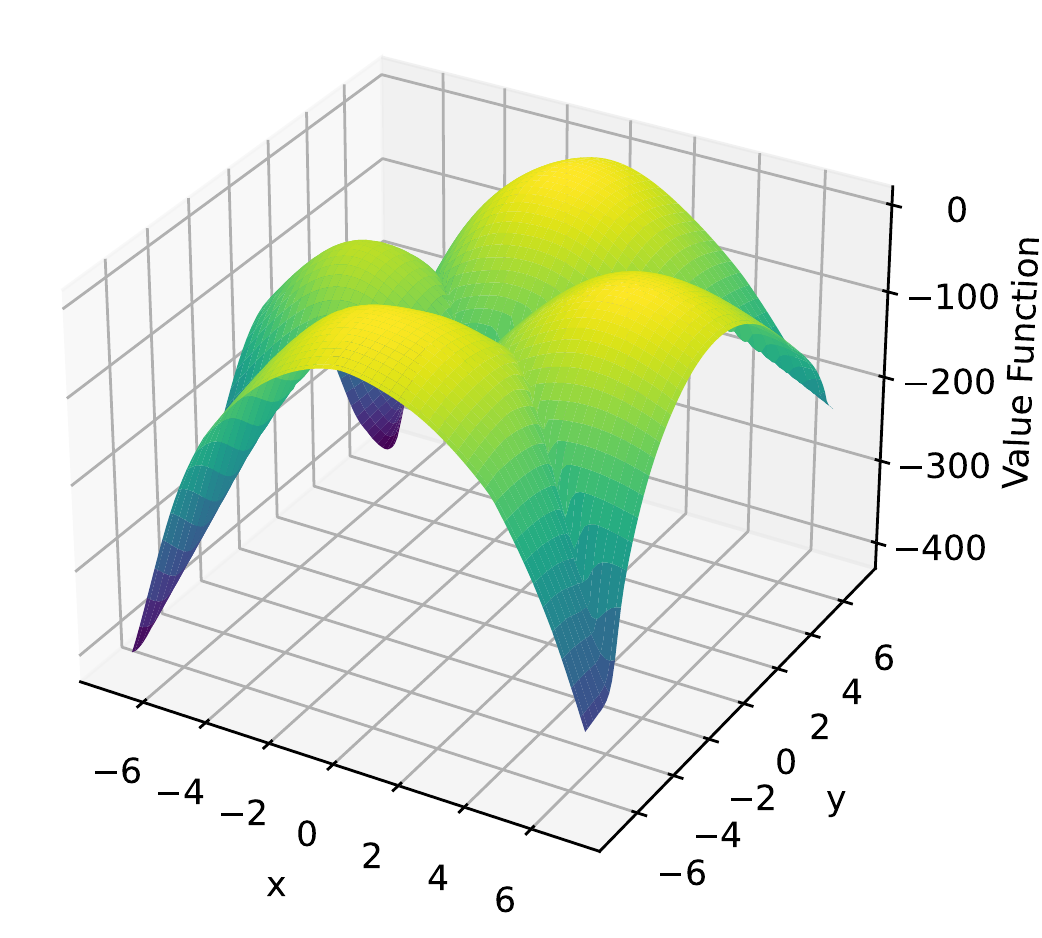}}
                 \subfigure[9E$3$ iterations]
                {\includegraphics[width=3cm,height=3cm]{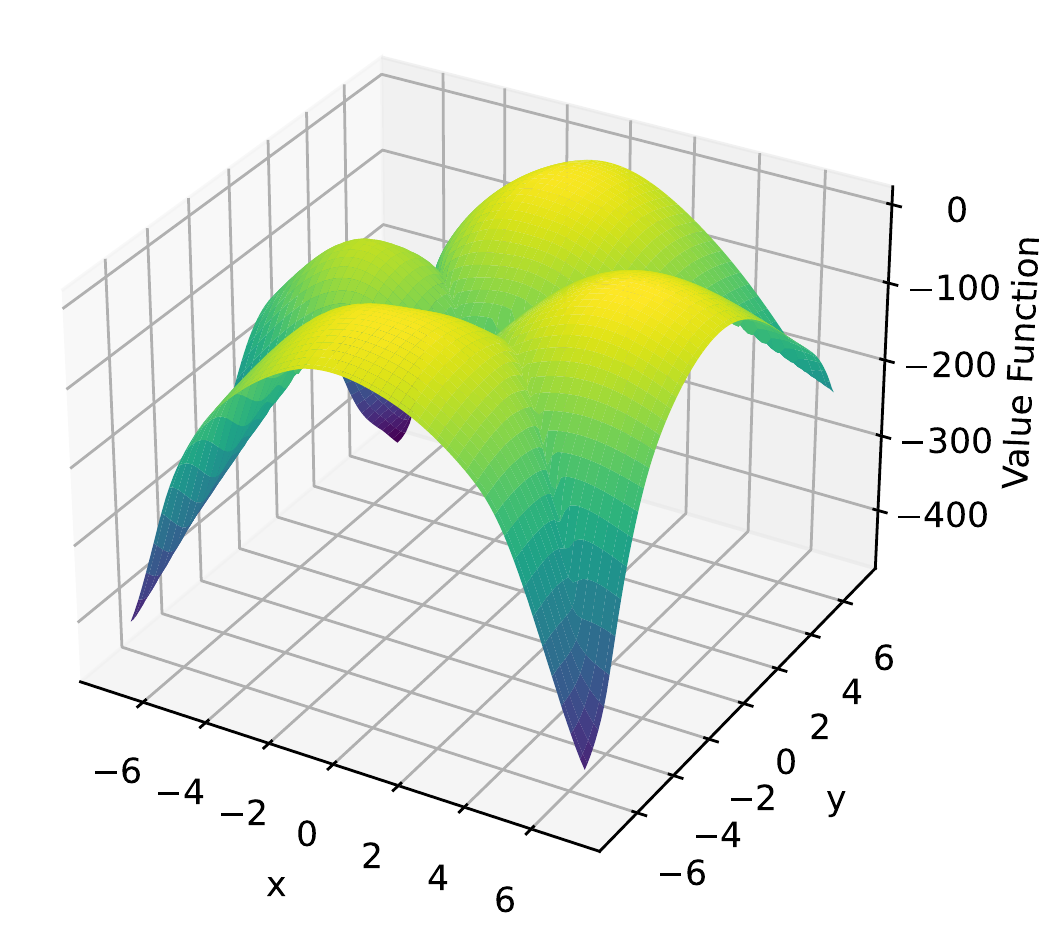}}
                     \subfigure[10E$3$ iterations]
                    {\includegraphics[width=3cm,height=3cm]{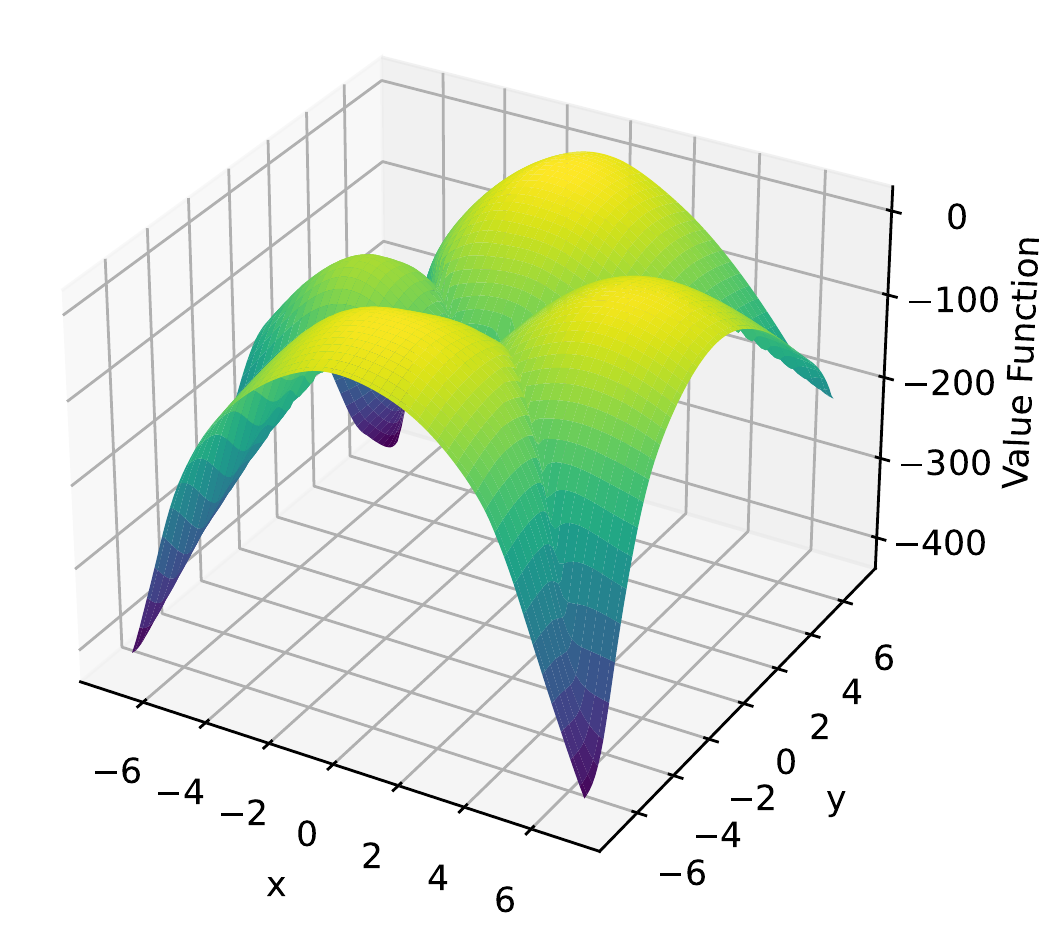}}
    \caption
    {Policy representation comparison of diffusion policy with different iterations.
    }
     \label{app-fig-diffusion-policy}
\end{figure*}

\subsection{Multimodal Environment }

In this section, we clarify the task and reward of the multimodal environment.


\subsubsection{Task} 
We design a simple “multi-goal” environment according to the \emph{Didactic Example} \citep{haarnoja2017reinforcement}, in which the agent is a 2D point mass on the $7\times7$ plane, and the agent tries to reach one of four points $(0,5)$, $(0,-5)$, $(5,0)$ and $(-5,0)$ symmetrically placed goals.

\begin{figure*}[t]
    \centering
    {\includegraphics[width=3cm,height=3cm]{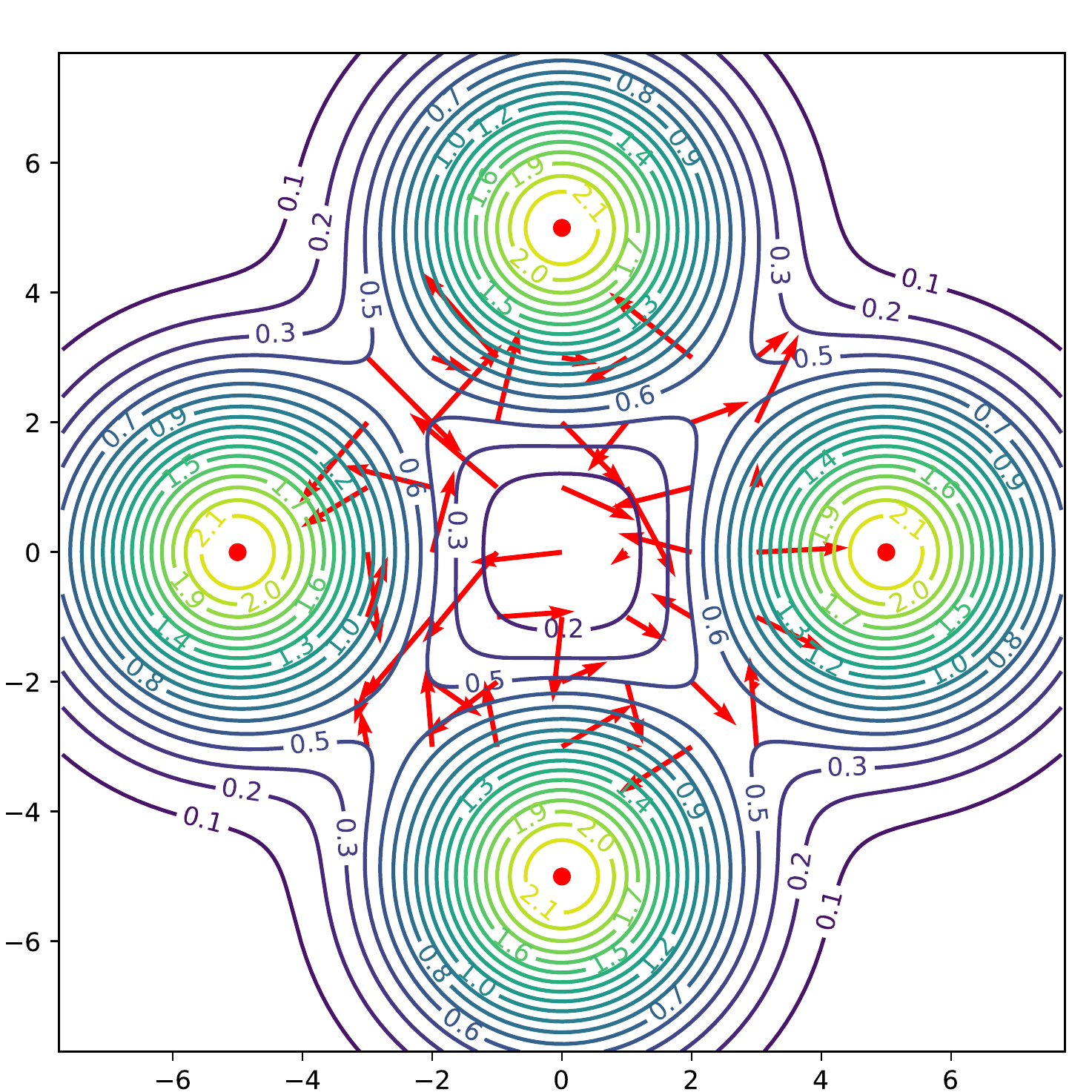}}
        {\includegraphics[width=3cm,height=3cm]{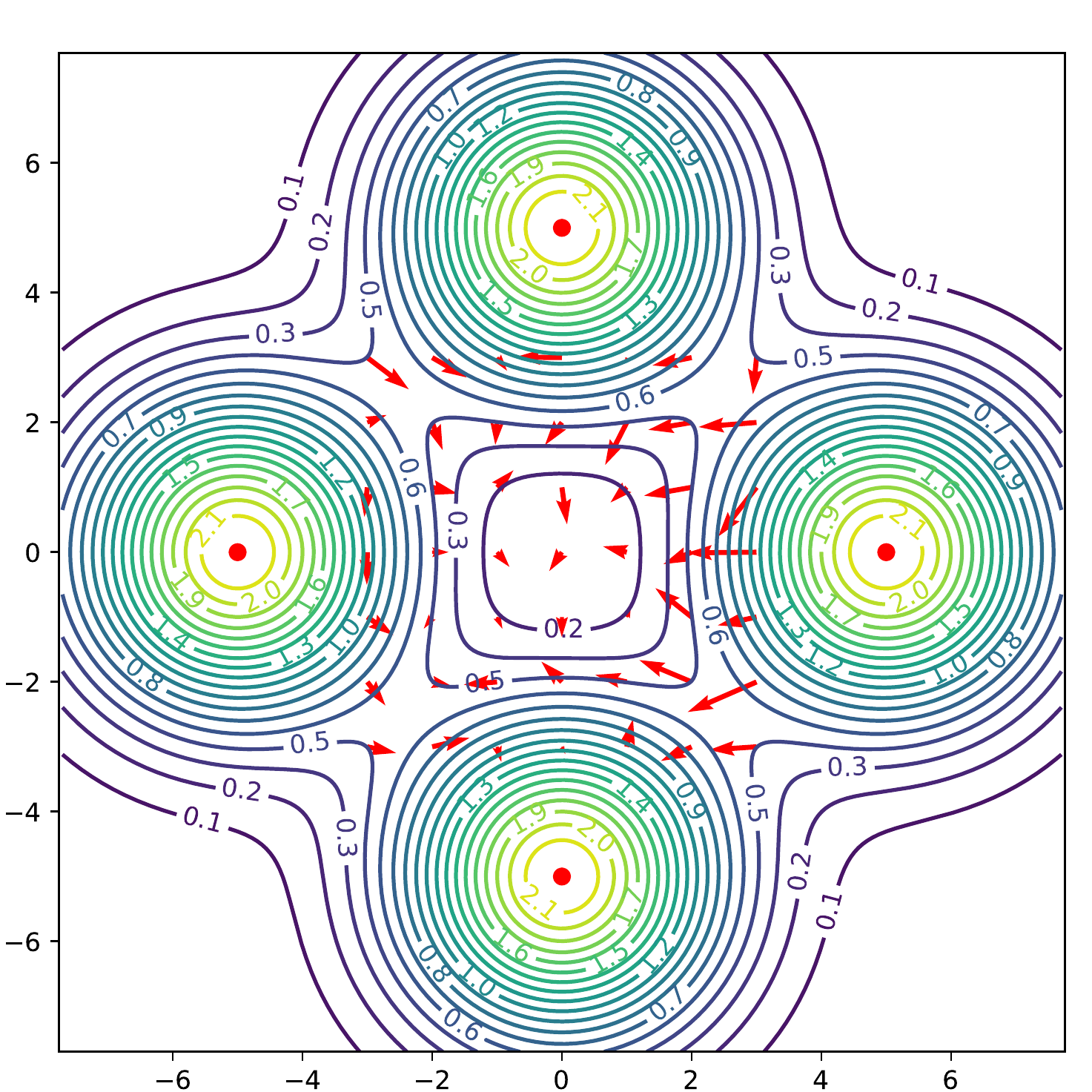}}
            {\includegraphics[width=3cm,height=3cm]{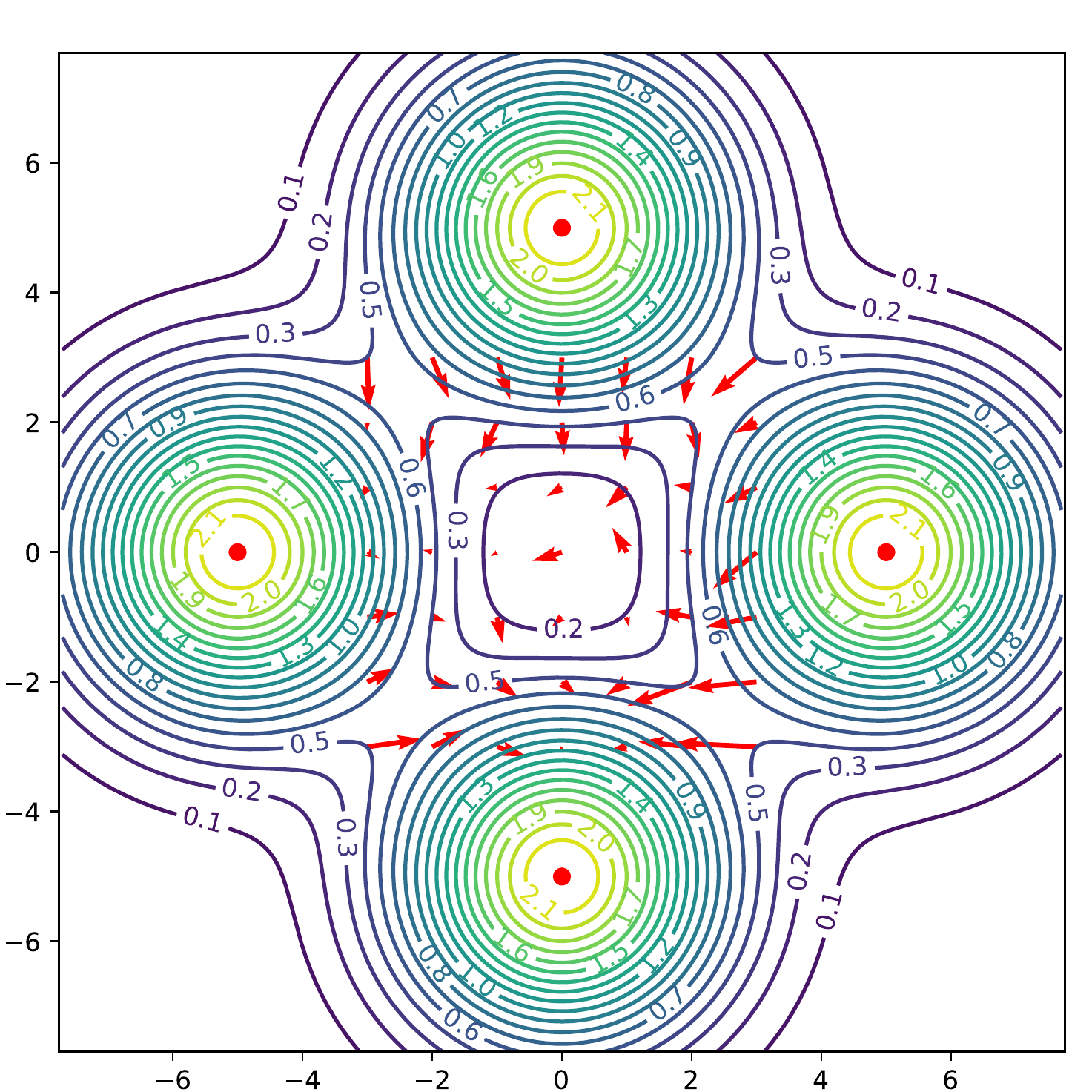}}
                {\includegraphics[width=3cm,height=3cm]{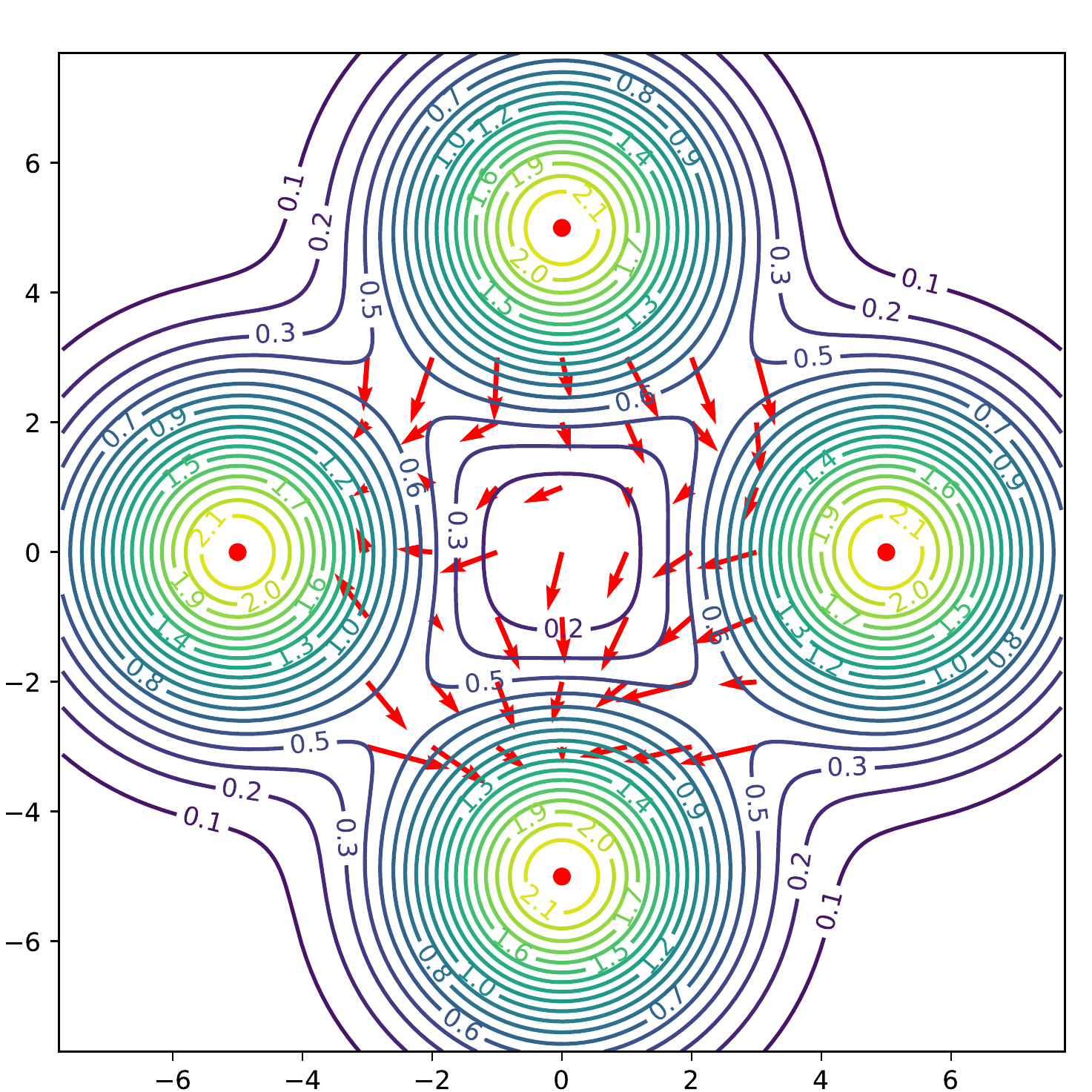}}
                    {\includegraphics[width=3cm,height=3cm]{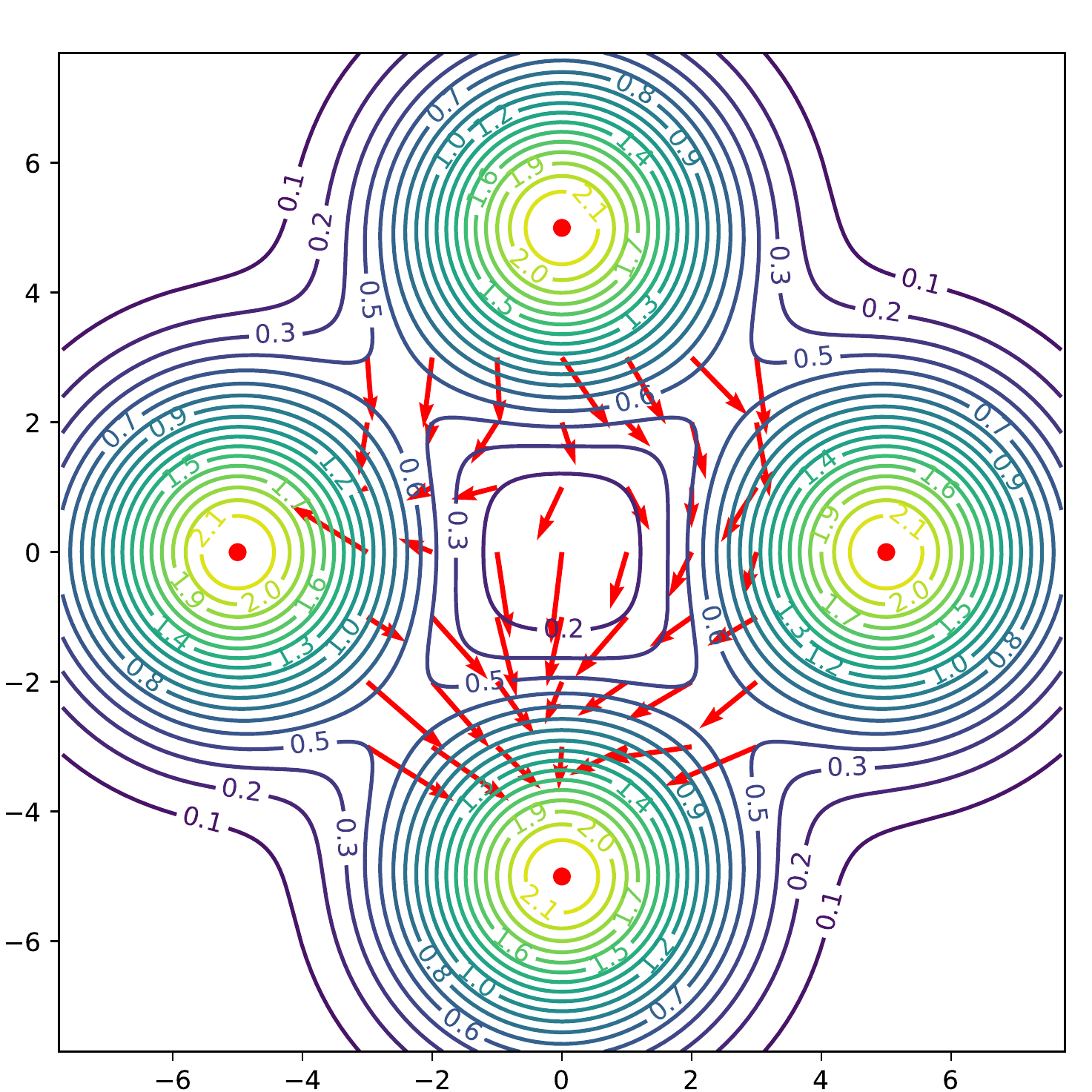}}
 \subfigure[1E$3$ iterations]
    {\includegraphics[width=3cm,height=3cm]{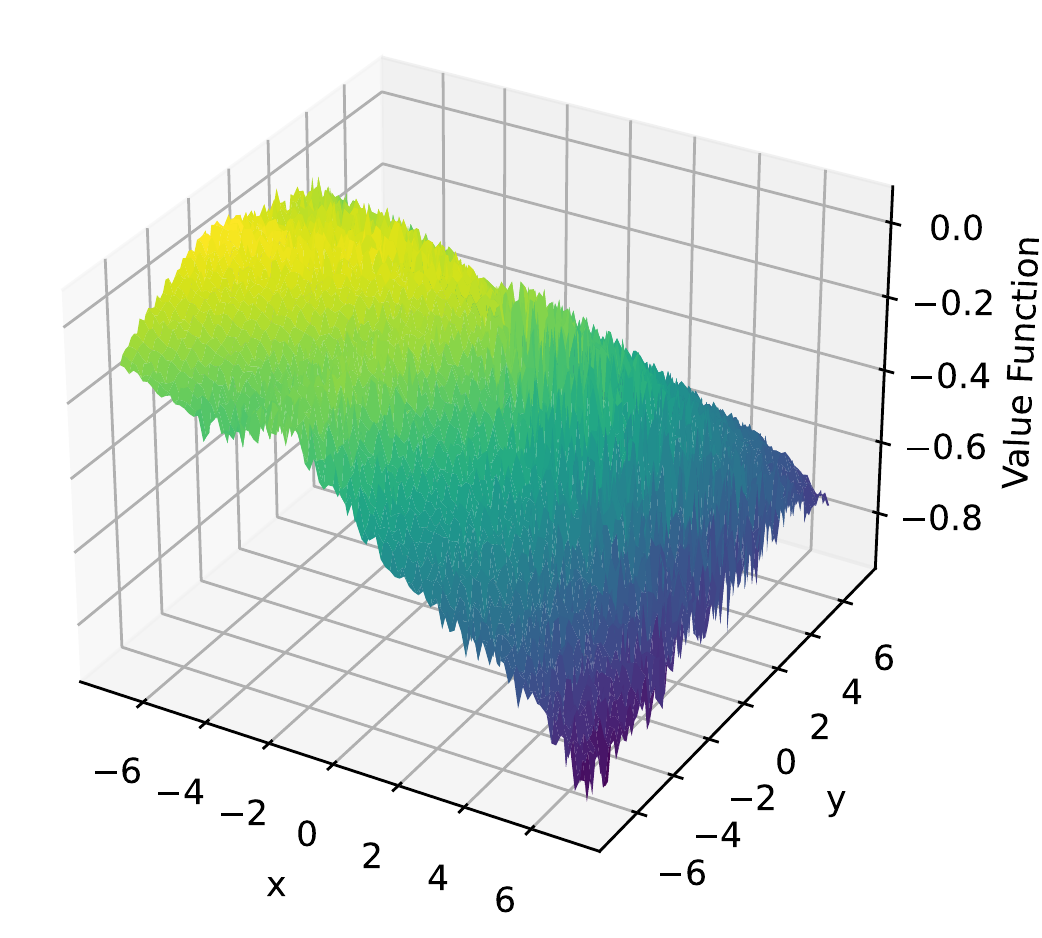}}
         \subfigure[2E$3$ iterations]
        {\includegraphics[width=3cm,height=3cm]{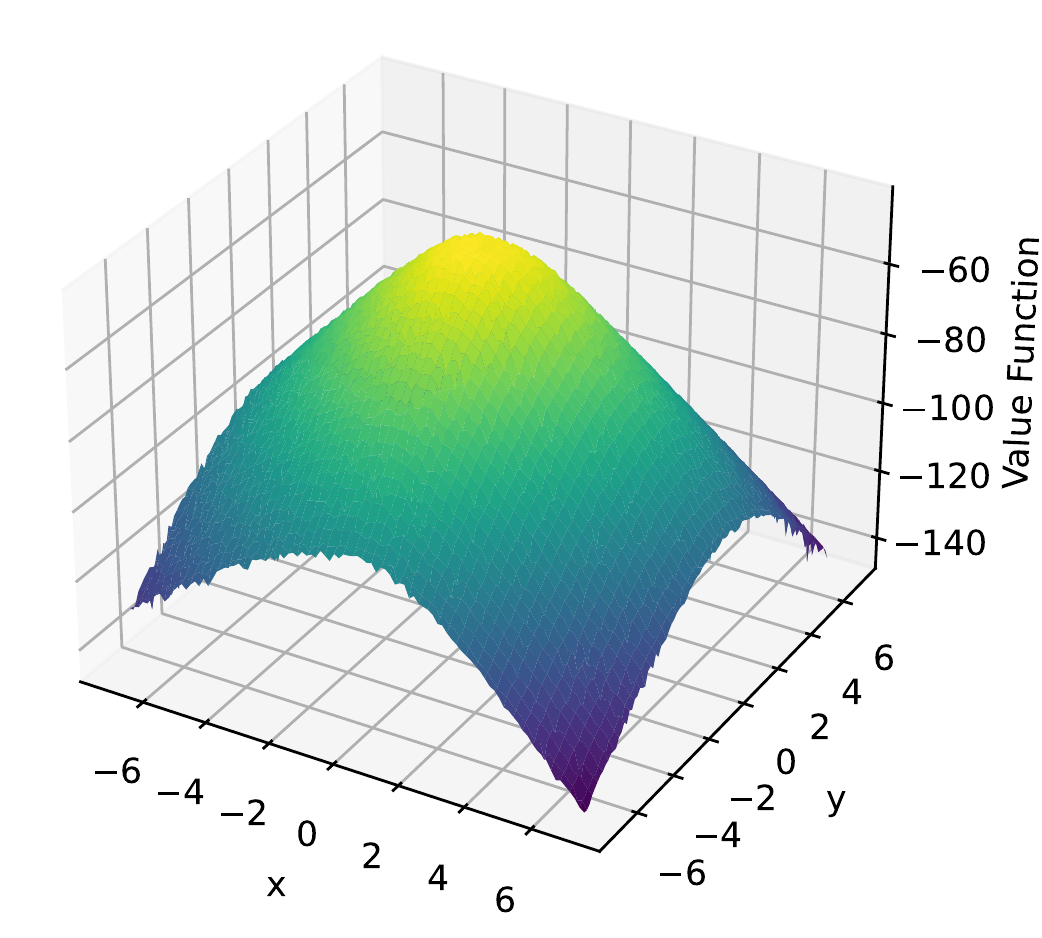}}
             \subfigure[3E$3$ iterations]
            {\includegraphics[width=3cm,height=3cm]{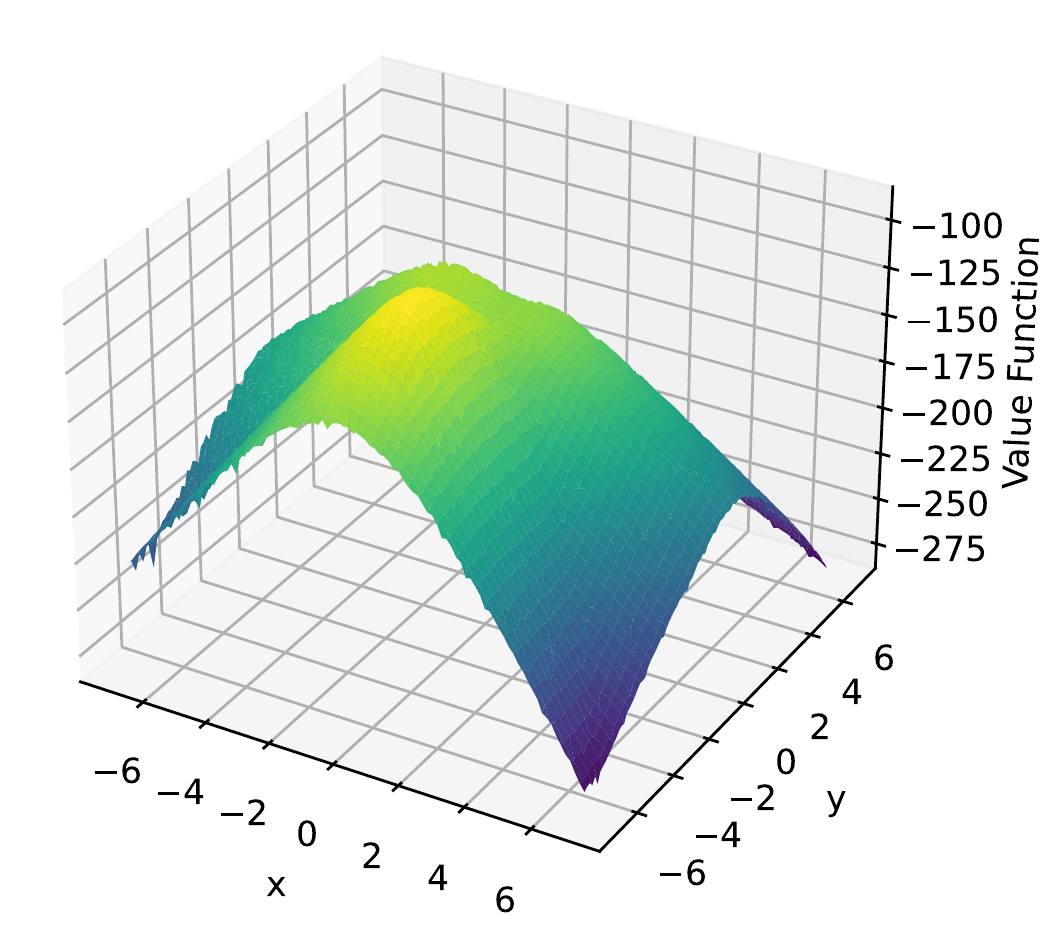}}
                 \subfigure[4E$3$ iterations]
                {\includegraphics[width=3cm,height=3cm]{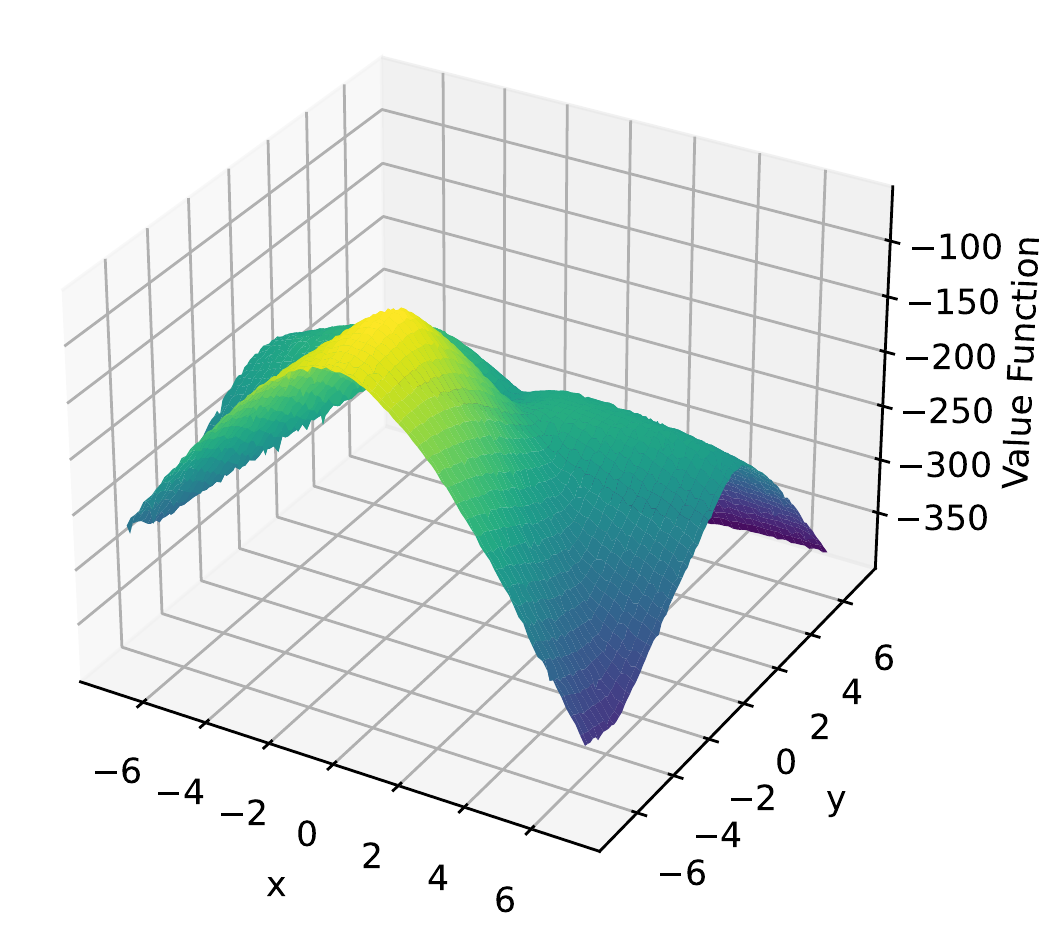}}
                     \subfigure[5E$3$ iterations]
                    {\includegraphics[width=3cm,height=3cm]{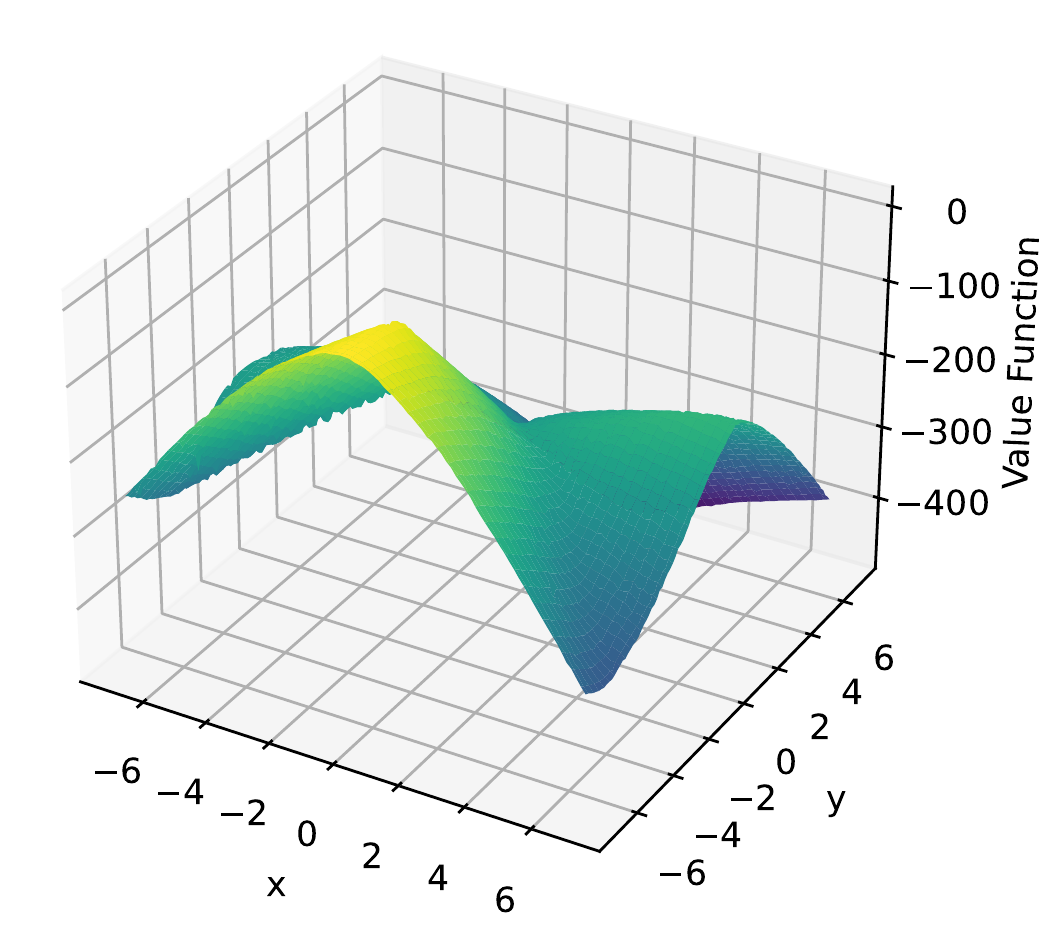}}
 {\includegraphics[width=3cm,height=3cm]{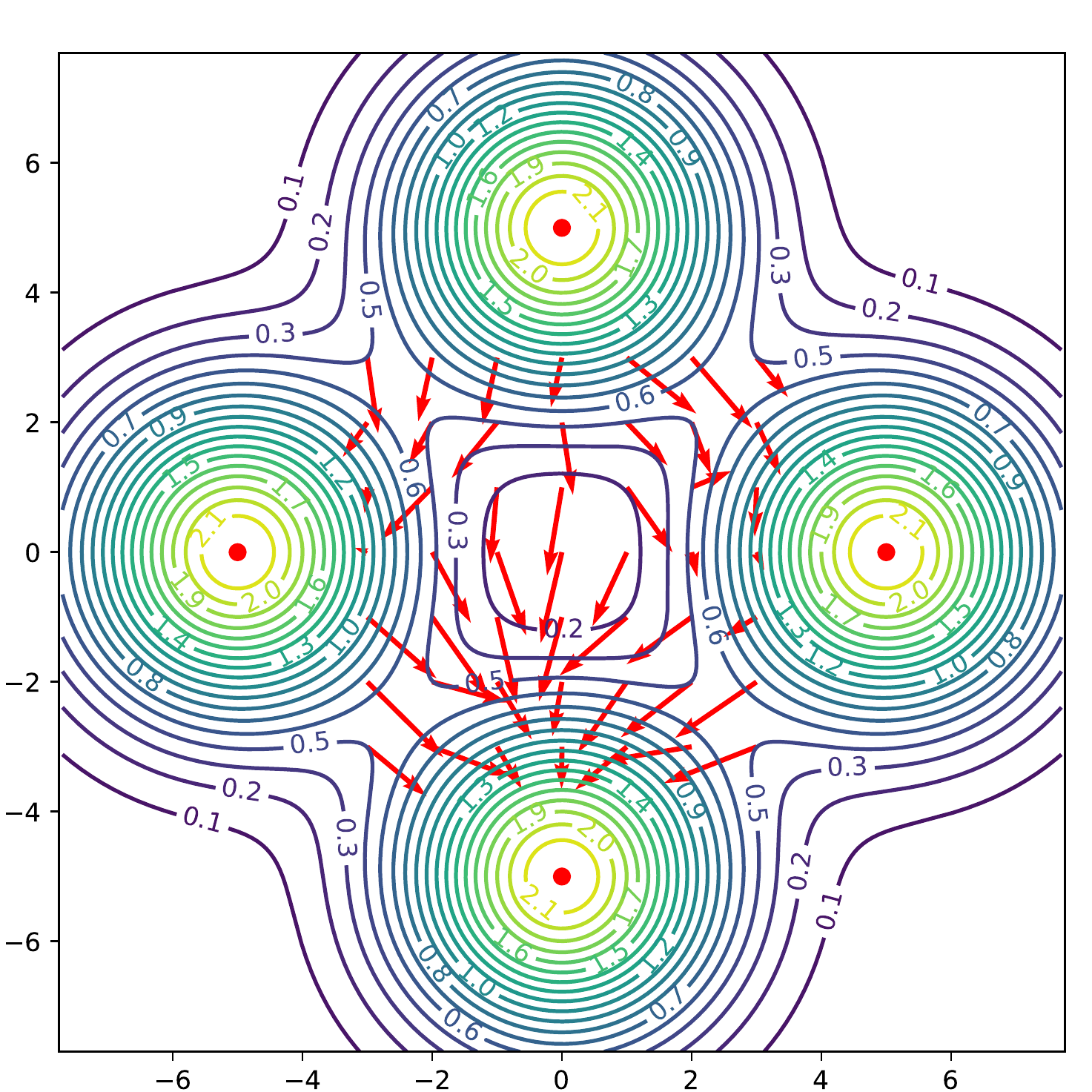}}
        {\includegraphics[width=3cm,height=3cm]{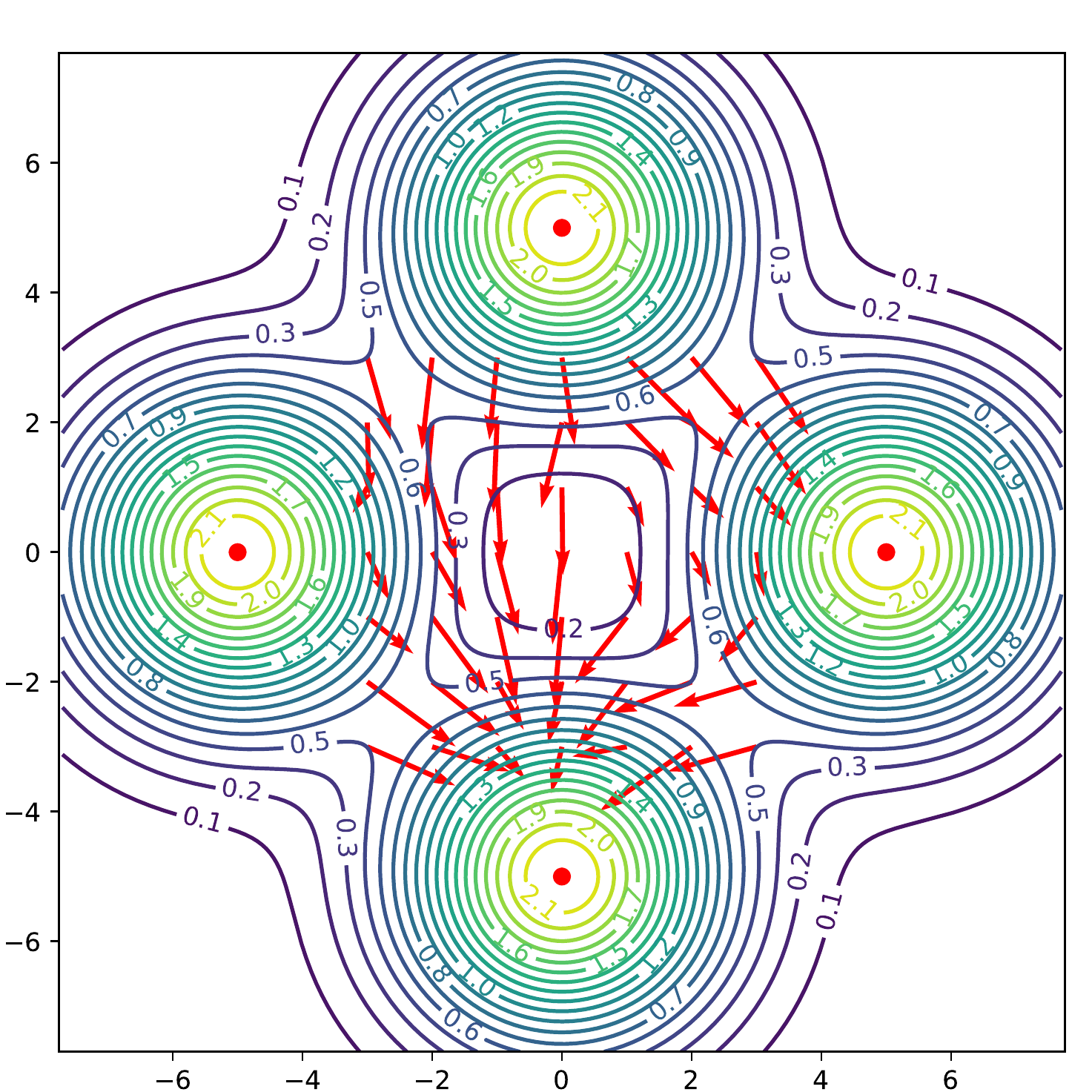}}
            {\includegraphics[width=3cm,height=3cm]{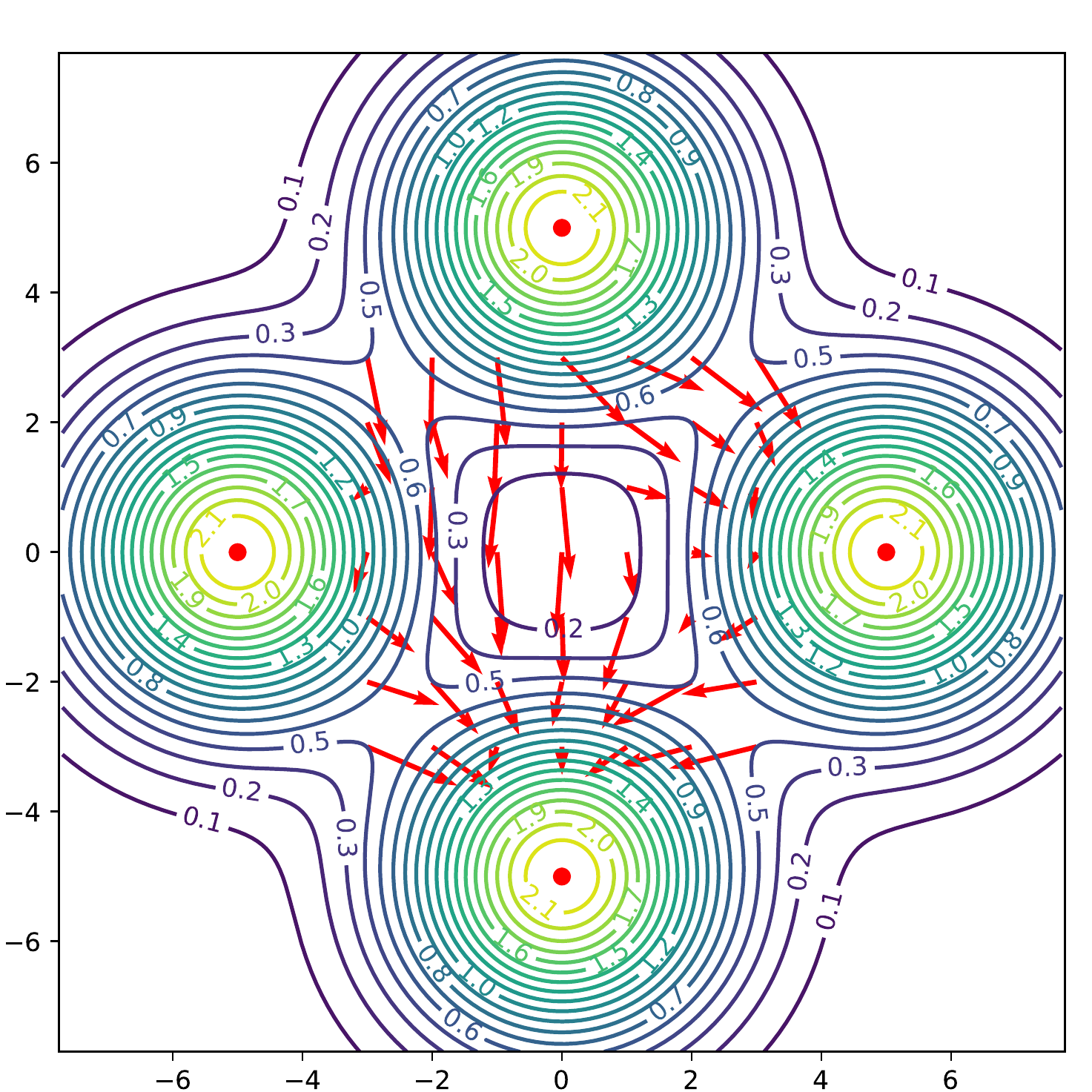}}
                {\includegraphics[width=3cm,height=3cm]{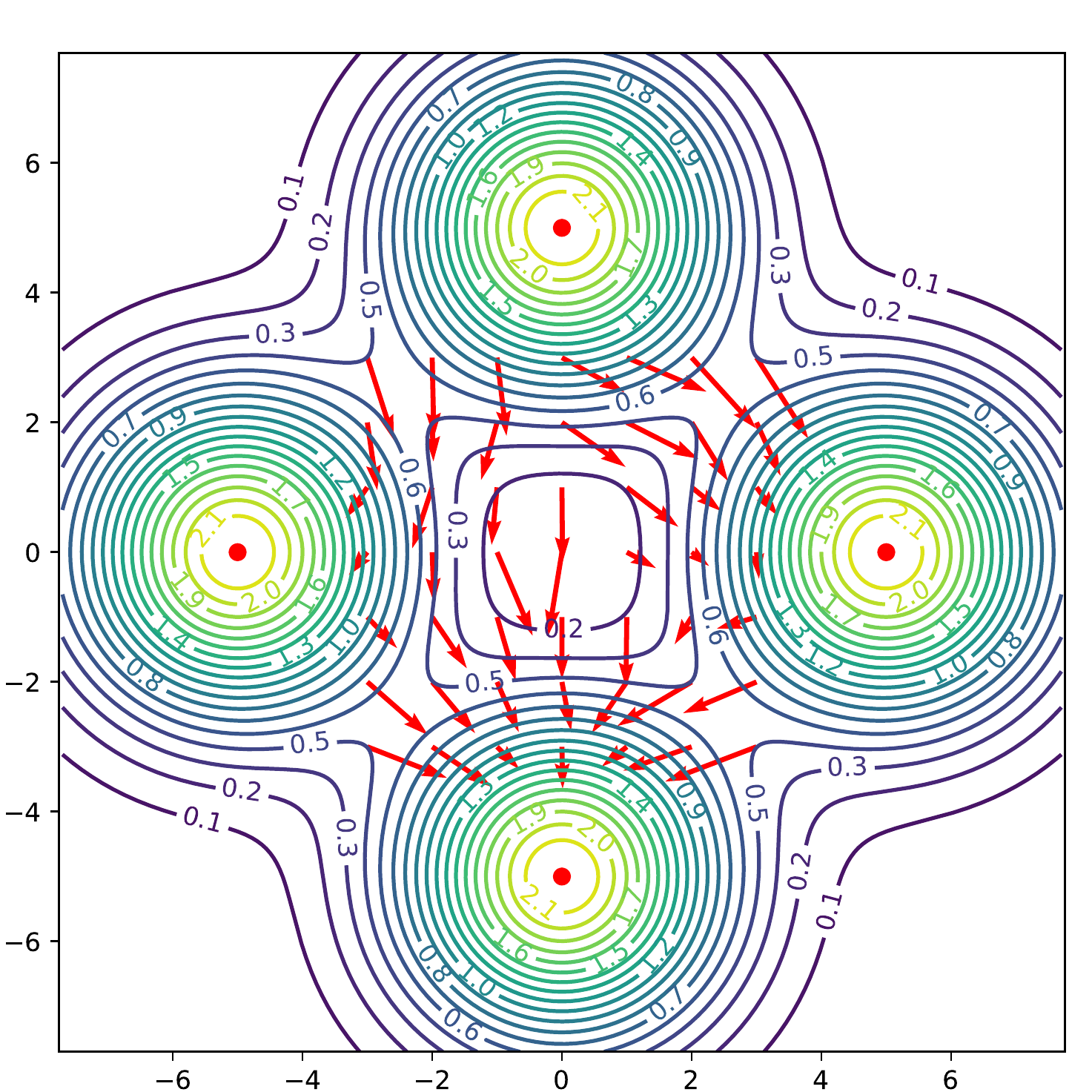}}
                    {\includegraphics[width=3cm,height=3cm]{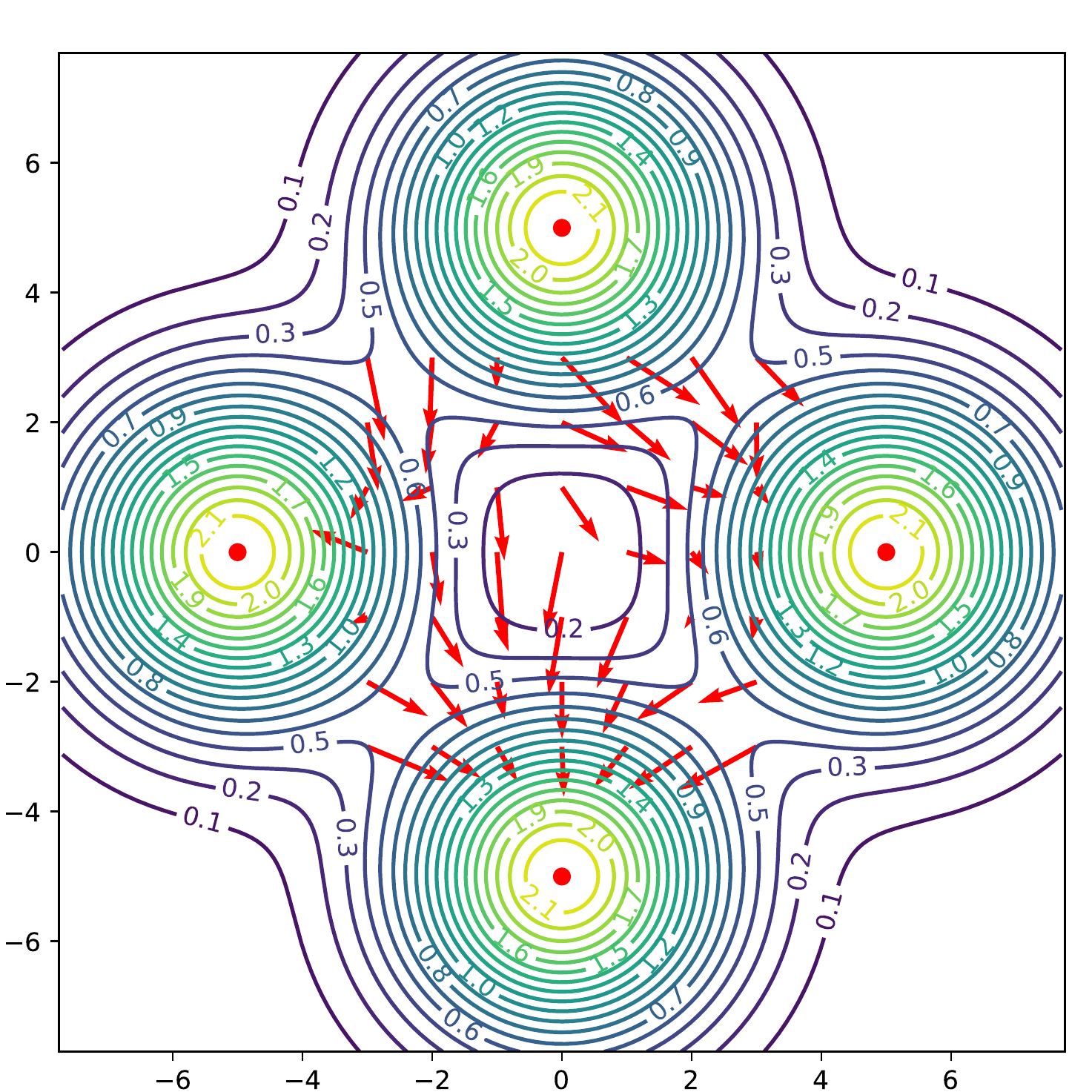}}
 \subfigure[6E$3$ iterations]
    {\includegraphics[width=3cm,height=3cm]{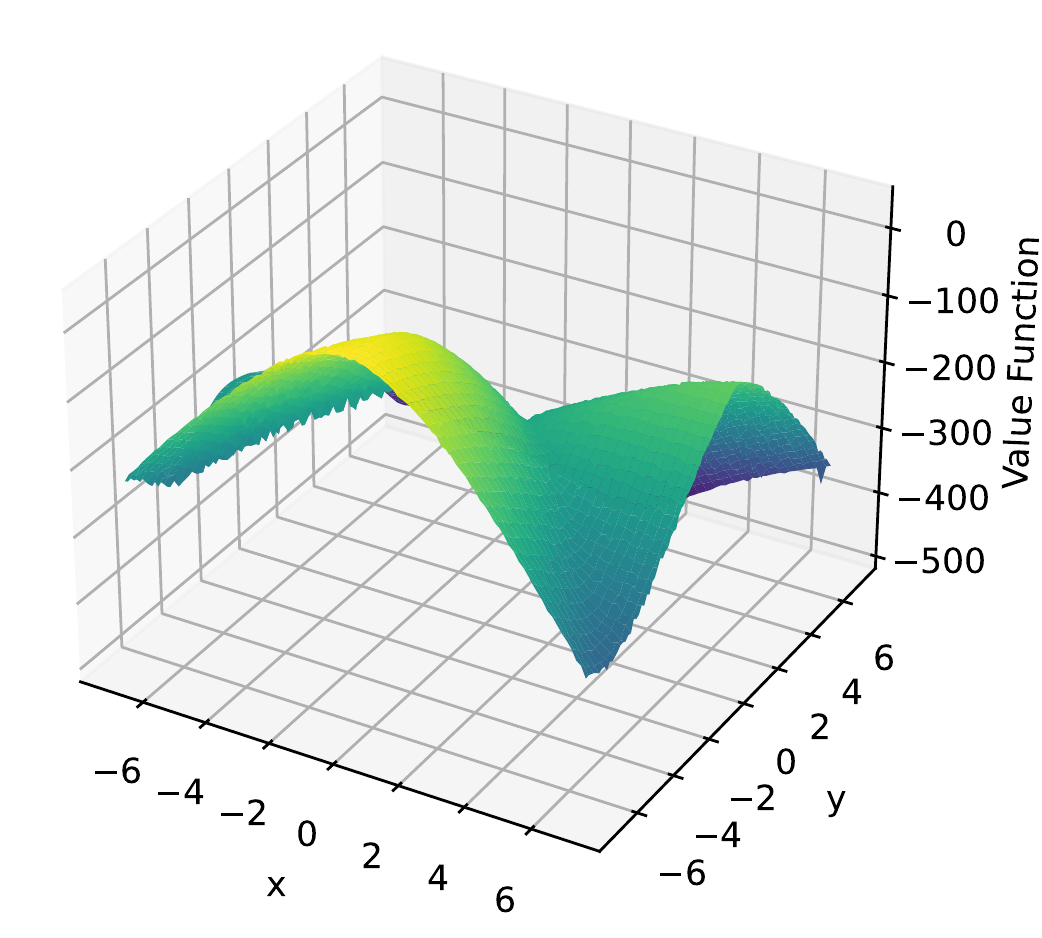}}
         \subfigure[7E$3$ iterations]
        {\includegraphics[width=3cm,height=3cm]{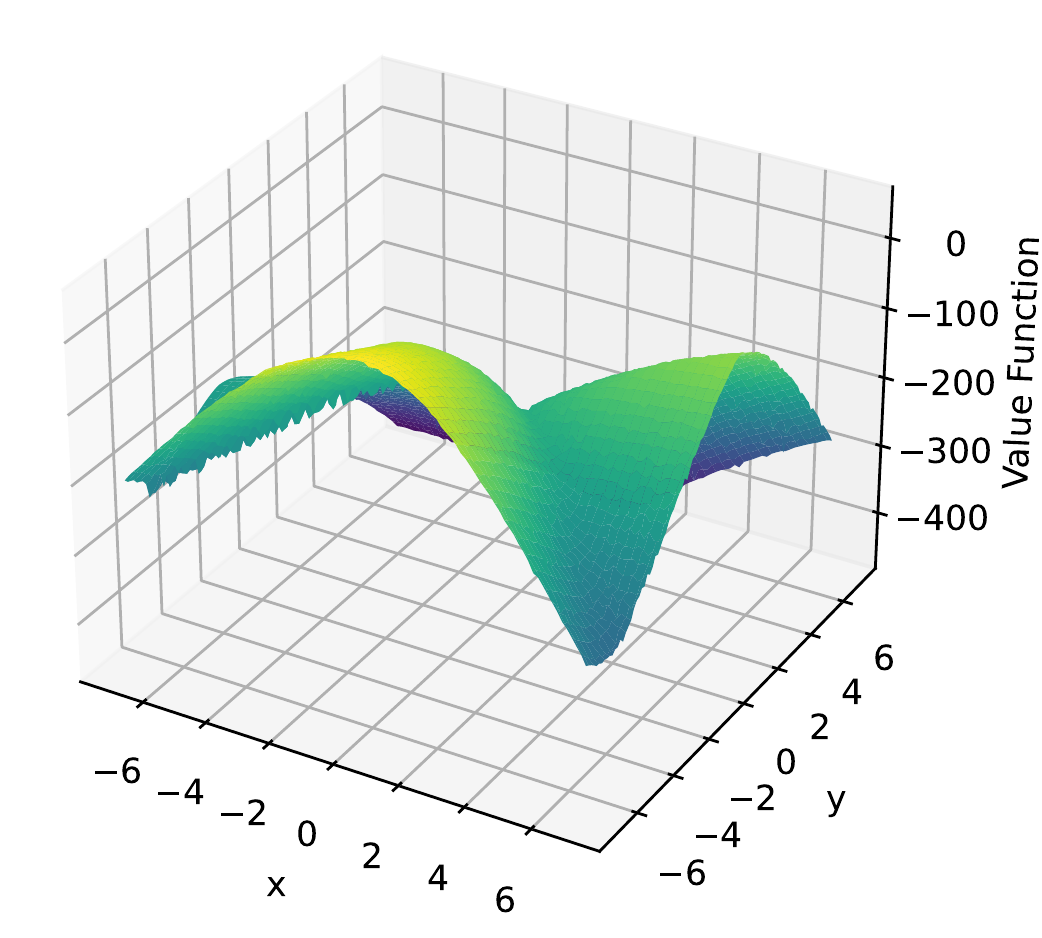}}
             \subfigure[8E$3$ iterations]
            {\includegraphics[width=3cm,height=3cm]{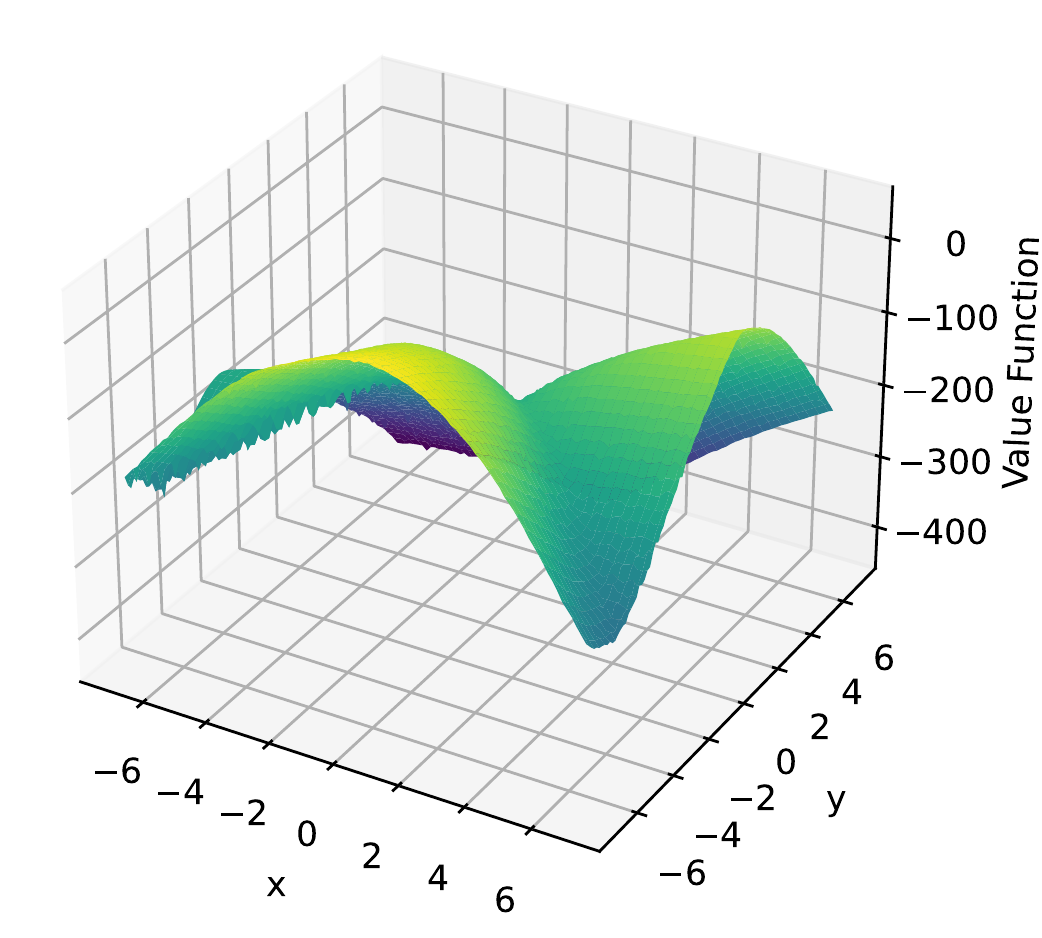}}
                 \subfigure[9E$3$ iterations]
                {\includegraphics[width=3cm,height=3cm]{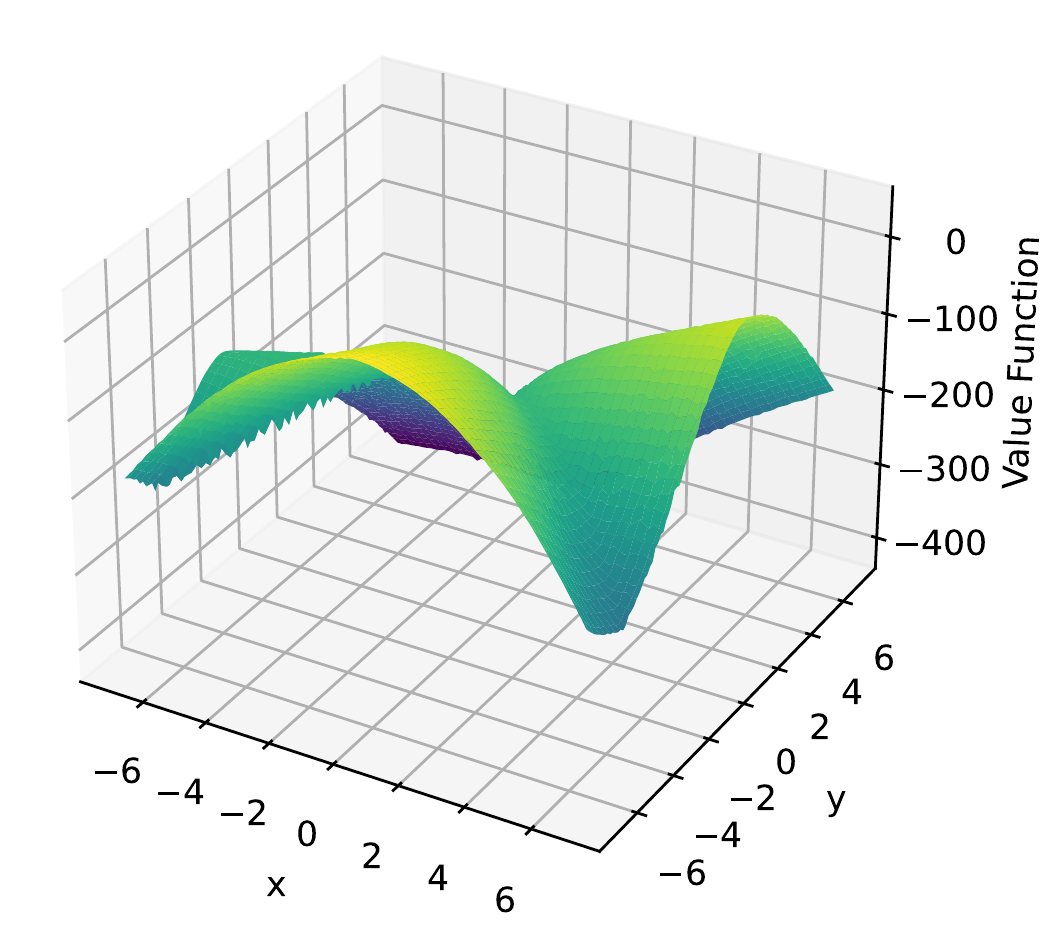}}
                     \subfigure[10E$3$ iterations]
                    {\includegraphics[width=3cm,height=3cm]{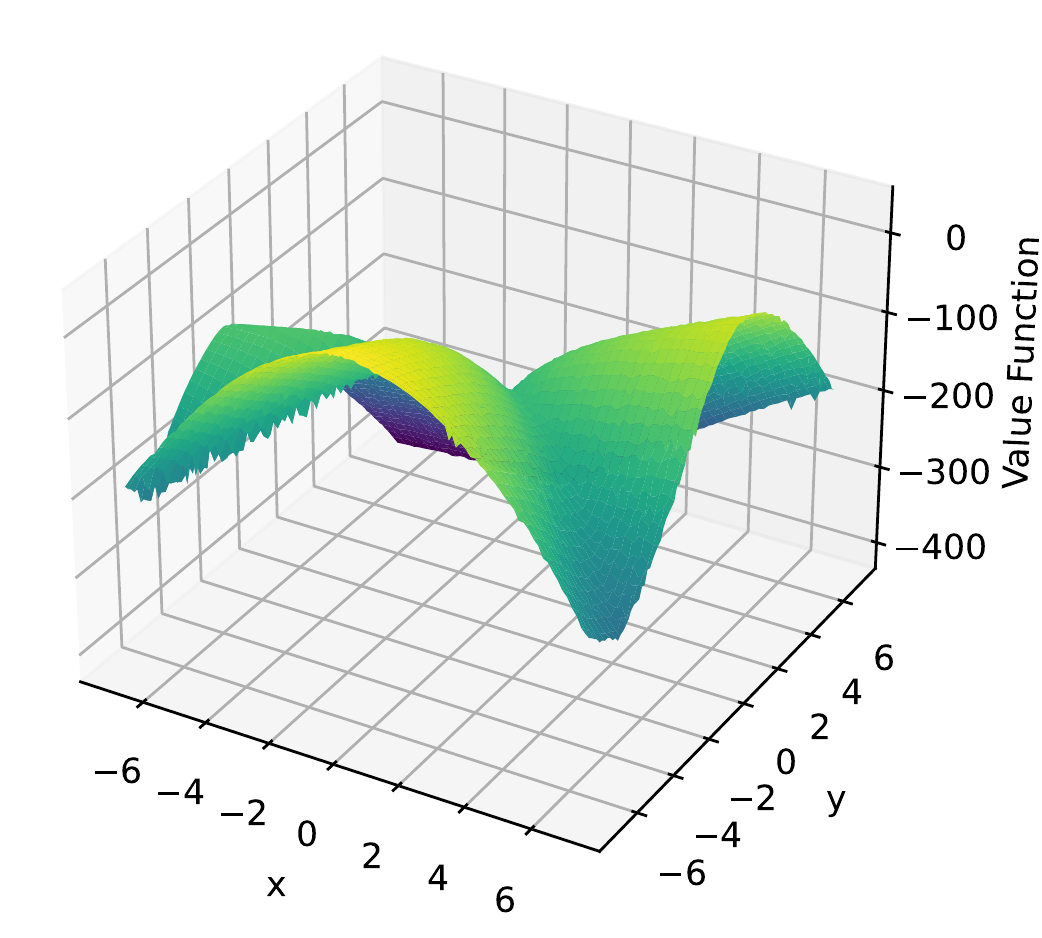}}
    \caption
    {Policy representation comparison of SAC with different iterations.
    }
    \label{app-fig-sac-policy}
\end{figure*}

\subsubsection{Reward} The reward is defined according to the following three parts:
\[
R=r_{1}+r_{2}+r_{3},
\]
where 
\begin{itemize}
\item $r_{1}\propto -\|\ba\|_{2}^{2}$ if agent plays the action $\ba$;
\item $r_{2}=-\min\{\|(x,y)-\mathbf{target}\|^{2}_{2}\}$, $\mathbf{target}$ denotes one of the target points $(0,5)$, $(0,-5)$, $(5,0)$, and $(-5,0)$; 
\item if the agent reaches one of the targets among $\{(0,5),~(0,-5),~(5,0),~(-5,0)\}$, then it receives a reward $r_{3}=10$.
\end{itemize}
Since the goal positions are symmetrically distributed at the four points $(0,5)$, $(0,-5)$, $(5,0)$ and $(-5,0)$, 
a reasonable policy should be able to take actions uniformly to those four goal positions with the same probability, which characters the capacity of exploration of a policy to understand the environment. Furthermore, we know that the shape of the reward curve should be symmetrical with four equal peaks.

\subsection{Plots Details of Visualization}

\begin{figure*}[t]
    \centering
    {\includegraphics[width=3cm,height=3cm]{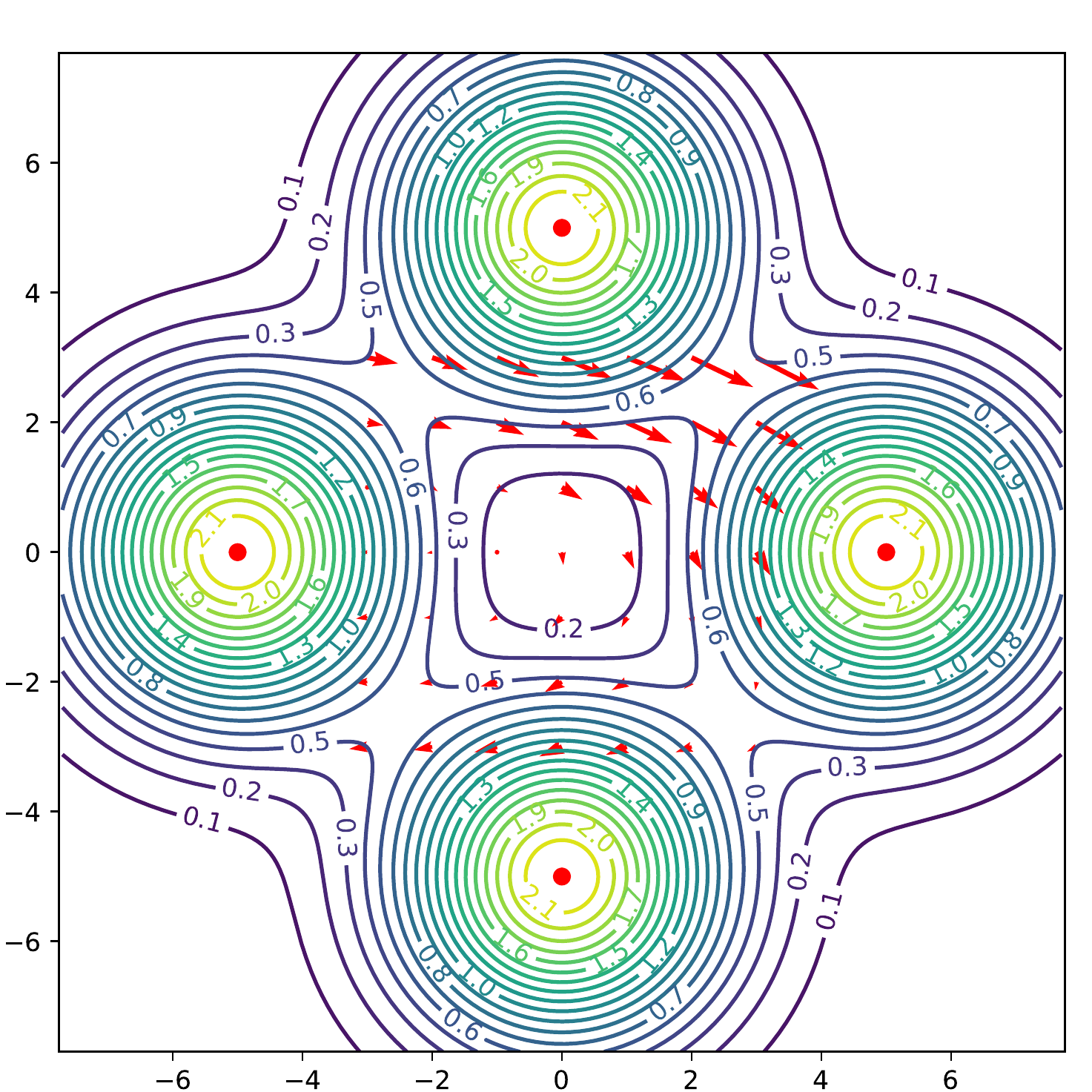}}
        {\includegraphics[width=3cm,height=3cm]{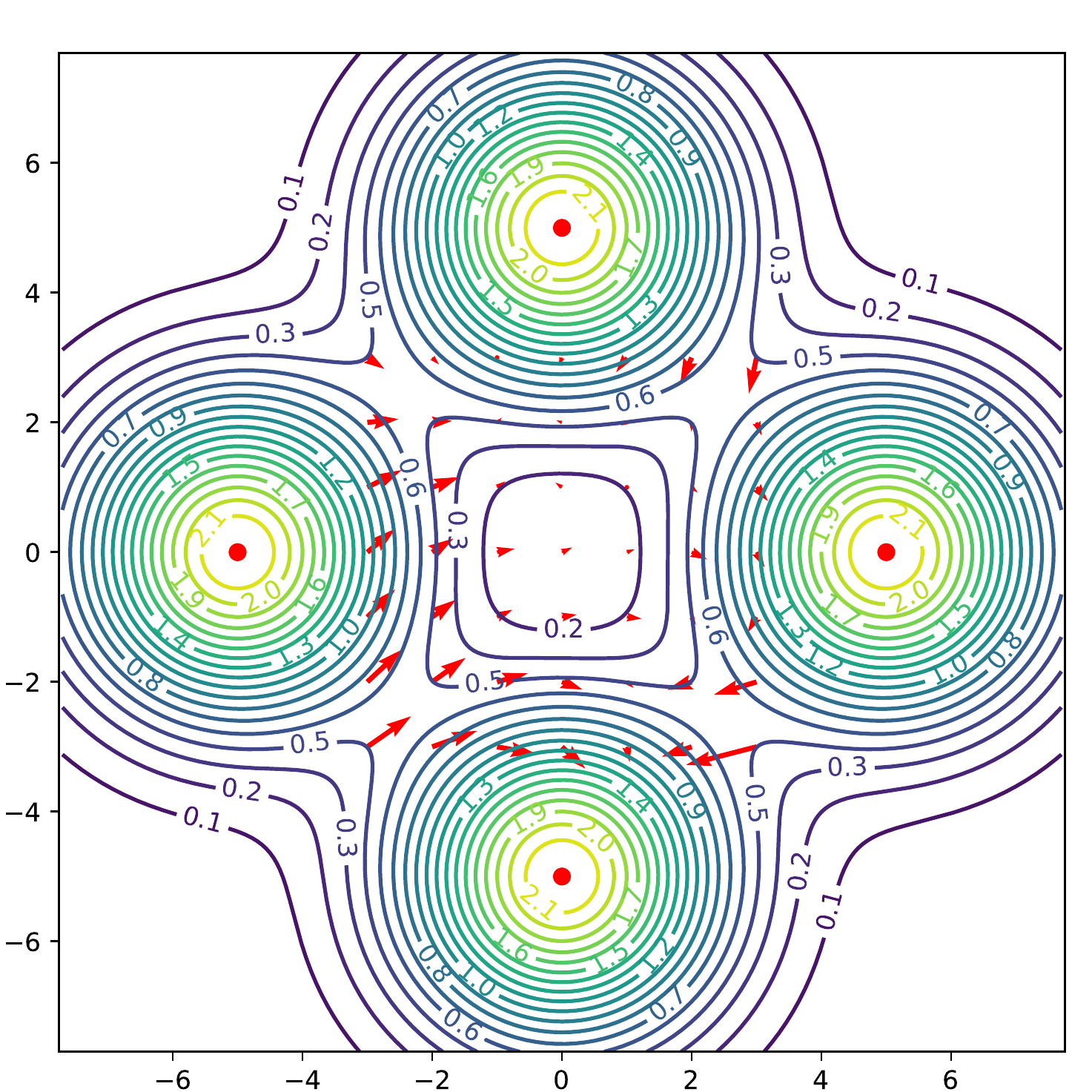}}
            {\includegraphics[width=3cm,height=3cm]{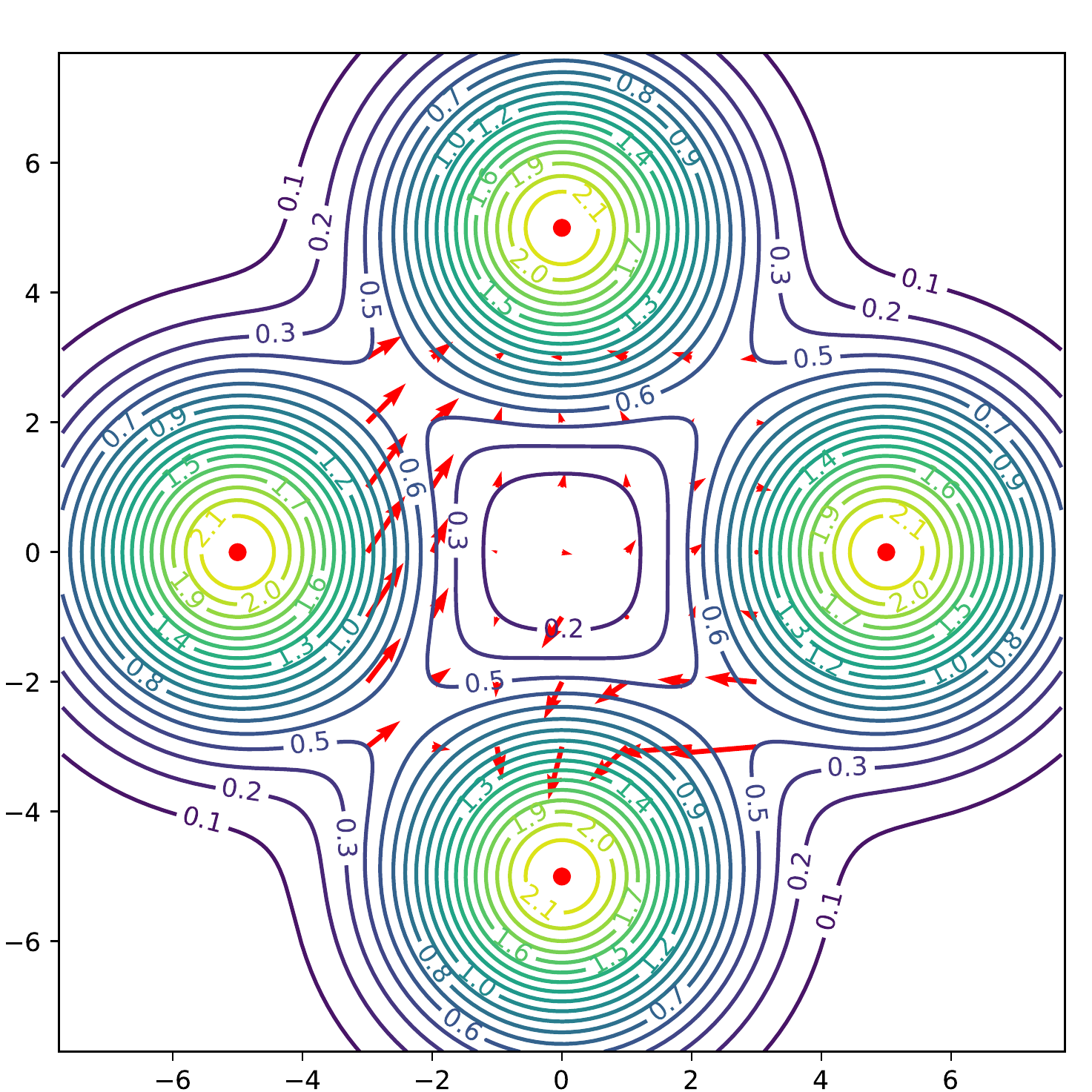}}
                {\includegraphics[width=3cm,height=3cm]{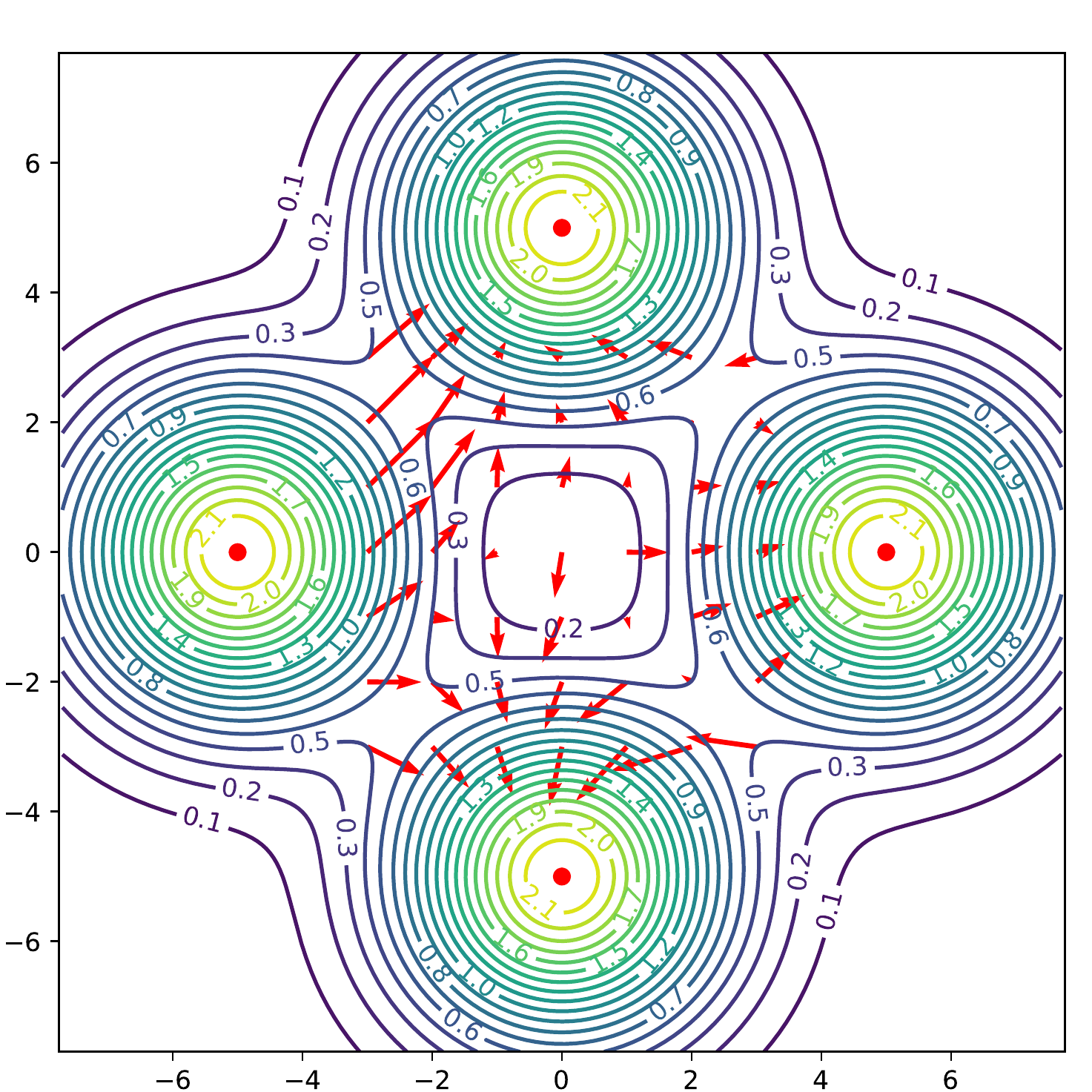}}
                    {\includegraphics[width=3cm,height=3cm]{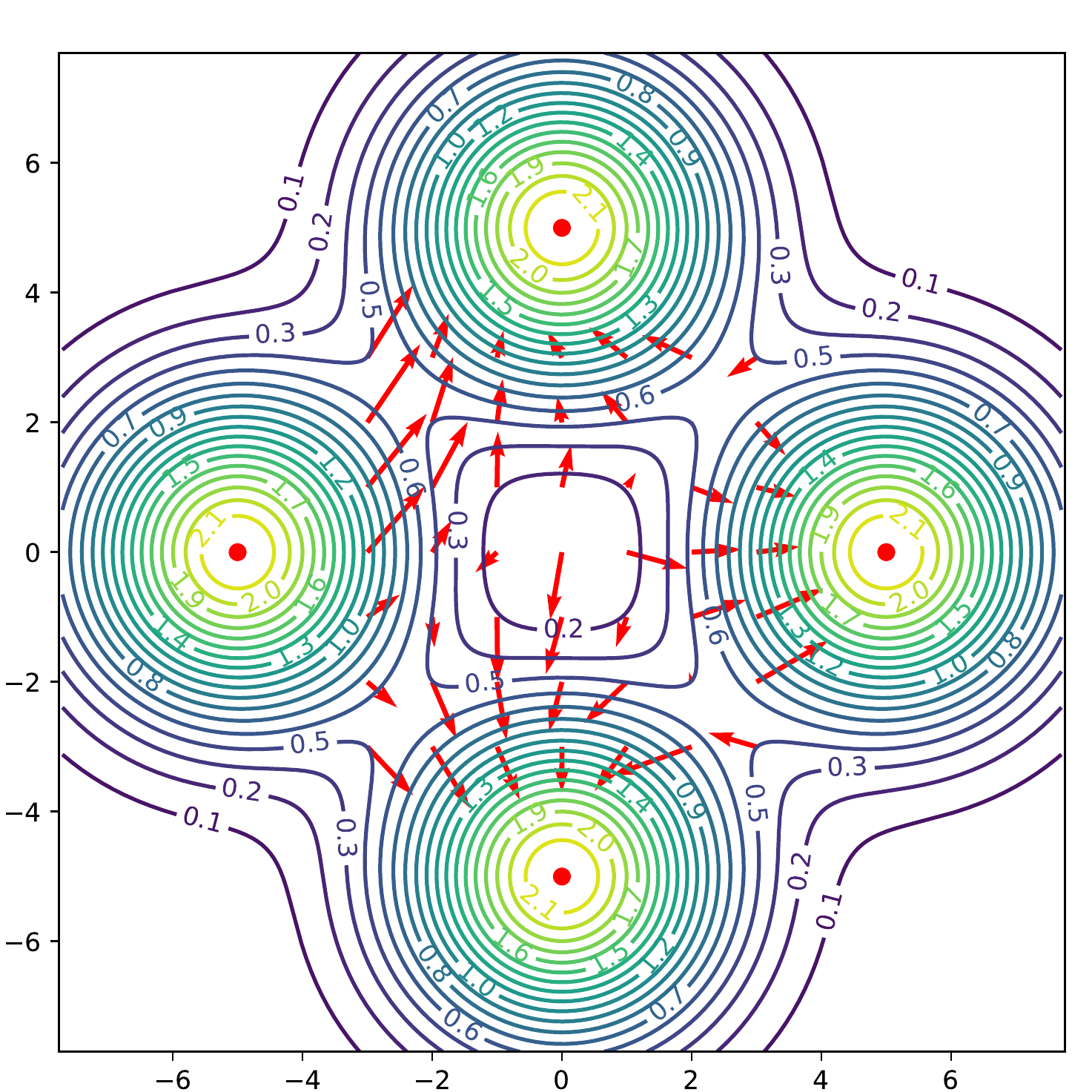}}
 \subfigure[1E$3$ iterations]
    {\includegraphics[width=3cm,height=3cm]{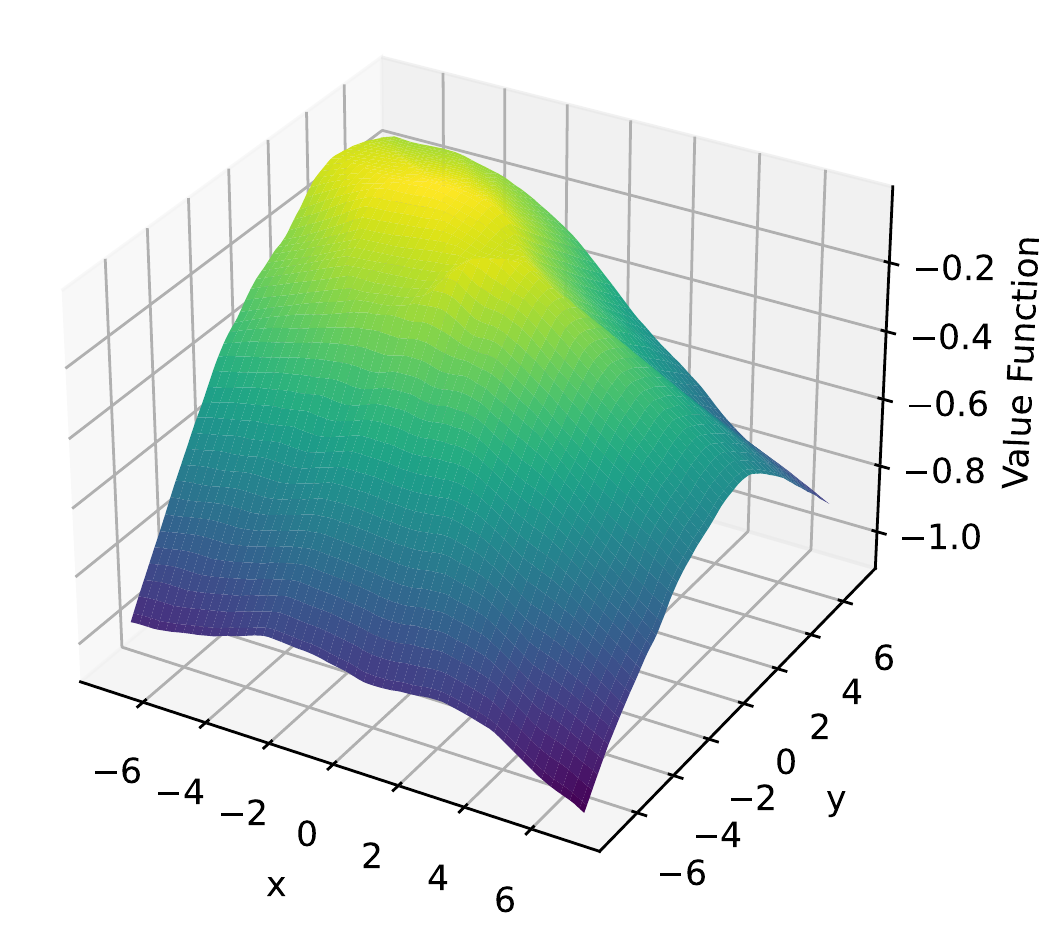}}
         \subfigure[2E$3$ iterations]
        {\includegraphics[width=3cm,height=3cm]{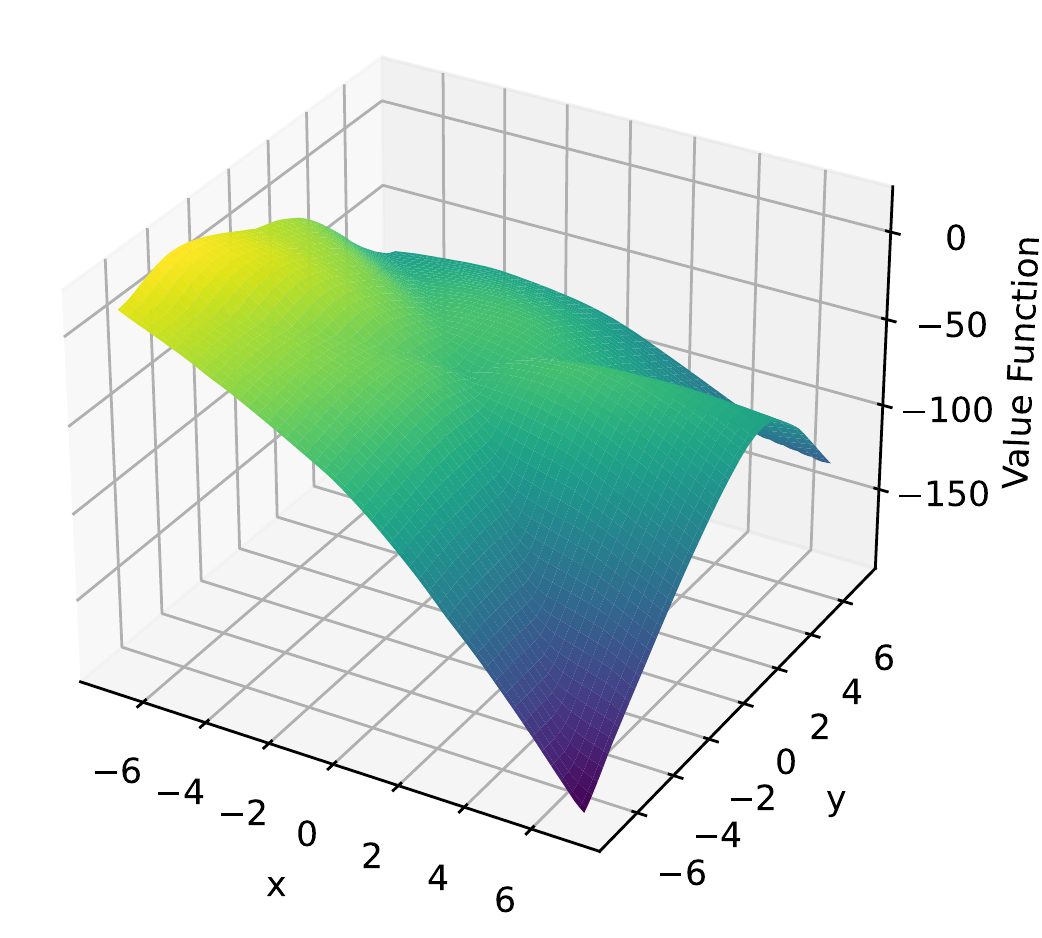}}
             \subfigure[3E$3$ iterations]
            {\includegraphics[width=3cm,height=3cm]{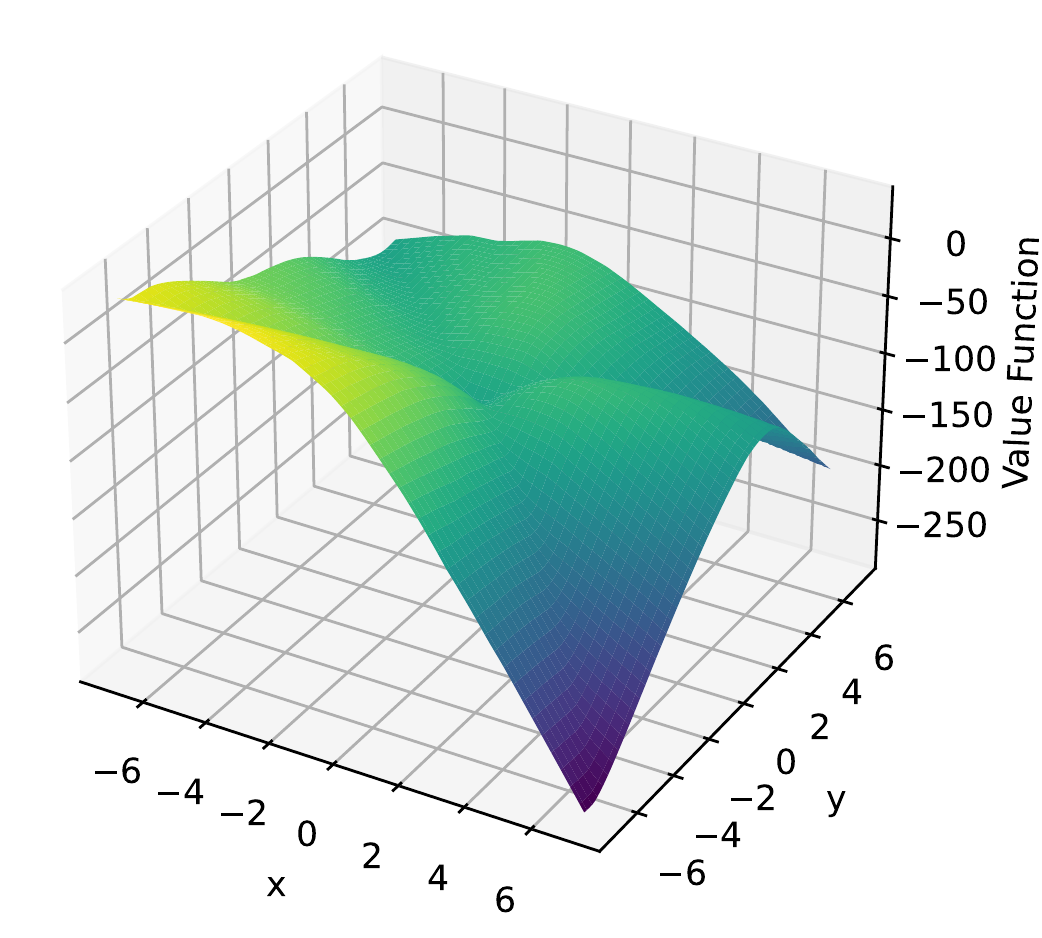}}
                 \subfigure[4E$3$ iterations]
                {\includegraphics[width=3cm,height=3cm]{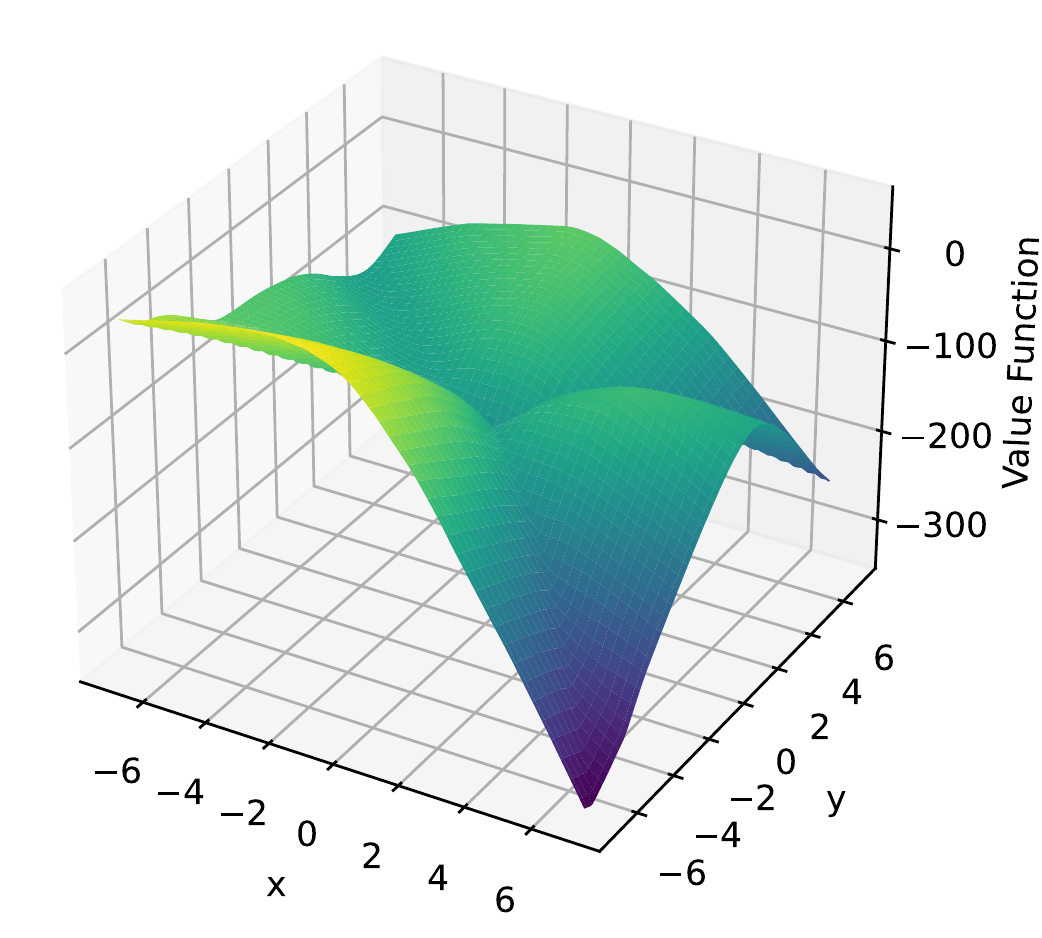}}
                     \subfigure[5E$3$ iterations]
                    {\includegraphics[width=3cm,height=3cm]{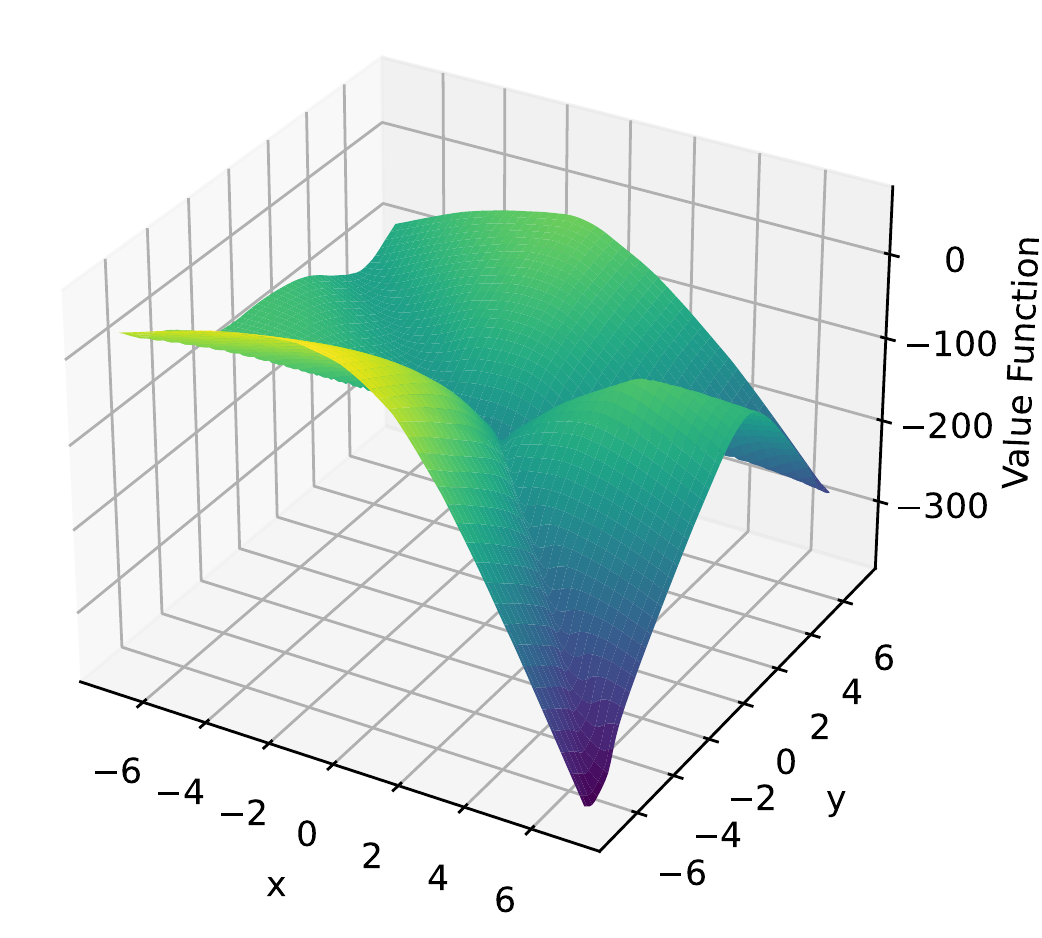}}
 {\includegraphics[width=3cm,height=3cm]{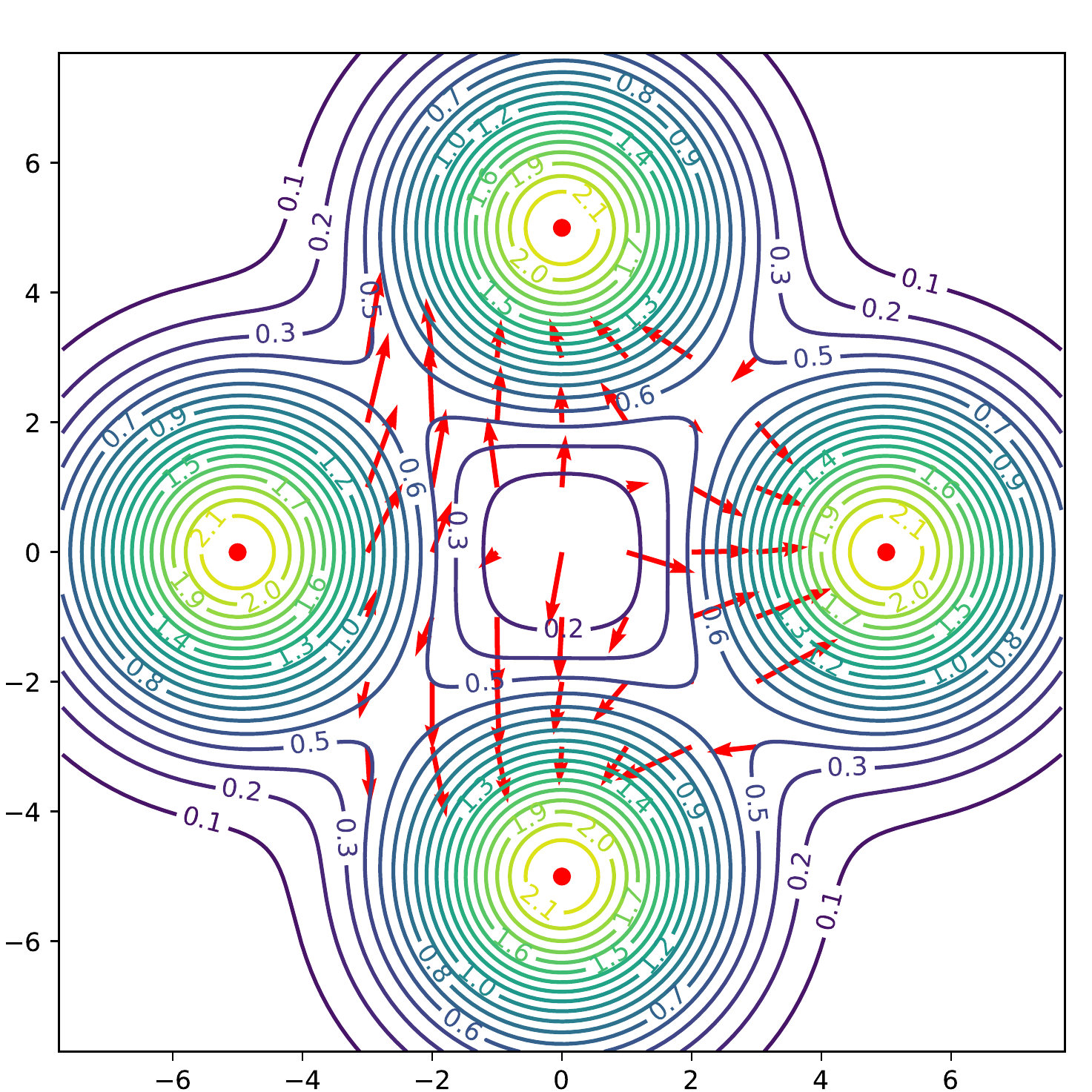}}
        {\includegraphics[width=3cm,height=3cm]{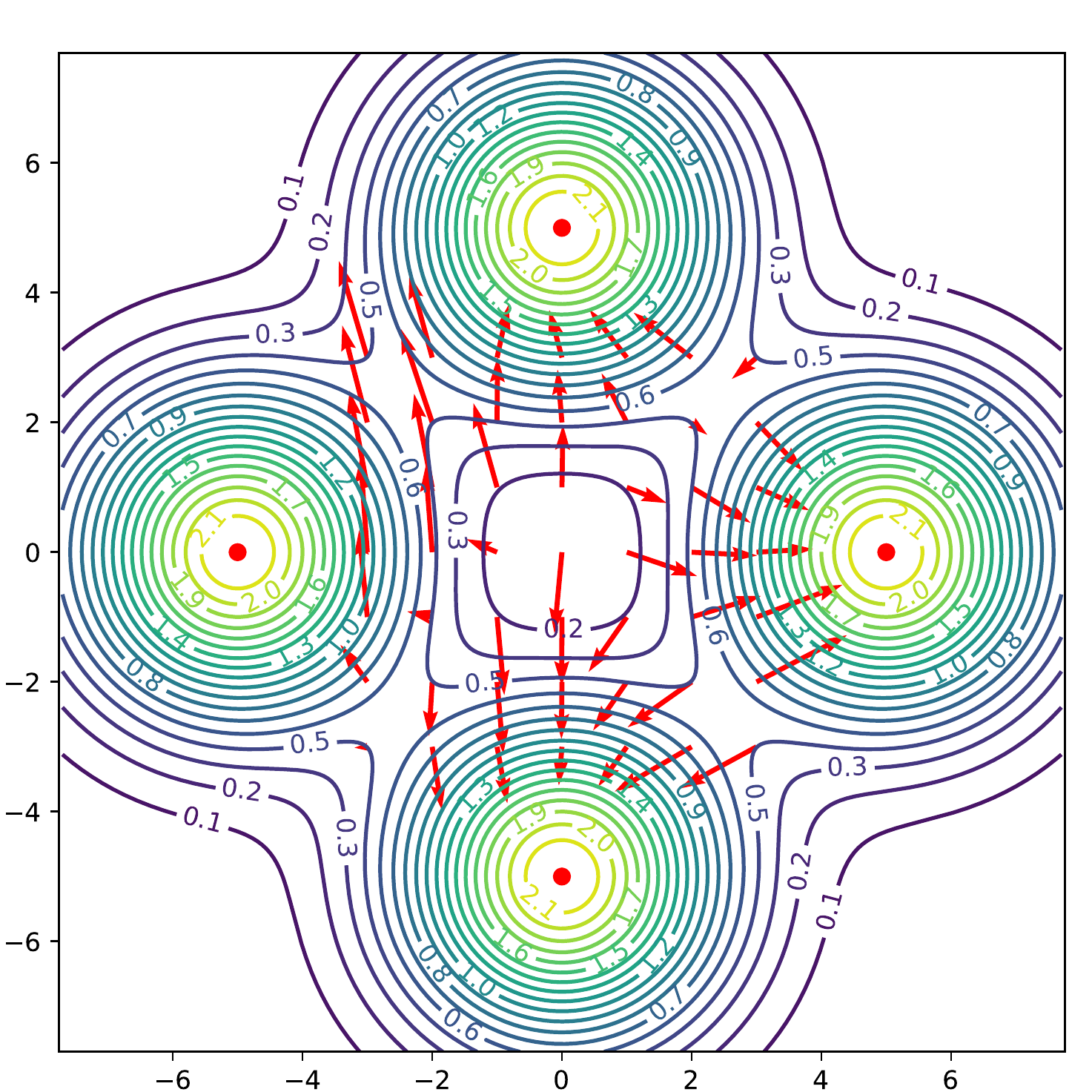}}
            {\includegraphics[width=3cm,height=3cm]{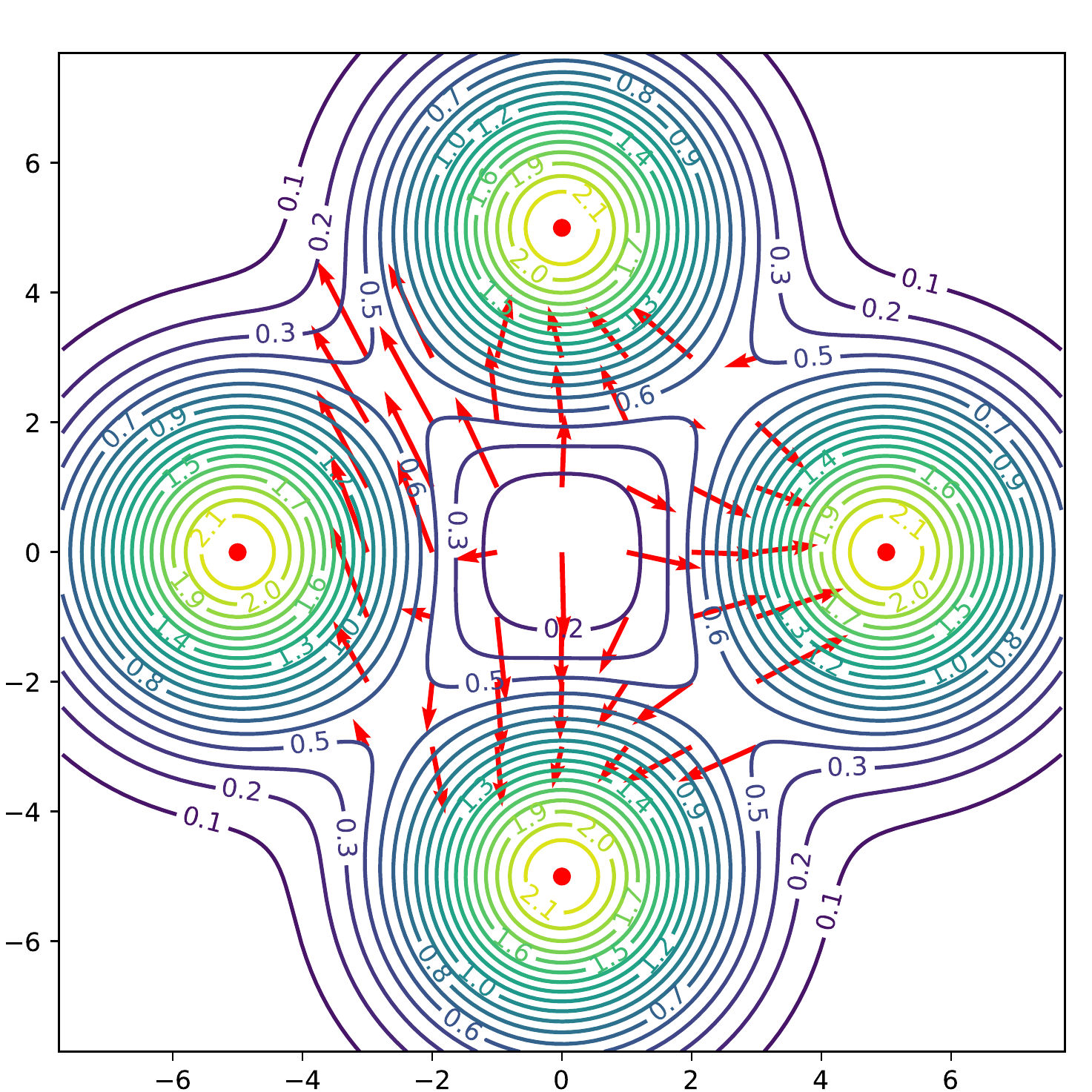}}
                {\includegraphics[width=3cm,height=3cm]{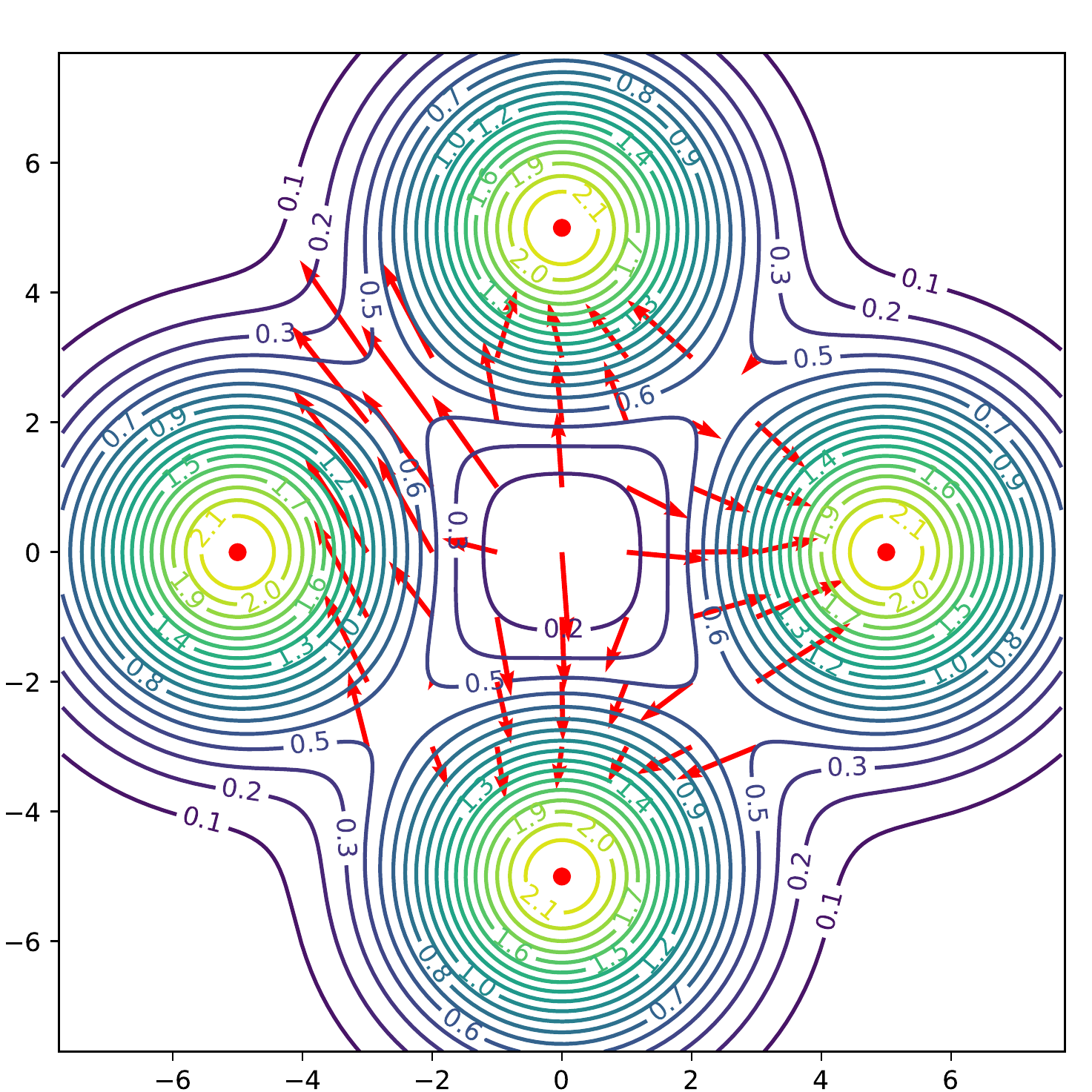}}
                    {\includegraphics[width=3cm,height=3cm]{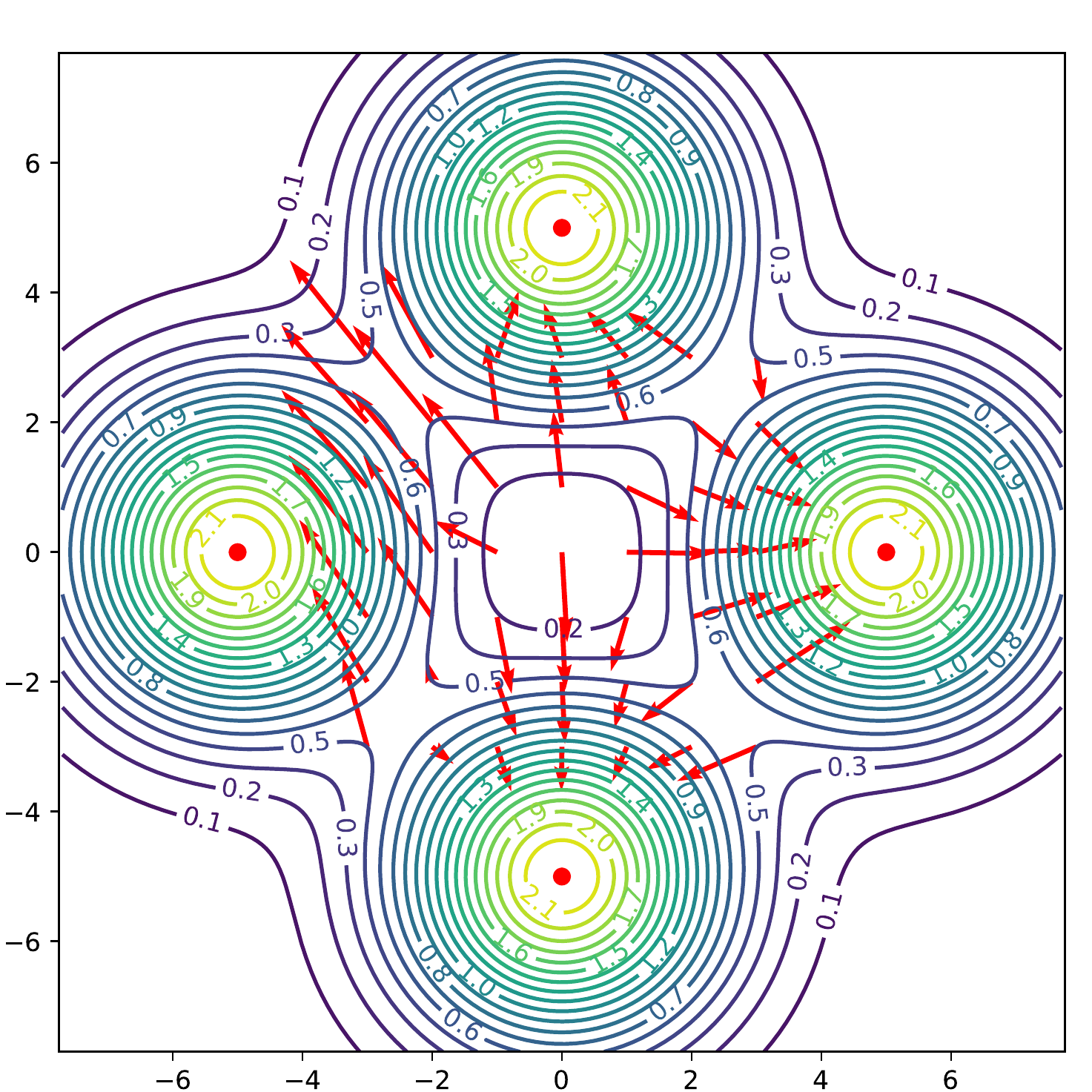}}
 \subfigure[6E$3$ iterations]
    {\includegraphics[width=3cm,height=3cm]{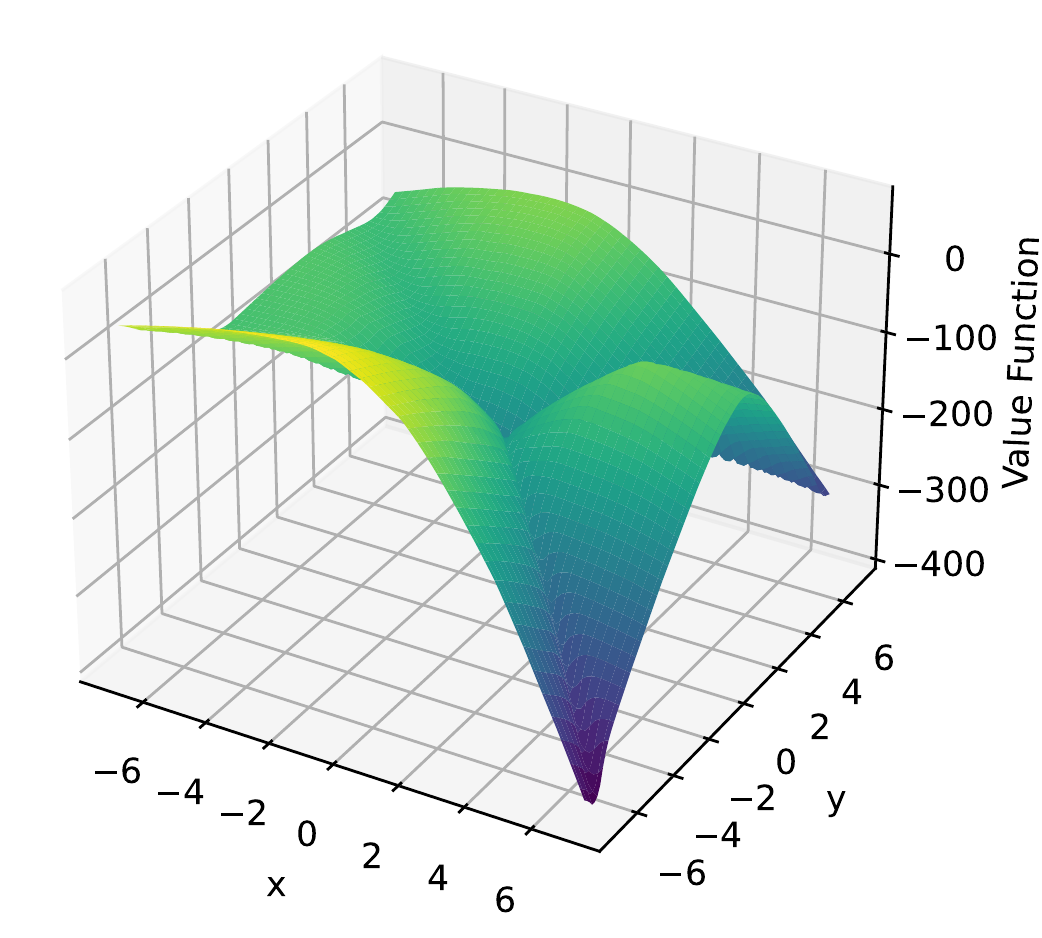}}
         \subfigure[7E$3$ iterations]
        {\includegraphics[width=3cm,height=3cm]{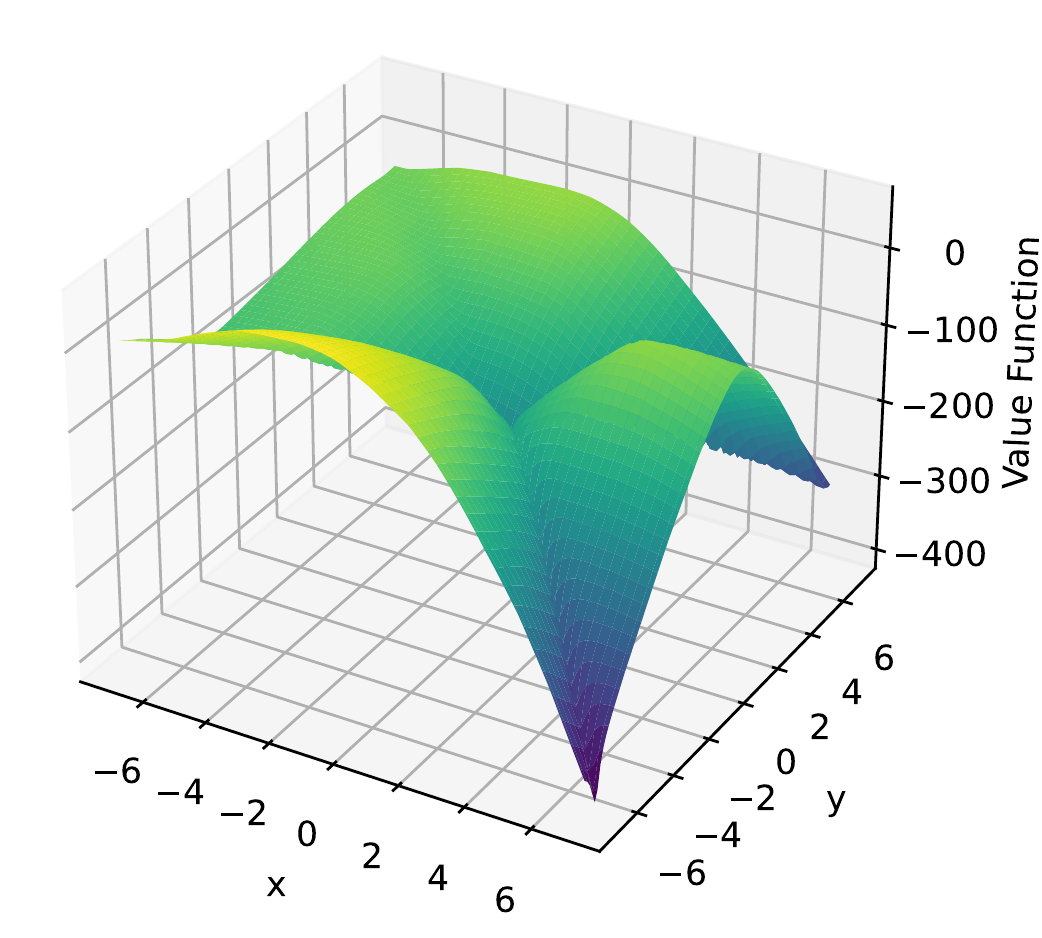}}
             \subfigure[8E$3$ iterations]
            {\includegraphics[width=3cm,height=3cm]{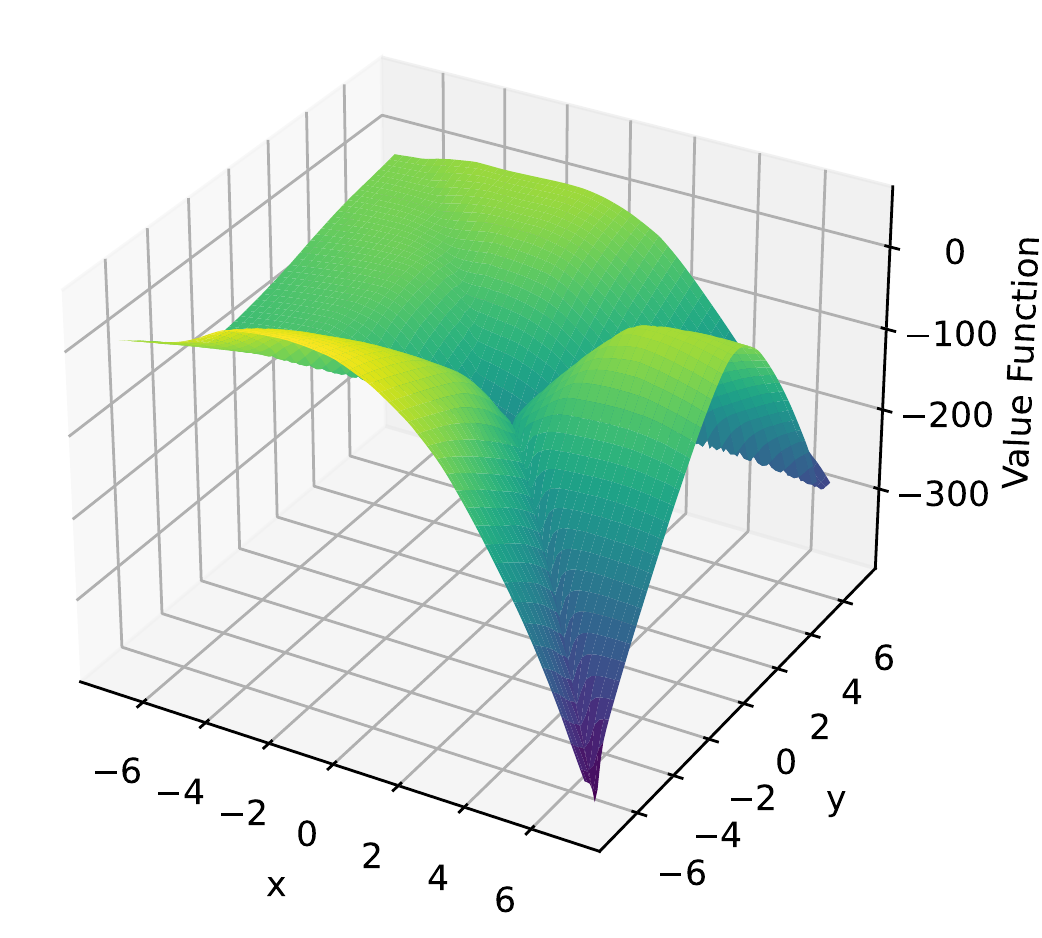}}
                 \subfigure[9E$3$ iterations]
                {\includegraphics[width=3cm,height=3cm]{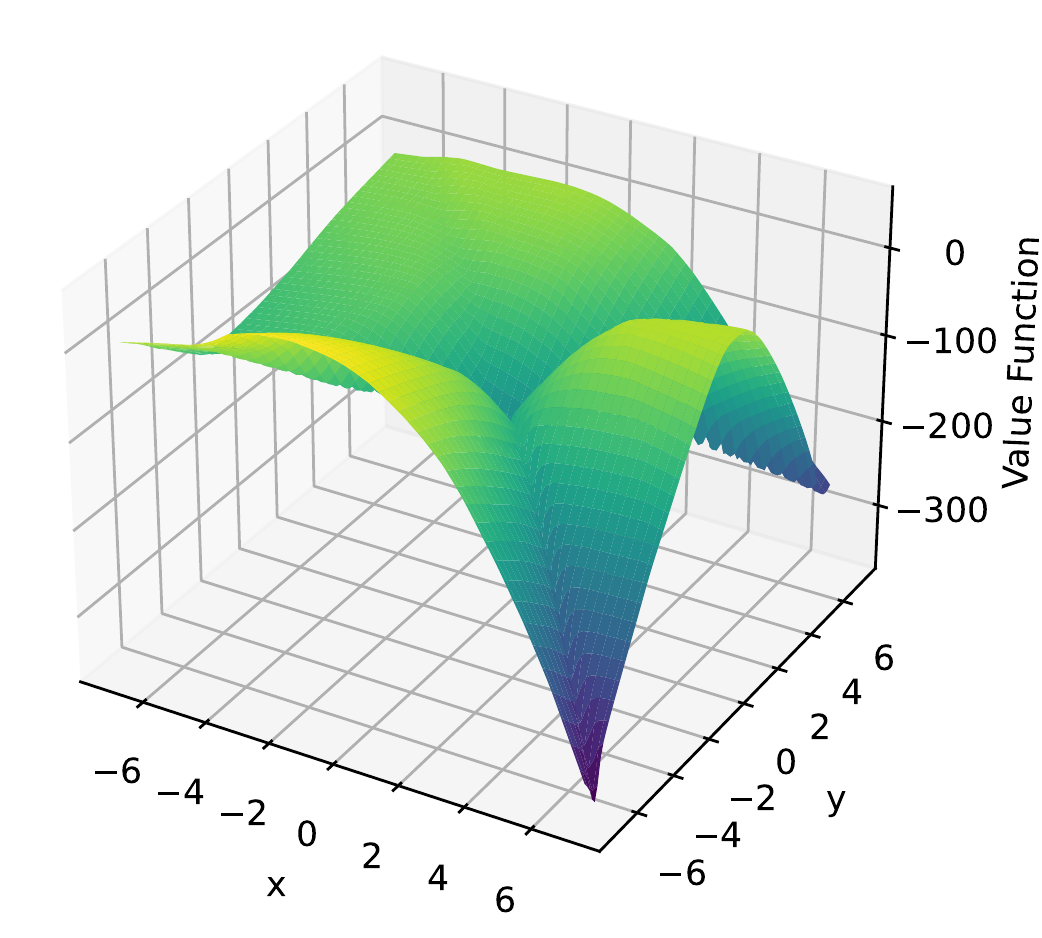}}
                     \subfigure[10E$3$ iterations]
                    {\includegraphics[width=3cm,height=3cm]{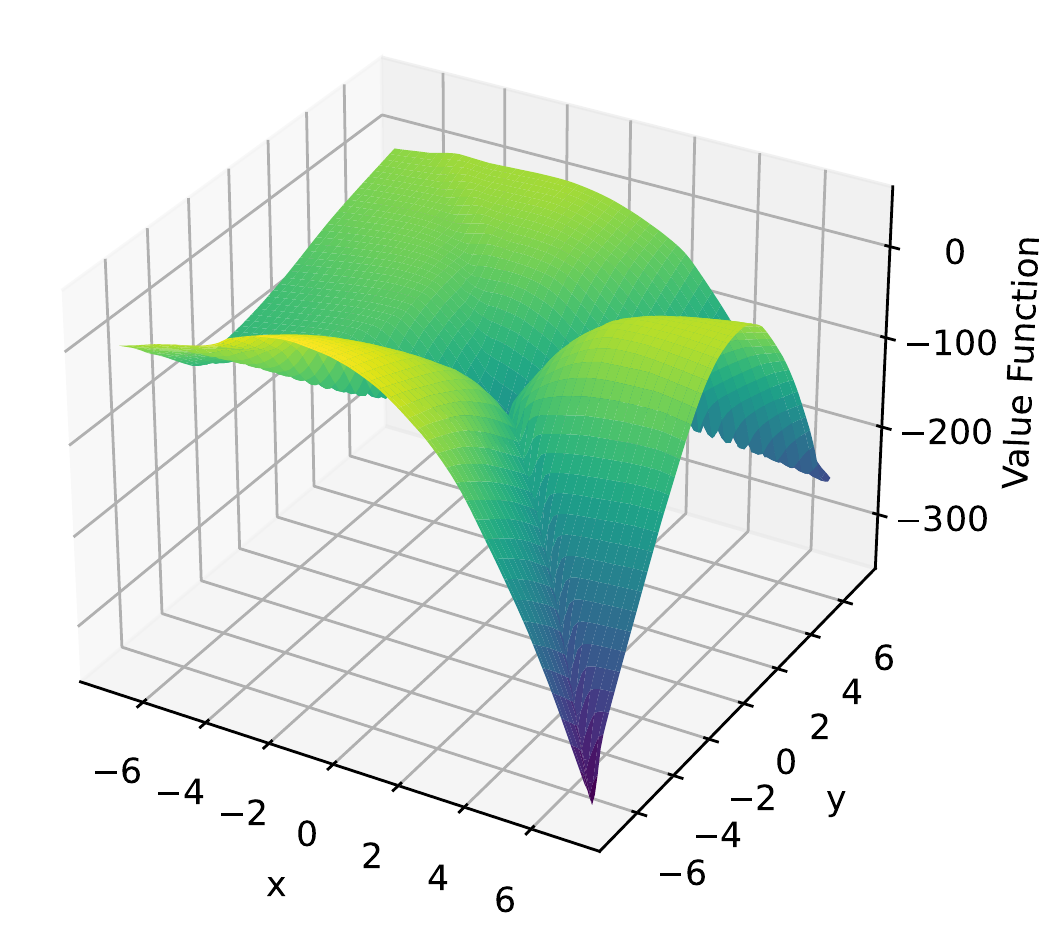}}
    \caption
    {Policy representation comparison of TD3 with different iterations.
    }
    \label{app-fig-td3-policy}
\end{figure*}

This section presents all the details of the 2D and 3D visualization for the multi-goal task. At the end of this section, we present the shape of the reward curve.

\subsubsection{2D Visualization} For the 2D visualization, the red arrowheads denote actions learned by the corresponding RL algorithms, where each action starts at one of the totals of $7\times7=49$ points (corresponding to all the states) with horizontal and vertical coordinates ranges among $\{-3, -2, -1,0,1,2,3\}\times \{-3, -2, -1,0,1,2,3\}$. The length of the red arrowheads denotes the length of the action vector, and the direction of the red arrowheads denotes the direction of actions.
This is to say; for each figure, we plot all the actions starting from the same coordinate points.

\subsubsection{3D Visualization}
For the 3D visualization, we provide a decomposition of the the region $[-7,7]\times[-7,7]$ into $100\times100=10000$ points, each point $(x,y)\in [-7,7]\times[-7,7]$ denotes a state.
For each state $(x,y)$, a corresponding action is learned by its corresponding RL algorithms, denoted as $\ba$.
Then according to the critic neural network, we obtain the state-action value function $Q$ value of the corresponding point $((x,y),\ba)$. The 3D visualization shows the state-action $Q$ (for PPO, is value function $V$) with respect to the states.

\subsubsection{Shape of Reward Curve}

Since the shape of the reward curve is symmetrical with four equal peaks, the 2D visualization presents the distribution of actions toward those four equal peaks.
A good algorithm should take actions with a uniform distribution toward those four points $(0,5)$, $(0,-5)$, $(5,0)$, and $(-5,0)$ on the  2D visualization.
The 3D visualization presents the learned shape according to the algorithm during the learning process. A good algorithm should fit the symmetrical reward shape with four equal peaks. 
A multimodal policy distribution is efficient for exploration, which may lead an agent to learn a good policy and perform better.
Thus, both 2D and 3D visualizations character the algorithm's capacity to represent the multimodal policy distribution.

\begin{figure*}[t]
    \centering
    {\includegraphics[width=3cm,height=3cm]{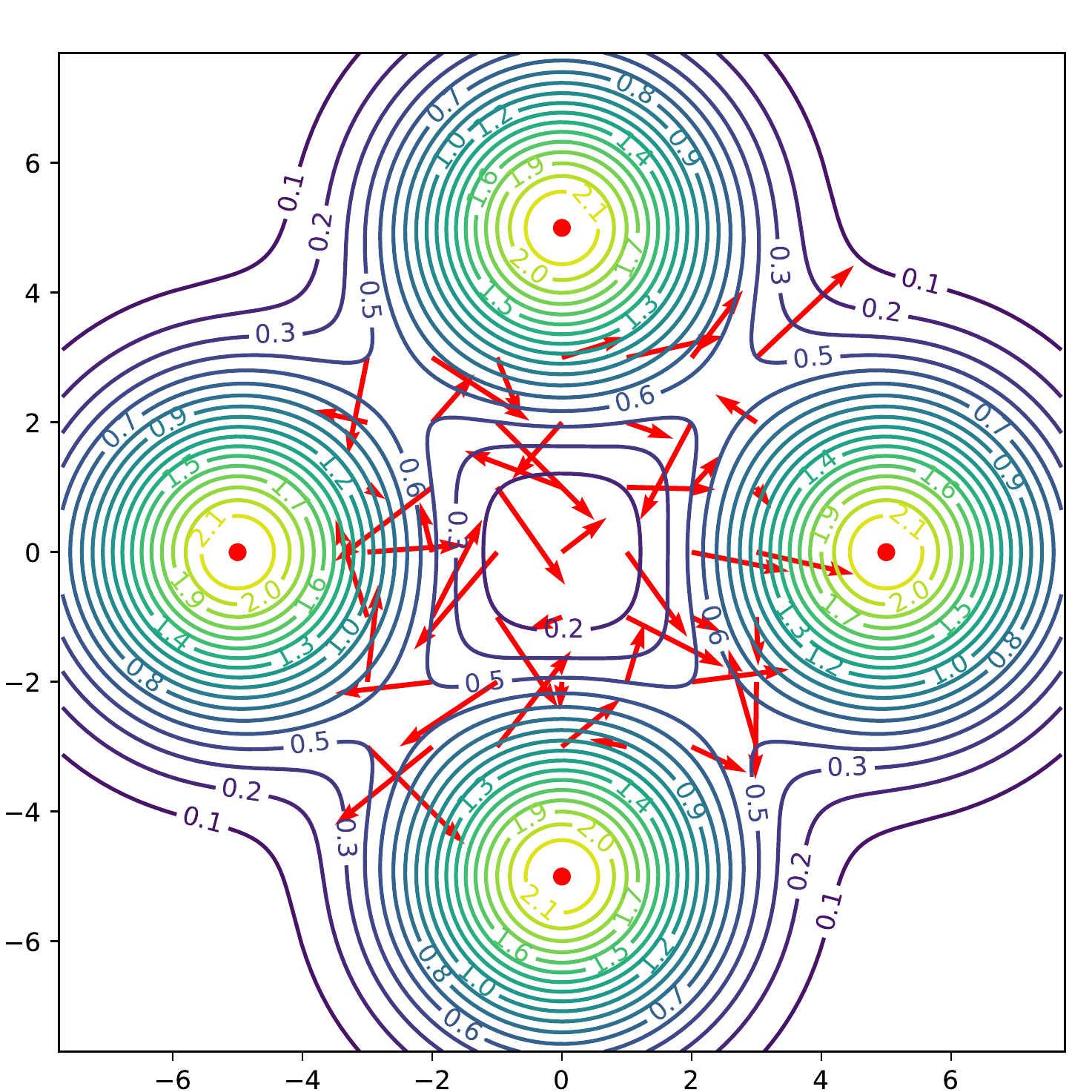}}
        {\includegraphics[width=3cm,height=3cm]{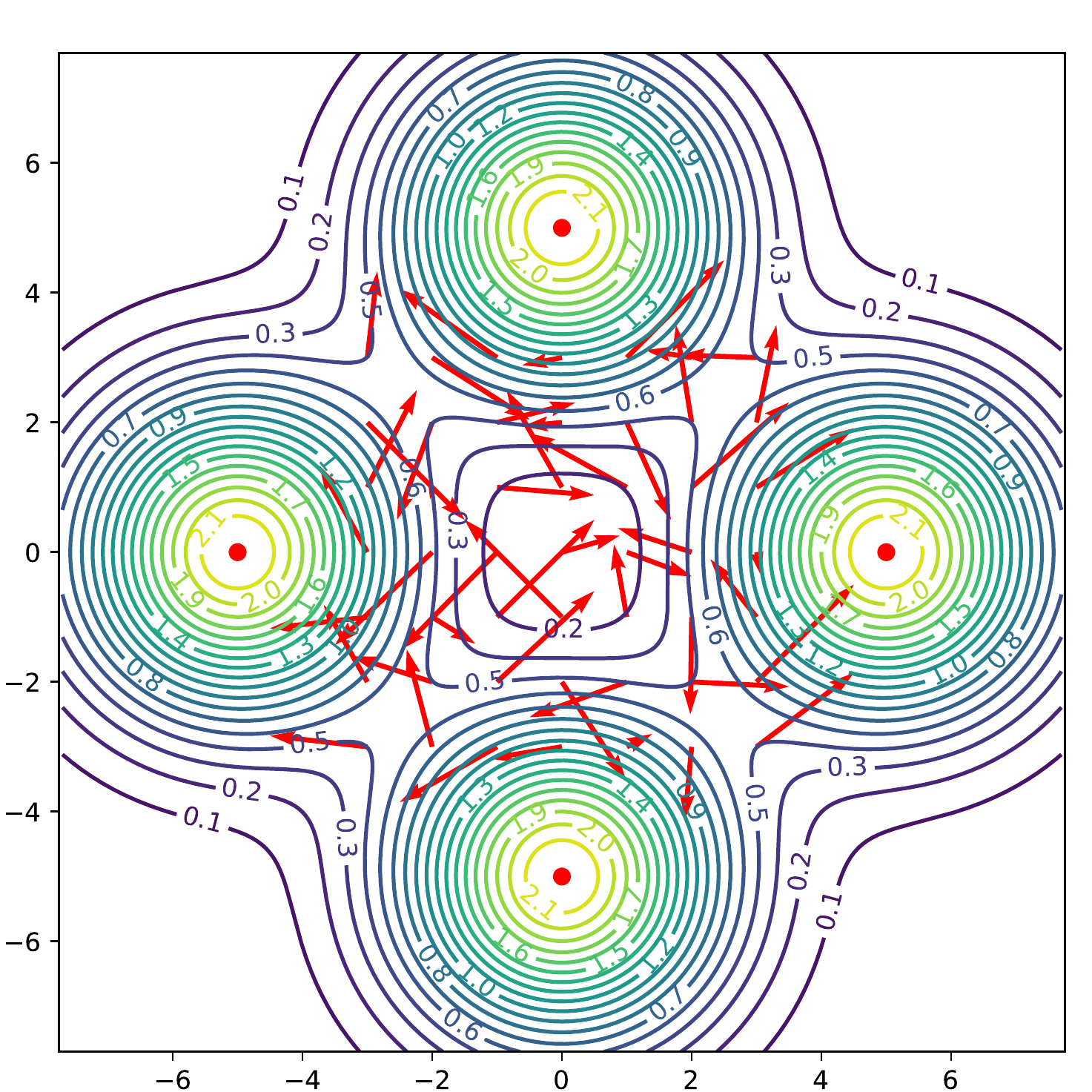}}
            {\includegraphics[width=3cm,height=3cm]{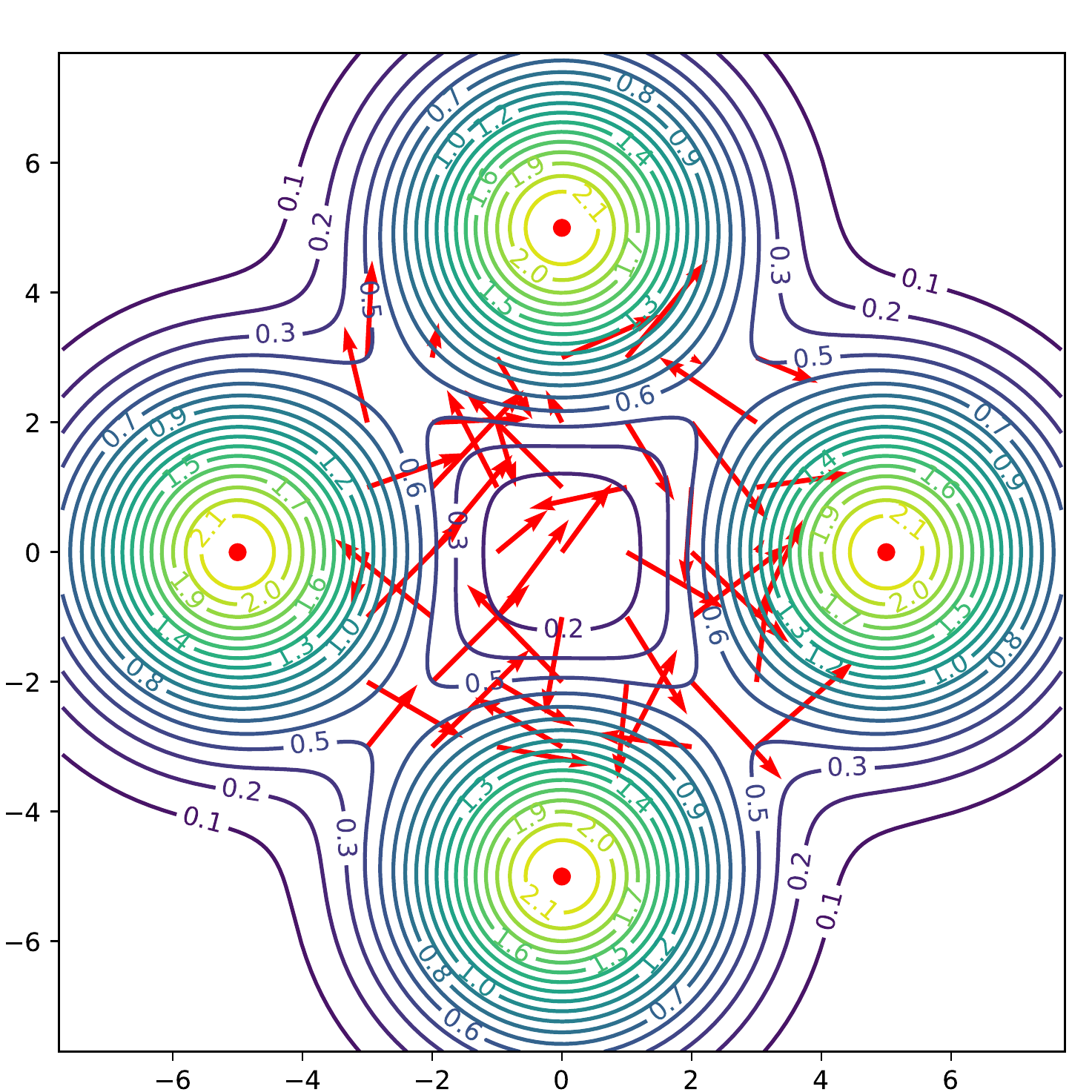}}
                {\includegraphics[width=3cm,height=3cm]{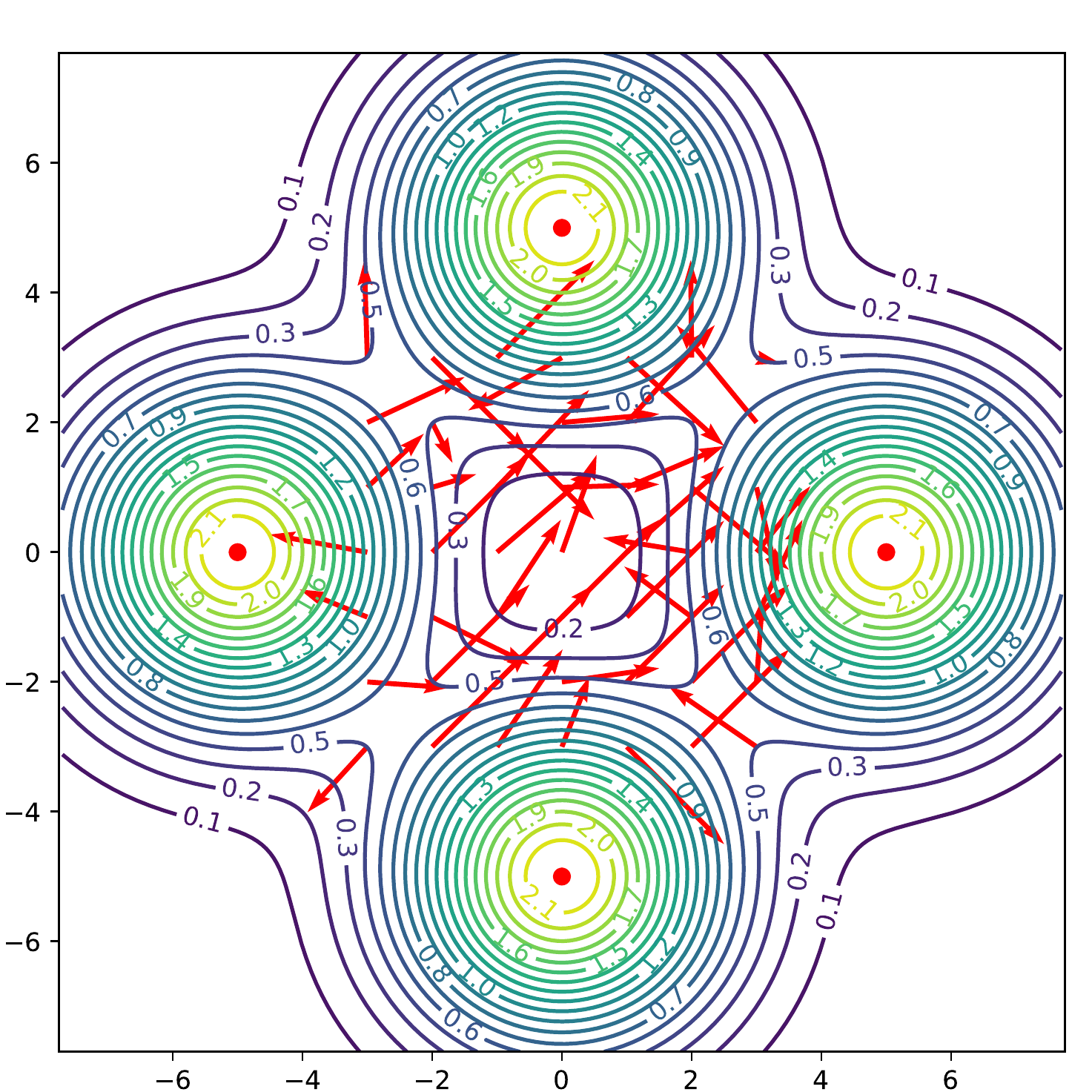}}
                    {\includegraphics[width=3cm,height=3cm]{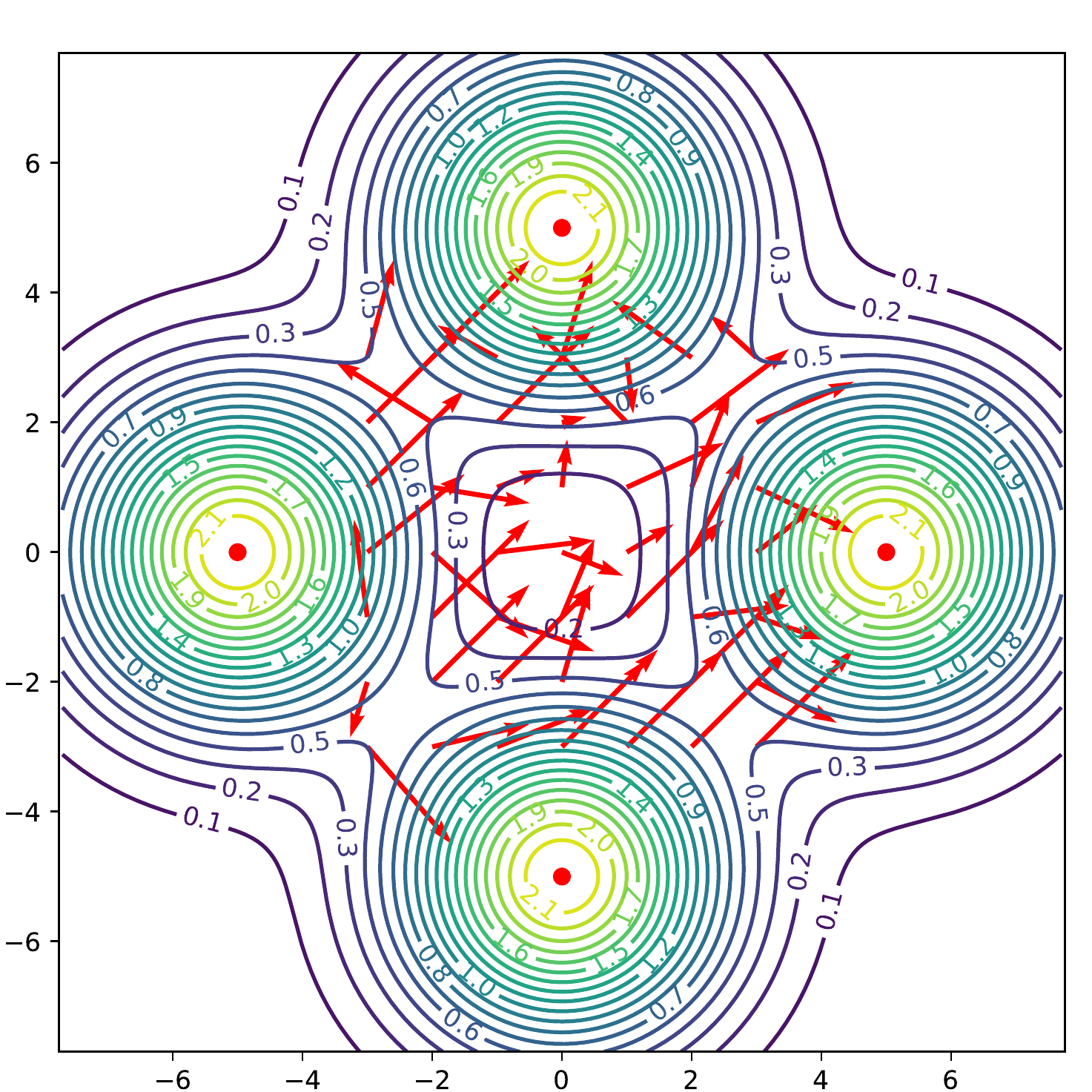}}
 \subfigure[1E$3$ iterations]
    {\includegraphics[width=3cm,height=3cm]{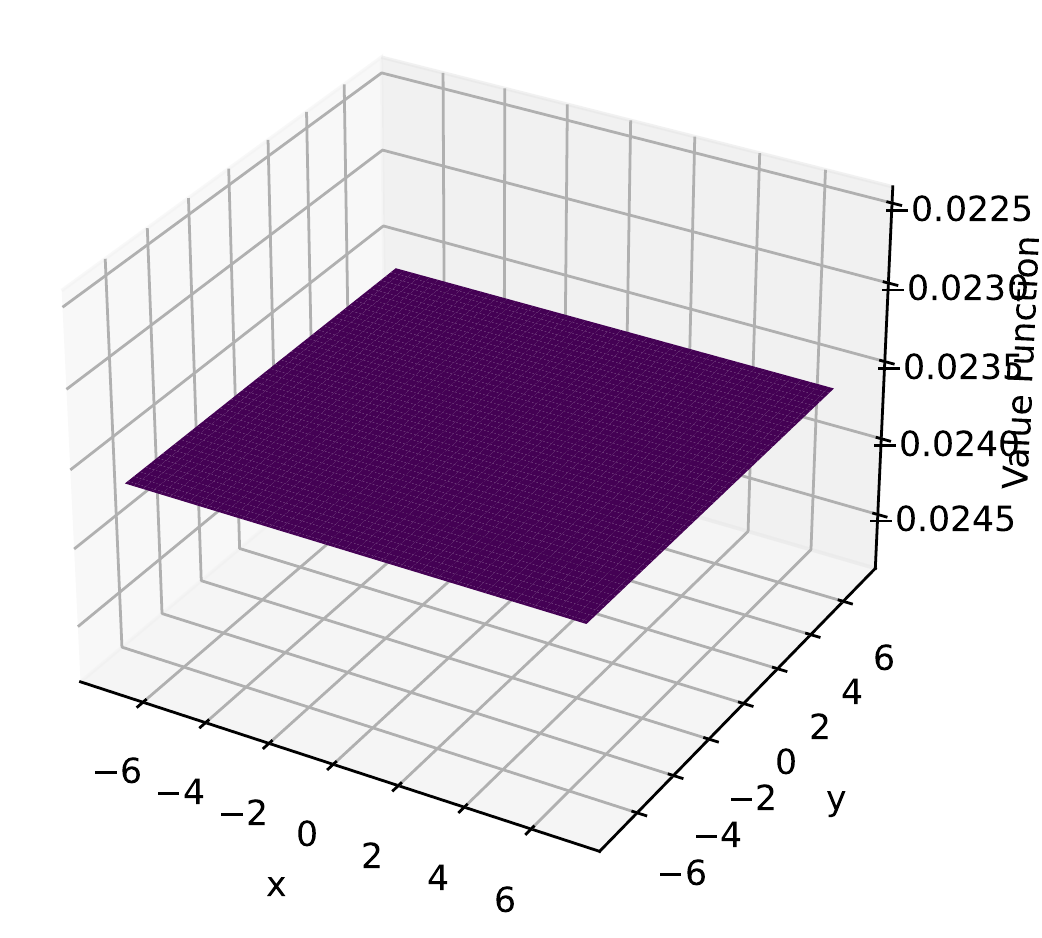}}
         \subfigure[2E$3$ iterations]
        {\includegraphics[width=3cm,height=3cm]{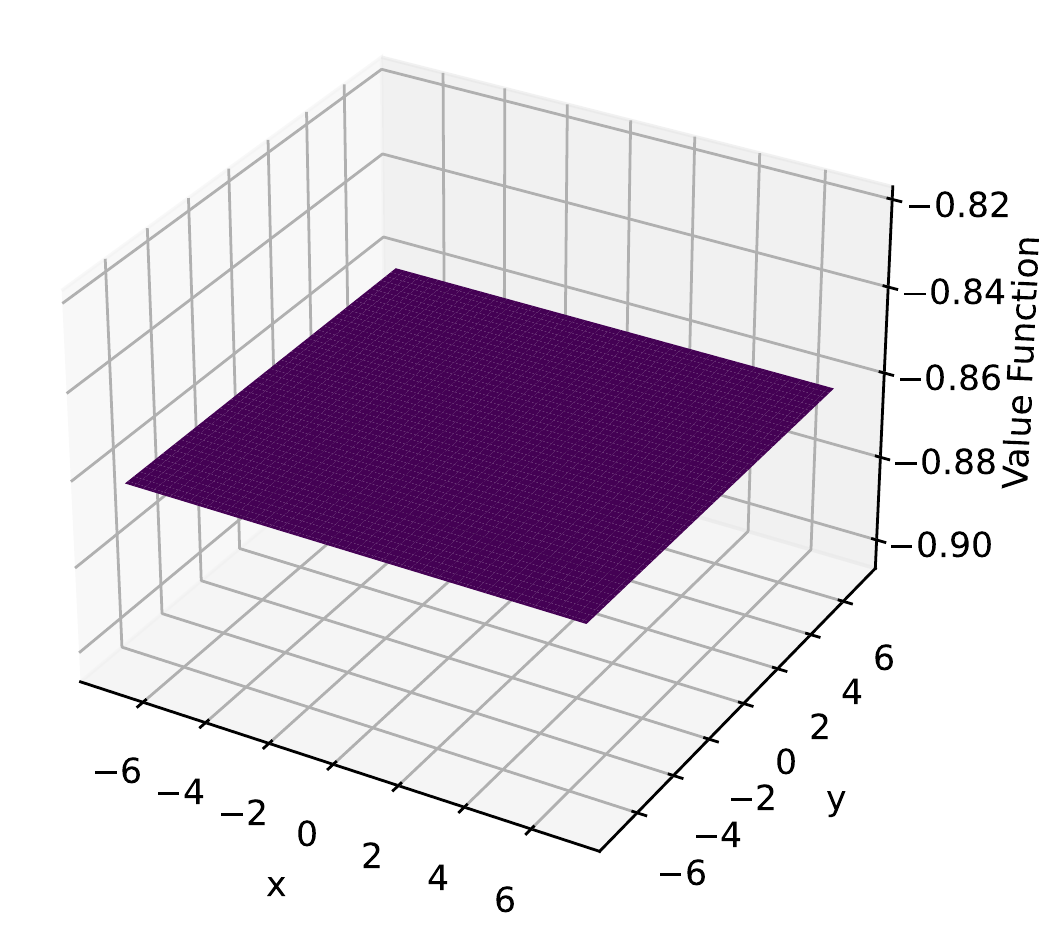}}
             \subfigure[3E$3$ iterations]
            {\includegraphics[width=3cm,height=3cm]{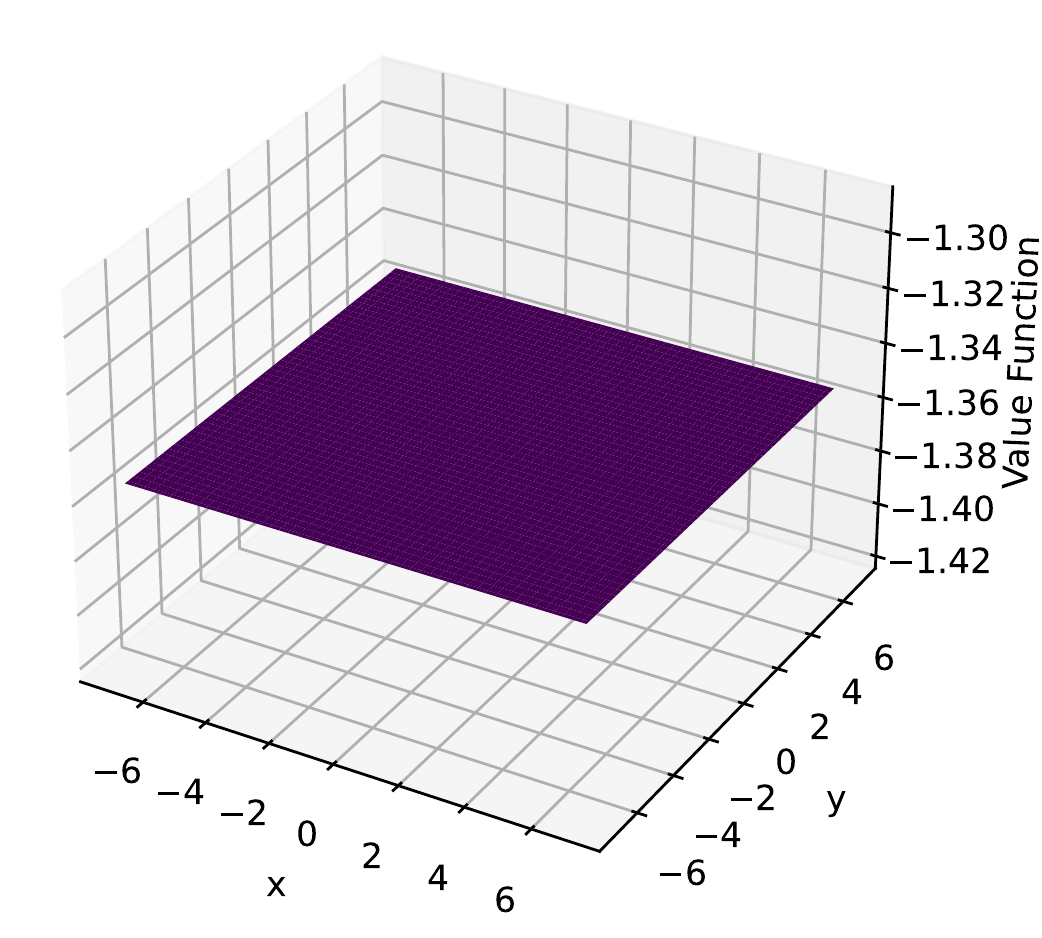}}
                 \subfigure[4E$3$ iterations]
                {\includegraphics[width=3cm,height=3cm]{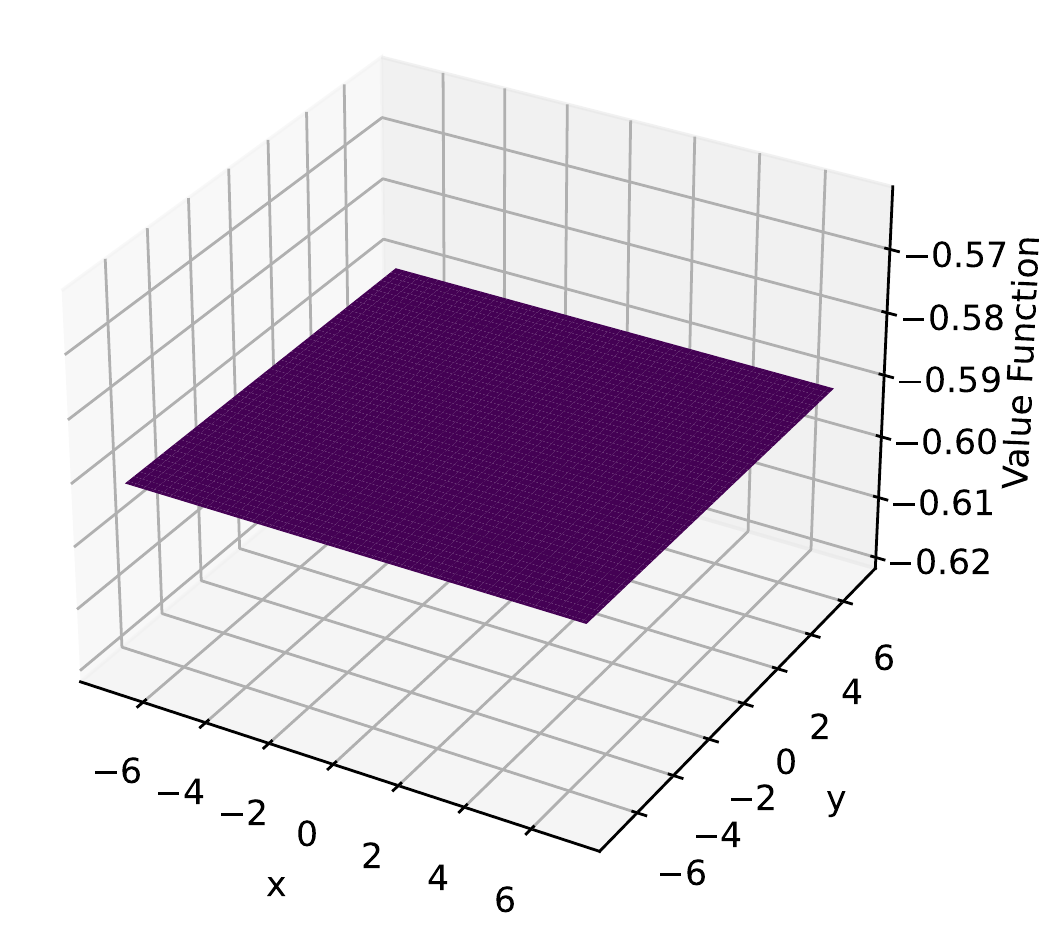}}
                     \subfigure[5E$3$ iterations]
                    {\includegraphics[width=3cm,height=3cm]{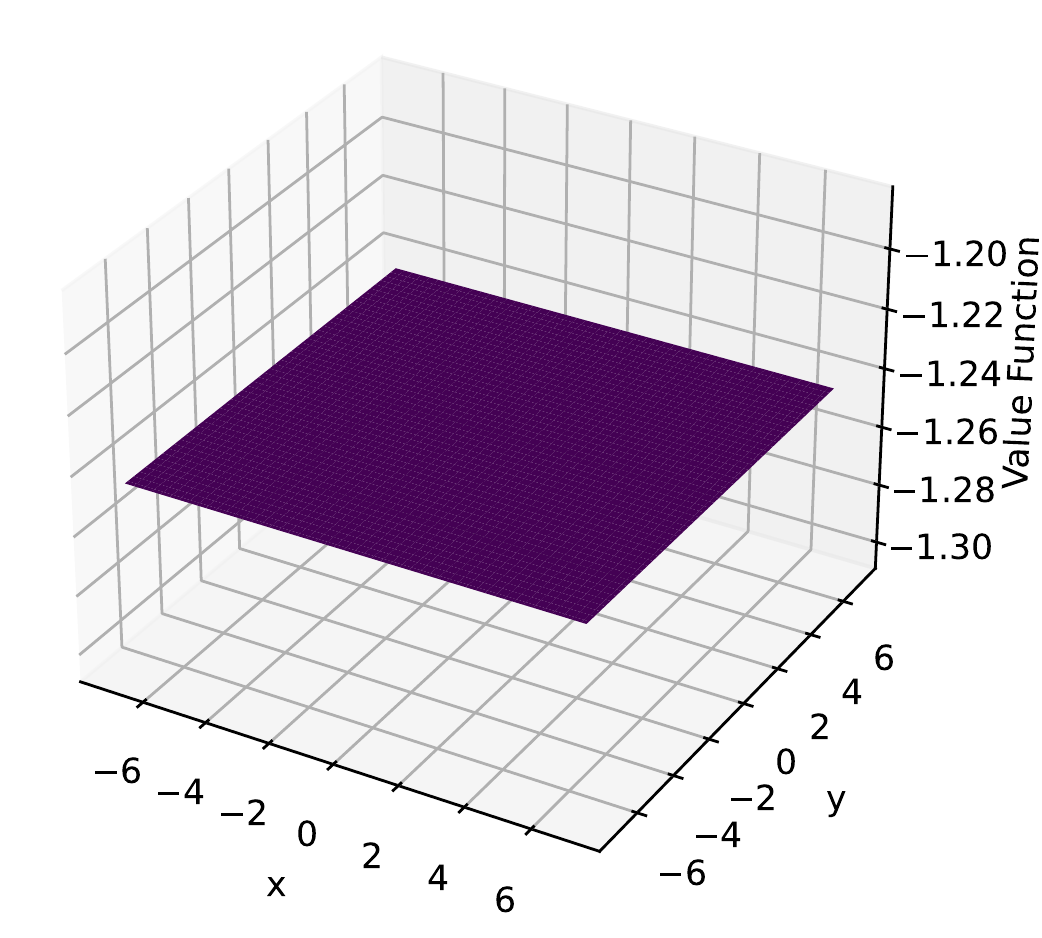}}
 {\includegraphics[width=3cm,height=3cm]{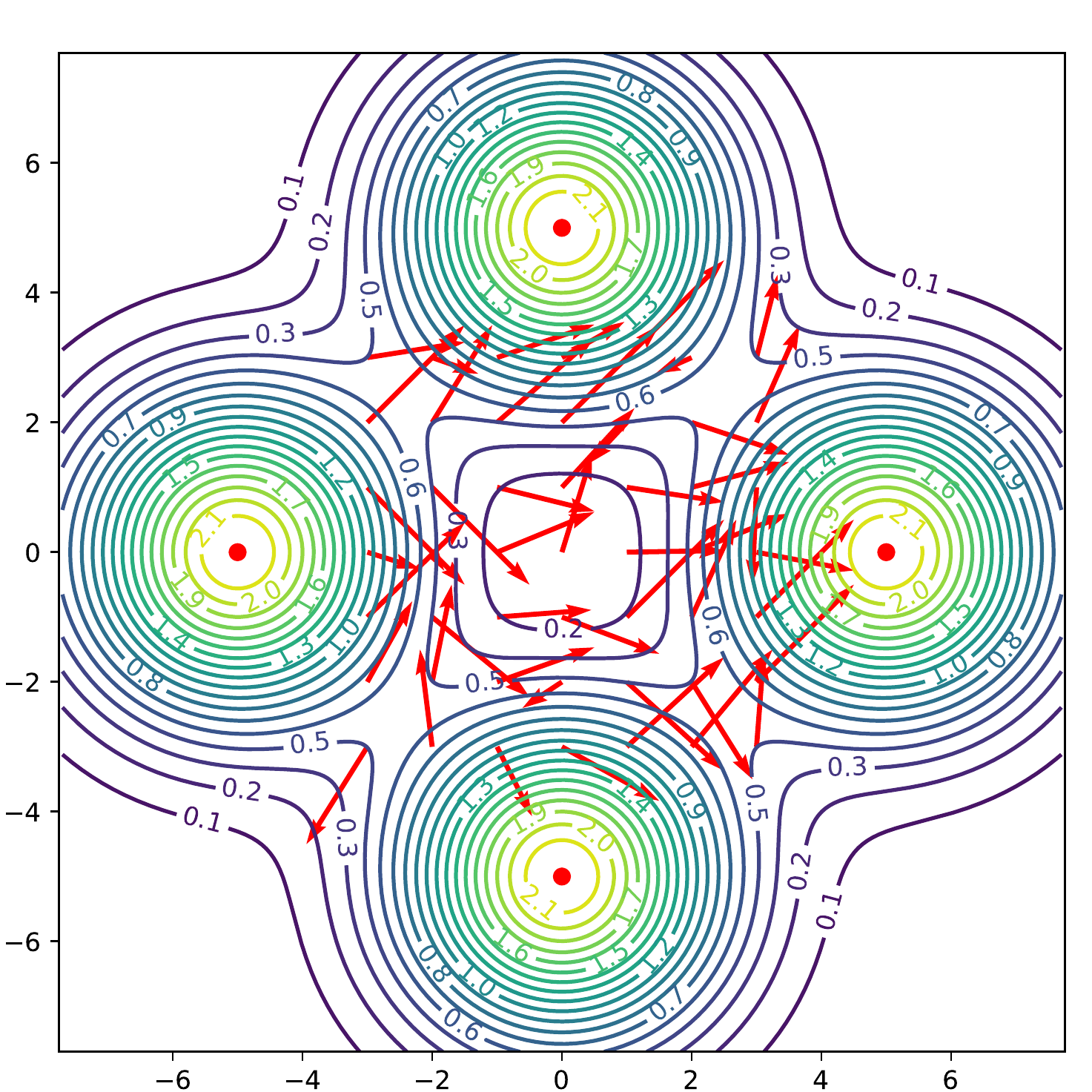}}
        {\includegraphics[width=3cm,height=3cm]{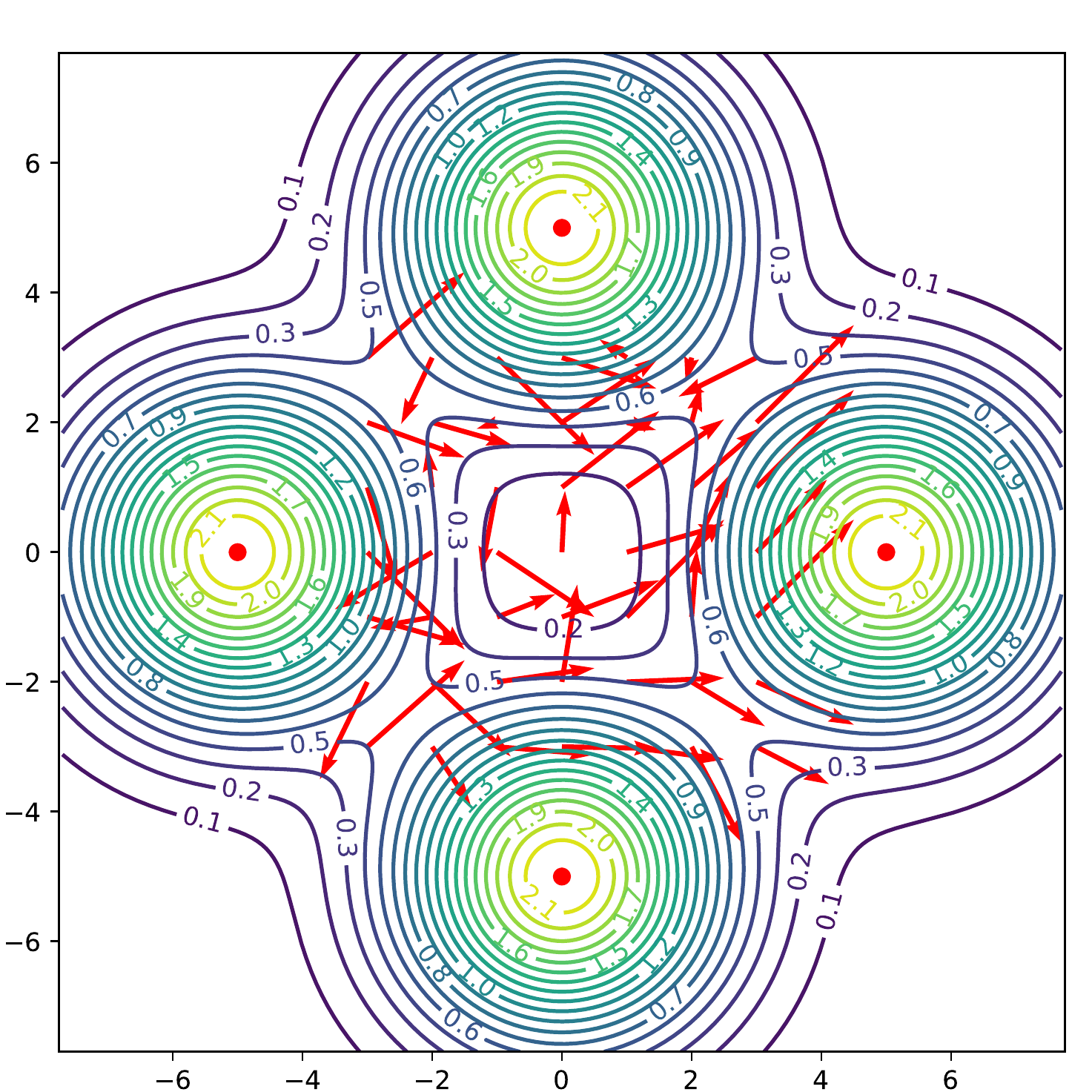}}
            {\includegraphics[width=3cm,height=3cm]{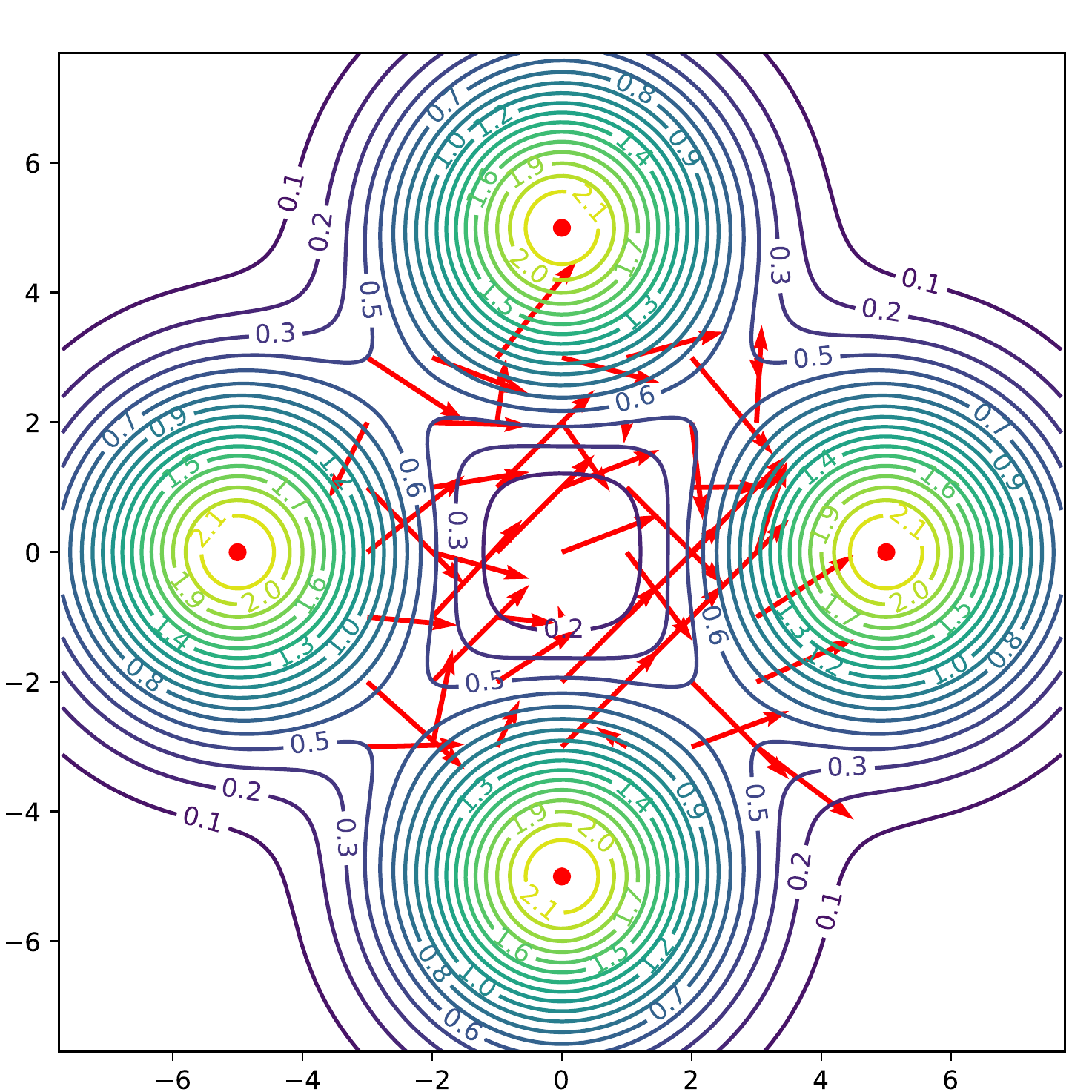}}
                {\includegraphics[width=3cm,height=3cm]{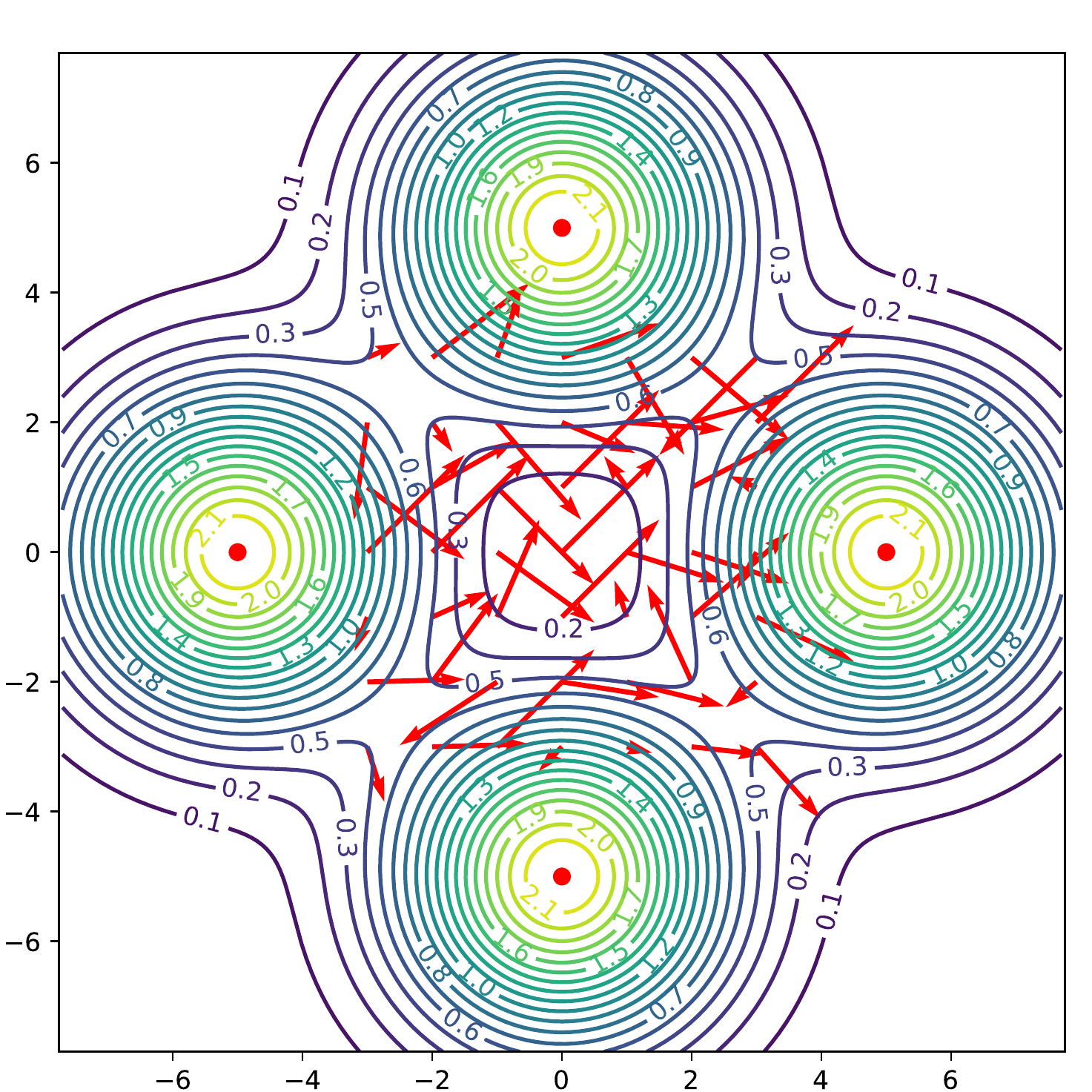}}
                    {\includegraphics[width=3cm,height=3cm]{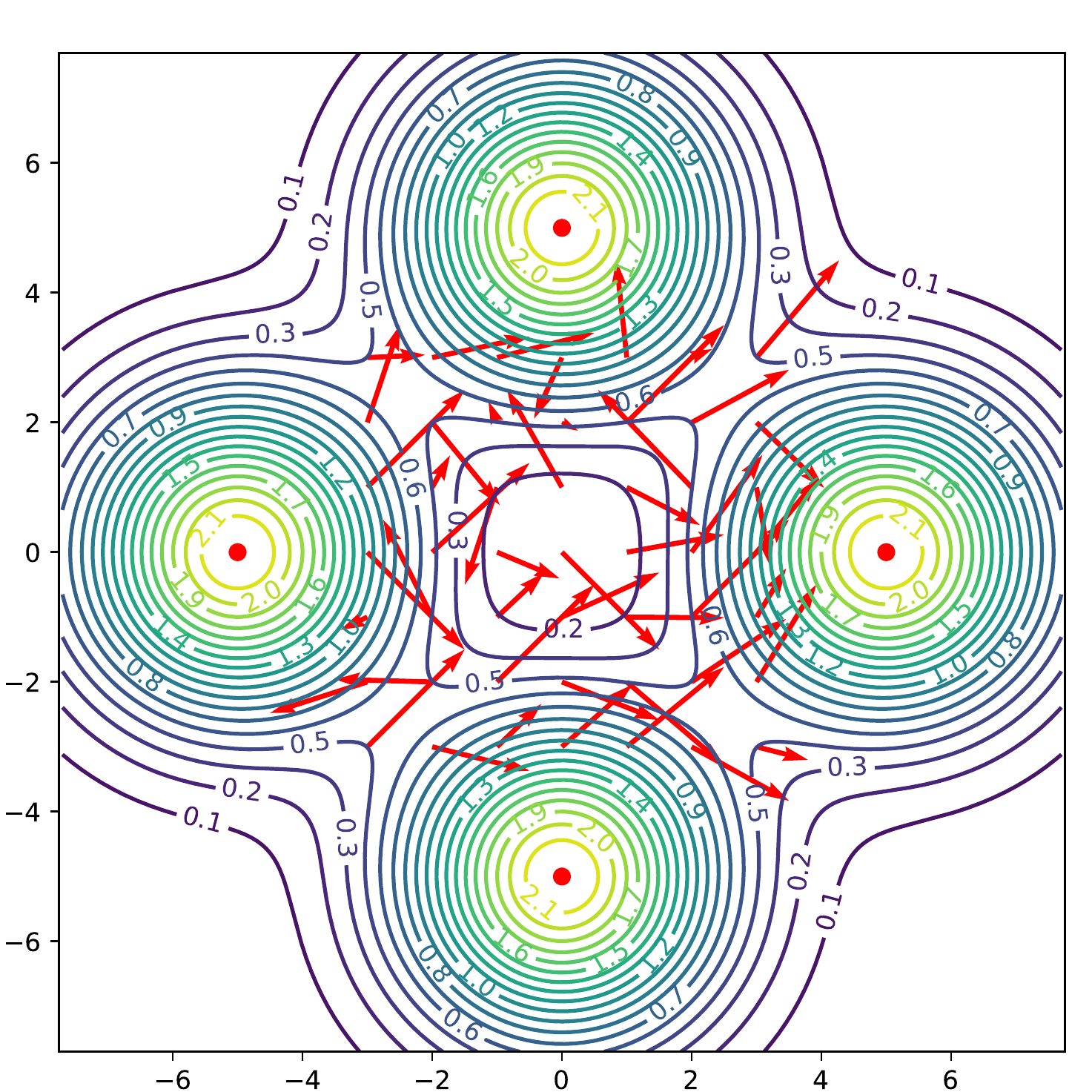}}
 \subfigure[6E$3$ iterations]
    {\includegraphics[width=3cm,height=3cm]{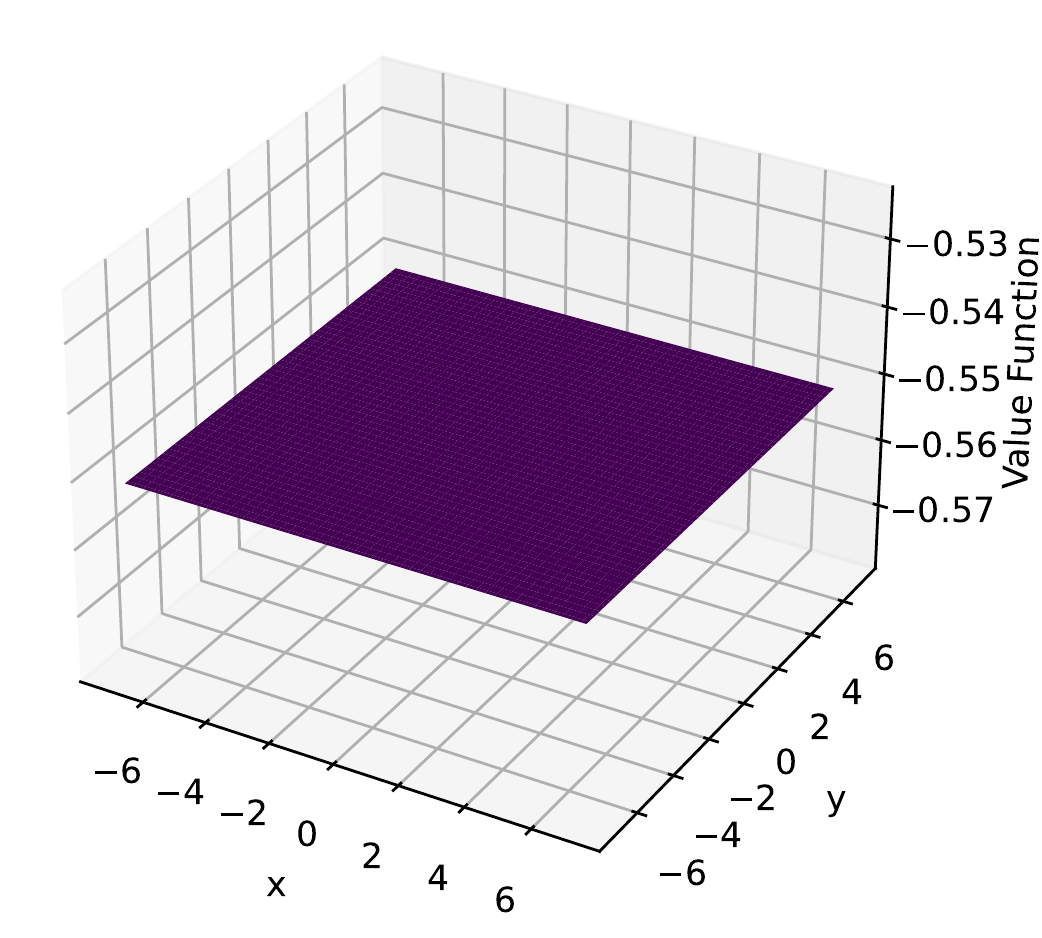}}
         \subfigure[7E$3$ iterations]
        {\includegraphics[width=3cm,height=3cm]{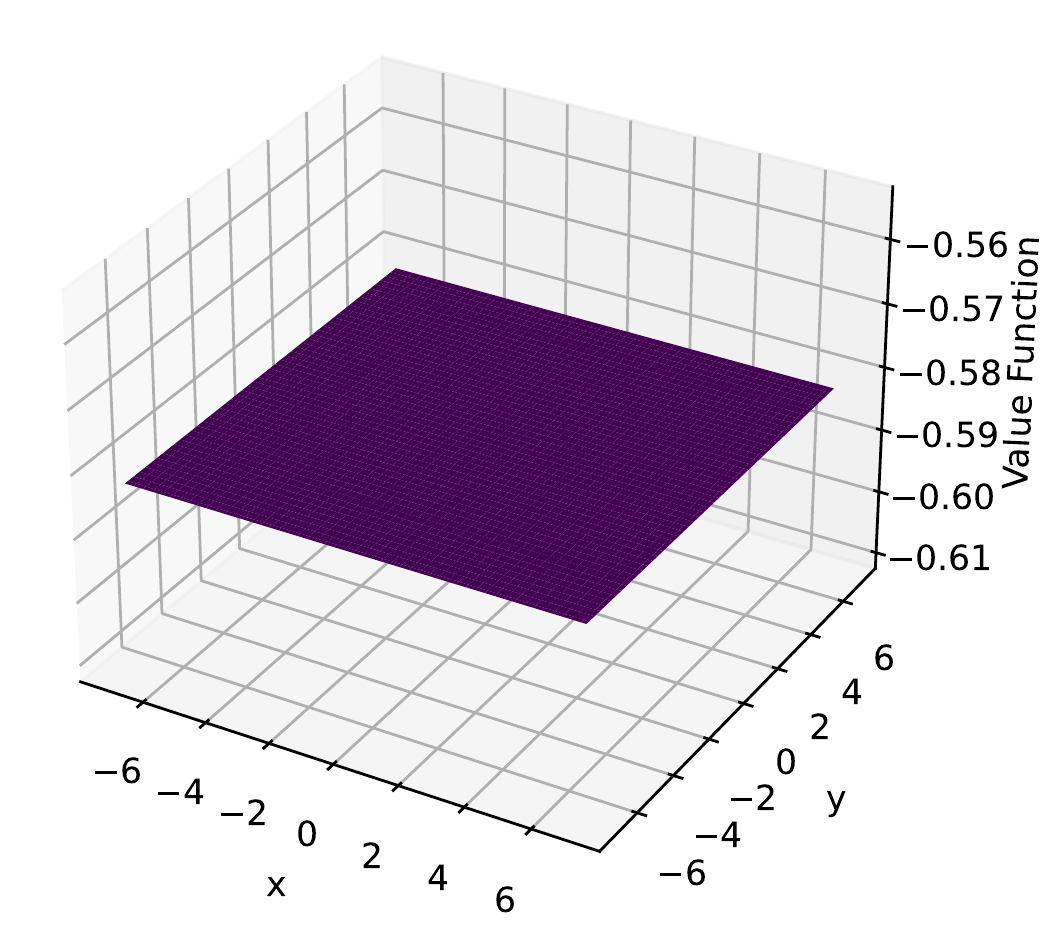}}
             \subfigure[8E$3$ iterations]
            {\includegraphics[width=3cm,height=3cm]{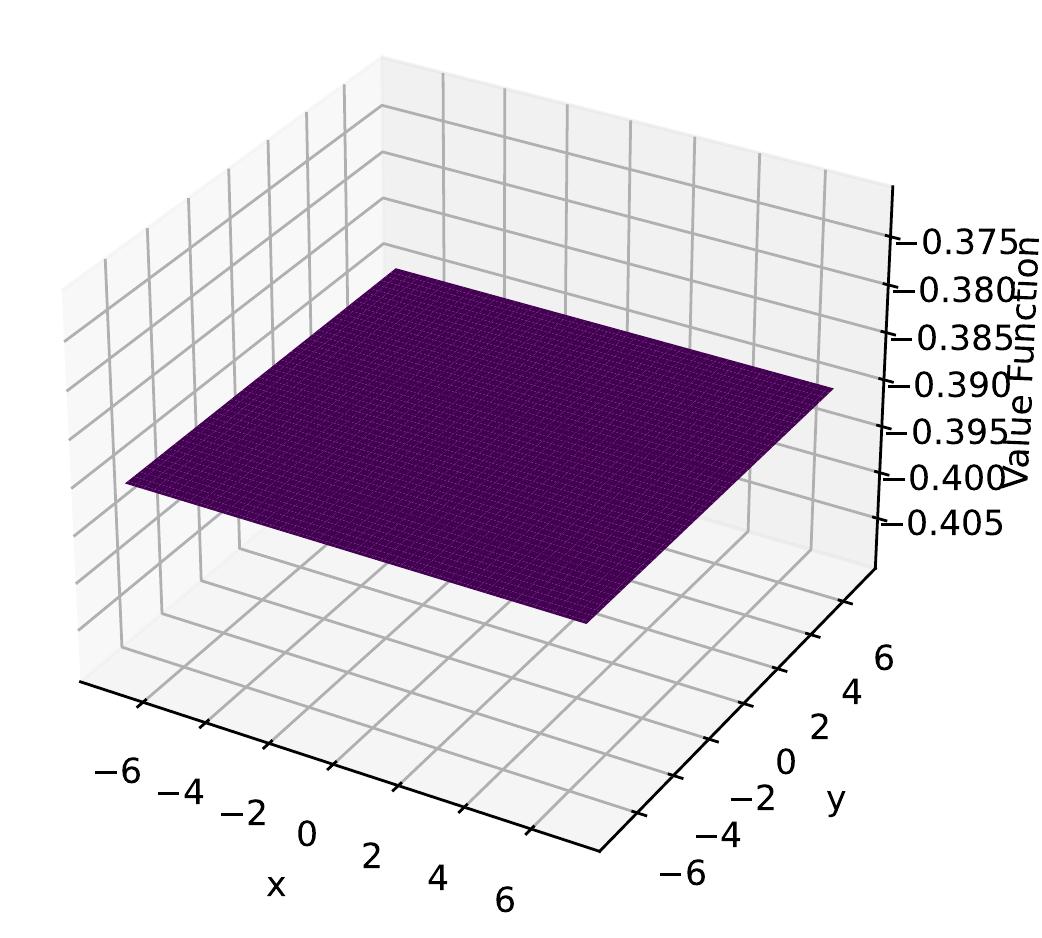}}
                 \subfigure[9E$3$ iterations]
                {\includegraphics[width=3cm,height=3cm]{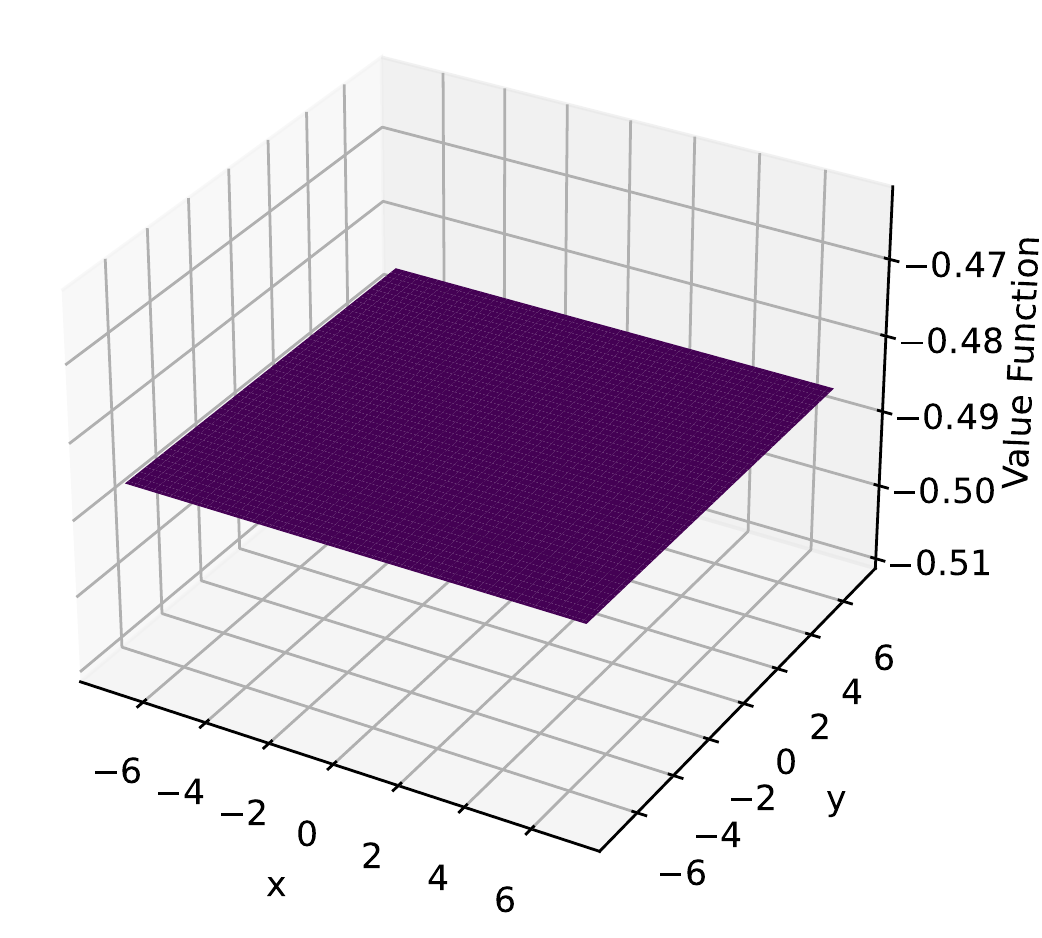}}
                     \subfigure[10E$3$ iterations]
                    {\includegraphics[width=3cm,height=3cm]{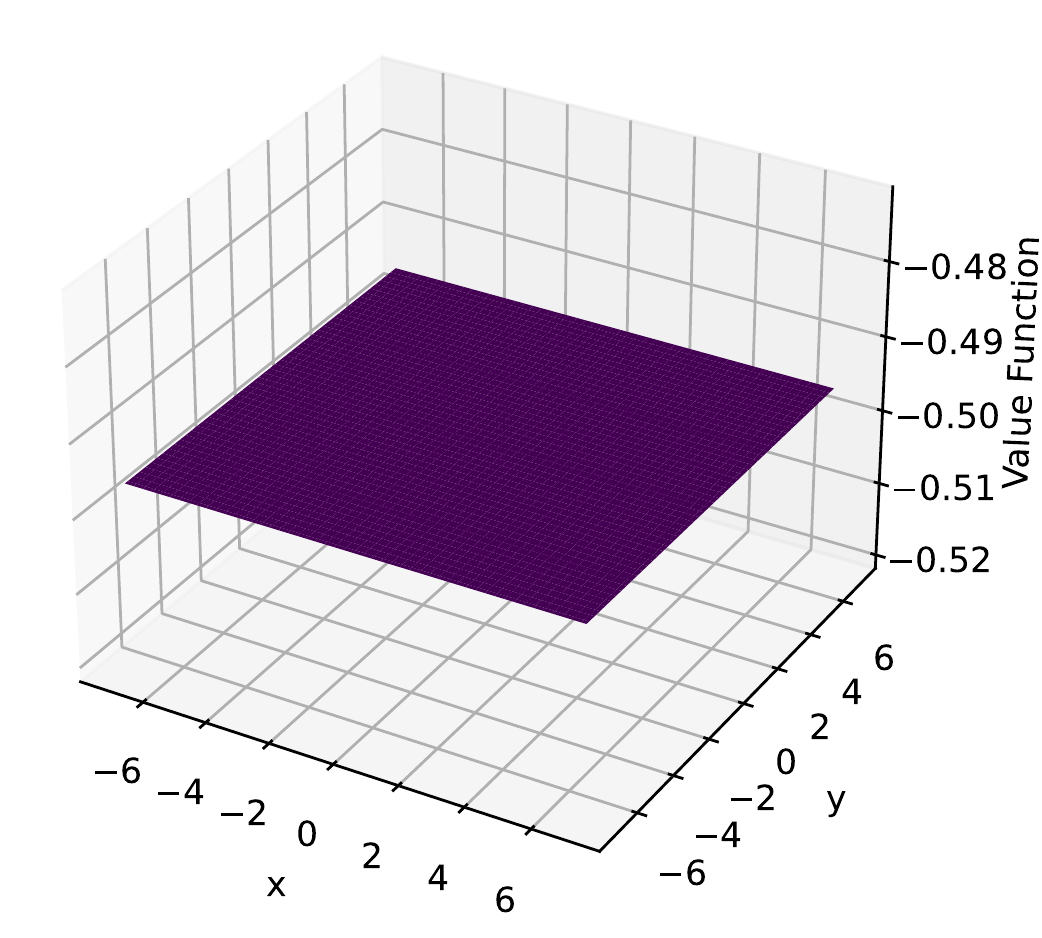}}
    \caption
    {Policy representation comparison of PPO with different iterations.
    }
     \label{app-fig-ppo-policy}
\end{figure*}

\subsection{Results Report}

%

We have shown all the results in Figure \ref{app-fig-diffusion-policy} (for diffusion policy), \ref{app-fig-sac-policy} (for SAC), \ref{app-fig-td3-policy} (for TD3) and \ref{app-fig-ppo-policy} (for PPO), where we train the policy with a total $10000$ iterations, and show the 2D and 3D visualization every $1000$ iteration.

Figure \ref{app-fig-diffusion-policy} shows that the diffusion policy accurately captures a multimodal distribution landscape of reward, while from Figure \ref{app-fig-sac-policy},  \ref{app-fig-td3-policy}, and \ref{app-fig-ppo-policy}, we know that both SAC, TD3, and PPO are not well suited to capture such multimodality. 
Comparing  Figure \ref{app-fig-diffusion-policy} to Figure \ref{app-fig-sac-policy} and \ref{app-fig-td3-policy}, we know that although SAC and TD3 share a similar best reward performance, where both diffusion policy and SAC and TD3 keep the highest reward around $-20$, diffusion policy matches the real environment and performance shape.

From Figure \ref{app-fig-ppo-policy}, we also find PPO always runs around at the initial value, and it does not improve the reward performance, which implies PPO fails to fit multimodality. It does not learn any information about multimodality.

From the distributions of action directions and lengths, we also know the diffusion policy keeps a more gradual and steady action size than the SAC, TD3, and PPO to learn the multimodal reward performance.  Thus, the diffusion model is a powerful policy representation that leads to a more sufficient exploration and better performance, which is our motivation to consider representing policy via the diffusion model.

\clearpage

\section{Additional Experiments}
\label{app-sec-experiment-dipo}

In this section, we provide additional details about the experiments, including Hyper-parameters of all the algorithms; additional tricks for implementation of DIPO; details and additional reports for state-visiting; and ablation study on MLP and VAE.

The Python code for our implementation of DIPO is provided along with this submission in the supplementary material. 
SAC: \url{https://github.com/toshikwa/soft-actor-critic.pytorch}
PPO: \url{https://github.com/ikostrikov/pytorch-a2c-ppo-acktr-gail}
TD3: \url{https://github.com/sfujim/TD3}, which were official code library.

\subsection{Hyper-parameters for MuJoCo}

\label{app-sec:hyper-parameters4mujoco}

Common Hyper-parameters:

\begin{table}[h]
    \centering
    \begin{tabular}{l l l l l}
    \toprule
       Hyperparameter & DIPO & SAC & TD3 & PPO \\
       \midrule
       No. of hidden layers & 2 & 2 & 2 & 2 \\
       No. of hidden nodes  & 256 & 256 & 256 & 256 \\
       Activation & mish & relu & relu & tanh \\
       Batch size & 256 & 256 & 256 & 256 \\
       Discount for reward $\gamma$ & 0.99 & 0.99 & 0.99 & 0.99 \\
       Target smoothing coefficient $\tau$ & 0.005 & 0.005 & 0.005 & 0.005 \\
       Learning rate for actor & $3\times 10^{-4}$ & $3\times 10^{-4}$ & $3\times 10^{-4}$ & $7\times 10^{-4}$ \\
       Learning rate for critic & $3\times 10^{-4}$ & $3\times 10^{-4}$ & $3\times 10^{-4}$ & $7\times 10^{-4}$ \\
       Actor Critic grad norm & 2 & N/A & N/A & 0.5  \\
       Memeroy size & $1\times 10^{6}$ & $1\times 10^{6}$ & $1\times 10^{6}$ & $1\times 10^{6}$ \\
       Entropy coefficient & N/A & 0.2 & N/A & 0.01 \\
       Value loss coefficient & N/A & N/A & N/A & 0.5 \\
       Exploration noise  & N/A & N/A & $\mathcal{N}(0,0.1)$ & N/A \\
       Policy noise  & N/A & N/A & $\mathcal{N}(0,0.2)$ & N/A \\
       Noise clip & N/A & N/A & 0.5 & N/A \\
       Use gae & N/A & N/A &N/A  &True  \\
       \bottomrule
    \end{tabular}
    \caption{Hyper-parameters for algorithms.}
    \label{tab:hyperparameters-algorithm}
\end{table}

Additional Hyper-parameters of DIPO:

\begin{table}[h]
    \centering
    \begin{adjustbox}{width={\textwidth}}
    \begin{tabular}{l l l l l l}
    \toprule
       Hyperparameter & Hopper-v3 & Walker2d-v3 & Ant-v3 & HalfCheetah-v3 & Humanoid-v3 \\
       \midrule
       Learning rate for action & 0.03 & 0.03 & 0.03 & 0.03 & 0.03  \\
       Actor Critic grad norm & 1 & 2 & 0.8 & 2 & 2  \\
       Action grad norm ratio & 0.3 & 0.08 & 0.1 & 0.08 & 0.1  \\
       Action gradient steps & 20 & 20 & 20 & 40 & 20 \\
       Diffusion inference timesteps & 100 & 100 & 100 & 100 & 100  \\
       Diffusion beta schedule & cosine & cosine & cosine & cosine & cosine  \\
       Update actor target every & 1 & 1 & 1 & 2 & 1  \\
       \bottomrule
    \end{tabular}
    \end{adjustbox}
    \caption{Hyper-parameters of DIPO.}
    \label{tab:hyperparameters-mujoco}
\end{table}

\subsection{Additional Tricks for Implementation of DIPO}

We have provided the additional details for the Algorithm \ref{app-algo-diffusion-free-based-rl}.

\subsubsection{Double Q-learning for Estimating $Q$-Value}

We consider the double Q-learning \citep{hasselt2010double} to update the $Q$ value. 
We consider the two critic networks $Q_{\bpsi_{1}}$, $Q_{\bpsi_{2}}$, two target networks $Q_{\bpsi^{'}_{1}}$,  $Q_{\bpsi^{'}_{2}}$.
Let Bellman residual be as follows,
\begin{flalign}
\nonumber
\calL_{\mathrm{Q}}(\bpsi)=\E_{(\bs_{t},\ba_{t},\bs_{t+1},\ba_{t+1})}\left[
\left\|\left(r(\bs_{t+1}|\bs_t,\ba_t)+\gamma\min_{i=1,2} Q_{\bpsi_{i}^{'}}(\bs_{t+1},\ba_{t+1})\right)-Q_{\bpsi}(\bs_t,\ba_t)\right\|^2\right].
\end{flalign}
Then, we update $\bpsi_{i}$ as follows, for $i\in\{1,2\}$
\[
\bpsi_{i}\gets\bpsi_{i}-\eta\bnabla \calL_{\mathrm{Q}}(\bpsi_{i}).
\]

Furthermore, we consider the following soft update rule for $\bpsi^{'}_{i}$ as follows,
\[
\bpsi^{'}_{i}\gets\rho\bpsi^{'}_{i}+(1-\rho)\bpsi_{i}.
\]

Finally, for the action gradient step, we consider the following update rule: replacing each action $\ba_{t}\in\calD_{\mathrm{env}}$ as follows 
         \[\ba_{t}\gets\ba_{t}+\eta_{a}\bnabla_{\ba}\left(\min_{i=1,2}\left\{Q_{\bpsi_{i}}(\bs_{t},\ba)\right\}\right)\bigg|_{\ba=\ba_t}.\]

\subsubsection{Critic and 	Diffusion Model}

We use a four-layer feedforward neural network of 256 hidden nodes, with activation function Mish \citep{misra2019mish} between each layer, to design the two critic networks $Q_{\bpsi_{1}}$, $Q_{\bpsi_{2}}$ two target networks $Q_{\bpsi^{'}_{1}}$,  $Q_{\bpsi^{'}_{2}}$, and the noise term $\bepsilon_{\bphi}$.
We consider gradient normalization for critic and $\bepsilon_{\bphi}$ to stabilize the training process.

For each reverse time $k\in[K]$, we consider the sinusoidal positional encoding \citep{vaswani2017attention} to encode each $k\in[K]$ into a 32-dimensional vector.

\begin{figure*}[t!]
    \centering
    {\includegraphics[width=1.4cm,height=1.4cm]{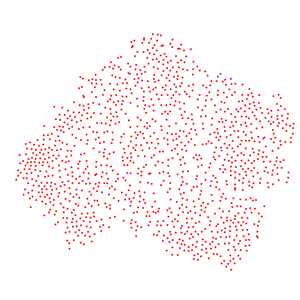}}
   {\includegraphics[width=1.4cm,height=1.4cm]{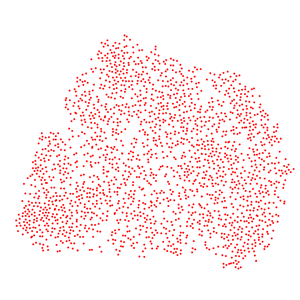}}
{\includegraphics[width=1.4cm,height=1.4cm]{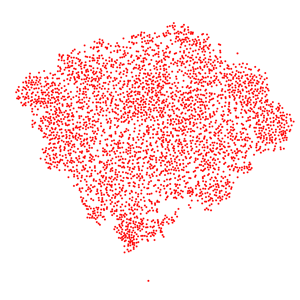}}
    {\includegraphics[width=1.4cm,height=1.4cm]{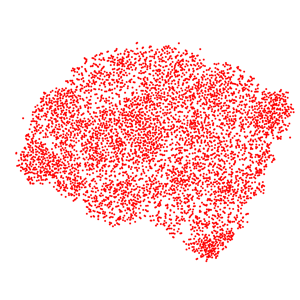}}
{\includegraphics[width=1.4cm,height=1.4cm]{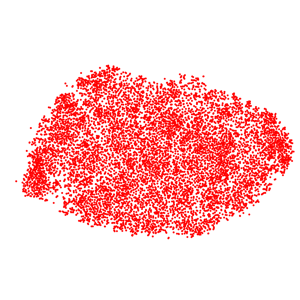}}
    {\includegraphics[width=1.4cm,height=1.4cm]{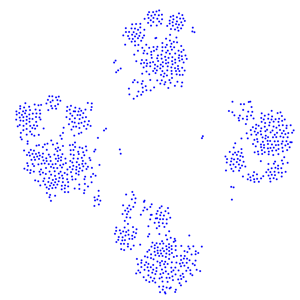}}
   {\includegraphics[width=1.4cm,height=1.4cm]{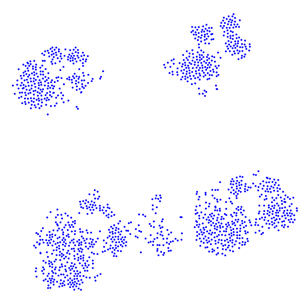}}
{\includegraphics[width=1.4cm,height=1.4cm]{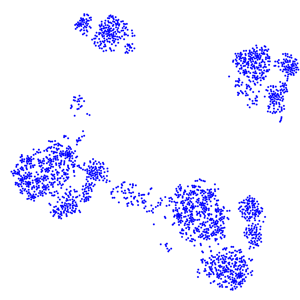}}
    {\includegraphics[width=1.4cm,height=1.4cm]{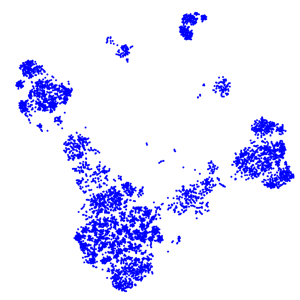}}
{\includegraphics[width=1.4cm,height=1.4cm]{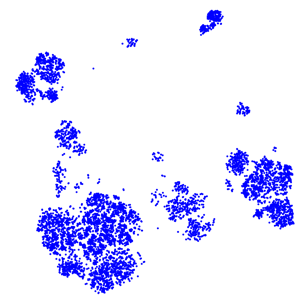}}
    {\includegraphics[width=1.4cm,height=1.4cm]{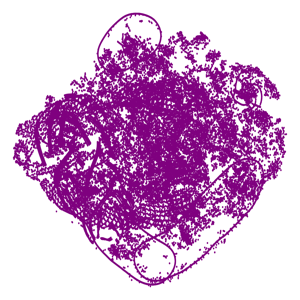}}
   {\includegraphics[width=1.4cm,height=1.4cm]{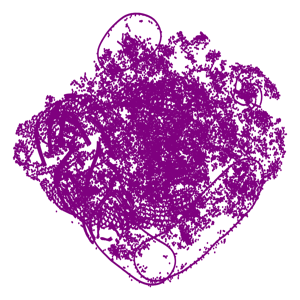}}
{\includegraphics[width=1.4cm,height=1.4cm]{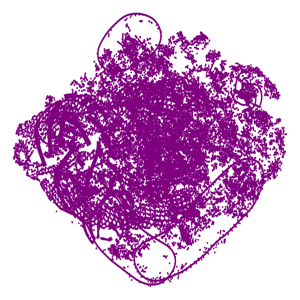}}
{\includegraphics[width=1.4cm,height=1.4cm]{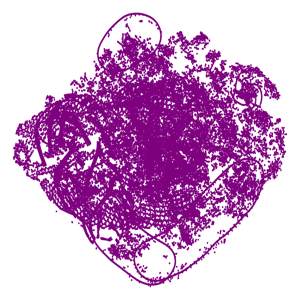}}
{\includegraphics[width=1.4cm,height=1.4cm]{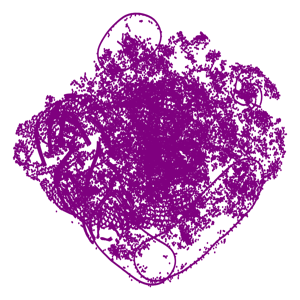}}
    {\includegraphics[width=1.4cm,height=1.4cm]{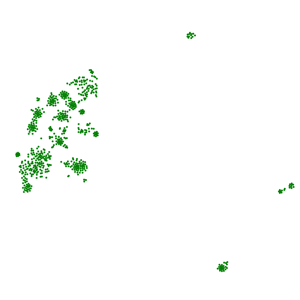}}
   {\includegraphics[width=1.4cm,height=1.4cm]{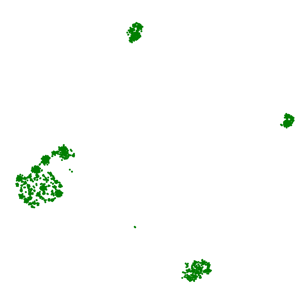}}
{\includegraphics[width=1.4cm,height=1.4cm]{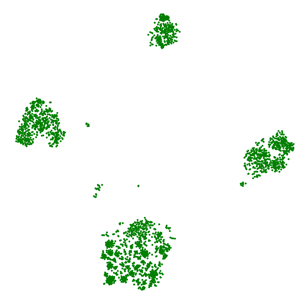}}
    {\includegraphics[width=1.4cm,height=1.4cm]{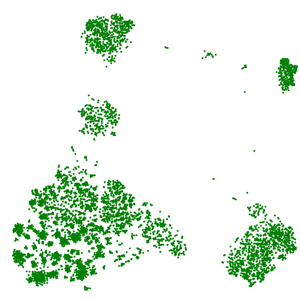}}
{\includegraphics[width=1.4cm,height=1.4cm]{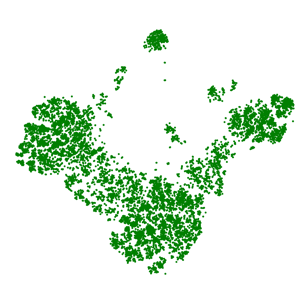}}
    {\includegraphics[width=9cm,height=0.4cm]{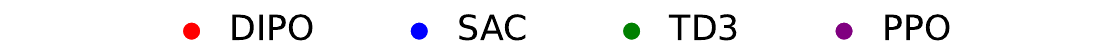}}
    \caption
    {State-visiting distribution of Humanoid-v3, where states get dimension reduction by t-SNE. The points with different colors represent the states visited by the policy with the style. The distance between points represents the difference between states.
    }
        \label{state-Humanoid}
\end{figure*}

\begin{figure*}[t!]
    \centering
    {\includegraphics[width=1.4cm,height=1.4cm]{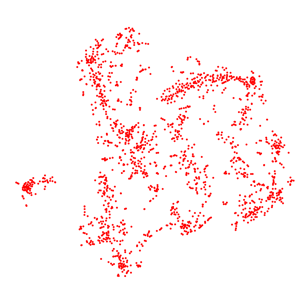}}
   {\includegraphics[width=1.4cm,height=1.4cm]{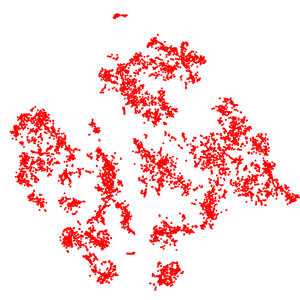}}
{\includegraphics[width=1.4cm,height=1.4cm]{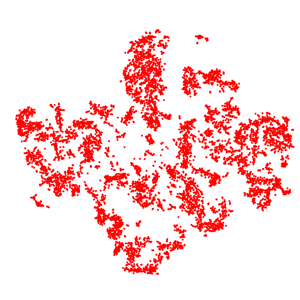}}
    {\includegraphics[width=1.4cm,height=1.4cm]{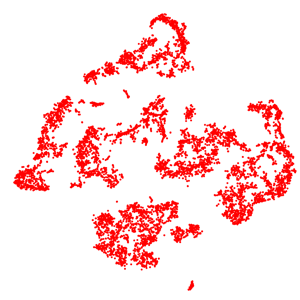}}
{\includegraphics[width=1.4cm,height=1.4cm]{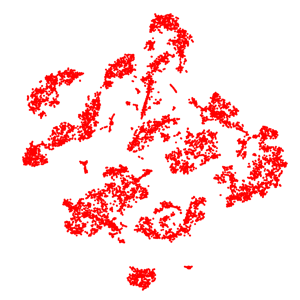}}
    {\includegraphics[width=1.4cm,height=1.4cm]{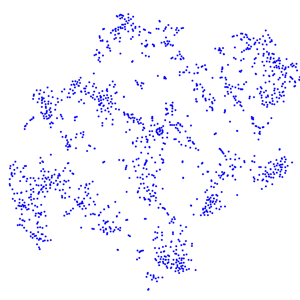}}
   {\includegraphics[width=1.4cm,height=1.4cm]{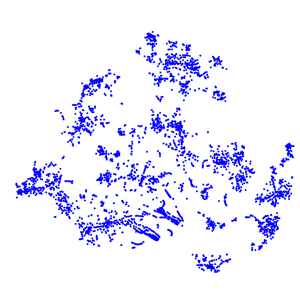}}
{\includegraphics[width=1.4cm,height=1.4cm]{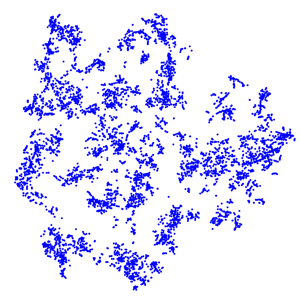}}
    {\includegraphics[width=1.4cm,height=1.4cm]{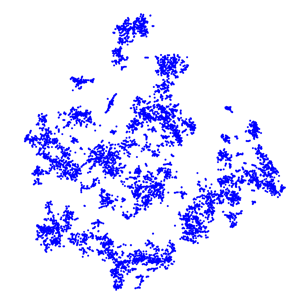}}
{\includegraphics[width=1.4cm,height=1.4cm]{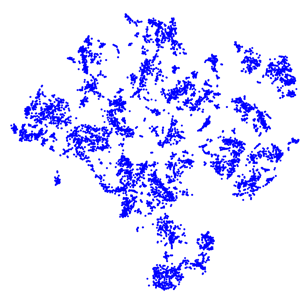}}
    {\includegraphics[width=1.4cm,height=1.4cm]{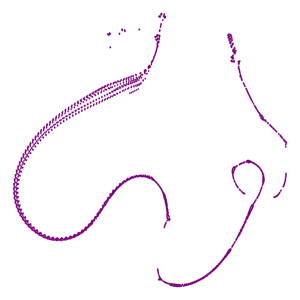}}
   {\includegraphics[width=1.4cm,height=1.4cm]{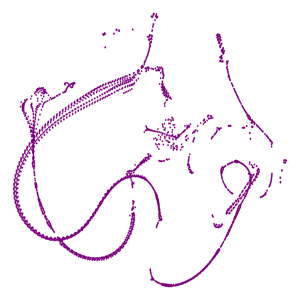}}
{\includegraphics[width=1.4cm,height=1.4cm]{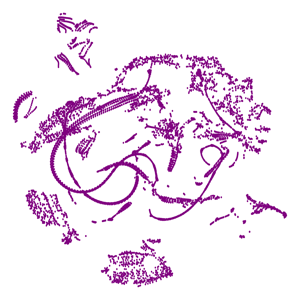}}
{\includegraphics[width=1.4cm,height=1.4cm]{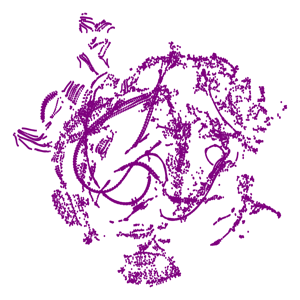}}
{\includegraphics[width=1.4cm,height=1.4cm]{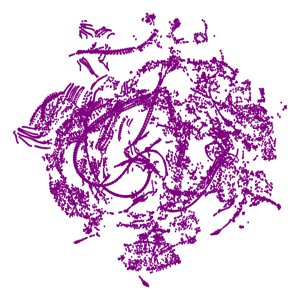}}
    {\includegraphics[width=1.4cm,height=1.4cm]{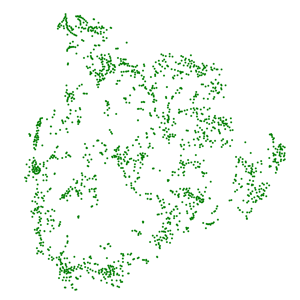}}
   {\includegraphics[width=1.4cm,height=1.4cm]{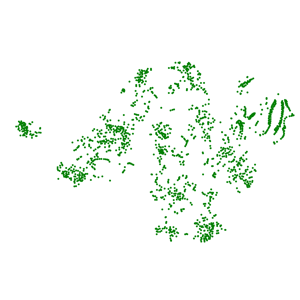}}
{\includegraphics[width=1.4cm,height=1.4cm]{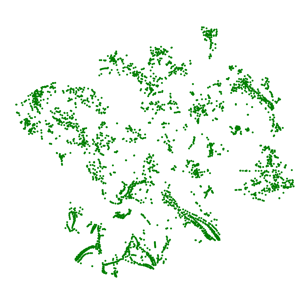}}
    {\includegraphics[width=1.4cm,height=1.4cm]{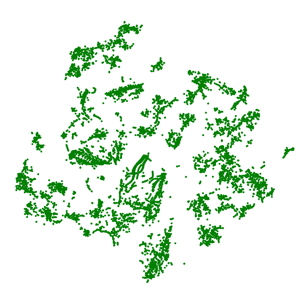}}
{\includegraphics[width=1.4cm,height=1.4cm]{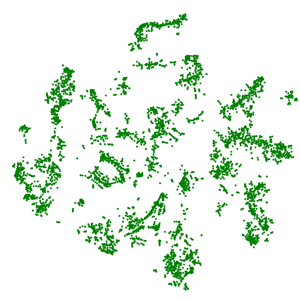}}
    {\includegraphics[width=9cm,height=0.4cm]{states/dot_label.pdf}}
    \caption
    {State-visiting distribution of Walker2d-v3, where states get dimension reduction by t-SNE. The points with different colors represent the states visited by the policy with the style. The distance between points represents the difference between states.
    }
    \label{state-Walker2d}
\end{figure*}
\begin{figure*}[t!]
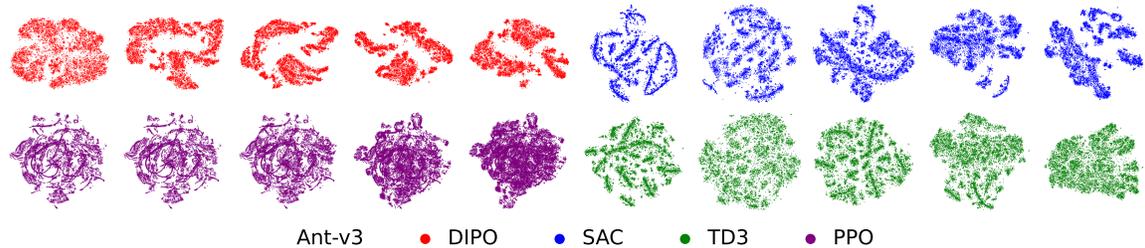

    \centering
    {\includegraphics[width=1.4cm,height=1.4cm]{states/Ant-v3/DPL-step=100000}}
   {\includegraphics[width=1.4cm,height=1.4cm]{states/Ant-v3/DPL-step=200000}}
{\includegraphics[width=1.4cm,height=1.4cm]{states/Ant-v3/DPL-step=300000}}
    {\includegraphics[width=1.4cm,height=1.4cm]{states/Ant-v3/DPL-step=400000}}
{\includegraphics[width=1.4cm,height=1.4cm]{states/Ant-v3/DPL-step=500000}}
    {\includegraphics[width=1.4cm,height=1.4cm]{states/Ant-v3/SAC-step=100000}}
   {\includegraphics[width=1.4cm,height=1.4cm]{states/Ant-v3/SAC-step=200000}}
{\includegraphics[width=1.4cm,height=1.4cm]{states/Ant-v3/SAC-step=300000}}
    {\includegraphics[width=1.4cm,height=1.4cm]{states/Ant-v3/SAC-step=400000}}
{\includegraphics[width=1.4cm,height=1.4cm]{states/Ant-v3/SAC-step=500000}}
    {\includegraphics[width=1.4cm,height=1.4cm]{states/PPO_state/Ant-v3/step=100000}}
   {\includegraphics[width=1.4cm,height=1.4cm]{states/PPO_state/Ant-v3/step=200000}}
{\includegraphics[width=1.4cm,height=1.4cm]{states/PPO_state/Ant-v3/step=300000}}
{\includegraphics[width=1.4cm,height=1.4cm]{states/PPO_state/Ant-v3/step=400000}}
{\includegraphics[width=1.4cm,height=1.4cm]{states/PPO_state/Ant-v3/step=500000}}
    {\includegraphics[width=1.4cm,height=1.4cm]{states/Ant-v3/TD3-step=100000}}
   {\includegraphics[width=1.4cm,height=1.4cm]{states/Ant-v3/TD3-step=200000}}
{\includegraphics[width=1.4cm,height=1.4cm]{states/Ant-v3/TD3-step=300000}}
    {\includegraphics[width=1.4cm,height=1.4cm]{states/Ant-v3/TD3-step=400000}}
{\includegraphics[width=1.4cm,height=1.4cm]{states/Ant-v3/TD3-step=500000}}
{\includegraphics[width=9cm,height=0.38cm]{states/Ant-v3_dot_label}}
   \caption
  {State-visiting visualization by each algorithm on the Ant-v3 task, where states get dimension reduction by t-SNE. The points with different colors represent the states visited by the policy with the style. The distance between points represents the difference between states. }
 \label{app-state-ant}
\end{figure*}

\subsection{Details and Additional Reports for State-Visiting}

\label{sec-app-state-visiting}

In this section, we provide more details for Section \ref{ex-comparative-eva-ill}, including the implementation details (see Appendix \ref{app-Implementation-details4-2D-state}), more comparisons and more insights for the empirical results. We provide the main discussions cover the following three observations:
\begin{itemize}
\item poor exploration results in poor initial reward performance;
\item good final reward performance along with dense state-visiting;
\item a counterexample: PPO violates the above two observations.
\end{itemize}

\subsubsection{Implementation Details for 2D State-Visiting}
\label{app-Implementation-details4-2D-state}

We save the parameters for each algorithm during the training for each 1E5 iteration.
Then we run the model with an episode with ten random seeds to compare fairly; those ten random seeds are the same among different algorithms.
Thus, we collect a state set with ten episodes for each algorithm.
Finally, we convert high-dimensional state data into two-dimensional state data by t-SNE \citep{van2008visualizing}, and we show the visualization according to the open implementation
\url{https://scikit-learn.org/stable/auto_examples/manifold/plot_t_sne_perplexity.html}
where we set the parameters as follows,
\[\texttt{perpexity}=50, \texttt{early\_exaggeration}=12,\texttt{random\_state}=33.\]

We have shown all the results in Figure \ref{state-Humanoid} (for Humanoid); Figure \ref{state-Walker2d} (for Walker2d); Figure \ref{app-state-ant} (for Ant); Figure \ref{app-state-HalfCheetah} (for HalfCheetah); and Figure \ref{state-hopper} (for Hopper), where we polt the result after each E5 iterations.

\begin{figure*}[t!]
    \centering
    {\includegraphics[width=1.4cm,height=1.4cm]{states/HalfCheetah-v3/DPL-step=100000}}
   {\includegraphics[width=1.4cm,height=1.4cm]{states/HalfCheetah-v3/DPL-step=200000}}
{\includegraphics[width=1.4cm,height=1.4cm]{states/HalfCheetah-v3/DPL-step=300000}}
    {\includegraphics[width=1.4cm,height=1.4cm]{states/HalfCheetah-v3/DPL-step=400000}}
{\includegraphics[width=1.4cm,height=1.4cm]{states/HalfCheetah-v3/DPL-step=500000}}
    {\includegraphics[width=1.4cm,height=1.4cm]{states/HalfCheetah-v3/SAC-step=100000}}
   {\includegraphics[width=1.4cm,height=1.4cm]{states/HalfCheetah-v3/SAC-step=200000}}
{\includegraphics[width=1.4cm,height=1.4cm]{states/HalfCheetah-v3/SAC-step=300000}}
    {\includegraphics[width=1.4cm,height=1.4cm]{states/HalfCheetah-v3/SAC-step=400000}}
{\includegraphics[width=1.4cm,height=1.4cm]{states/HalfCheetah-v3/SAC-step=500000}}
    {\includegraphics[width=1.4cm,height=1.4cm]{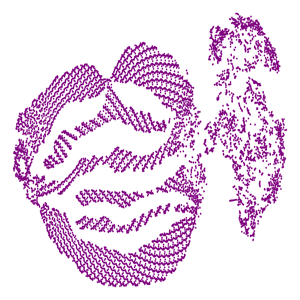}}
   {\includegraphics[width=1.4cm,height=1.4cm]{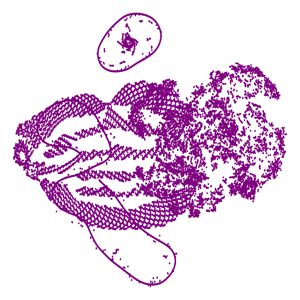}}
{\includegraphics[width=1.4cm,height=1.4cm]{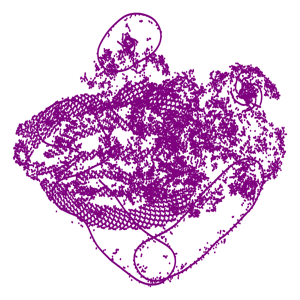}}
{\includegraphics[width=1.4cm,height=1.4cm]{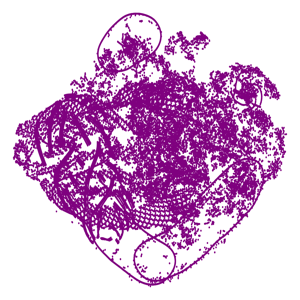}}
{\includegraphics[width=1.4cm,height=1.4cm]{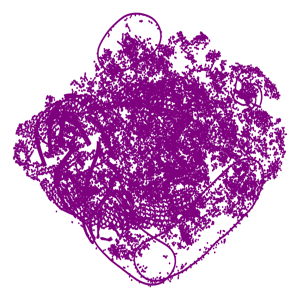}}
    {\includegraphics[width=1.4cm,height=1.4cm]{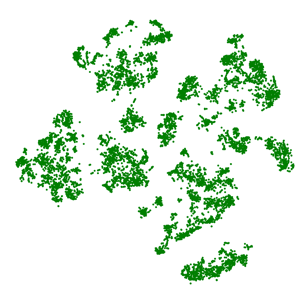}}
   {\includegraphics[width=1.4cm,height=1.4cm]{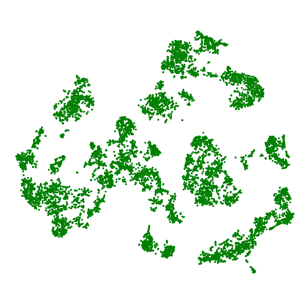}}
{\includegraphics[width=1.4cm,height=1.4cm]{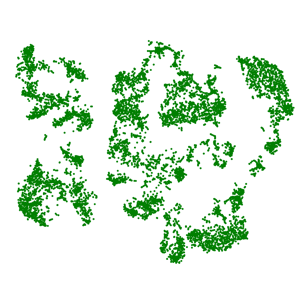}}
    {\includegraphics[width=1.4cm,height=1.4cm]{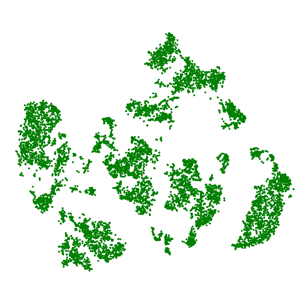}}
{\includegraphics[width=1.4cm,height=1.4cm]{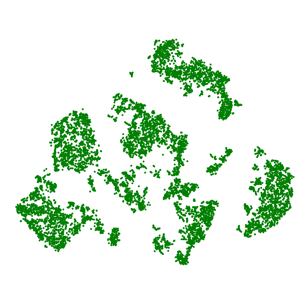}}
    {\includegraphics[width=9cm,height=0.38cm]{states/dot_label.pdf}}
    \caption
    {The state-visiting visualization by each algorithm on the HalfCheetah-v3 task, where states get dimension reduction by t-SNE. The points with different colors represent the states visited by the policy with the style. The distance between points represents the difference between states.}
     \label{app-state-HalfCheetah}
\end{figure*}
\begin{figure*}[t!]
    \centering
    {\includegraphics[width=1.4cm,height=1.4cm]{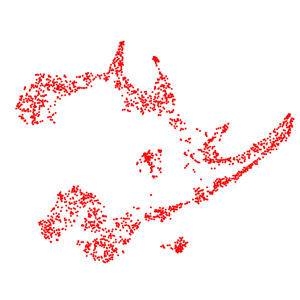}}
   {\includegraphics[width=1.4cm,height=1.4cm]{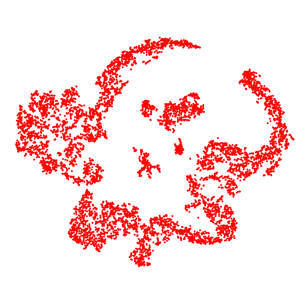}}
{\includegraphics[width=1.4cm,height=1.4cm]{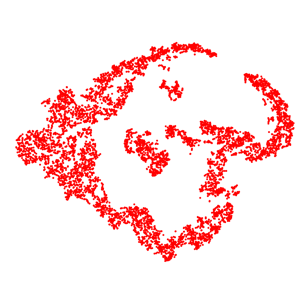}}
    {\includegraphics[width=1.4cm,height=1.4cm]{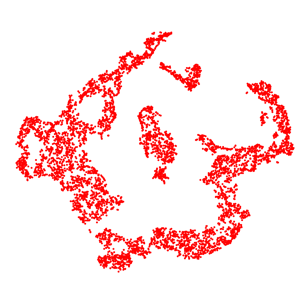}}
{\includegraphics[width=1.4cm,height=1.4cm]{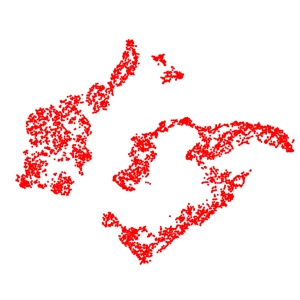}}
    {\includegraphics[width=1.4cm,height=1.4cm]{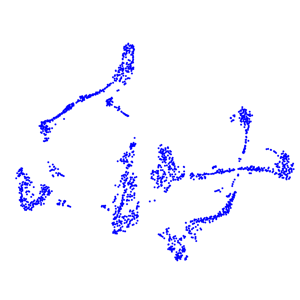}}
   {\includegraphics[width=1.4cm,height=1.4cm]{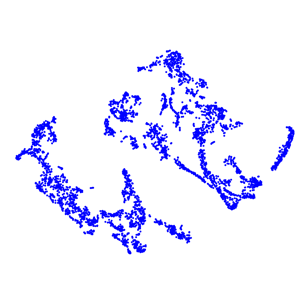}}
{\includegraphics[width=1.4cm,height=1.4cm]{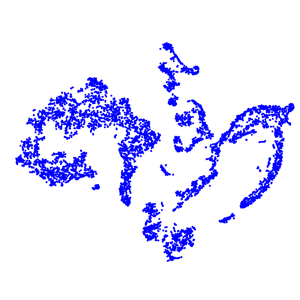}}
    {\includegraphics[width=1.4cm,height=1.4cm]{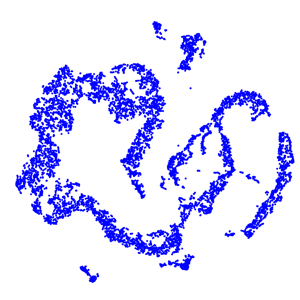}}
{\includegraphics[width=1.4cm,height=1.4cm]{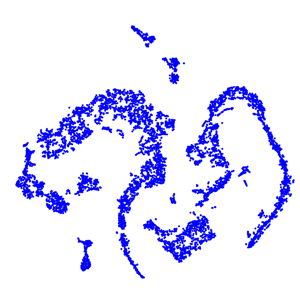}}
    {\includegraphics[width=1.4cm,height=1.4cm]{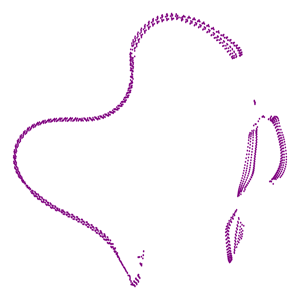}}
   {\includegraphics[width=1.4cm,height=1.4cm]{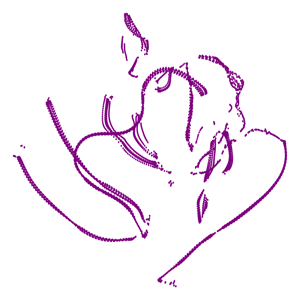}}
{\includegraphics[width=1.4cm,height=1.4cm]{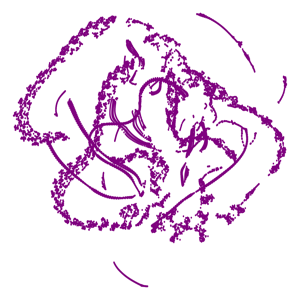}}
{\includegraphics[width=1.4cm,height=1.4cm]{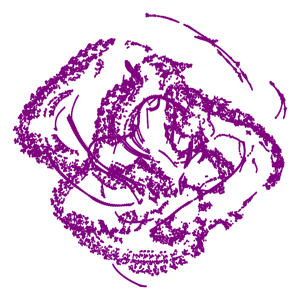}}
{\includegraphics[width=1.4cm,height=1.4cm]{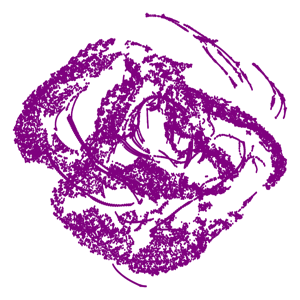}}
    {\includegraphics[width=1.4cm,height=1.4cm]{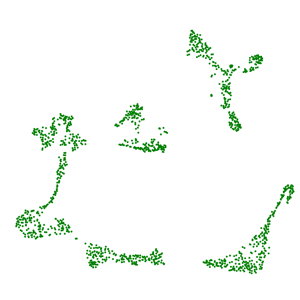}}
   {\includegraphics[width=1.4cm,height=1.4cm]{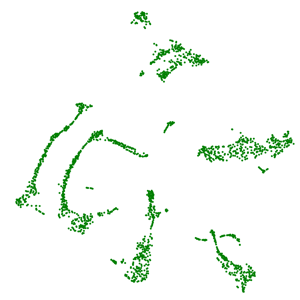}}
{\includegraphics[width=1.4cm,height=1.4cm]{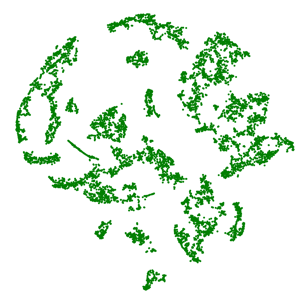}}
    {\includegraphics[width=1.4cm,height=1.4cm]{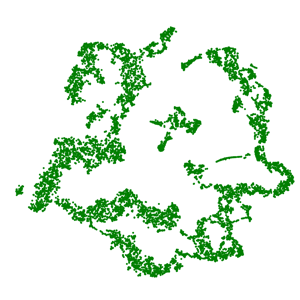}}
{\includegraphics[width=1.4cm,height=1.4cm]{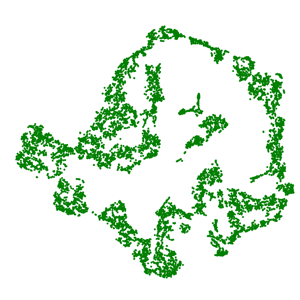}}
    {\includegraphics[width=9cm,height=0.4cm]{states/dot_label.pdf}}
    \caption
    {State-visiting distribution of Hopper-v3, where states get dimension reduction by t-SNE. The points with different colors represent the states visited by the policy with the style. The distance between points represents the difference between states.
    }
    \label{state-hopper}
\end{figure*}

\subsubsection{Observation 1: Poor Exploration Result in Poor Initial Reward Performance}

From Figure \ref{fig:comparsion-mujoco}, we know TD3 and PPO reach a worse initial reward performance than DIPO and SAC for the Hopper task, which coincides with the results appear in Figure \ref{state-hopper}. 
At the initial interaction, TD3 and PPO explore within a very sparse state-visiting region, which decays the reward performance. 
Such an empirical result also appears in the Walker2d task for PPO (see Figure \ref{state-Walker2d}), Humanoid task for TD3 and SAC (see Figure \ref{state-Humanoid}), where a spare state-visiting is always accompanied by a worse initial reward performance.
Those empirical results once again confirm a common sense: poor exploration results in poor initial reward performance.

Conversely, from Figure \ref{fig:comparsion-mujoco}, we know DIPO and SAC obtain a better initial reward performance for the Hopper task, and Figure \ref{state-hopper} shows that DIPO and SAC explore a wider range of state-visiting that covers than TD3 and PPO. That implies that a wide state visit leads to better initial reward performance.
Such an empirical result also appears in the Walker2d task for DIPO, SAC, and TD3 (see Figure \ref{state-Walker2d}), Humanoid task for DIPO (see Figure \ref{state-Humanoid}), where the agent runs with a wider range state-visiting, which is helpful to the agent obtains a better initial reward performance.

In summary, poor exploration could make the agent make a poor decision and cause a poor initial reward performance.
While if the agent explores a wider range of regions to visit more states, which is helpful for the agent to understand the environment and could lead to better initial reward performance.

\subsubsection{Observation 2: Good Final Reward Performance along with Dense State-Visiting}

From Figure \ref{state-hopper}, we know DIPO, SAC, and TD3 achieve a more dense state-visiting for the Hopper task at the final iterations. 
Such an empirical result also appears in the Walker2d and Humanoid tasks for DIPO, SAC, and TD3 (see Figure \ref{state-Humanoid} and \ref{state-Walker2d}).
This is a reasonable result since after sufficient training, the agent identifies and avoids the "bad" states, and plays actions to transfer to "good" states. 
Besides, this observation is also consistent with the result that appears in Figure \ref{fig:comparsion-mujoco}, the better algorithm (e.g., the proposed DIPO) usually visits a more narrow and dense state region at the final iterations. On the contrary, PPO shows an aimless exploration among the Ant-v3 task (see Figure \ref{state-ant}) and HalfCheetah (see Figure \ref{state-HalfCheetah}), which provides a partial explanation for why PPO is not so good in the Ant-v3 and HalfCheetah task.
This is a natural result for RL since a better algorithm should keep a better exploration at the beginning and a more sufficient exploitation at the final iterations.

\begin{figure*}[t!]
    \centering
    \subfigure[Ant-v3]
    {\includegraphics[width=5.1cm,height=3.5cm]{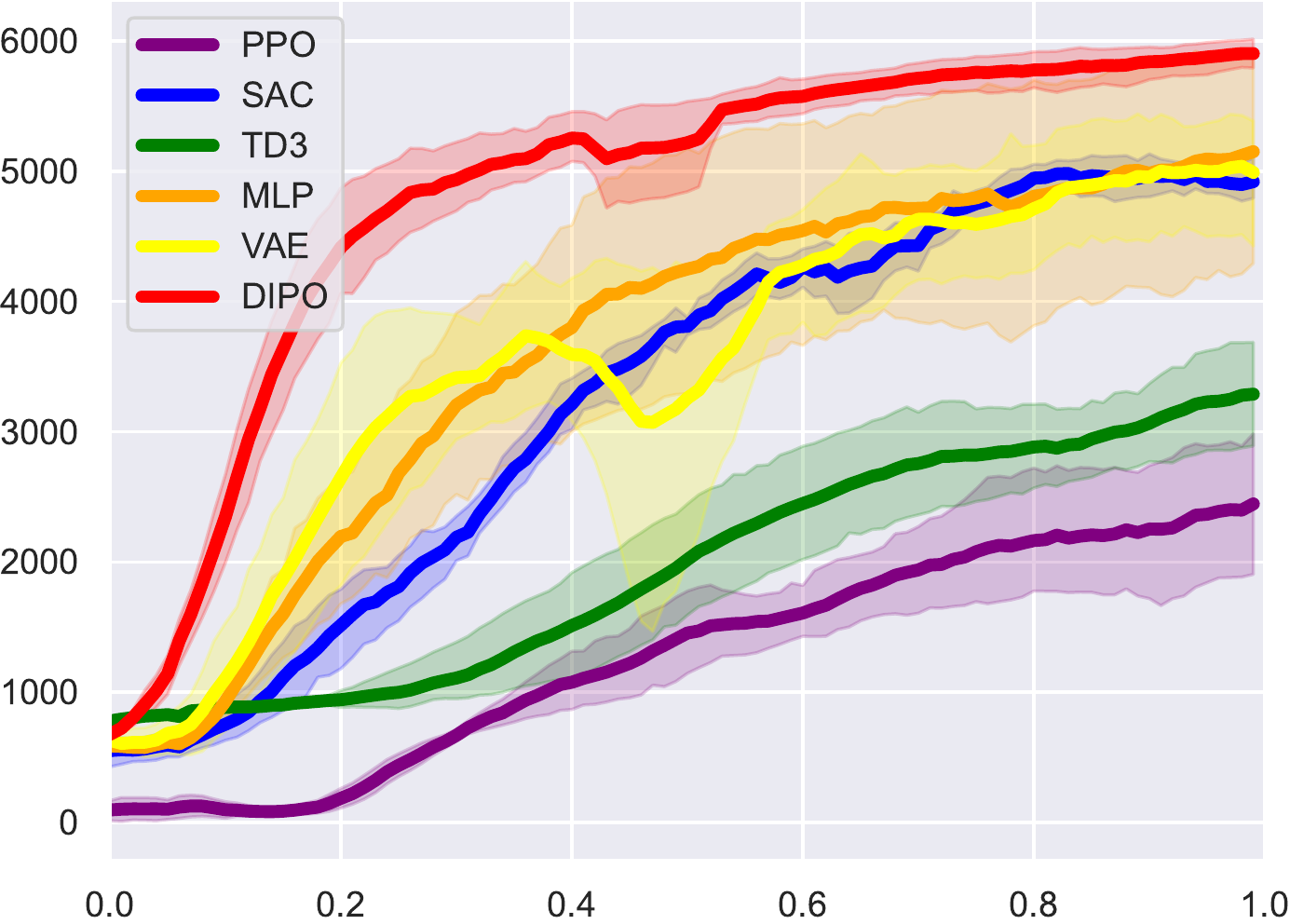}}
      \subfigure[HalfCheetah-v3]
     {\includegraphics[width=5.1cm,height=3.5cm]{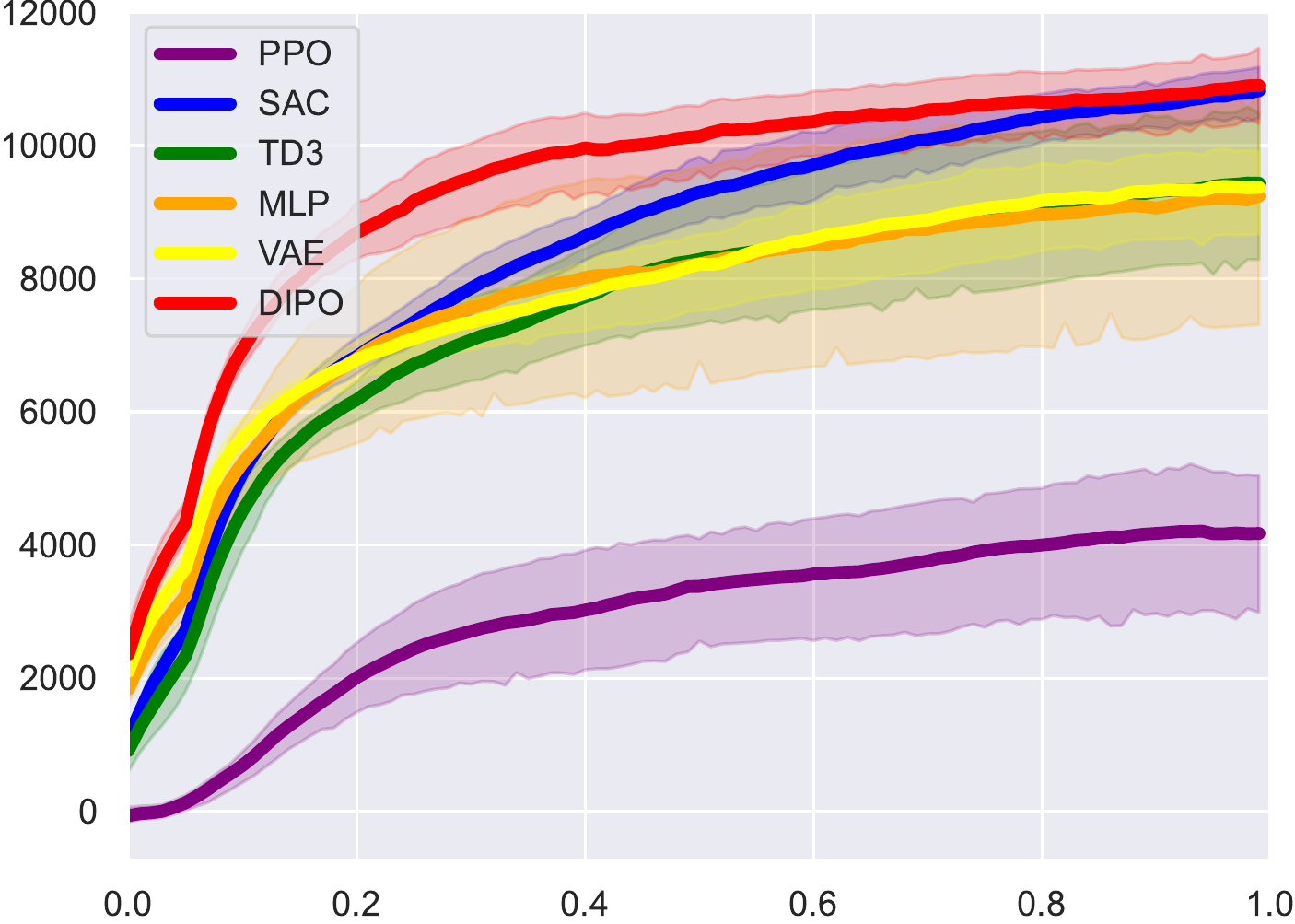}}\\
       \subfigure[Hopper-v3]
      {\includegraphics[width=5.1cm,height=3.5cm]{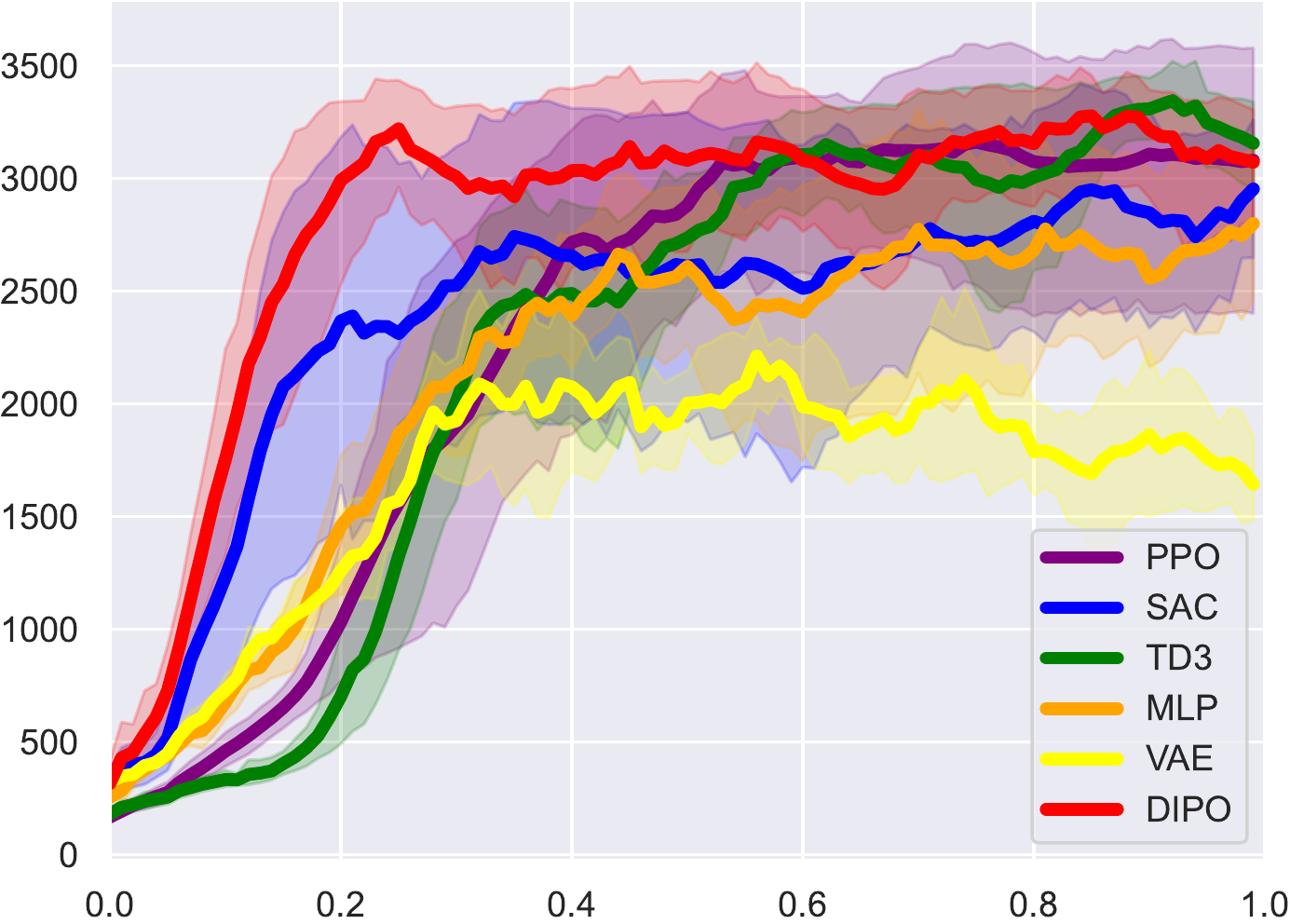}}
        \subfigure[Humanoid-v3]
       {\includegraphics[width=5.1cm,height=3.5cm]{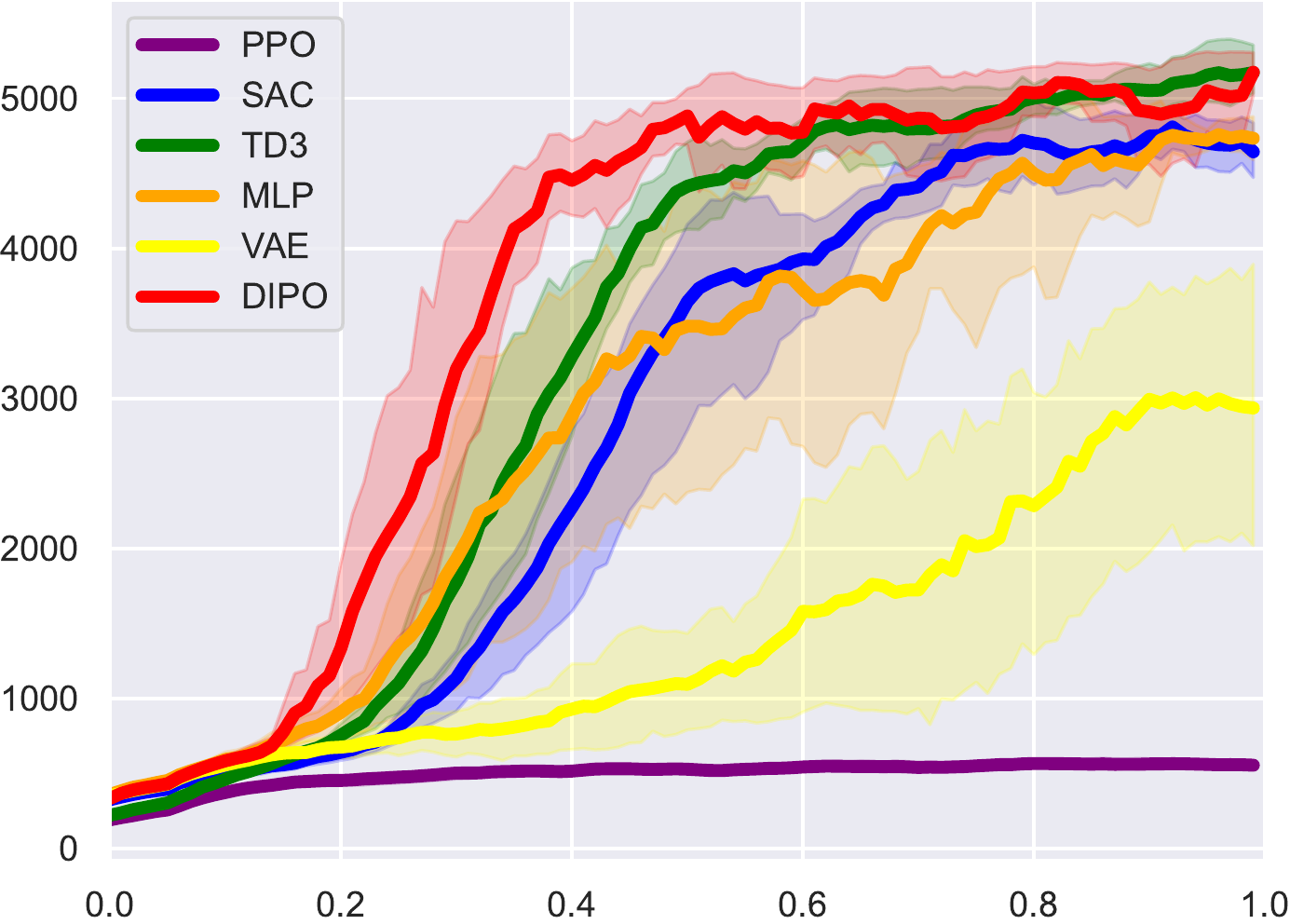}}
         \subfigure[Walker2d-v3]
        {\includegraphics[width=5.1cm,height=3.5cm]{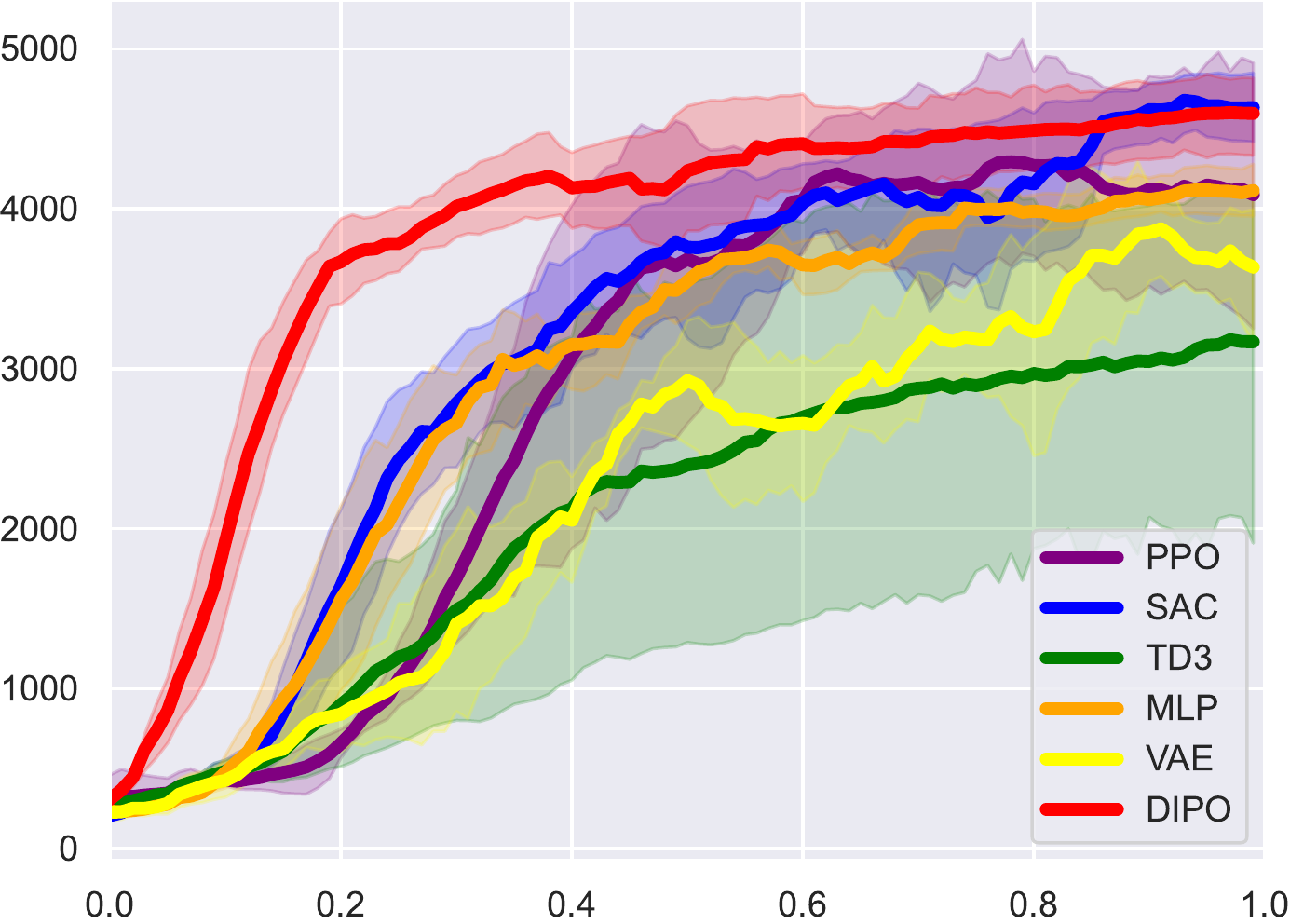}}
              \caption
    { Average performances on MuJoCo Gym environments with $\pm$ std shaded, where the horizontal axis of coordinate denotes the iterations $(\times 10^6)$, the plots smoothed with a window of 10.
    }
    \label{app-Ablation-Study}
\end{figure*}

\subsubsection{Observation 3: PPO Violates above Two Observations}

From all of those 5 tasks (see Figure \ref{state-Humanoid} to  \ref{state-hopper}), we also find PPO violates the common sense of RL, where PPO usual with a narrow state-visiting at the beginning and wide state-visiting at the final iteration.
For example, from Figure \ref{fig:comparsion-mujoco} and \ref{state-hopper}, we know PPO achieves an asymptotic reward performance as DIPO for the Hopper-v3, while the state-visiting distribution of PPO is fundamentally different from DIPO.  
DIPO shows a wide state-visiting region gradually turns into a narrow state-visiting region, 
while PPO shows a narrow state-visiting region gradually turns into a wide state-visiting region.
We show the fair visualization with t-SNE by the same setting for all of those 5 tasks, the abnormal empirical results show that PPO may find some new views different from DIPO/TD3/SAC to understand the environment.

\subsection{Ablation Study on MLP and VAE}

A fundamental question is why must we consider the diffusion model to learn a policy distribution. 
In fact, Both VAE and MLP are widely used to learn distribution in machine learning, can we replace the diffusion model with VAE and MLP in DIPO? 
In this section, we further analyze the empirical reward performance among DIPO, MLP, and VAE.

We show the answer in Figure \ref{dipo-vae-mlp} and Figure \ref{app-Ablation-Study}, where the VAE (or MLP) is the result we replace the diffusion policy of DIPO (see Figure \ref{fig:framework}) with VAE (or MLP), i.e.,
we consider VAE (or MLP)+action gradient (\ref{def:improve-action}) for the tasks.

Results of Figure \ref{app-Ablation-Study} show that the diffusion model achieves the best reward performance among all 5 tasks.
This implies the diffusion model is an expressive and flexible family to model a distribution, which is also consistent with the field of the generative model.

Additionally, from the results of Figure \ref{app-Ablation-Study} we know MLP with action gradient also performs well among all 5 tasks, which implies the action gradient is a very promising way to improve reward performance.
For example, Humanoid-v3 is the most challenging task among Mujoco tasks, MLP achieves a final reward performance near the PPO, SAC, DIPO, and TD3.
We all know that these algorithms (PPO, SAC, DIPO, and TD3) are meticulously constructed mathematically, while MLP with action gradient is a simple model, but it achieves so good reward performance, which is a direction worth further in-depth research to search simple but efficient RL algorithm.

\end{document}